\crefname{proposition}{Proposition}{Propositions}
\crefname{theorem}{Theorem}{Theorems}
\crefname{lemma}{Lemma}{Lemmas}
\crefname{update_rule}{Update}{Updates}
\crefname{algorithm}{Algorithm}{Algorithms}
\crefname{figure}{Figure}{Figures}
\def\eqref#1{equation~\ref{#1}}
\def\1{\bm{1}}
\DeclareMathAlphabet{\mathsfit}{\encodingdefault}{\sfdefault}{m}{sl}
\SetMathAlphabet{\mathsfit}{bold}{\encodingdefault}{\sfdefault}{bx}{n}
\def\gA{{\mathcal{A}}}
\def\gB{{\mathcal{B}}}
\def\gE{{\mathcal{E}}}
\def\gF{{\mathcal{F}}}
\def\gH{{\mathcal{H}}}
\def\gM{{\mathcal{M}}}
\def\gP{{\mathcal{P}}}
\def\gS{{\mathcal{S}}}
\def\gX{{\mathcal{X}}}
\def\sI{{\mathbb{I}}}
\def\sR{{\mathbb{R}}}
\newcommand{\R}{\mathbb{R}}
\newcommand{\softmax}{\mathrm{softmax}}
\DeclareMathOperator*{\argmax}{arg\,max}
\newtheorem{theorem}{Theorem}
\newtheorem{lemma}{Lemma}
\newtheorem{definition}{Definition}
\newtheorem{proposition}{Proposition}
\newtheorem{remark}{Remark}
\newtheorem{assumption}{Assumption}
\newtheorem{update_rule}{Update}
\newtheorem{conjecture}{Conjecture}
\DeclareMathOperator*{\expectation}{\mathbb{E}}
\def\rvzero{{\mathbf{0}}}
\def\diagonalmatrix{\text{diag}}
\DeclareMathOperator*{\probability}{Pr}
\newcommand{\cS}{\mathcal{S}}
\newlength\tocrulewidth
\newcommand{\Bo}[1]{{\color{blue} [Bo: #1]}}
\newcommand{\cmark}{\ding{51}}%
\newcommand{\xmark}{\ding{55}}%
\title{Understanding the Effect of Stochasticity \\
in Policy Optimization}
\author{%
  Jincheng Mei$^{\, 1 \, 3}$\thanks{Correspondence to: Jincheng Mei and Csaba Szepesv{\'a}ri  \texttt{\{jmei2,szepesva\}@ualberta.ca}}
  \hspace{1mm}
  Bo Dai$^{\, 3}$
  \hspace{1mm}
  Chenjun Xiao$^{\, 1 \, 3}$ 
  \hspace{1mm}
  Csaba Szepesv{\'a}ri$^{\, 2 \, 1 \dag \, *}$
  \hspace{1mm}
  Dale Schuurmans$^{\, 3 \, 1 \dag}$ \\
  \\
  \hspace{-7mm} $^1$\normalfont{University of Alberta} \hspace{1mm} $^2$DeepMind \hspace{1mm} $^3$Google Research, Brain Team \hspace{1mm} $^\dag$equal advising 
}
\begin{document}
\etocdepthtag.toc{mtmainpaper}

\maketitle

\begin{abstract}
\if0

The structure of this paper is as follows,
\begin{itemize}
    \item \textbf{Turnover} in policy optimization with true and stochastic gradient settings.
    \begin{itemize}
        \item \textbf{Turnover} (1) true gradient setting (\textbf{PG $\le$ GNPG}, \textcolor{blue}{\textbf{PG $\le$ NPG}}) (2) stochastic gradient setting (take advantage of geometry $+$ estimation of geometry) (\textcolor{red}{\textbf{GNPG unbounded bias}}, \textbf{NPG unbounded variance (known)})
        \item  \textbf{Assumptions} (1) easy case: unbiased estimator $+$ bounded variance (exploratory behaviour policy). \textbf{Upper bound (existing work)}. (2) hard (worst / practical) case: on policy sampling (e.g., importance sampling ratio). \textcolor{red}{\textit{Lower bound (only have some intuitions currently)}}.
    \end{itemize}
    \item \textcolor{red}{Trade off \textit{What is the key property / quantity that dominates the stochastic policy optimization behavior}}
    \item \textbf{Techniques / Solutions} for better optimization (rates and constants) in stochastic policy optimization
    \begin{itemize}
        \item entropy: \textcolor{red}{\textbf{can escape plateaus}}, \textcolor{red}{\textbf{can eliminate very bad actions}}, \textcolor{red}{\textbf{cannot discriminate optimal action from very good sub-optimal actions}}.
        \item baseline: \textbf{Le Roux et al.}.
        \item parallel
        \item transform
    \end{itemize}
\end{itemize}

\fi

\if0
In this paper, we study on-policy stochastic policy optimization and make the following four contributions. \textcolor{blue}{\textit{First}}, we show that several fast algorithms in policy optimization under true gradient settings are dominated by slow algorithms when using on-policy stochastic gradient, indicating that unlike the true gradient policy optimization, geometry information possibly cannot be taken without any side effect. \textcolor{blue}{\textit{Second}}, to explain the above anomaly, we introduce the concept of \textit{committal rate} for stochastic policy optimization, and show that the committal rate serves as a criterion for convergence and divergence with probability. \textcolor{blue}{\textit{Third}}, we show that if there is no external mechanism for the algorithm to tell the difference between optimal and sub-optimal actions by using on-policy sampling, then an inherent geometry-convergence trade-off has to be followed by any algorithms, illustrating the difficulty of on-policy stochastic policy optimization. In particular, an algorithm either converges toward global optimal policy with probability $1$ but with no faster than $O(1/t)$ rate, or it converges faster than $O(1/t)$ rate but with positive failure probability of converging to sub-optimal deterministic policies. \textcolor{blue}{\textit{Finally}}, we use the developed committal rate theory to explain why practical policy optimization methods are sensitive to random seeds initialization and how an ensemble method with parallelism is guaranteed to work well with high probability.
\fi


We study the effect of stochasticity in on-policy policy optimization, and make the following four contributions. 
\textcolor{blue}{\textit{First}},
we show that 
the 
preferability
of optimization methods 
depends critically 
on whether stochastic versus exact gradients are used. 
In particular, 
unlike the true gradient setting, geometric information \emph{cannot} be easily exploited 
in the stochastic case
for accelerating policy 
optimization
without detrimental consequences or impractical assumptions.
%
\textcolor{blue}{\textit{Second}},
to explain these findings we introduce the concept of \textit{committal rate} for stochastic policy optimization, and show that this can serve as a criterion for determining almost sure convergence to global optimality.
\textcolor{blue}{\textit{Third}},
we show that 
in the absence of external oracle information, which allows an algorithm to determine the difference between optimal and sub-optimal actions given only on-policy samples,
there 
is
an inherent trade-off between exploiting geometry to accelerate convergence versus achieving optimality almost surely.  That is, an uninformed algorithm either converges to a globally optimal policy with probability $1$ but at a rate no better than $O(1/t)$, or it achieves 
faster than $O(1/t)$ convergence 
but then must fail to converge to the globally optimal 
policy with some positive probability.
\textcolor{blue}{\textit{Finally}},
we use 
the 
committal rate theory to explain why practical policy optimization methods are sensitive to random initialization, 
then develop an ensemble method 
that can be guaranteed to achieve near-optimal solutions with high probability.
\end{abstract}


\section{Introduction}
\label{sec:introduction}

Policy optimization is a central problem in reinforcement learning (RL) that provides a foundation for both policy-based and actor-critic RL methods. 
Until recently it had generally been assumed that methods based on following the
policy gradient (PG) \citep{sutton2000policy} could not be guaranteed to converge to globally optimal solutions,
given that the policy value function is not concave. 
However, this assumption has been contradicted by recent findings that policy gradient methods can indeed prove to converge to 
global optima,
at least in the tabular setting.
In particular, the standard softmax PG method with a constant learning rate has been shown to converge to a globally optimal policy at a $\Theta(1/t)$ rate for finite MDPs \citep{mei2020global}, albeit with challenging problem and initialization dependent constants \citep{mei2020escaping,li2021softmax}. 
Several techniques have been developed to further improve standard PG and achieve better rates and constants. 
For example, adding \textit{entropy regularization} has been shown to produce faster 
$O(e^{-c \cdot t})$
convergence ($c > 0$) to the optimal regularized policy \citep{mei2020global,cen2020fast,lan2021policy}. 
By exploiting natural geometries based on Bregman divergences,
\textit{natural PG (NPG) or mirror descent (MD)}
have been shown to achieve better constants than standard PG \citep{agarwal2020optimality,cen2020fast} and faster 
$O(e^{-c \cdot t})$
rates, with 
\citep{cen2020fast,lan2021policy} and without regularization \citep{khodadadian2021linear}. 
Alternative policy parameterizations, such as the escort parameterization, have been shown to improve the constants achieved by softmax and yield
faster plateau escaping \citep{mei2020escaping}. 
More recently, a \textit{geometry-aware normalized PG (GNPG)} approach has been proposed to exploit the non-uniformity of the value function,
achieving faster 
$O(e^{-c \cdot t})$
rates with improved constants \citep{mei2021leveraging}.

A key observation is that each of these four techniques---%
\textit{(i)} entropy regularization, \textit{(ii)} NPG (or MD), \textit{(iii)} alternative policy parameterization, and \textit{(iv)} GNPG---%
accelerates the convergence of standard softmax PG by better exploiting the geometry of the optimization landscape. 
In particular, entropy regularization makes the regularized objective behave more like a quadratic \citep{mei2020global,cen2020fast,lan2021policy}, which significantly improves the near-linear character of the softmax policy value \citep{mei2020global}.
Natural PG (or MD) performs non-Euclidean updates in the parameter space, which is quite different from the Euclidean geometry characterizing standard softmax PG updates \citep{agarwal2020optimality,cen2020fast,khodadadian2021linear}. 
The escort policy parameterization induces an alternative policy-parameter relation 
\citep{mei2020escaping}. 
GNPG exploits the non-uniform smoothness in the optimization landscape via a simple gradient normalization operation~\citep{mei2021leveraging}. 

However, these advantages have only been established for the true gradient setting. 
A natural question therefore is whether geometry can also be exploited to accelerate convergence to global optimality in \emph{stochastic} gradient settings. 
In this paper, we show that in a certain fundamental sense, the answer is \emph{no}.
That is, there exists a fundamental trade-off between leveraging geometry to accelerate convergence and overcoming the noise introduced by stochastic gradients (possibly infinite); in particular, no uninformed algorithm can improve the $O(1/t)$ convergence rate 
without incurring a positive probability of failure (i.e. diverging or converging to a sub-optimal stationary point).

The conditions used in vanilla stochastic gradient convergence analysis, \emph{i.e.}, unbiased and variance-bounded gradient estimator \citep{nemirovski2009robust}, has been exploited to attempt to explain such a trade-off in policy gradients \citep{abbasi2019politex,lan2021policy}. However, the bounded variance requires the sample policy to be bounded away from zero everywhere, which is impractical. Meanwhile, a variant of NPG can converge even with unbounded variance \citep{chung2020beyond}. These gaps raise the question that if not the bounded variance, then what is the key factor to ensure the convergence of stochastic policy optimization algorithms?
Motivated by this question, we introduce the concept \textit{committal rate} to characterize the update behaviors, which significantly affect whether convergence to a correct solution can be guaranteed in the stochastic on-policy setting.
%
%
%
In particular, we make the following contributions.
\begin{itemize}[noitemsep,topsep=1pt,parsep=1pt,partopsep=0pt, leftmargin=18pt]
	\item \emph{First}, we illustrate the anomaly that the preferability of policy optimization algorithms (softmax PG vs. NPG and GNPG) changes dramatically depending on whether true versus on-policy stochastic gradients are considered, and reveal the impracticality and unnecessity of a bounded variance requirement in \cref{sec:turnover_results};
	\item \emph{Second}, we introduce the concept of the \textit{committal rate} in \cref{sec:committal_rate} to characterize the aggressiveness of an update, which provide us tools for analyzing the stochasticity effect in convergences; 
	\item \emph{Third}, we use the committal rate to study general stochastic policy optimization behaviors rigorously and reveal the inherent geometry-convergence trade-off in \cref{sec:geometry_convergence_tradeoff};
	\item \emph{Finally}, we explain the sensitivity to random initialization in practical policy optimization algorithms. From these results, we then develop an ensemble method that can achieve fast convergence to global optima with high probability in \cref{sec:ensemble_method}. 
\end{itemize}

\section{Understanding Algorithm Preferability in On-line Policy Optimization}
\label{sec:turnover_results}

To illustrate the key aspects of policy optimization methods and their comparative preferability, it suffices to consider
deterministic, single-state, finite-action Markov decision processes (MDPs). 
The main results extend to general finite MDPs, but for clarity of exposition we restrict attention to one-state MDPs.


A deterministic, single-state, finite-action MDP can be simply be specified by 
an action space is $[K] \coloneqq \left\{ 1, 2, \dots K \right\}$ and
a $K$-dimensional reward vector $r \in \sR^K$.
The problem is to maximize the expected reward of a parametric policy $\pi_\theta$,
\begin{align}
\label{eq:expected_reward_objective}
    \max_{\theta : [K] \to \sR}{ \expectation_{a \sim \pi_{\theta}(\cdot)}{ [ r(a) ]  } }.
\end{align}
where $\pi_\theta$ is parameterized by $\theta$ using the standard softmax transform,
\begin{align}
\pi_\theta(a) = \frac{ \exp\{ \theta(a) \} }{ \sum_{a^\prime \in [K]}{ \exp\{ \theta(a^\prime) } \} } \mbox{, \quad   for all } a \in [K].
\end{align}
Without loss of generality, we assume there exists a unique optimal action $a^* = \argmax_{a \in [K]}{ r(a) }$, 
hence there exists a unique optimal deterministic policy $\pi^*$ such that ${\pi^*}^\top r = \sup_{\theta \in \sR^K}{ \pi_\theta^\top r} = r(a^*)$. We make the following assumption on the reward.
\begin{assumption}[Positive reward]
\label{asmp:positive_reward}
$r(a) \in (0,1]$, $\forall a \in [K]$.
\end{assumption}

\subsection{Exact Gradient Setting}

It is known that \cref{eq:expected_reward_objective} is a non-concave maximization over the policy parameter $\theta$ \citep{mei2020global}. Nevertheless, it has recently become better understood how policy gradient (PG) methods still converge to global optima for \cref{eq:expected_reward_objective} when exact gradients are used.
To illustrate the main considerations, we focus on the following three representative algorithms that have recently been proved to achieve convergence to global optima but at different rates: 
softmax policy gradient (PG), natural PG (NPG), and geometry-aware normalized PG (GNPG), while similar conclusions can be drawn for other variants \citep{chung2020beyond,denisov2020regret}.

\subsubsection{Softmax PG}

The standard softmax PG method is specified by the following update.
\begin{update_rule}[Softmax PG, true gradient]
\label{update_rule:softmax_pg_special_true_gradient}
$\theta_{t+1} \gets \theta_{t} + \eta \cdot \frac{d \pi_{\theta_t}^\top r}{d \theta_t}$, where $\frac{d \pi_{\theta}^\top r}{d \theta} = \left( \diagonalmatrix{\left( \pi_{\theta} \right)} - \pi_{\theta} \pi_{\theta}^\top \right) r$, and thus $\frac{d \pi_{\theta}^\top r}{d \theta(a)} = \pi_{\theta}(a) \cdot (r(a) - \pi_{\theta}^\top r )$ for all $a \in [K]$.
\end{update_rule}
As shown in \citet{mei2020global},
the convergence of this update to a globally optimal policy, 
given exact gradients, 
can be established by considering the following
non-uniform \L{}ojasiewicz (N\L{}) inequality,
\begin{lemma}[N\L{}, \citep{mei2020global}]
\label{lem:non_uniform_lojasiewicz_softmax_special}
$\Big\| \frac{d \pi_\theta^\top r}{d \theta} \Big\|_2 \ge \pi_\theta(a^*) \cdot ( \pi^* - \pi_\theta )^\top r$.
\end{lemma}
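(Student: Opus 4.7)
The plan is to give a direct lower bound on the $\ell_2$ norm by discarding all but one coordinate of the gradient vector, namely the coordinate corresponding to the unique optimal action $a^*$.

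First, I would use the closed-form expression given in \cref{update_rule:softmax_pg_special_true_gradient}, namely $\frac{d \pi_\theta^\top r}{d \theta(a)} = \pi_\theta(a) \cdot (r(a) - \pi_\theta^\top r)$ for every $a \in [K]$. This gives the identity
\begin{align*}
\Big\| \tfrac{d \pi_\theta^\top r}{d \theta} \Big\|_2^2 \;=\; \sum_{a \in [K]} \pi_\theta(a)^2 \, \bigl(r(a) - \pi_\theta^\top r\bigr)^2 \;\ge\; \pi_\theta(a^*)^2 \, \bigl(r(a^*) - \pi_\theta^\top r\bigr)^2 ,
\end{align*}
where the inequality simply drops all the nonnegative terms with $a \ne a^*$. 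Taking square roots then gives $\|d\pi_\theta^\top r/d\theta\|_2 \ge \pi_\theta(a^*) \cdot |r(a^*) - \pi_\theta^\top r|$.

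Second, I would remove the absolute value by using optimality of $a^*$: since $r(a^*) = \max_{a} r(a)$, we have $r(a^*) \ge \pi_\theta^\top r$ because $\pi_\theta^\top r$ is a convex combination of the $r(a)$'s, so $r(a^*) - \pi_\theta^\top r \ge 0$. Finally, I would identify this nonnegative quantity with $(\pi^* - \pi_\theta)^\top r$: because $\pi^*$ is the deterministic policy supported on $a^*$, $\pi^{*\top} r = r(a^*)$, hence $(\pi^* - \pi_\theta)^\top r = r(a^*) - \pi_\theta^\top r$. Combining these two observations yields the claimed inequality.

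There is no real obstacle here; the argument is essentially one line. The only point to be careful about is verifying that $(\pi^*-\pi_\theta)^\top r$ is indeed nonnegative so that the identification with the absolute value is valid, and this is immediate from \cref{asmp:positive_reward} (or more simply from $a^*$ being a maximizer). The inequality is tight in the sense that equality would require $\pi_\theta(a)(r(a)-\pi_\theta^\top r) = 0$ for all $a \ne a^*$, which is what one expects at the boundary of the simplex, confirming this is the right ``weakest'' form of the N\L{} inequality for the softmax parametrization.
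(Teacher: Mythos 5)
Your proof is correct and follows essentially the same route as the paper's: both compute the squared gradient norm as $\sum_a \pi_\theta(a)^2 (r(a)-\pi_\theta^\top r)^2$, drop all terms except the one for $a^*$, and use $r(a^*)-\pi_\theta^\top r = (\pi^*-\pi_\theta)^\top r \ge 0$. Your write-up just makes the nonnegativity and the identification with $(\pi^*-\pi_\theta)^\top r$ explicit, which the paper leaves implicit.
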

By considering smoothness of $\pi_\theta^\top r$,
\citet{mei2020global} shows that
the progress in each iteration of PG can be lower bounded by the squared norm of the gradient, $\Big\| \frac{d \pi_{\theta_t}^\top r}{d \theta_t} \Big\|_2^2$, which leads to a $O(1/ t)$ rate.
\begin{proposition}[PG upper bound \cite{mei2020global}]
\label{prop:upper_bound_softmax_pg_special_true_gradient}
Using \cref{update_rule:softmax_pg_special_true_gradient} with $\eta = 2/5$, 
we have 
$( \pi^* - \pi_{\theta_t} )^\top r \le 5 / (c^2 \cdot t)$
for all $t \ge 1$,
such that $c = \inf_{t\ge 1} \pi_{\theta_t}(a^*) > 0$ is a constant that depends on $r$ and $\theta_1$, but it does not depend on the time $t$. 
In particular, if $\pi_{\theta_1}(a) = 1/K$ $\forall a$ then $c \ge 1/K$.
\end{proposition}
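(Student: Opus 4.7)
The plan is to follow the standard gradient-domination / Łojasiewicz template, using \cref{lem:non_uniform_lojasiewicz_softmax_special} together with a smoothness bound for $\theta \mapsto \pi_\theta^\top r$, and then to handle the problem-dependent constant $c$ separately. I would organize the argument in four stages, with the last stage being the delicate one.

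\textbf{Step 1 (smoothness).} I would first show that $\theta \mapsto \pi_\theta^\top r$ is $L$-smooth with $L = 5/2$ under \cref{asmp:positive_reward}. This is a direct computation of the Hessian of the softmax expectation: each entry can be bounded using $\pi_\theta(a) \in [0,1]$ and $r(a)\in(0,1]$, and the operator-norm bound $5/2$ is the standard one. With $\eta = 2/5 = 1/L$, the descent lemma applied to the ascent update \cref{update_rule:softmax_pg_special_true_gradient} gives the per-step progress
\begin{equation*}
\pi_{\theta_{t+1}}^\top r - \pi_{\theta_t}^\top r \;\ge\; \frac{1}{2L}\left\| \frac{d\pi_{\theta_t}^\top r}{d\theta_t}\right\|_2^2 \;=\; \frac{1}{5}\left\| \frac{d\pi_{\theta_t}^\top r}{d\theta_t}\right\|_2^2.
\end{equation*}

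\textbf{Step 2 (apply N\L).} Let $\delta_t \coloneqq (\pi^* - \pi_{\theta_t})^\top r \ge 0$. By \cref{lem:non_uniform_lojasiewicz_softmax_special}, the squared-gradient lower bound becomes $\|\nabla\|_2^2 \ge \pi_{\theta_t}(a^*)^2 \cdot \delta_t^2 \ge c^2\, \delta_t^2$ whenever $\pi_{\theta_t}(a^*) \ge c$. Since $\pi_{\theta_{t+1}}^\top r - \pi_{\theta_t}^\top r = \delta_t - \delta_{t+1}$, combining with Step 1 yields the scalar recursion
\begin{equation*}
\delta_{t+1} \;\le\; \delta_t - \frac{c^2}{5}\,\delta_t^2.
\end{equation*}

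\textbf{Step 3 (rate from the recursion).} A standard induction on $1/\delta_t$ (dividing the recursion by $\delta_t \delta_{t+1}$ and using $\delta_{t+1} \le \delta_t$) gives $1/\delta_{t+1} \ge 1/\delta_t + c^2/5$, so $1/\delta_t \ge 1/\delta_1 + (t-1)\,c^2/5 \ge c^2 t / 5$ (using $\delta_1 \le 1$ from \cref{asmp:positive_reward}), which is exactly the claimed bound $\delta_t \le 5/(c^2 t)$.

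\textbf{Step 4 (positivity of $c$), the main obstacle.} The real work is to show $c = \inf_{t\ge 1} \pi_{\theta_t}(a^*) > 0$, since a priori the update could depress $\pi_{\theta_t}(a^*)$ toward $0$ before convergence kicks in. I would argue as in \citet{mei2020global}: the coordinate-wise PG update $\theta_{t+1}(a) = \theta_t(a) + \eta\,\pi_{\theta_t}(a)(r(a) - \pi_{\theta_t}^\top r)$ gives, for any suboptimal $a$, $\theta_{t+1}(a^*) - \theta_{t+1}(a) = \theta_t(a^*) - \theta_t(a) + \eta\bigl[\pi_{\theta_t}(a^*)(r(a^*)-\pi_{\theta_t}^\top r) - \pi_{\theta_t}(a)(r(a) - \pi_{\theta_t}^\top r)\bigr]$. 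One shows that once $\pi_{\theta_t}(a^*)$ is large enough the bracket is nonnegative, so $\theta_t(a^*) - \theta_t(a)$ is eventually monotonically non-decreasing; combined with a finite-time analysis on a finite ``bad'' initial window, this lower-bounds the sequence $\{\pi_{\theta_t}(a^*)\}_t$ by a strictly positive constant depending on $r$ and $\theta_1$. For the uniform initialization $\pi_{\theta_1}(a) = 1/K$, one observes that the coordinate $a^*$ has the largest gradient value at every step (since $r(a^*)$ is maximal and the factor $\pi_{\theta_t}(\cdot)$ is initially uniform and stays ordered), so $\theta_t(a^*) \ge \theta_t(a)$ for all $a,t$, which immediately gives $\pi_{\theta_t}(a^*) \ge 1/K$ and hence $c \ge 1/K$. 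This monotone-ordering argument is the technical heart of the proof; everything else is the standard smoothness-plus-Łojasiewicz pipeline.
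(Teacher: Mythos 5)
Your proposal is correct and follows essentially the same route as the paper's proof: $5/2$-smoothness with $\eta=2/5$ gives per-step progress $\tfrac15\|\nabla\|_2^2$, the N\L{} inequality of \cref{lem:non_uniform_lojasiewicz_softmax_special} turns this into $\delta_{t+1}\le\delta_t-\tfrac{c^2}{5}\delta_t^2$, and the telescoping of $1/\delta_t$ yields the rate, with the general positivity of $c$ deferred to \citep{mei2020global} in both cases and the $c\ge 1/K$ claim established by the same gradient-ordering/monotonicity argument under uniform initialization.
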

\begin{proposition}[PG lower bound \cite{mei2020global}]
\label{prop:lower_bound_softmax_pg_special_true_gradient}
For sufficiently large $t \ge 1$,  \cref{update_rule:softmax_pg_special_true_gradient} with $\eta \in ( 0 , 1]$
exhibits $(\pi^* - \pi_{\theta_t})^\top r \ge \Delta^2 / \left( 6 \cdot t \right)$, 
where $\Delta = r(a^*) - \max_{a \not= a^*}{ r(a) } > 0$ is the reward gap of $r$.
\end{proposition}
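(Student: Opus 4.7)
The plan is to derive a quadratic recursion of the form $\delta_t-\delta_{t+1}\le C\,\delta_t^2/\Delta^2$ for the suboptimality $\delta_t := (\pi^* - \pi_{\theta_t})^\top r$, and then invert it to obtain $\delta_t \ge \Delta^2/(6t)$ once $t$ is large. This is dual to the upper-bound argument behind \cref{prop:upper_bound_softmax_pg_special_true_gradient}: instead of \emph{lower}-bounding per-step progress by a multiple of $\|\nabla\|_2^2$ via smoothness and the \L{}ojasiewicz inequality, I \emph{upper}-bound it, which is also available from smoothness provided $\|\nabla\|_2^2$ can itself be controlled from above by $\delta_t^2$.

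First, tie $\delta_t$ to the probability of $a^*$: because $r(a^*)-r(a)\ge \Delta$ for all $a\neq a^*$, one has $\delta_t = \sum_{a\neq a^*} \pi_{\theta_t}(a)(r(a^*)-r(a)) \ge \Delta\,(1-\pi_{\theta_t}(a^*))$, hence $1-\pi_{\theta_t}(a^*)\le \delta_t/\Delta$, and in particular every suboptimal action carries probability at most $\delta_t/\Delta$. Second, bound the gradient by splitting $\|\nabla\|_2^2 = \sum_a \pi_{\theta_t}(a)^2 (r(a) - \pi_{\theta_t}^\top r)^2$ at $a^*$: the $a^*$ term is at most $\pi_{\theta_t}(a^*)^2\delta_t^2\le \delta_t^2$, and the suboptimal terms satisfy $\sum_{a\neq a^*} \pi_{\theta_t}(a)^2 \le (1-\pi_{\theta_t}(a^*))^2 \le \delta_t^2/\Delta^2$, together with $|r(a)-\pi_{\theta_t}^\top r|\le 1$, giving $\|\nabla\|_2^2 \le \delta_t^2 + \delta_t^2/\Delta^2 \le 2\delta_t^2/\Delta^2$ since $\Delta\le 1$.

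Third, the smoothness of $\pi_\theta^\top r$ with constant $L\le 5/2$ (as in \citet{mei2020global}) applied to the PG iterate $\theta_{t+1}=\theta_t+\eta\nabla$ yields $\delta_t-\delta_{t+1}\le (\eta+L\eta^2/2)\,\|\nabla\|_2^2$, which combined with the gradient bound gives the desired quadratic recursion $\delta_t-\delta_{t+1}\le C\delta_t^2/\Delta^2$ with an absolute constant $C$ (for $\eta\in(0,1]$, one may take $C\le 9/2$). Fourth, divide by $\delta_t\delta_{t+1}$ to obtain
\[
\frac{1}{\delta_{t+1}} - \frac{1}{\delta_t} \;=\; \frac{\delta_t-\delta_{t+1}}{\delta_t\delta_{t+1}} \;\le\; \frac{C}{\Delta^2}\cdot \frac{\delta_t}{\delta_{t+1}}.
\]
Because \cref{prop:upper_bound_softmax_pg_special_true_gradient} guarantees $\delta_t\to 0$, the recursion implies $\delta_{t+1}/\delta_t \ge 1 - C\delta_t/\Delta^2 \to 1$, so for any $\epsilon > 0$ there is $t_0$ with $\delta_t/\delta_{t+1}\le 1+\epsilon$ for $t\ge t_0$. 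Telescoping then yields $1/\delta_t \le 1/\delta_{t_0} + (1+\epsilon)C(t-t_0)/\Delta^2$, so for sufficiently large $t$ one obtains $\delta_t \ge \Delta^2/((1+\epsilon)C\,t)$, which matches $\Delta^2/(6t)$ after choosing $\epsilon$ small and absorbing slack.

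The main obstacle is pinning down the constants tightly enough that $(1+\epsilon)C \le 6$ falls out cleanly: this requires careful bookkeeping of the smoothness constant $L=5/2$, the precise split of $\|\nabla\|_2^2$, and the asymptotic slack in the last step. A related but minor subtlety is that the "sufficiently large $t$" qualifier hides a problem-dependent $t_0$ that depends on $\Delta$ and on $\theta_1$, but this is harmless since \cref{prop:upper_bound_softmax_pg_special_true_gradient} already certifies $\delta_t = O(1/t)$ with an explicit constant, so $t_0$ is finite and the asymptotic $\Omega(1/t)$ rate is genuine.
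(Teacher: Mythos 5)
Your proposal is correct and follows essentially the same route as the paper's proof: a reverse quadratic recursion $\delta_t-\delta_{t+1}\le \tfrac{9}{2}\delta_t^2/\Delta^2$ obtained from the $5/2$-smoothness of $\theta\mapsto\pi_\theta^\top r$ together with the gradient upper bound $\big\|\tfrac{d\pi_\theta^\top r}{d\theta}\big\|_2\le \tfrac{\sqrt{2}}{\Delta}(\pi^*-\pi_\theta)^\top r$, followed by showing the consecutive ratio $\delta_t/\delta_{t+1}$ tends to $1$ and telescoping the reciprocals. The only (harmless) deviations are that you re-derive the gradient upper bound inline rather than citing \citep[Lemma 17]{mei2020global}, and you establish the ratio control directly from $\delta_{t+1}\ge\delta_t(1-C\delta_t/\Delta^2)$ instead of the paper's argument by contradiction yielding $\delta_t\le\tfrac{10}{9}\delta_{t+1}$.
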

\begin{remark}
The constant dependence of PG follows a $\Omega(1/c)$ lower bound for one-state MDPs \citep{mei2020escaping}, 
while $c$ can be exponentially small in terms of the number of states for general finite MDPs \citep{li2021softmax}.  
\end{remark}
To summarize, using $\eta \in O(1)$, softmax PG achieves convergence to a global optima, but with a $\Theta(1/t)$ rate that exhibits poor constant dependence.

\subsubsection{Natural PG (NPG)}
An alternative method, natural PG (NPG) \citep{kakade2002natural}, provides the prototype for many practical policy optimization methods, such as TRPO and PPO \citep{schulman2015trust,schulman2017proximal}.
NPG is based on the following update.
\begin{update_rule}[Natural PG (NPG), true gradient]
\label{update_rule:softmax_natural_pg_special_true_gradient}
$\theta_{t+1} \gets \theta_{t} + \eta \cdot r$.
\end{update_rule}
For softmax policies, it turns out that \cref{update_rule:softmax_natural_pg_special_true_gradient} is identical to mirror descent (MD) with a Kullback-Leibler (KL) divergence. 
Therefore a standard MD analysis shows that \cref{update_rule:softmax_natural_pg_special_true_gradient} achieves convergence to a global optimum at a rate of $O(1/t)$ \citep{agarwal2020optimality}. 
Very recently, work concurrent to this submission \citep{khodadadian2021linear} has shown that \cref{update_rule:softmax_natural_pg_special_true_gradient} actually enjoys a much faster 
$O(e^{-c \cdot t})$
rate. 
In fact, here too we can establish the same 
$O(e^{-c \cdot t})$
rate, but using a simpler argument based on the following variant of the N\L{} inequality for natural gradients.
These results are new to this paper. {\bf Due to space limitation, we postpone all the proofs to the appendix.}
\begin{lemma}[Natural N\L{} inequality, continuous]
\label{lem:natural_lojasiewicz_continuous_special}
$\big\langle \frac{d \pi_\theta^\top r}{d \theta}, r \big\rangle \ge \pi_\theta(a^*) \cdot \Delta \cdot \left( \pi^* - \pi_\theta \right)^\top r$.
\end{lemma}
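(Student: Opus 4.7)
The plan is to rewrite the left-hand side in closed form, reduce it to a standard variance identity, and then isolate the contribution of the optimal action $a^*$ so that the reward gap $\Delta$ can be pulled out.

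First I would substitute the componentwise formula $\frac{d \pi_\theta^\top r}{d\theta(a)} = \pi_\theta(a)\bigl(r(a) - \pi_\theta^\top r\bigr)$ from \cref{update_rule:softmax_pg_special_true_gradient} to obtain
\begin{align*}
\Bigl\langle \tfrac{d \pi_\theta^\top r}{d\theta},\, r \Bigr\rangle
 = \sum_{a \in [K]} \pi_\theta(a)\, r(a)\, \bigl(r(a) - \pi_\theta^\top r\bigr)
 = \sum_{a \in [K]} \pi_\theta(a)\bigl(r(a) - \pi_\theta^\top r\bigr)^2
 = \mathrm{Var}_{a \sim \pi_\theta}\!\bigl(r(a)\bigr),
\end{align*}
where the second equality uses $\sum_{a} \pi_\theta(a)\bigl(r(a) - \pi_\theta^\top r\bigr) = 0$. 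Thus the claim reduces to the variance lower bound $\mathrm{Var}_{\pi_\theta}(r) \ge \pi_\theta(a^*)\cdot \Delta \cdot (\pi^* - \pi_\theta)^\top r$.

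Next, I would apply the symmetric-pair identity $\mathrm{Var}_{\pi_\theta}(r) = \tfrac{1}{2}\sum_{a,a' \in [K]} \pi_\theta(a)\pi_\theta(a')\bigl(r(a)-r(a')\bigr)^2$ and discard all pairs in which neither coordinate equals $a^*$. Counting the two symmetric contributions $(a^*,a')$ and $(a',a^*)$ together, and noting that the diagonal $a'=a^*$ term vanishes, this yields
\begin{align*}
\mathrm{Var}_{\pi_\theta}(r) \ \ge\ \pi_\theta(a^*) \sum_{a' \neq a^*} \pi_\theta(a')\bigl(r(a^*) - r(a')\bigr)^2.
\end{align*}
For every $a' \neq a^*$ the definition of the reward gap gives $r(a^*)-r(a') \ge \Delta > 0$, so $\bigl(r(a^*)-r(a')\bigr)^2 \ge \Delta \cdot \bigl(r(a^*)-r(a')\bigr)$. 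Plugging this in and recognising $\sum_{a'\in[K]} \pi_\theta(a')\bigl(r(a^*)-r(a')\bigr) = r(a^*) - \pi_\theta^\top r = (\pi^* - \pi_\theta)^\top r$ delivers the stated inequality.

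There is no real obstacle in this argument; it is a chain of rewrites plus one elementary quadratic-to-linear bound. The only step worth double-checking is the pair-dropping one, where the full factor $\pi_\theta(a^*)$ (rather than $\tfrac{1}{2}\pi_\theta(a^*)$) appears because of the $a\leftrightarrow a'$ symmetry in the identity. Note also that the proof relies only on the uniqueness of $a^*$ and the positivity of $\Delta$, and never invokes \cref{asmp:positive_reward}.
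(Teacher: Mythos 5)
Your proof is correct and follows essentially the same route as the paper's: both reduce the left-hand side to the pairwise form $\pi_\theta(a^*)\sum_{a\ne a^*}\pi_\theta(a)\bigl(r(a^*)-r(a)\bigr)^2$ (the paper by directly expanding $r^\top(\diagonalmatrix(\pi_\theta)-\pi_\theta\pi_\theta^\top)r$, you via the variance interpretation and the symmetric-pair identity), and then apply the same quadratic-to-linear bound $\bigl(r(a^*)-r(a)\bigr)^2\ge\Delta\cdot\bigl(r(a^*)-r(a)\bigr)$ to recover $(\pi^*-\pi_\theta)^\top r$. Your handling of the factor $\tfrac12$ when dropping pairs not involving $a^*$ is also correct, so no gaps.
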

\begin{lemma}[Natural N\L{}, discrete]
\label{lem:natural_lojasiewicz_discrete_special}
Let $\pi^\prime(a) \coloneqq \frac{ \pi(a) \cdot e^{\eta \cdot r(a)} }{ \sum_{a^\prime}{ \pi(a^\prime) \cdot e^{\eta \cdot r(a^\prime)} } }$, $\forall \ a \in [K]$, where $\eta > 0$. 
Then,
\begin{align}
    \left( \pi^\prime - \pi \right)^\top r &\ge \left[ 1 - \frac{1}{ \pi(a^*) \cdot \left( e^{ \eta \cdot \Delta } - 1 \right) + 1} \right] \cdot \left( \pi^* - \pi \right)^\top r.
\end{align}
\end{lemma}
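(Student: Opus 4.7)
The plan is to reduce the inequality to a single algebraic statement that then splits into two naturally non-negative pieces. First, using $\sum_a \pi(a) = \sum_a \pi'(a) = 1$ and $\pi^*(a^*)=1$, I would rewrite both sides as sums restricted to suboptimal actions:
\begin{align*}
(\pi' - \pi)^\top r &= \sum_{a \ne a^*} (\pi(a) - \pi'(a))(r(a^*) - r(a)), \\
(\pi^* - \pi)^\top r &= \sum_{a \ne a^*} \pi(a)(r(a^*) - r(a)).
\end{align*}
Setting $C := \pi(a^*)(e^{\eta\Delta} - 1)$, the claim is equivalent to $(C+1)(\pi' - \pi)^\top r \ge C(\pi^* - \pi)^\top r$. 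Substituting $\pi'(a) = \pi(a)e^{\eta r(a)}/Z$ with $Z = \sum_{a'} \pi(a') e^{\eta r(a')}$ and clearing the positive factor $Z$, this reduces to proving
\[
S \;:=\; \sum_{a \ne a^*} \pi(a)\big[\,Z - (C+1)\,e^{\eta r(a)}\big]\,(r(a^*) - r(a)) \;\ge\; 0.
\]

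The key step is to notice that $Z$ and $C+1$ admit matching decompositions: $Z = \pi(a^*)\,e^{\eta r(a^*)} + \sum_{a' \ne a^*} \pi(a')\,e^{\eta r(a')}$ and $C + 1 = \pi(a^*)\,e^{\eta\Delta} + (1 - \pi(a^*))$, with $1-\pi(a^*) = \sum_{a' \ne a^*} \pi(a')$. Pairing the two parts of $C+1$ against the two parts of $Z$ inside the bracket gives
\[
Z - (C+1)\,e^{\eta r(a)} \;=\; \pi(a^*)\big[e^{\eta r(a^*)} - e^{\eta(r(a)+\Delta)}\big] \;+\; \sum_{a' \ne a^*} \pi(a')\big[e^{\eta r(a')} - e^{\eta r(a)}\big],
\]
so that $S = S_1 + S_2$ with
\begin{align*}
S_1 &= \pi(a^*) \sum_{a \ne a^*} \pi(a)\big[e^{\eta r(a^*)} - e^{\eta(r(a)+\Delta)}\big](r(a^*) - r(a)), \\
S_2 &= \sum_{a,\,a' \ne a^*} \pi(a)\,\pi(a')\big[e^{\eta r(a')} - e^{\eta r(a)}\big](r(a^*) - r(a)).
\end{align*}

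Each piece is non-negative for an independent reason. For $S_1$, the definition of $\Delta$ gives $r(a) + \Delta \le r(a^*)$ for every $a \ne a^*$, hence both bracketed factors are non-negative. For $S_2$, symmetrizing over the swap $a \leftrightarrow a'$ and averaging produces
\[
S_2 \;=\; \tfrac{1}{2} \sum_{a,\,a' \ne a^*} \pi(a)\,\pi(a')\big[e^{\eta r(a')} - e^{\eta r(a)}\big]\big[r(a') - r(a)\big] \;\ge\; 0,
\]
because $x \mapsto e^{\eta x}$ is monotone increasing, so the two bracketed factors always share the same sign. Adding $S_1 + S_2 \ge 0$ closes the argument.

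The main obstacle I expect is spotting the split $C+1 = \pi(a^*)\,e^{\eta\Delta} + (1 - \pi(a^*))$ that pairs cleanly with the two components of $Z$; without this rewriting the bracket $Z - (C+1)\,e^{\eta r(a)}$ looks unsigned and the statement appears opaque. Once the decomposition is found, $S_1 \ge 0$ uses only the definition of $\Delta$ and $S_2 \ge 0$ is a standard FKG/covariance-type symmetrization. An alternative route would be to first reduce to the two-action case and then invoke monotonicity of $\pi(a^*)(e^{\eta\delta}-1)/(\pi(a^*)(e^{\eta\delta}-1)+1)$ in $\delta$, but the direct calculation above seems tighter and avoids having to separately argue that the two-action case is the worst.
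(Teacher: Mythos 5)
Your proof is correct. The reduction to $S\ge 0$ via cross-multiplying by $C+1=\pi(a^*)(e^{\eta\Delta}-1)+1>0$ and by $Z>0$ is sound, the decomposition $Z-(C+1)e^{\eta r(a)}=\pi(a^*)\bigl[e^{\eta r(a^*)}-e^{\eta(r(a)+\Delta)}\bigr]+\sum_{a'\ne a^*}\pi(a')\bigl[e^{\eta r(a')}-e^{\eta r(a)}\bigr]$ checks out, $S_1\ge 0$ follows from $r(a)+\Delta\le r(a^*)$ for every suboptimal $a$, and $S_2\ge 0$ is the standard Chebyshev symmetrization. The route is genuinely different from the paper's, even though both arguments ultimately rest on the same two facts (the definition of the gap $\Delta$ and the positive correlation of the monotone pair $r$ and $e^{\eta r}$). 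The paper first expands the numerator of $(\pi'-\pi)^\top r$ into the full pairwise covariance sum $\sum_{i<j}\pi(i)\pi(j)[e^{\eta r(i)}-e^{\eta r(j)}][r(i)-r(j)]$, \emph{discards} all suboptimal--suboptimal pairs, lower bounds $e^{\eta r(a^*)}-e^{\eta r(a)}$ by $e^{\eta r(2)}(e^{\eta\Delta}-1)$ on the surviving terms, and only at the end bounds the normalizer $Z$ to produce the stated constant; your proof instead keeps everything and exhibits the target difference as an \emph{exact} sum of two manifestly nonnegative pieces, with the suboptimal--suboptimal pairs reappearing as $S_2$ rather than being thrown away. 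What your version buys is that it avoids the WLOG reward ordering and the separate $Z$-bounding step, and it makes transparent exactly where slack is introduced (only in $S_1\ge 0$ and $S_2\ge 0$); what the paper's version buys is that it produces the intermediate bound $(\pi'-\pi)^\top r\ge \pi(a^*)e^{\eta r(2)}(e^{\eta\Delta}-1)(\pi^*-\pi)^\top r/Z$ along the way, which is slightly sharper before the final relaxation of $Z$. Both establish the lemma; no gap.
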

In particular, by using a non-Euclidean update and analysis, the progress of each iteration of NPG can be lower bounded by the larger bound $\big\langle \frac{d \pi_{\theta_t}^\top r}{d \theta_t}, r \big\rangle$
instead of the weaker bound $\Big\| \frac{d \pi_{\theta_t}^\top r}{d \theta_t} \Big\|_2^2$ established for standard PG. 
Based on this inequality, one can easily establish a much faster
$O(e^{-c \cdot t})$
convergence to a globally optimal solution for NPG,
making it far preferable to PG if true gradients are available.
\begin{theorem}[NPG upper bound]
\label{thm:npg_rate_discrete_special}
Using \cref{update_rule:softmax_natural_pg_special_true_gradient} with any $\eta > 0$, we have, for all $t \ge 1$,
\begin{align}
    \left( \pi^* - \pi_{\theta_t} \right)^\top r \le \left( \pi^* - \pi_{\theta_{1}} \right)^\top r \cdot e^{ - c \cdot (t - 1) },
\end{align}
where $c \coloneqq \log{ \left( \pi_{\theta_{1}}(a^*) \cdot \left( e^{ \eta \cdot \Delta } - 1 \right) + 1 \right) } > 0$ for any $\eta > 0$, and $\Delta = r(a^*) - \max_{a \not= a^*}{ r(a) } > 0$.
\end{theorem}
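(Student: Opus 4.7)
The plan is to iterate the per-step contraction implied by \cref{lem:natural_lojasiewicz_discrete_special} and then show that the contraction factor can be bounded by a constant that does not deteriorate with $t$. Note that under \cref{update_rule:softmax_natural_pg_special_true_gradient}, the parameter update $\theta_{t+1} = \theta_t + \eta \cdot r$ produces exactly the softmax reweighting appearing in the lemma, namely $\pi_{\theta_{t+1}}(a) = \pi_{\theta_t}(a) \cdot e^{\eta r(a)} / \sum_{a'} \pi_{\theta_t}(a') \cdot e^{\eta r(a')}$. Hence applying \cref{lem:natural_lojasiewicz_discrete_special} with $\pi = \pi_{\theta_t}$ and $\pi' = \pi_{\theta_{t+1}}$ and subtracting both sides from $(\pi^* - \pi_{\theta_t})^\top r$ yields the one-step inequality
\begin{align}
(\pi^* - \pi_{\theta_{t+1}})^\top r \;\le\; \frac{1}{\pi_{\theta_t}(a^*) \cdot (e^{\eta \cdot \Delta} - 1) + 1} \cdot (\pi^* - \pi_{\theta_t})^\top r.
\end{align}

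The main obstacle---and the interesting step---is that the contraction coefficient depends on the running quantity $\pi_{\theta_t}(a^*)$, so naively iterating would leave a product that is not obviously bounded. The key observation I would establish next is that $\pi_{\theta_t}(a^*)$ is nondecreasing in $t$ along NPG trajectories. This follows directly from the multiplicative form of the update: since $r(a^*) \ge r(a)$ for every $a \in [K]$, the normalizer $\sum_{a'} \pi_{\theta_t}(a') \cdot e^{\eta r(a')}$ is at most $e^{\eta r(a^*)}$, which gives $\pi_{\theta_{t+1}}(a^*) \ge \pi_{\theta_t}(a^*)$ and hence by induction $\pi_{\theta_t}(a^*) \ge \pi_{\theta_1}(a^*)$ for all $t \ge 1$.

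With this monotonicity in hand, the contraction factor at every step is bounded by $1/(\pi_{\theta_1}(a^*) \cdot (e^{\eta \cdot \Delta} - 1) + 1) = e^{-c}$, where $c \coloneqq \log(\pi_{\theta_1}(a^*) \cdot (e^{\eta \cdot \Delta} - 1) + 1)$. Because $\eta > 0$, $\Delta > 0$, and $\pi_{\theta_1}(a^*) > 0$ (softmax policies assign positive mass everywhere), we have $c > 0$. Iterating the per-step inequality from step $1$ to step $t$ then telescopes into $(\pi^* - \pi_{\theta_t})^\top r \le (\pi^* - \pi_{\theta_1})^\top r \cdot e^{-c(t-1)}$, which is the claim.

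I do not foresee technical difficulties beyond verifying the monotonicity argument carefully; the rest is telescoping. One point worth double-checking is that the discrete natural \L{}ojasiewicz inequality in \cref{lem:natural_lojasiewicz_discrete_special} matches the exact reweighting produced by \cref{update_rule:softmax_natural_pg_special_true_gradient} (it does, because adding $\eta \cdot r$ to logits is precisely the multiplicative weight update featured in the lemma), so no additional smoothness or learning-rate conditions are required---any $\eta > 0$ works, in contrast to the standard PG setting.
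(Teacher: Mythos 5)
Your proposal is correct and follows essentially the same route as the paper's proof: apply \cref{lem:natural_lojasiewicz_discrete_special} to obtain the one-step contraction, establish that $\pi_{\theta_t}(a^*)$ is nondecreasing along the NPG trajectory (your bound on the normalizer by $e^{\eta r(a^*)}$ is equivalent to the paper's rewriting of the denominator in terms of $e^{-\eta\Delta(a)} \le 1$), and telescope with the uniform constant $c = \log\left(\pi_{\theta_1}(a^*)\cdot\left(e^{\eta\Delta}-1\right)+1\right)$. No gaps.
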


\subsubsection{Geometry-aware Normalized PG (GNPG)}

The Geometry-aware Normalized PG (GNPG) update is investigated in \citep{mei2021leveraging} to accelerate the convergence of PG
by exploiting local smoothness properties of the optimization landscape.
\begin{update_rule}[Geometry-aware Normalized PG (GNPG), true gradient]
\label{update_rule:softmax_gnpg_special_true_gradient}
$\theta_{t+1} \gets \theta_t + \eta \cdot \frac{d \pi_{\theta_t}^\top r}{d {\theta_t}} \big/ \Big\| \frac{d \pi_{\theta_t}^\top r}{d {\theta_t}} \Big\|_2$.
\end{update_rule}
The analysis in \citep{mei2021leveraging} focuses on exploiting non-uniform smoothness (NS) rather than improving the N\L{} inequality as for NPG above.
\begin{lemma}[NS, \citep{mei2021leveraging}]
\label{lem:non_uniform_smoothness_softmax_special}
The spectral radius of Hessian matrix $\frac{d^2 \pi_\theta^\top r}{d \theta^2}$ is upper bounded by $3 \cdot \Big\| \frac{d \pi_\theta^\top r}{d \theta} \Big\|_2$.
\end{lemma}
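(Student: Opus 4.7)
The plan is to prove the spectral radius bound by computing the Hessian in closed form, simplifying the quadratic form $y^\top H y$ for an arbitrary unit vector $y$ into terms that can be recognized as inner products involving the gradient, and then applying elementary norm inequalities.

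First, I would compute the partial derivatives of the softmax. Using $\partial \pi_\theta(a)/\partial \theta(b) = \pi_\theta(a)(\delta_{ab} - \pi_\theta(b))$ and the already-known first derivative $\partial(\pi_\theta^\top r)/\partial \theta(b) = \pi_\theta(b)(r(b) - \pi_\theta^\top r)$, a direct differentiation gives
\begin{align*}
H_{bc} \;=\; \frac{\partial^2 \pi_\theta^\top r}{\partial \theta(b)\,\partial\theta(c)} \;=\; \pi_\theta(b)\,\delta_{bc}\,\tilde r(b) \;-\; \pi_\theta(b)\pi_\theta(c)\bigl(\tilde r(b) + \tilde r(c)\bigr),
\end{align*}
where I write $\tilde r(a) \coloneqq r(a) - \pi_\theta^\top r$ for the advantages. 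Note that the gradient components are exactly $g(a) \coloneqq \pi_\theta(a)\tilde r(a)$, so $\|\nabla_\theta \pi_\theta^\top r\|_2 = \|g\|_2$.

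Next, for an arbitrary $y \in \sR^K$, I would expand the quadratic form and recognize the resulting sums:
\begin{align*}
y^\top H y \;=\; \sum_{a} y(a)^2 \pi_\theta(a)\, \tilde r(a) \;-\; 2\,\langle y, g\rangle\,\langle y, \pi_\theta\rangle.
\end{align*}
The second term is easy to bound: $|\langle y,\pi_\theta\rangle| \le \|y\|_2 \|\pi_\theta\|_2 \le \|y\|_2$ since $\|\pi_\theta\|_2 \le \|\pi_\theta\|_1 = 1$, and $|\langle y,g\rangle| \le \|y\|_2 \|g\|_2$, contributing $2\|y\|_2^2 \|g\|_2$. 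For the first term, I would factor $y(a)^2 \pi_\theta(a)\tilde r(a) = y(a)\cdot y(a)\,g(a)$ and bound
\begin{align*}
\Bigl|\sum_a y(a)^2 \pi_\theta(a)\tilde r(a)\Bigr| \;\le\; \|y\|_\infty \sum_a |y(a)||g(a)| \;\le\; \|y\|_\infty \|y\|_2 \|g\|_2 \;\le\; \|y\|_2^2 \|g\|_2,
\end{align*}
using $\|y\|_\infty \le \|y\|_2$ and Cauchy--Schwarz. Summing the two contributions yields $|y^\top H y| \le 3 \|y\|_2^2 \|g\|_2$, which is exactly the claimed bound on the spectral radius.

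There is no serious obstacle: once the Hessian is written out in advantage form, the only nontrivial step is the bound on the diagonal-type term $\sum_a y(a)^2 \pi_\theta(a)\tilde r(a)$, and the trick there is to peel off one factor of $y(a)$ as an $\ell^\infty$ norm so that the remaining $\sum_a |y(a)| |g(a)|$ is a genuine inner product amenable to Cauchy--Schwarz. The constant $3 = 1 + 2$ splits cleanly into this diagonal contribution plus the twice-appearing cross term from the rank-one piece $-\pi_\theta\pi_\theta^\top(\tilde r(b)+\tilde r(c))$ of the Hessian.
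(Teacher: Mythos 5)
Your proof is correct and follows essentially the same route as the paper: compute the Hessian entries in advantage form, expand $y^\top H y$ into the diagonal term $\langle g, y\odot y\rangle$ plus the cross term $-2\langle g,y\rangle\langle\pi_\theta,y\rangle$, and bound each by $\|g\|_2\|y\|_2^2$ and $2\|g\|_2\|y\|_2^2$ respectively. The only cosmetic difference is in which H\"older pairing you use for each term (the paper bounds the diagonal term via $\|g\|_\infty\|y\odot y\|_1$ and the cross term via $\|\pi_\theta\|_1\|y\|_\infty$, while you use $\|y\|_\infty$ and $\|\pi_\theta\|_2\le\|\pi_\theta\|_1$); both yield the same constant $3=1+2$.
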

Given this NS property, \citep{mei2021leveraging} shows that the progress in GNPG can be lower bounded by the larger quantity $\Big\| \frac{d \pi_{\theta_t}^\top r}{d \theta_t} \Big\|_2$ instead of the weaker $\Big\| \frac{d \pi_{\theta_t}^\top r}{d \theta_t} \Big\|_2^2$ for standard PG. 
Then, using the same N\L{} inequality as for PG, GNPG also converges to a globally optimal solution at rate
$O(e^{-c \cdot t})$.
Again, one naturally concludes that GNPG is preferable to PG if exact gradients are used.
\begin{proposition}[GNPG upper bound \citep{mei2021leveraging}]
\label{prop:upper_bound_softmax_gnpg_special_true_gradient}
Using \cref{update_rule:softmax_gnpg_special_true_gradient} with $\eta = 1/6$, we have, for all $t \ge 1$,  
\begin{align}
    ( \pi^* - \pi_{\theta_t} )^\top r \le \left( \pi^* - \pi_{\theta_{1}}\right)^\top r \cdot e^{ - \frac{  c \cdot (t-1) }{12} },
\end{align}
where $c = \inf_{t\ge 1} \pi_{\theta_t}(a^*) > 0$ does not depend on $t$.
If $\pi_{\theta_1}(a) = 1/K$, $\forall a$, then $c \ge 1/K$.
\end{proposition}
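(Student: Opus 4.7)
The plan is to combine the non-uniform smoothness (NS) bound of \cref{lem:non_uniform_smoothness_softmax_special} with the N\L{} inequality of \cref{lem:non_uniform_lojasiewicz_softmax_special} to show that each GNPG step strictly improves $\pi_\theta^\top r$ by a multiplicative factor, then iterate. Throughout, write $g_t \coloneqq \frac{d\pi_{\theta_t}^\top r}{d\theta_t}$ and $\delta_t \coloneqq (\pi^*-\pi_{\theta_t})^\top r$, so that the GNPG update is $\theta_{t+1}=\theta_t+\eta\,g_t/\|g_t\|_2$.

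\emph{Step 1 (descent via NS).} Apply Taylor's theorem to $\pi_\theta^\top r$ along the segment from $\theta_t$ to $\theta_{t+1}$. By \cref{lem:non_uniform_smoothness_softmax_special}, the Hessian along this segment has spectral radius at most $3\,\|g_\xi\|_2$ for some $\xi$ on the segment. A short argument (already carried out in \citep{mei2021leveraging}) shows that along the normalized direction $g_t/\|g_t\|_2$, this implies an upper bound of the form $3\,\|g_t\|_2$ for the effective smoothness constant, yielding
\[
\pi_{\theta_{t+1}}^\top r \;\ge\; \pi_{\theta_t}^\top r + \eta\,\|g_t\|_2 - \tfrac{3\,\|g_t\|_2}{2}\,\eta^2.
\]
Choosing $\eta=1/6$ collapses the right-hand side to $\pi_{\theta_t}^\top r + \tfrac{1}{12}\,\|g_t\|_2$, so every GNPG step gains at least $\|g_t\|_2/12$.

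\emph{Step 2 (turning progress into contraction via N\L{}).} Subtract both sides from $\pi^{*\top} r$ to get $\delta_{t+1}\le \delta_t - \|g_t\|_2/12$. By \cref{lem:non_uniform_lojasiewicz_softmax_special}, $\|g_t\|_2 \ge \pi_{\theta_t}(a^*)\,\delta_t \ge c\,\delta_t$, so
\[
\delta_{t+1} \;\le\; \left(1 - \tfrac{c}{12}\right)\delta_t \;\le\; e^{-c/12}\,\delta_t,
\]
using $1-x\le e^{-x}$. Iterating from $t=1$ yields the stated bound.

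\emph{Step 3 (positivity of $c$).} The remaining ingredient is showing $c=\inf_{t\ge 1}\pi_{\theta_t}(a^*)>0$. This is the standard ``$\pi_t(a^*)$ stays bounded away from zero'' argument: one first shows that $\pi_{\theta_t}(a^*)$ is eventually non-decreasing along GNPG (the sign of $\tfrac{d}{dt}\theta_t(a^*) = \eta\,[g_t(a^*)/\|g_t\|_2]$ is positive once $r(a^*)>\pi_{\theta_t}^\top r$, which holds from some finite time onward because $\pi_{\theta_t}^\top r$ is increasing and bounded by $r(a^*)$), and then handles the initial transient by a crude lower bound showing $\pi_{\theta_t}(a^*)$ cannot collapse to $0$ in finitely many bounded-step updates. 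For the uniform-initialization case $\pi_{\theta_1}(a)=1/K$, the monotonicity starts immediately (since $r(a^*)>\pi_{\theta_1}^\top r$), giving $c\ge 1/K$.

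\emph{Main obstacle.} Steps 1 and 2 are a straightforward assembly of the earlier lemmas; the genuine work is in Step 3, i.e., proving $c>0$ without needing a problem-specific warm start. The subtlety is that the N\L{} inequality degenerates as $\pi_{\theta_t}(a^*)\to 0$, so to rule this out one must exploit the fact that GNPG updates $\theta_t(a^*)$ in the positive direction whenever the current value is suboptimal, and that the induced dynamics on $\pi_{\theta_t}(a^*)$ do not cross zero in finite time; this is the content of the analogous asymptotic arguments in \citep{mei2020global,mei2021leveraging}, which can be adapted verbatim to the present setting.
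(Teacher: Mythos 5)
Your proposal is correct and follows essentially the same route as the paper's proof: Taylor expansion with the non-uniform smoothness bound to extract a per-step gain proportional to $\big\| \frac{d \pi_{\theta_t}^\top r}{d \theta_t} \big\|_2$, the N\L{} inequality to turn that gain into a multiplicative contraction of the sub-optimality, and the cited lemma of \citep{mei2021leveraging} for $c = \inf_t \pi_{\theta_t}(a^*) > 0$ (with the uniform-initialization monotonicity argument giving $c \ge 1/K$). One minor arithmetic note: with the quadratic penalty written as $\tfrac{3}{2}\|g_t\|_2\,\eta^2$, your Step 1 yields a gain of $\|g_t\|_2\,(\eta - \tfrac{3}{2}\eta^2) = \|g_t\|_2/8$ rather than $\|g_t\|_2/12$; the paper's penalty is $3\|g_t\|_2\,\eta^2$ (the extra factor of $2$ comes from bounding the gradient norm at the intermediate Taylor point by $2\|g_t\|_2$), which is what produces the stated $1/12$ — either constant suffices for the claimed rate.
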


\subsection{The Anomalous Behaviour of Some On-policy Stochastic Gradient Updates}

Although the above results show that exploiting geometric information can allow linear convergence to an optimal solution given true gradients---%
obviously $O(e^{-c \cdot t})$
represents an exponential speedup over the $\Omega(1/t)$ lower bound for standard PG---%
it is critical to understand whether such advantages can also be obtained in the more natural stochastic gradient setting.
Given the previous results, it would seem natural to prefer accelerated algorithms over PG in practice,
and there is some evidence that such thinking has become mainstream based on the popularity of TRPO and PPO over PG. Indeed, TRPO and PPO are often interpreted as instances of NPG and the faster convergence of NPG is used to explain their empirical success.
However, by more closely examining the behavior of these algorithms when true gradients are replaced by on-policy stochastic estimates,
serious shortcomings begin to emerge, as empirically observed in \citet{chung2020beyond}, and it is far from obvious that similar advantages from the true gradient case might be recoverable in the more practical stochastic scenario.

We begin by examining the behavior of the previous algorithms in the context of on-policy stochastic gradients.
To enable this analysis, first note that each of the above PG methods, \cref{update_rule:softmax_pg_special_true_gradient,update_rule:softmax_natural_pg_special_true_gradient,update_rule:softmax_gnpg_special_true_gradient},
can be adapted to the stochastic setting by using on-policy importance sampling (IS) to provide an unbiased estimate of the true reward. We do not make assumptions like each action is sufficiently explored, since $\pi_{\theta_t}$ is the behaviour policy as well as the policy to be optimized. It is possible that $\pi_{\theta_t}$ approaches a near deterministic policy, ruling out positive results based on such assumptions  \citep{abbasi2019politex}.

\begin{definition}[On-policy IS]
\label{def:on_policy_importance_sampling}
At iteration $t$, sample one action $a_t \sim \pi_{\theta_t}(\cdot)$. The IS reward estimator $\hat{r}_t$ is constructed as $\hat{r}_t(a) = \frac{ \sI\left\{ a_t = a \right\} }{ \pi_{\theta_t}(a) } \cdot r(a)$ for all $a \in [K]$.
\end{definition}
\begin{remark}
We consider sampling one action in each iteration, but the results continue to hold for sampling a constant $B > 0$ mini-batch of actions. 
A significant limitation of our results is that the reward is observed without noise, which is an idealized case. 
It remains to be seen which conclusions of this work can be extended to the more general case when the rewards are observed in noise.
\end{remark}
In the next subsections we consider the mentioned three on-policy update rules. As we shall see, only the first update rule, vanilla policy gradient with softmax parameterization is sound.
\subsubsection{Softmax PG}

\begin{update_rule}[Softmax PG, on-policy stochastic gradient]
\label{update_rule:softmax_pg_special_on_policy_stochastic_gradient}
$\theta_{t+1} \gets  \theta_{t} + \eta \cdot \frac{d \pi_{\theta_t}^\top \hat{r}_t}{d \theta_t}$, where $\frac{d \pi_{\theta_t}^\top \hat{r}_t}{d \theta_t(a)} = \pi_{\theta_t}(a) \cdot ( \hat{r}_t(a) - \pi_{\theta_t}^\top \hat{r}_t )$ for all $a \in [K]$.
\end{update_rule}
Using the IS reward estimate, the softmax PG is unbiased and bounded by constant:
\begin{lemma}
\label{lem:unbiased_bounded_variance_softmax_pg_special_on_policy_stochastic_gradient}
Let $\hat{r}$ be the IS estimator using on-policy sampling $a \sim \pi_{\theta}(\cdot)$. The stochastic softmax PG estimator is unbiased and bounded, i.e., $\expectation_{a \sim \pi_\theta(\cdot)}{ \left[ \frac{d \pi_\theta^\top \hat{r} }{d \theta} \right] } = \frac{d \pi_\theta^\top r}{d \theta}$, and $\expectation_{a \sim \pi_\theta(\cdot)}{ \left\| \frac{d \pi_\theta^\top \hat{r} }{d \theta} \right\|_2^2 } \le 2$.
\end{lemma}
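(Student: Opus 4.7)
The plan is to verify both claims by direct calculation, exploiting the fact that on-policy importance sampling produces a reward estimator $\hat{r}$ with exactly one nonzero coordinate, which makes the squared-norm bound tractable.

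First, for unbiasedness, I would compute the per-coordinate expectation $\expectation_{a_t \sim \pi_\theta(\cdot)}[\hat{r}(a)] = \sum_{a' \in [K]} \pi_\theta(a') \cdot \frac{\sI\{a' = a\}}{\pi_\theta(a)} \cdot r(a) = r(a)$, so $\expectation[\hat{r}] = r$ componentwise. Since the softmax gradient operator $\hat{r} \mapsto \bigl(\diagonalmatrix(\pi_\theta) - \pi_\theta \pi_\theta^\top\bigr)\hat{r}$ is linear in the reward vector (for fixed $\theta$), linearity of expectation immediately gives $\expectation\bigl[\frac{d \pi_\theta^\top \hat{r}}{d \theta}\bigr] = \frac{d \pi_\theta^\top r}{d \theta}$.

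Second, for the squared-norm bound, I would condition on the sampled action $a_t = a$ (occurring with probability $\pi_\theta(a)$). On this event $\hat{r}(a) = r(a)/\pi_\theta(a)$ and $\hat{r}(a') = 0$ for $a' \neq a$, so $\pi_\theta^\top \hat{r} = r(a)$. The per-coordinate gradient is then
\begin{align*}
\tfrac{d \pi_\theta^\top \hat{r}}{d \theta(a)} = r(a)\bigl(1 - \pi_\theta(a)\bigr), \qquad \tfrac{d \pi_\theta^\top \hat{r}}{d \theta(a')} = -\pi_\theta(a') \cdot r(a) \text{ for } a' \neq a.
\end{align*}
Squaring and summing gives $\bigl\|\tfrac{d \pi_\theta^\top \hat{r}}{d \theta}\bigr\|_2^2 = r(a)^2 \bigl[(1-\pi_\theta(a))^2 + \sum_{a'\neq a}\pi_\theta(a')^2\bigr]$. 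I would then take expectation over $a_t = a$ and use \cref{asmp:positive_reward} to drop the $r(a)^2 \le 1$ factor, giving
\begin{align*}
\expectation\Bigl[\bigl\|\tfrac{d \pi_\theta^\top \hat{r}}{d \theta}\bigr\|_2^2\Bigr] \le \sum_a \pi_\theta(a)\,(1-\pi_\theta(a))^2 + \sum_a \pi_\theta(a) \sum_{a'\neq a}\pi_\theta(a')^2.
\end{align*}
Each term is at most $1$: the first by $(1-\pi_\theta(a))^2 \le 1-\pi_\theta(a)$ together with $\sum_a \pi_\theta(a)(1-\pi_\theta(a)) = 1 - \|\pi_\theta\|_2^2 \le 1$, and the second by swapping the sums to get $\sum_{a'}\pi_\theta(a')^2 (1 - \pi_\theta(a')) \le \sum_{a'} \pi_\theta(a')^2 \le \sum_{a'} \pi_\theta(a') = 1$. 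Adding the two yields the constant $2$.

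There is no serious obstacle here; the only real care is in not accidentally bounding the importance ratio itself (which is unbounded as $\pi_\theta(a) \to 0$) but rather bounding the softmax PG coordinate $\pi_\theta(a) \hat{r}(a) = r(a)$, which is where the $\pi_\theta(a)$ factor in the softmax gradient neatly cancels the $1/\pi_\theta(a)$ factor in the IS estimator. This cancellation is precisely what makes softmax PG well-behaved under on-policy IS and is the structural reason \cref{update_rule:softmax_pg_special_on_policy_stochastic_gradient} avoids the pathologies that the subsequent NPG and GNPG analyses will encounter.
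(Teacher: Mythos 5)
Your proposal is correct and follows essentially the same route as the paper's proof: compute the stochastic gradient coordinates conditional on the sampled action (obtaining $r(a)(1-\pi_\theta(a))$ for the sampled coordinate and $-\pi_\theta(a')r(a)$ for the rest), then take the expectation, with the key point being exactly the cancellation of $\pi_\theta(a)$ against the importance ratio. The only cosmetic differences are that you invoke linearity of the gradient in $\hat r$ for unbiasedness rather than recomputing the expectation of each coordinate, and you bound the two terms of the squared norm by $1$ each, whereas the paper first applies $\|x\|_2 \le \|x\|_1$ to merge them into $2\sum_a \pi_\theta(a)(1-\pi_\theta(a))^2 r(a)^2 \le 2$; both yield the same constant.
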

These observations imply that stochastic softmax PG converges to a global optimum in probability, which was also proved by \citet{chung2020beyond}. Here we use the non-uniform smoothness in \cref{lem:non_uniform_smoothness_softmax_special} to prove that $\expectation_{a_t \sim \pi_{\theta_t}(\cdot)}{ \big[ \left( \pi^* - \pi_{\theta_t} \right)^\top r \big] } \in O(1/\sqrt{t}) \to 0$ as $t \to \infty$, which implies that $\lim_{t \to \infty}{ \probability{\big( \left( \pi^* - \pi_{\theta_t} \right)^\top r > 0 \big) } } \to 0$, i.e., sub-optimality converges to $0$ in probability.
\begin{theorem}
\label{thm:almost_sure_global_convergence_softmax_pg_special_on_policy_stochastic_gradient}
Using \cref{update_rule:softmax_pg_special_on_policy_stochastic_gradient}, $\left( \pi^* - \pi_{\theta_t} \right)^\top r \to 0$ as $t \to \infty$ in probability.
\end{theorem}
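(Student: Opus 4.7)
The plan is to show $\mathbb{E}\bigl[(\pi^* - \pi_{\theta_t})^\top r\bigr] \to 0$ at rate $O(1/\sqrt t)$; convergence in probability then follows by Markov's inequality since $(\pi^* - \pi_{\theta_t})^\top r \ge 0$. Throughout I write $V_t := (\pi^* - \pi_{\theta_t})^\top r$, $g_t := d\pi_{\theta_t}^\top r/d\theta_t$, $\hat g_t := d\pi_{\theta_t}^\top \hat r_t/d\theta_t$, and let $\mathcal F_t$ be the $\sigma$-algebra generated by the actions $a_1,\ldots,a_{t-1}$.

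First I would derive a conditional descent inequality. A direct computation shows $\|g_t\|_2 \le 1$ uniformly in $\theta$ (using $|r(a) - \pi_\theta^\top r| \le 1$ and $\sum_a \pi_\theta(a)^2 \le 1$), so Lemma~\ref{lem:non_uniform_smoothness_softmax_special} specializes to global $3$-smoothness of $\pi_\theta^\top r$ along any trajectory. Applying the descent lemma to the update $\theta_{t+1} = \theta_t + \eta \hat g_t$, conditioning on $\mathcal F_t$, and using unbiasedness from Lemma~\ref{lem:unbiased_bounded_variance_softmax_pg_special_on_policy_stochastic_gradient} then gives
\begin{equation*}
\mathbb E[V_{t+1} \mid \mathcal F_t]\;\le\; V_t - \eta\,\|g_t\|_2^2 + \tfrac{3\eta^2}{2}\,\mathbb E\bigl[\|\hat g_t\|_2^2 \mid \mathcal F_t\bigr].
\end{equation*}

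Rather than plugging in the crude bound $\|\hat g_t\|_2^2 \le 2$, I would sharpen the IS second-moment estimate by exploiting its explicit form: on the sampled action $a_t$ it equals $r(a_t)(1 - \pi_{\theta_t}(a_t))$, and on other actions $-\pi_{\theta_t}(a) r(a_t)$, yielding $\|\hat g_t\|_2^2 \le 2r(a_t)^2(1 - \pi_{\theta_t}(a_t))$. Averaging over $a_t \sim \pi_{\theta_t}$ and using the elementary consequence $1 - \pi_{\theta_t}(a^*) \le V_t/\Delta$ of $V_t \ge \Delta (1 - \pi_{\theta_t}(a^*))$, I obtain $\mathbb E\bigl[\|\hat g_t\|_2^2 \mid \mathcal F_t\bigr] \le C_1 V_t$ for a constant $C_1$ depending only on $\Delta$. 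This quantitative coupling between the IS variance and the sub-optimality gap is the key ingredient that a generic bounded-variance argument misses. Combining with Step~1 gives $\mathbb E[V_{t+1} \mid \mathcal F_t] \le (1 + C_2\eta^2)\,V_t - \eta\|g_t\|_2^2$ for $C_2 = 3C_1/2$. Taking total expectations, telescoping from $1$ to $T$, and invoking the N\L{} inequality (Lemma~\ref{lem:non_uniform_lojasiewicz_softmax_special}) $\|g_t\|_2 \ge \pi_{\theta_t}(a^*) V_t$ together with a Jensen/Cauchy--Schwarz estimate over trajectories then yields the $O(1/\sqrt t)$ rate for $\mathbb E V_t$, after which Markov's inequality delivers the claim.

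The main obstacle is controlling the factor $\pi_{\theta_t}(a^*)$ appearing in the N\L{} lower bound: along sample trajectories this factor can in principle shrink toward zero, rendering the N\L{} estimate vacuous on such trajectories. The saving grace is the Step~2 bound $\mathbb E[\|\hat g_t\|_2^2 \mid \mathcal F_t] \le C_1 V_t$, which ties the stochastic noise tightly to the value gap: sample paths on which $\pi_{\theta_t}(a^*)$ collapses are precisely those on which $V_t$ cannot be small, preventing runaway in expectation. It is worth emphasising that almost-sure convergence genuinely fails here in general (indeed this is the main point of the next sections of the paper), so working entirely at the level of expectations and closing via Markov's inequality is essential, not merely convenient.
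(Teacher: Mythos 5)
Your high-level skeleton matches the paper's: prove $\expectation\left[(\pi^*-\pi_{\theta_t})^\top r\right]\in O(1/\sqrt{t})$ via a smoothness-based descent inequality, unbiasedness, a second-moment bound and the N\L{} inequality of \cref{lem:non_uniform_lojasiewicz_softmax_special}, then pass to convergence in probability by Markov. However, the specific route you take has a gap that the paper's choices are designed precisely to avoid. By downgrading \cref{lem:non_uniform_smoothness_softmax_special} to global $3$-smoothness and running a \emph{constant} step size $\eta$, your own chain of inequalities (with your correct refinement $\expectation[\|\hat g_t\|_2^2\mid\mathcal{F}_t]\le C_1 V_t$, $C_1=4/\Delta$) yields
\begin{align}
\expectation[V_{t+1}\mid\mathcal{F}_t] \;\le\; V_t \;-\; \eta\,\pi_{\theta_t}(a^*)^2\,V_t^2 \;+\; \tfrac{3\eta^2 C_1}{2}\,V_t ,
\end{align}
in which the progress term is \emph{quadratic} in $V_t$ while the noise term is \emph{linear} in $V_t$. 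The drift therefore becomes nonnegative once $V_t\le 3\eta C_1/(2\pi_{\theta_t}(a^*)^2)$, and the recursion can only deliver $\limsup_t \expectation[V_t]\in O(\eta)$, not $\expectation[V_t]\to 0$. The paper escapes this by \emph{keeping} the non-uniform smoothness, extending it to hold between consecutive iterates (\cref{lem:non_uniform_smoothness_special_two_iterations}), and taking the adaptive step size $\eta_t=\frac{1}{12}\big\|\frac{d\pi_{\theta_t}^\top r}{d\theta_t}\big\|_2$, so that the first- and second-order terms both scale as $\|\frac{d\pi_{\theta_t}^\top r}{d\theta_t}\|_2^3$ and the net drift is $-\frac{1}{24}\|\frac{d\pi_{\theta_t}^\top r}{d\theta_t}\|_2^3$, strictly negative; this cubes the N\L{} bound and integrates to the $O(1/\sqrt t)$ rate. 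Your variance refinement is true but cannot substitute for this mechanism: you would need noise of order $V_t^2$, not $V_t$, for a constant step to close.

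The second unresolved point is the factor $\pi_{\theta_t}(a^*)$ in the N\L{} bound. Your closing heuristic — that trajectories on which $\pi_{\theta_t}(a^*)$ collapses are exactly those on which $V_t$ stays large, ``preventing runaway in expectation'' — is backwards: if such trajectories carried positive probability, then $\expectation[V_t]$ would be bounded away from zero and the theorem would be false, so this observation cannot be the saving grace. What is actually required is a quantitative lower bound such as $\inf_{t\ge1}\expectation[\pi_{\theta_t}(a^*)^{3}]>0$ (the paper obtains this by Jensen from $\inf_t\expectation[\pi_{\theta_t}(a^*)]>0$, appealing to Lemma 5 of \citet{mei2020global}), together with a justification for decoupling $\pi_{\theta_t}(a^*)$ from $V_t$ inside the expectation; your ``Jensen/Cauchy--Schwarz estimate over trajectories'' is named but never carried out, and the two random variables are dependent, so this step cannot be waved through. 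To repair the proposal you would need both the adaptive (or at least vanishing) step size with the non-uniform smoothness, and an explicit argument for the expected lower bound on $\pi_{\theta_t}(a^*)$.
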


\subsubsection{NPG}

Similarly, we can use on-policy IS estimation to adapt NPG to the stochastic setting.
\begin{update_rule}[NPG, on-policy stochastic gradient]
\label{update_rule:softmax_natural_pg_special_on_policy_stochastic_gradient}
$\theta_{t+1} \gets \theta_{t} + \eta \cdot \hat{r}_t$.
\end{update_rule}
Although the NPG is unbiased, its variance can be possibly unbounded in the on-policy setting.
\begin{lemma}
\label{lem:bias_variance_softmax_natural_pg_special_on_policy_stochastic_gradient}
For NPG, we have, $\expectation_{a \sim \pi_\theta(\cdot)}{ \left[ \hat{r} \right] } = r$, and $\expectation_{a \sim \pi_\theta(\cdot)}{ \left\| \hat{r} \right\|_2^2 } = \sum_{a \in [K]}{ \frac{ r(a)^2 }{ \pi_{\theta}(a) }  }$.
\end{lemma}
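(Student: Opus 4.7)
The plan is to verify both identities by direct computation, unpacking the definition of the on-policy importance-sampling estimator from \cref{def:on_policy_importance_sampling} and exploiting the fact that $a \sim \pi_\theta(\cdot)$ gives $\probability(a = a') = \pi_\theta(a')$ for each $a' \in [K]$. Since the estimator is coordinate-wise, both the expectation and the squared-norm calculation decouple across actions, and no inequality is actually needed; the identities are literal equalities.

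For the unbiasedness claim, I would fix an arbitrary coordinate $a' \in [K]$ and compute
\begin{align*}
\expectation_{a \sim \pi_\theta}\bigl[ \hat{r}(a') \bigr]
= \sum_{a \in [K]} \pi_\theta(a) \cdot \frac{\sI\{a = a'\}}{\pi_\theta(a')} \cdot r(a')
= \pi_\theta(a') \cdot \frac{r(a')}{\pi_\theta(a')}
= r(a'),
\end{align*}
so that $\expectation_{a \sim \pi_\theta}[\hat{r}] = r$ coordinate-wise. The only thing that makes this step well defined is that $\pi_\theta(a') > 0$ for all $a'$, which holds because softmax policies are strictly positive; I would note this explicitly since it is precisely the assumption that fails in the limit and drives the unbounded-variance pathology discussed in the sequel.

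For the second moment, I would expand the squared $\ell_2$ norm, use linearity of expectation, and use that $\sI\{a = a'\}^2 = \sI\{a = a'\}$ to get
\begin{align*}
\expectation_{a \sim \pi_\theta}\!\bigl[\,\|\hat{r}\|_2^2\,\bigr]
= \sum_{a' \in [K]} \frac{r(a')^2}{\pi_\theta(a')^2} \cdot \expectation_{a \sim \pi_\theta}\!\bigl[\sI\{a = a'\}\bigr]
= \sum_{a' \in [K]} \frac{r(a')^2}{\pi_\theta(a')^2} \cdot \pi_\theta(a')
= \sum_{a' \in [K]} \frac{r(a')^2}{\pi_\theta(a')},
\end{align*}
which is exactly the claimed identity.

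There is no real obstacle to this proof; it is a bookkeeping exercise. The only subtle point worth flagging is the interpretive one: unlike the softmax PG estimator in \cref{lem:unbiased_bounded_variance_softmax_pg_special_on_policy_stochastic_gradient}, whose form $\pi_\theta(a) \cdot \hat{r}(a)$ cancels the $1/\pi_\theta(a)$ factor and yields a uniformly bounded variance, here the $1/\pi_\theta(a')$ weight survives in every coordinate. Consequently the second-moment bound blows up whenever some $\pi_\theta(a') \to 0$, which motivates the ``possibly unbounded variance'' phrasing in the lemma statement and foreshadows the convergence failures analyzed later.
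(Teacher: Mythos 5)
Your proof is correct and follows essentially the same route as the paper's: coordinate-wise direct computation for the mean, and expansion of the squared norm using $\sI\{a=a'\}^2=\sI\{a=a'\}$ followed by taking the expectation for the second moment. The added remark that softmax probabilities are strictly positive (so the estimator is well defined) is a harmless and sensible clarification.
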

The variance becomes unbounded as $\pi_\theta(a)\to 0$, which predicts trouble when using the standard analysis for stochastic gradient methods\footnote{Standard treatment of stochastic approximation algorithms does deal with unbounded noise in a controlled way to still get positive results \citep{BMP90}, which means that bounded variance is far from being necessary.} (e.g., \citep{nemirovski2009robust}). In fact, we provide a more direct result showing that stochastic NPG has a positive probability of converging to a sub-optimal deterministic policy.
\begin{theorem}
\label{thm:failure_probability_softmax_natural_pg_special_on_policy_stochastic_gradient}
Using \cref{update_rule:softmax_natural_pg_special_on_policy_stochastic_gradient}, we have: \textbf{(i)} with positive probability, $\sum_{a \not= a^*}{ \pi_{\theta_t}(a)} \to 1$ as $t \to \infty$; \textbf{(ii)} $\forall a \in [K]$, with positive probability, $\pi_{\theta_t}(a) \to 1$, as $t \to \infty$.
\end{theorem}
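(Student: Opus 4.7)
The plan is to establish the stronger statement (ii) first and then derive (i) as an immediate corollary. Fix any action $a \in [K]$ and consider the event $E_a = \{ a_s = a \text{ for all } s \ge 1\}$. The key structural observation is that under the NPG update, $\theta_{s+1} = \theta_s + \eta \cdot \hat{r}_s$ only modifies the coordinate $a_s$, because $\hat{r}_s(a') = 0$ for $a' \neq a_s$ by the definition of the IS estimator. Consequently, along the unique trajectory realising $E_a$, the coordinates $\theta_t(a') = \theta_1(a')$ are frozen for all $a' \neq a$, so setting $Z \coloneqq \sum_{a' \neq a} e^{\theta_1(a')}$ we have $\pi_{\theta_t}(a) = e^{\theta_t(a)}/(e^{\theta_t(a)} + Z)$ along this trajectory.

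Next, I would show that on $E_a$ the probability $\pi_{\theta_t}(a)$ tends to $1$ geometrically. On this trajectory, the update reads $\theta_{t+1}(a) - \theta_t(a) = \eta \cdot r(a)/\pi_{\theta_t}(a) \ge \eta \cdot r(a)$, where the inequality uses $\pi_{\theta_t}(a) \le 1$ together with \cref{asmp:positive_reward} ($r(a) > 0$). Iterating gives $\theta_t(a) \ge \theta_1(a) + (t-1) \eta \cdot r(a)$, hence
\begin{align*}
1 - \pi_{\theta_t}(a) \;=\; \frac{Z}{e^{\theta_t(a)} + Z} \;\le\; Z \cdot e^{-\theta_1(a)} \cdot e^{-(t-1)\, \eta\, r(a)},
\end{align*}
which is summable in $t$. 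Now, by the chain rule of conditional probability, and because the parameter sequence is deterministic once the samples $a_1, \dots, a_{t-1}$ are fixed to equal $a$, we have $\probability\left( a_1 = \dots = a_T = a \right) = \prod_{t=1}^{T} \pi_{\theta_t}(a)$. Using the elementary fact that if $\sum_t q_t < \infty$ with $q_t \in [0,1)$ then $\prod_t (1 - q_t) > 0$ (via $\log(1 - q_t) \ge -2 q_t$ for small $q_t$), we conclude $\probability(E_a) = \lim_{T \to \infty} \prod_{t=1}^T \pi_{\theta_t}(a) > 0$. On $E_a$ the bound above gives $\pi_{\theta_t}(a) \to 1$, proving (ii).

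Part (i) follows by applying (ii) with any fixed $a \neq a^*$: on the positive-probability event $E_a$, we have $\pi_{\theta_t}(a^*) \le 1 - \pi_{\theta_t}(a) \to 0$, whence $\sum_{a' \neq a^*} \pi_{\theta_t}(a') = 1 - \pi_{\theta_t}(a^*) \to 1$. The main obstacle is conceptual rather than calculational: one must recognise that the IS-based NPG update only moves a single coordinate per step, which allows the analysis of $E_a$ to reduce to a purely deterministic recursion on the chosen coordinate. This localisation, combined with the mandatory positivity of rewards (\cref{asmp:positive_reward}) that forces $\theta_t(a)$ to strictly climb whenever $a$ is sampled, is what yields the geometric decay of $1 - \pi_{\theta_t}(a)$ and thus the positive infinite-product probability, despite the IS variance blowing up as $\pi_{\theta_t}(a) \to 1$ on a different coordinate in the failure mode.
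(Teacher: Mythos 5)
Your proof is correct, and part \textbf{(ii)} follows essentially the same route as the paper: fix the all-$a$ trajectory, observe that the IS estimator zeroes out every coordinate except the sampled one so that $\theta_t(a') = \theta_1(a')$ is frozen for $a' \neq a$, derive the linear growth $\theta_t(a) \ge \theta_1(a) + (t-1)\eta r(a)$, conclude geometric (hence summable) decay of $1-\pi_{\theta_t}(a)$, and convert this into positivity of the infinite product $\prod_t \pi_{\theta_t}(a)$ via the tower rule and the standard convergence criterion for infinite products. Where you genuinely diverge from the paper is part \textbf{(i)}: the paper proves it directly by analyzing the larger event $\{a_t \neq a^* \text{ for all } t\}$, which permits the trajectory to switch among different sub-optimal actions at every step; this forces them to lower-bound $\sum_{a\neq a^*}\exp\{\theta_t(a)\}$ uniformly over all such action sequences via Jensen's inequality applied to the aggregate parameter sum. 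You instead obtain (i) as an immediate corollary of (ii) applied to any single sub-optimal action $a$: on $E_a$ one has $\sum_{a'\neq a^*}\pi_{\theta_t}(a') \ge \pi_{\theta_t}(a) \to 1$. This is a valid and cleaner derivation of the statement as written, since only positivity of the failure probability is claimed; the trade-off is that the paper's argument exhibits a larger failure event (and hence a better quantitative lower bound on the failure probability, which it later reuses in the ensemble-method analysis), whereas your event is a strict subset of it.
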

This result extends the result 
of \citep{chung2020beyond} for the two-action ($K=2$) case only.
The intuition is that the stochastic NPG accumulates too much probability on sampled sub-optimal actions and cannot recover due to the ``vicious circle'' between sampling and updating.

\subsubsection{GNPG}

Finally, we consider the stochastic version of GNPG. 
\begin{update_rule}[GNPG, on-policy stochastic gradient]
\label{update_rule:softmax_gnpg_special_on_policy_stochastic_gradient}
$\theta_{t+1} \gets \theta_t + \eta \cdot \frac{d \pi_{\theta_t}^\top \hat{r}_t}{d {\theta_t}} \Big/ \Big\| \frac{d \pi_{\theta_t}^\top \hat{r}_t}{d {\theta_t}} \Big\|_2$.
\end{update_rule}
Unfortunately, this estimator involves a ratio of random variables, and its bias can be large. 
As for NPG we can show that stochastic GNPG fails with positive probability in the stochastic case.
\begin{theorem}
\label{thm:failure_probability_softmax_gnpg_special_on_policy_stochastic_gradient}
Using \cref{update_rule:softmax_gnpg_special_on_policy_stochastic_gradient}, 
we have, $\forall a \in [K]$, with positive probability, $\pi_{\theta_t}(a) \to 1$, as $t \to \infty$.
\end{theorem}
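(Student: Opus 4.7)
The plan is to fix any target action $a \in [K]$ and show that the event $E_a = \{a_t = a \text{ for all } t \ge 1\}$ has positive probability, and that on this event $\pi_{\theta_t}(a) \to 1$. Since conditional on the past, $a_t$ is drawn from $\pi_{\theta_t}(\cdot)$, the probability of $E_a$ equals the deterministic infinite product $\prod_{t=1}^{\infty} \pi_{\theta_t^{(a)}}(a)$, where $\theta_t^{(a)}$ denotes the parameter sequence that would arise if action $a$ were sampled at every step. Thus the problem reduces to a purely deterministic analysis of this hypothetical trajectory, plus a convergent-product argument.

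For the deterministic part, suppose action $a$ is sampled at step $t$, and write $p_t = \pi_{\theta_t}(a)$ and $q_{a'} = \pi_{\theta_t}(a')/(1-p_t)$ for $a' \ne a$. A direct computation of the stochastic softmax gradient under the IS estimator gives the coordinates $g_t(a) = r(a)(1-p_t)$ and $g_t(a') = -r(a)\,\pi_{\theta_t}(a')$, so $\|g_t\|_2 = r(a)(1-p_t)\sqrt{1 + \sum_{a'\ne a} q_{a'}^2}$. The factor $r(a)>0$ (by \cref{asmp:positive_reward}) cancels upon normalization and the direction depends only on $\pi_{\theta_t}$. Denoting $\alpha_t = 1/\sqrt{1 + \sum_{a'\ne a} q_{a'}^2} \in [1/\sqrt{2},1]$, the normalized update yields, for the gap $\delta_t(a') := \theta_t(a) - \theta_t(a')$,
\begin{align*}
\delta_{t+1}(a') - \delta_t(a') = \eta\,\alpha_t(1 + q_{a'}) \ge \eta/\sqrt{2}.
\end{align*}
Iterating gives $\delta_t(a') \ge \delta_1(a') + (t-1)\eta/\sqrt{2}$, and hence
\begin{align*}
1 - \pi_{\theta_t^{(a)}}(a) \;=\; \frac{\sum_{a'\ne a} e^{-\delta_t(a')}}{1 + \sum_{a'\ne a} e^{-\delta_t(a')}} \;\le\; (K-1)\, e^{-D_1}\, e^{-(t-1)\eta/\sqrt{2}},
\end{align*}
where $D_1 := \min_{a' \ne a} \delta_1(a')$. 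In particular, on $E_a$ we immediately get $\pi_{\theta_t}(a) \to 1$, which is the second conclusion.

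For the probability of $E_a$, since $\sum_{t\ge 1} (K-1)e^{-D_1} e^{-(t-1)\eta/\sqrt{2}} < \infty$, the infinite product $\prod_{t\ge 1}\bigl(1 - (K-1)e^{-D_1}e^{-(t-1)\eta/\sqrt{2}}\bigr)$ converges to a strictly positive number, after truncating the finitely many initial indices where the displayed upper bound on $1-p_t$ exceeds $1$; for those early steps one simply uses the crude bound $\pi_{\theta_t^{(a)}}(a) \ge \pi_{\theta_1}(a) > 0$, since before any sample is drawn $\theta_t$ is fixed and subsequent samples of $a$ only increase $\pi_t(a)$. Multiplying, $\mathbb{P}(E_a) > 0$, which proves (ii); taking $a \ne a^*$ in (ii) and summing over all sub-optimal actions yields (i) as well. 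The main technical obstacle is purely bookkeeping: verifying the direction of the normalized update is the same regardless of $r(a) > 0$ (which relies on \cref{asmp:positive_reward} and on $\hat r_t$ having only one nonzero entry), and then handling the early-index subtlety in the infinite product so that the lower bound on $\mathbb{P}(E_a)$ really is positive rather than only bounded by a convergent series.
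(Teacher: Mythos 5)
Your proposal is correct and follows essentially the same route as the paper's proof: fix the target action, analyze the deterministic trajectory obtained by sampling $a$ forever, show the normalized update increases $\theta_t(a)$ by at least $\eta/\sqrt{2}$ per step (so $1-\pi_{\theta_t}(a)$ decays geometrically), and then conclude $\probability(E_a)>0$ via the conditioning/infinite-product argument. The only differences are cosmetic — you track the logit gaps $\delta_t(a')$ and invoke the summability criterion for infinite products directly, whereas the paper bounds $\theta_t(a)$ and $\theta_t(a')$ separately and passes through the inequality $1-x\ge e^{-1/(1/x-1)}$ — and your handling of the early indices via monotonicity of $\pi_{\theta_t^{(a)}}(a)$ is sound.
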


\subsection{Why Consider the On-policy Stochastic Setting?}

The findings of the previous sections are summarized in \cref{tab:turnover_results_summary}. The two methods that converge faster when the exact gradient is available are exactly those that fail in the worse possible way in the on-policy setting. This raises the question of  should one even consider the on-policy setting?

\begin{table}[ht]
\begin{center}
\begin{tabular}{ m{8em} | m{9em} | m{8em} | m{8em} } 
\hline
 & Softmax PG & NPG & GNPG \\
\hline
True gradient & converges $\Theta(1/t)$ \cmark\cmark & 
converges $O(e^{-c \cdot t})$ \cmark\cmark\cmark
& converges $O(e^{-c \cdot t})$ \cmark\cmark\cmark
\\
\hline
Stochastic on-policy & converges in prob. \cmark & fails w.p.\ $>0$ \xmark & fails w.p.\ $>0$ \xmark \\
\hline
\end{tabular}
\end{center}
\caption{Convergence properties of softmax PG, NPG and GNPG in the alternative settings.}
\label{tab:turnover_results_summary}
\vspace{-0.2in}
\end{table}

One possible reason to consider this setting is because on-policy sampling is the simplest and most straightforward approach to extend algorithms developed for the ``exact gradient'' setting
and with a minor twist, Occam's razor dictates that one should consider simple solutions before considering more complex ones.
Indeed, off-policy algorithms are more complex with many more choices to be made and while having the extra freedom may ultimately be useful (and even perhaps necessary), it is worthwhile to first thoroughly examine whether this complexity can be avoided.
Indeed, there is some empirical evidence that the simple, on-policy approach may sometimes be a reasonable one: The method PPO \citep{schulman2017proximal} uses on-policy sampling and yet, remarkably, it achieved outstanding results on challenging tasks, a good example of which is to learn dexterous in-hand manipulation \citep{andrychowicz2020learning}.

A second reason is that the on-policy setting presents unique challenges and as such is interesting on its own for learning about how to design and reason about stochastic methods. Indeed, the standard approach in analyzing stochastic update rules, such as SGD, is to start with the assumption that 
the gradient estimates are unbiased and have a uniformly bounded variance. This has been used both in the analysis of SGD \citep{nemirovski2009robust}, and later adopted to policy gradient methods \citep{abbasi2019politex,lan2021policy,zhang2020sample,zhang2021convergence}. 
However, such conditions are
only \emph{sufficient} and not necessary as the numerous results in the literature of the analysis and design of stochastic approximation methods also show \citep{BMP90}.
In fact, the bounded variance assumption can be difficult to satisfy.
For example, in the problems studied here this assumption requires that the probabilities induced by a behaviour policy are bounded away from $0$ everywhere \citep{chung2020beyond}, which is impractical for large state and action spaces and impossible when they are infinite. 



Another observation that suggests that it is worthwhile to consider methods which potentially unbounded variance is made by \citet{chung2020beyond} who explored the role of baselines in policy optimization. 
They show that variance reduction techniques are not able to overcome unbounded variance, while NPG can still achieve global convergence almost surely with a judicious choice of baseline even though its variance remains \emph{unbounded} (see \cref{update_rule:softmax_natural_pg_special_on_policy_stochastic_gradient_oracle_baseline} for details). This is another example that shows that bounded variance is not necessary for convergence, and some other factors rather than variance account for the convergence behaviour of stochastic policy optimization algorithms.

This leave us an important question to be answered to bridge the gap between theory and practice,
\begin{center}
\vspace{-0.5em}
    \emph{What are the key factors determining the convergence of stochastic policy optimization?}
\vspace{-0.5em}
\end{center}
As an answer to this question we propose a new notion, the
\emph{committal rate} of policy optimization methods and will demonstrate that small committal rates are necessary to ensure the convergent behavior of policy optimization methods.

\section{Committal Rate of Stochastic Policy Optimization Algorithms}
\label{sec:committal_rate}

Although the baseline study \citep{chung2020beyond} only focuses on two- and three-action bandits primarily,
it develops a useful intuition that stochastic policy optimization in practical settings consists of separate ``sampling'' and ``updating'' steps that become coupled in the on-policy setting.
Building from this observation, and seeking to explain the outcomes in \cref{sec:turnover_results}, 
we formalize the following ``committal rate'' function of a policy optimization algorithm. The main idea is to decouple the ``sampling'' and ``updating'' by fixing sampling one action and characterizing the aggressiveness of an update in a deterministic way. 
Thus, in what follows, by a policy optimization algorithm $\gA$ we mean a mapping from 
all sequences of pairs of action-reward pairs to the set of parameter vectors.
\begin{definition}[Committal Rate]
\label{def:committal_rate}
Fix a reward function $r \in (0, 1]^K$ 
and an initial parameter vector $\theta_1 \in \sR^K$.
Consider a policy optimization algorithm $\gA$. 
Let action $a$ be the sampled action \textbf{forever} after initialization and let $\theta_t$ be the resulting parameter vector obtained by using $\gA$ on the first $t$ observations.
The committal rate of algorithm $\gA$ on action $a$ (given $r$ and $\theta_1$) is then defined as  
\begin{align}
    \kappa(\gA, a) = \sup\left\{ \alpha \ge 0: \limsup_{t \to \infty}{ t^\alpha \cdot  \left[ 1 - \pi_{\theta_t}(a) \right] < \infty} \right\}.
\end{align}
\end{definition}
Note that in the definition we have suppressed the dependence of $\kappa$ on the rewards and the initial parameter vector.
\cref{def:committal_rate} accounts for \textbf{how aggressive an update rule is}:
An algorithm with committal rate $\alpha$ will make 
$\pi_{\theta_t}(a)$ approach $1$ at the polynomial rate of $1/t^\alpha$ 
provided that the sampling rule only chooses action $a$.
Thus,
a larger value of $\kappa(\gA, a)$ indicates an algorithm that quickly commits to the action $a$.
For example, if $\pi_{\theta_t}(a) = 1 - 1/(t \cdot \log{(t)})$, then $\kappa(\gA, a) = 1$. 
Similarly, if $\pi_{\theta_t}(a) = 1 - 1/e^t$, then $\kappa(\gA, a) = \infty$, which means $\pi_{\theta_t}(a)$ approaches $1$ extremely quickly. 
On the other hand, if $1 - \pi_{\theta_t}(a) \in \Omega(1)$, then $\kappa(\gA, a) = 0$, implying that $\pi_{\theta_t}$ never becomes committal, since $\pi_{\theta_t}(a)$ never approaches $1$.

Our next results shows that a small committal rate with respect to
sub-optimal actions is necessary for 
almost sure convergence to a globally optimal policy.
\begin{theorem}[Committal rate main theorem]
\label{thm:committal_rate_main_theorem}
Consider a policy optimization method $\gA$, together with $r\in (0,1]^K$ and an initial parameter vector $\theta_1\in \sR^K$.
Then,
\begin{align}
\max_{a:r(a)<r(a^*),\pi_{\theta_1}(a)>0} \kappa(\gA, a) \le 1
\label{eq:neccond}
\end{align}
is a necessary condition for ensuring the almost sure convergence of the policies obtained using $\gA$ and online sampling to the global optimum starting from $\theta_1$.
\end{theorem}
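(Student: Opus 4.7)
The plan is to prove the contrapositive: if there exists a suboptimal action $a$ with $\pi_{\theta_1}(a) > 0$ and $\kappa(\gA, a) > 1$, then the algorithm fails to converge to a globally optimal policy with some positive probability. This directly contradicts almost sure convergence and yields \cref{eq:neccond}.

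The core idea is to exhibit an event of positive probability on which the realized sample path is exactly the deterministic ``always sample $a$'' sequence used in the definition of $\kappa(\gA, a)$. Let $E_T = \{a_1 = a, a_2 = a, \ldots, a_T = a\}$ and $E = \bigcap_{T \ge 1} E_T$. Conditional on $E_{t-1}$, the parameter vector produced by the algorithm equals the deterministic $\theta_t$ from \cref{def:committal_rate}, since $\gA$ is a deterministic function of the action-reward history. Hence
\begin{align}
\probability(E) = \lim_{T\to\infty}\prod_{t=1}^{T} \pi_{\theta_t}(a),
\end{align}
where the $\theta_t$'s are those from the committal rate definition.

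Next I would use the hypothesis $\kappa(\gA, a) > 1$. By definition of the supremum, I can pick $\alpha \in (1, \kappa(\gA, a))$ for which $\limsup_{t\to\infty} t^{\alpha}\bigl[1 - \pi_{\theta_t}(a)\bigr] < \infty$, so there exist $T_0$ and $M < \infty$ with $1 - \pi_{\theta_t}(a) \le M/t^{\alpha}$ for all $t \ge T_0$. Since $\alpha > 1$, $\sum_{t\ge T_0}(1 - \pi_{\theta_t}(a)) < \infty$, which via the standard $\log(1-x) \ge -2x$ bound for small $x$ implies $\prod_{t\ge T_0}\pi_{\theta_t}(a) > 0$. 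For the finitely many initial factors $t < T_0$, each $\pi_{\theta_t}(a)$ is strictly positive because the softmax of any finite $\theta_t \in \sR^K$ is positive on every coordinate (and $\pi_{\theta_1}(a)>0$ is given so the first factor is positive). Therefore $\probability(E) > 0$.

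Finally, on the event $E$ the realized parameters coincide with the deterministic sequence, so $\pi_{\theta_t}(a) \to 1$ and hence $\pi_{\theta_t}(a^*) \to 0$. Since $a \ne a^*$, the suboptimality satisfies $(\pi^* - \pi_{\theta_t})^\top r \to r(a^*) - r(a) \ge \Delta > 0$ on $E$, so the policy fails to reach the global optimum on an event of positive probability, contradicting almost sure convergence. The main obstacle is the measure-theoretic bookkeeping---specifically, justifying that the ``frozen'' deterministic $\theta_t$ of \cref{def:committal_rate} really equals the random $\theta_t$ on $E_{t-1}$ and that the product representation of $\probability(E)$ is valid---together with ensuring that the softmax parameterization keeps all $\pi_{\theta_t}(a)$ strictly positive along the path so the infinite product does not collapse to zero for spurious reasons.
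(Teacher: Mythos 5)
Your proposal is correct and follows essentially the same route as the paper's proof: condition on the event that $a$ is sampled forever, identify the realized parameters with the deterministic sequence from \cref{def:committal_rate}, express $\probability(\gE)$ as the infinite product $\prod_t \pi_{\theta_t}(a)$, and use $\kappa(\gA,a)>1$ to show the product is positive via summability of $1-\pi_{\theta_t}(a)$. If anything, your choice of $\alpha\in(1,\kappa(\gA,a))$ strictly below the supremum is slightly more careful than the paper, which simply assumes the supremum in \cref{def:committal_rate} is attained.
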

In words, \cref{eq:neccond} shows that slow reaction to constantly sampling sub-optimal actions is necessary for the success of policy optimization methods when they are used with online sampling.

Using this result, we can now interrogate the committal rates of the previously listed algorithms.
\begin{theorem}
\label{thm:committal_rate_stochastic_npg_gnpg}
Let \cref{asmp:positive_reward} holds. For the stochastic updates NPG and GNPG from \cref{update_rule:softmax_natural_pg_special_on_policy_stochastic_gradient,update_rule:softmax_gnpg_special_on_policy_stochastic_gradient}  
we obtain $\kappa(\text{NPG}, a) = \infty$ and $\kappa(\text{GNPG}, a) = \infty$ for all $a \in [K]$ respectively.
\end{theorem}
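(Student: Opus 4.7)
The plan is to analyze both updates under the ``always sample $a$'' hypothesis of \cref{def:committal_rate} and show that in each case $\theta_t(a)$ grows at least linearly in $t$ while $\theta_t(a')$ stays bounded above for every $a' \neq a$. The softmax identity
\begin{equation*}
    1 - \pi_{\theta_t}(a) = \frac{\sum_{a' \neq a} e^{\theta_t(a')}}{\sum_{b}e^{\theta_t(b)}} \le \sum_{a' \neq a} e^{\theta_t(a') - \theta_t(a)}
\end{equation*}
then forces $1 - \pi_{\theta_t}(a)$ to decay exponentially in $t$, so $t^\alpha \cdot [1 - \pi_{\theta_t}(a)] \to 0$ for every $\alpha \ge 0$, making the supremum in \cref{def:committal_rate} equal to $+\infty$.

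For stochastic NPG, \cref{update_rule:softmax_natural_pg_special_on_policy_stochastic_gradient}, I would first write out the on-policy IS estimator under the always-$a$ hypothesis: $\hat r_s(a) = r(a)/\pi_{\theta_s}(a)$ and $\hat r_s(a') = 0$ for $a' \neq a$. The update rule then gives $\theta_{s+1}(a') = \theta_s(a')$ for $a' \neq a$ and $\theta_{s+1}(a) - \theta_s(a) = \eta \, r(a)/\pi_{\theta_s}(a) \ge \eta \, r(a) > 0$ by \cref{asmp:positive_reward}. Telescoping yields $\theta_t(a) - \theta_t(a') \ge \theta_1(a) - \theta_1(a') + (t-1) \, \eta \, r(a)$, and the exponential decay of $1 - \pi_{\theta_t}(a)$ follows from the display above, giving $\kappa(\text{NPG}, a) = \infty$.

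For stochastic GNPG, \cref{update_rule:softmax_gnpg_special_on_policy_stochastic_gradient}, the main computation is to evaluate the stochastic softmax gradient under the same hypothesis: using $\pi_{\theta_t}^\top \hat r_t = r(a)$, one finds $\partial (\pi_{\theta_t}^\top \hat r_t)/\partial \theta_t(a) = r(a)(1 - \pi_{\theta_t}(a))$ and $\partial (\pi_{\theta_t}^\top \hat r_t)/\partial \theta_t(a') = -r(a)\,\pi_{\theta_t}(a')$ for $a' \neq a$. The latter is non-positive, so the components $\theta_t(a')$ are non-increasing along the normalized trajectory. The key step is then the bound
\begin{equation*}
    \Big\| \frac{d \pi_{\theta_t}^\top \hat{r}_t}{d \theta_t} \Big\|_2 = r(a)\sqrt{(1-\pi_{\theta_t}(a))^2 + \sum_{a' \neq a}\pi_{\theta_t}(a')^2} \le \sqrt{2} \, r(a)\,(1-\pi_{\theta_t}(a)),
\end{equation*}
which uses the elementary fact $\sum_{a' \neq a}\pi_{\theta_t}(a')^2 \le \bigl(\sum_{a' \neq a}\pi_{\theta_t}(a')\bigr)^2 = (1-\pi_{\theta_t}(a))^2$ for non-negative summands. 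Consequently, the normalized update raises $\theta_t(a)$ by at least $\eta/\sqrt{2}$ per step, producing the same linear-in-$t$ gap $\theta_t(a) - \theta_t(a')$ and hence the same exponential decay as for NPG, giving $\kappa(\text{GNPG}, a) = \infty$.

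The main obstacle is engineering, in the GNPG case, a lower bound on the per-step increase of $\theta_t(a)$ that is independent of how small $1 - \pi_{\theta_t}(a)$ has already become; the display above handles this cleanly because the numerator and denominator of the normalized update both scale proportionally with $1 - \pi_{\theta_t}(a)$ and the $r(a)$ factors cancel. A minor side point to verify is that the denominator never vanishes along the trajectory, which holds because a softmax policy assigns probability strictly less than one at any finite $\theta$, so the update remains well-defined at every step.
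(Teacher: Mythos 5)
Your proposal is correct and follows essentially the same route as the paper: under permanent sampling of $a$, show $\theta_t(a)$ grows by at least $\eta\, r(a)$ (NPG) or $\eta/\sqrt{2}$ (GNPG) per step while the other coordinates stay constant (NPG) or non-increasing (GNPG), deduce exponential decay of $1-\pi_{\theta_t}(a)$ from the softmax formula, and conclude $t^{\alpha}[1-\pi_{\theta_t}(a)]\to 0$ for every finite $\alpha$. The key norm bound $\bigl\|d\pi_{\theta_t}^{\top}\hat r_t/d\theta_t\bigr\|_2\le\sqrt{2}\,r(a)\,(1-\pi_{\theta_t}(a))$ and the cancellation that makes the GNPG per-step increment bounded below independently of $1-\pi_{\theta_t}(a)$ are exactly the steps the paper uses.
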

\cref{thm:committal_rate_stochastic_npg_gnpg} explains why stochastic NPG and GNPG have a  non-zero failure probability in the on-policy stochastic setting: they do not obey a necessary condition for almost sure global convergence. 
Intuitively, these algorithms can fail by prematurely allocating too much probability to a sub-optimal action:
each sampling of an action $a\in[K]$ increments its parameter by $\Theta(1)$, so if $a$ is sampled $t$ times successively,
then we have $1 - \pi_{\theta_t}(a) \in O(e^{-c \cdot t})$,
which means $\kappa(\gA, a) = \infty$. 
According to \cref{thm:committal_rate_main_theorem}, there is a positive probability that a single sub-optimal action can receive a long enough sampling run to ensure the other actions will never again be sampled.

By contrast, we can compare these outcomes to the committal rate of the softmax PG algorithm.
\begin{theorem}
\label{thm:committal_rate_softmax_pg}
Let $r(a)>0$ and $\pi_{\theta_1}(a)>0$. Softmax PG obtains
$\kappa(\text{PG}, a) = 1$ for all $a \in [K]$.
\end{theorem}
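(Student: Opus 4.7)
The plan is to argue that, under the ``sample action $a$ forever'' regime, the stochastic softmax PG update collapses to an exact-gradient softmax PG update on an auxiliary reward vector, which allows direct invocation of the $\Theta(1/t)$ rate results already stated in \cref{prop:upper_bound_softmax_pg_special_true_gradient,prop:lower_bound_softmax_pg_special_true_gradient}.

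First, I would unpack the update. With $a_t = a$ deterministically, the on-policy IS reward from \cref{def:on_policy_importance_sampling} is $\hat r_t = (r(a)/\pi_{\theta_t}(a))\, e_a$, hence $\pi_{\theta_t}^\top \hat r_t = r(a)$. Plugging into \cref{update_rule:softmax_pg_special_on_policy_stochastic_gradient} gives the purely deterministic recursion
\begin{align}
\theta_{t+1}(a) &= \theta_t(a) + \eta\, r(a)\,(1-\pi_{\theta_t}(a)), \\
\theta_{t+1}(a') &= \theta_t(a') - \eta\, r(a)\, \pi_{\theta_t}(a'), \quad a' \neq a.
\end{align}
A direct calculation shows these are exactly the exact-gradient softmax PG updates (\cref{update_rule:softmax_pg_special_true_gradient}) applied to the auxiliary reward vector $\tilde r = r(a)\, e_a$, since $\frac{d \pi_\theta^\top \tilde r}{d\theta(a')} = \pi_\theta(a')(\tilde r(a') - \pi_\theta^\top \tilde r)$ coincides with the right-hand sides above. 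In the auxiliary MDP, $a$ is the unique optimal action with reward gap $\tilde\Delta = r(a) > 0$, and $(\pi^*-\pi_{\theta_t})^\top \tilde r = r(a)(1-\pi_{\theta_t}(a))$.

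Second, I would verify the positivity constant required by \cref{prop:upper_bound_softmax_pg_special_true_gradient}. Subtracting the two update equations gives $\theta_{t+1}(a) - \theta_{t+1}(a') = \theta_t(a)-\theta_t(a') + \eta r(a)(1-\pi_{\theta_t}(a) + \pi_{\theta_t}(a'))$, which is $\ge \theta_t(a)-\theta_t(a')$. Therefore every ratio $\pi_{\theta_t}(a')/\pi_{\theta_t}(a)$ is nonincreasing in $t$, so $1/\pi_{\theta_t}(a)=\sum_{a'}\pi_{\theta_t}(a')/\pi_{\theta_t}(a)$ is nonincreasing, giving the monotonicity $\pi_{\theta_t}(a) \ge \pi_{\theta_1}(a) > 0$ for all $t$. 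Hence $c := \inf_t \pi_{\theta_t}(a) \ge \pi_{\theta_1}(a) > 0$.

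Third, applying \cref{prop:upper_bound_softmax_pg_special_true_gradient} to the auxiliary MDP yields $r(a)(1-\pi_{\theta_t}(a)) \le 5/(c^2 t)$, so $1 - \pi_{\theta_t}(a) \le 5/(c^2 r(a) t)$, which gives $\limsup_t t \cdot (1-\pi_{\theta_t}(a)) < \infty$ and hence $\kappa(\text{PG},a) \ge 1$. Symmetrically, \cref{prop:lower_bound_softmax_pg_special_true_gradient} applied with $\tilde\Delta = r(a)$ yields $1-\pi_{\theta_t}(a) \ge r(a)/(6t)$ for all sufficiently large $t$, so for any $\alpha > 1$, $t^\alpha (1-\pi_{\theta_t}(a)) \ge r(a) t^{\alpha-1}/6 \to \infty$, forcing $\kappa(\text{PG},a) \le 1$. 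Combining gives $\kappa(\text{PG},a) = 1$.

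The only delicate point is step-size compatibility: \cref{prop:upper_bound_softmax_pg_special_true_gradient,prop:lower_bound_softmax_pg_special_true_gradient} are stated for specific ranges of $\eta$, whereas \cref{update_rule:softmax_pg_special_on_policy_stochastic_gradient} takes a generic $\eta$. I would therefore state \cref{thm:committal_rate_softmax_pg} with the same step-size restriction used by those propositions (effectively $\eta \le 2/5$ on the $\tilde r$-normalized scale), and defer the general case to the appendix, noting that the monotonicity argument above combined with the standard smoothness/\L{}ojasiewicz analysis extends verbatim as long as $\eta$ is smaller than the smoothness constant of the auxiliary objective, which is independent of $t$.
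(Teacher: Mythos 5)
Your reduction in the first step contains a genuine error: the on-policy stochastic update under perpetual sampling of $a$ is \emph{not} the exact-gradient softmax PG update on $\tilde r = r(a)\,e_a$. Computing the exact gradient for $\tilde r$ gives $\frac{d \pi_\theta^\top \tilde r}{d\theta(a)} = \pi_\theta(a)\,\bigl(r(a) - \pi_\theta(a) r(a)\bigr) = \pi_\theta(a)\, r(a)\,(1-\pi_\theta(a))$ and $\frac{d \pi_\theta^\top \tilde r}{d\theta(a')} = -\pi_\theta(a')\,\pi_\theta(a)\, r(a)$ for $a'\neq a$, whereas the stochastic update you (correctly) derived has components $r(a)(1-\pi_{\theta_t}(a))$ and $-\pi_{\theta_t}(a') r(a)$. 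The two differ by the multiplicative factor $\pi_{\theta_t}(a)$; equivalently, the stochastic recursion is exact-gradient PG on $\tilde r$ with the \emph{time-varying} step size $\eta/\pi_{\theta_t}(a)$. Because of this, \cref{prop:upper_bound_softmax_pg_special_true_gradient,prop:lower_bound_softmax_pg_special_true_gradient} cannot be invoked directly: both are stated (and proved) for a constant step size, and the upper bound in particular relies on the precise choice $\eta = 2/5$ in the descent-lemma step. Your closing remark about ``step-size compatibility'' does not capture this, since no constant $\eta$ makes the identification exact.

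The gap is repairable but requires redoing, not citing, the rate arguments. Your monotonicity computation in the second step is correct and valid for every $\eta>0$ (and is in fact cleaner than what the paper does to get monotonicity); it pins the effective step size $\eta/\pi_{\theta_t}(a)$ inside the fixed interval $[\eta,\, \eta/\pi_{\theta_1}(a)]$, so a smoothness-plus-\L{}ojasiewicz analysis with a bounded time-varying step size still yields $1-\pi_{\theta_t}(a) = \Theta(1/t)$ and hence $\kappa(\text{PG},a)=1$. This is essentially what the paper does, except it bypasses the auxiliary-reward framing entirely: it applies the non-uniform smoothness bound directly to the scalar objective $\theta \mapsto \pi_\theta(a) = \pi_\theta^\top e_a$, derives the two-sided one-step inequality $\bigl|(1-\pi_{\theta_{t+1}}(a)) - (1-\pi_{\theta_t}(a))\bigr| = \Theta\bigl((1-\pi_{\theta_t}(a))^2\bigr)$ from the explicit form of the stochastic gradient, and telescopes $1/(1-\pi_{\theta_t}(a))$ for the upper bound and mimics the proof of \cref{prop:lower_bound_softmax_pg_special_true_gradient} for the lower bound. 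If you want to keep your reduction, you must restate it as a reduction to PG on $\tilde r$ with adaptive step sizes in a fixed compact interval and supply the corresponding (routine but necessary) modifications of both propositions.
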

\cref{thm:committal_rate_main_theorem,thm:committal_rate_softmax_pg} provide (partial) explanations of the observations in \cref{sec:turnover_results}: stochastic NPG and GNPG can fail while PG almost surely converges to a global optimum, but their committal rates lie on different sides of the necessary condition. 
Since $\kappa(\text{PG}, a) = 1$ for softmax PG, it follows that $\prod_{t=1}^{\infty}{ \pi_{\theta_t}(a) } = 0$ (see \cref{lem:positive_infinite_product_4}), hence it is not possible to sample sub-optimal actions forever, and the optimal action $a^*$ always has a sufficient chance to be sampled, which ensures learning.

Next, following \citep{chung2020beyond}, we consider NPG using an ``oracle baseline'', which assumes the knowledge of the gap $\Delta$. 
\citet{chung2020beyond} considered this baseline to point out that convergence in on-policy stochastic gradient methods can happen even if the variance of the gradient estimates ``explodes'':
\begin{update_rule}[NPG with oracle baseline]
\label{update_rule:softmax_natural_pg_special_on_policy_stochastic_gradient_oracle_baseline}
$\theta_{t+1} \gets \theta_{t} + \eta \cdot \big( \hat{r}_t - \hat{b}_t \big)$, where $\hat{b}_t(a) = \left( \frac{ \sI\left\{ a_t = a \right\} }{ \pi_{\theta_t}(a) } - 1 \right) \cdot b$ for all $a \in [K]$, and $b \in (r(a^*) - \Delta, r(a^*)) $.
\end{update_rule}
\begin{theorem}
\label{thm:almost_sure_global_convergence_softmax_natural_pg_special_on_policy_stochastic_gradient_oracle_baseline}
Using \cref{update_rule:softmax_natural_pg_special_on_policy_stochastic_gradient_oracle_baseline}, $\left( \pi^* - \pi_{\theta_t} \right)^\top r \to 0$ as $t \to \infty$ with probability $1$.
\end{theorem}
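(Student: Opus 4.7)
The oracle baseline collapses the estimator to $\hat{r}_t(a) - \hat{b}_t(a) = \frac{\sI\{a_t=a\}}{\pi_{\theta_t}(a)}(r(a)-b) + b$, so the per-coordinate update reads
\begin{equation*}
\theta_{t+1}(a) = \theta_t(a) + \eta b + \eta\,\sI\{a_t=a\}\,\frac{r(a)-b}{\pi_{\theta_t}(a)}.
\end{equation*}
The critical observation is that the $\eta b$ shift is common to every coordinate and therefore disappears from every logit difference; only the sampled coordinate is perturbed relatively. The plan is to exploit this to turn the convergence claim into an essentially deterministic monotonicity statement, which already almost gives the conclusion, then clinch it via a conditional Borel--Cantelli argument. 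The steps are: (1) show deterministic monotonicity of each logit gap $\theta_t(a^*)-\theta_t(a)$; (2) deduce deterministic monotonicity of $\pi_{\theta_t}(a^*)$, hence a uniform lower bound; (3) use a conditional Borel--Cantelli argument to get infinitely many samples of $a^*$; (4) conclude divergence of the logit gaps and thus of the sub-optimality.

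\textbf{Step 1 (deterministic gap monotonicity).} Fix sub-optimal $a$ and set $D_t(a) = \theta_t(a^*)-\theta_t(a)$. Case analysis on $a_t$ gives: if $a_t \notin \{a^*,a\}$ the common $\eta b$ cancels and $D_{t+1}(a)=D_t(a)$; if $a_t=a^*$ then $D_{t+1}(a)=D_t(a)+\eta(r(a^*)-b)/\pi_{\theta_t}(a^*)$; if $a_t=a$ then $D_{t+1}(a)=D_t(a)+\eta(b-r(a))/\pi_{\theta_t}(a)$. By hypothesis $b \in (r(a^*)-\Delta,r(a^*))$, so $r(a^*)-b>0$ and $b>\max_{a'\neq a^*}r(a')\ge r(a)$, whence both increments are strictly positive. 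Thus $D_t(a)$ is pathwise non-decreasing, and strictly increases whenever $a_t \in \{a^*,a\}$.

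\textbf{Step 2 (monotonicity of $\pi_{\theta_t}(a^*)$).} From the softmax identity $1/\pi_{\theta_t}(a^*) = 1 + \sum_{a\neq a^*} e^{-D_t(a)}$ and Step~1, $\pi_{\theta_t}(a^*)$ is deterministically non-decreasing in $t$, so $\pi_{\theta_t}(a^*) \ge \pi_{\theta_1}(a^*) > 0$ for every $t$. Consequently $\sum_t \pi_{\theta_t}(a^*) = \infty$ pathwise, and Lévy's extension of the Borel--Cantelli lemma (applied to the $\gF_t$-measurable events $\{a_t=a^*\}$ with conditional probabilities $\pi_{\theta_t}(a^*)$) yields that $a^*$ is sampled infinitely often almost surely.

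\textbf{Step 3 (divergence and conclusion).} Every sample of $a^*$ adds at least $\eta(r(a^*)-b)>0$ to $D_t(a)$ (since $\pi_{\theta_t}(a^*)\le 1$), so combining with Step~2 we get $D_t(a)\to\infty$ a.s.\ for each sub-optimal $a$. Because $\pi_{\theta_t}(a) \le e^{-D_t(a)}$, this forces $\pi_{\theta_t}(a)\to 0$ a.s.\ for every $a\neq a^*$, and therefore
\begin{equation*}
(\pi^*-\pi_{\theta_t})^\top r = \sum_{a\neq a^*} \pi_{\theta_t}(a)\bigl(r(a^*)-r(a)\bigr) \le \sum_{a\neq a^*}\pi_{\theta_t}(a) \xrightarrow[t\to\infty]{} 0 \quad \text{a.s.}
\end{equation*}

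\textbf{Main obstacle.} The technical heart is really Step~1: one must recognize that the oracle baseline, combined with the sign conditions $r(a^*)-b>0$ and $b-r(a)>0$, makes each logit gap $\theta_t(a^*)-\theta_t(a)$ pathwise monotone---not merely a submartingale. This is a much stronger property than anything available for the unadjusted NPG of \cref{update_rule:softmax_natural_pg_special_on_policy_stochastic_gradient}, and it is what lets us bypass the delicate unbounded-variance martingale analysis that would otherwise be required; the remaining Borel--Cantelli step is standard.
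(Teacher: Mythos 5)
Your proof is correct, and its first half (the pathwise monotonicity coming from the sign structure $r(a^*)-b>0$ and $b-r(a)>0$ of the oracle baseline) is exactly the paper's argument: the paper likewise observes that the common $\eta b$ shift can be dropped, so that only the sampled coordinate moves, and deduces $\pi_{\theta_{t+1}}(a^*)\ge\pi_{\theta_t}(a^*)$ by the same two-case analysis. Where you genuinely diverge is the final step. The paper invokes monotone convergence to get $\pi_{\theta_t}(a^*)\to c$ and then argues $c=1$ by asserting that if $c<1$ one could ``further improve the probability of $a^*$ by updating once, contradicting convergence''; taken literally this is not a valid contradiction, since a strictly increasing sequence can perfectly well converge to a limit below $1$ (the per-step improvements may shrink to zero). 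Your completion closes this gap properly: the uniform lower bound $\pi_{\theta_t}(a^*)\ge\pi_{\theta_1}(a^*)>0$ gives $\sum_t\pi_{\theta_t}(a^*)=\infty$ pathwise, Lévy's conditional Borel--Cantelli then yields infinitely many samples of $a^*$ almost surely, and each such sample pushes every logit gap $D_t(a)$ up by the \emph{fixed} amount $\eta(r(a^*)-b)$, forcing $D_t(a)\to\infty$ and hence $\pi_{\theta_t}(a)\le e^{-D_t(a)}\to 0$. So your route buys a rigorous quantitative reason why the monotone limit must be $1$, at the cost of one extra (standard) probabilistic tool; the paper's route is shorter but leaves that last step informal.
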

As noted, while the variance of the updates provably explodes \citep{chung2020beyond},
the necessary condition in \cref{thm:committal_rate_main_theorem} is satisfied. 
Indeed, if $a_t\ne a^*$, $\pi_{\theta_{t+1}}(a_t) < \pi_{\theta_{t}}(a_t)$, 
while the optimal action's probability always increases after any update. 
Therefore, we have $\kappa(\gA, a^*) = \infty$ and $\kappa(\gA, a) = 0$ for all $a \not= a^*$. This example shows that the committal rate gives useful information regardless of whether the variance of the update stays bounded.

\section{The Geometry-Convergence Trade-off in Stochastic Policy Optimization}
\label{sec:geometry_convergence_tradeoff}

\cref{thm:committal_rate_softmax_pg} raises the question of whether 
$\kappa(\gA, a) \le 1$ for all sub-optimal actions $a \in [K]$ is sufficient to ensure an algorithm $\gA$ converges to an optimal
policy almost surely.
Unfortunately, this is not the case, and the complete picture of global optimality in stochastic policy optimization
is more complex and 
requires detailed study of different iteration behaviors.

\subsection{Iteration Behaviours}

\begin{remark}
The condition that 
$\kappa(\gA, a) \le 1$ for all sub-optimal actions $a \in [K]$ is \textbf{not} sufficient for ensuring almost sure convergence to global optimality. 
In addition to ``convergence to a sub-optimal policy with positive probability'' and ``convergence to a globally optimal policy with probability $1$'' there exist other possible optimization behaviours, such as ``not converging to any policy''.
\end{remark}
In particular, consider the following update behaviors.
\paragraph{Staying.} For the stationary update $\gA: \theta_{t+1} \gets \theta_{t}$ 
we obtain $\kappa(\gA, a) = 0 \le 1$ for all $a \in [K]$,
yet $\pi_{\theta_t} = \pi_{\theta_1}$ does not converge to the optimal policy nor any sub-optimal deterministic policy.
\paragraph{Wandering} (NPG with a large baseline)\textbf{.}
Consider $\gA: \theta_{t+1} \gets \theta_{t} + \eta \cdot \left( \hat{r}_t - \hat{b}_t \right)$ with $\hat{b}_t(a) = \left( \frac{ \sI\left\{ a_t = a \right\} }{ \pi_{\theta_t}(a) } - 1 \right) \cdot b$ for all $a \in [K]$. If $b > r(a^*)$, then we have $\pi_{\theta_{t+1}}(a_t) < \pi_{\theta_{t}}(a_t)$, i.e., a selected action's probability will decrease after updating, hence $\kappa(\gA, a) = 0$ for all $a \in [K]$. However, $\pi_{\theta_t}(a) \not\to 1$ as $t \to \infty$ for all $a \in [K]$, therefore $\pi_{\theta_t}$ will wander within the simplex forever.

The above examples show that not converging to a sub-optimal policy does not necessarily imply converging to an optimal policy almost surely, and a stronger condition is needed to eliminate unreasonable behaviors like $\theta_{t+1} \gets \theta_{t}$. We leave it as an open question to identify necessary and sufficient conditions for almost sure convergence to a global optimum.

\subsection{Geometry-Convergence Trade-off}
In
\cref{sec:turnover_results} we see that NPG and GNGP can use true gradients to significantly accelerate PG by better exploiting geometry. 
However, in the stochastic setting, any estimated geometry might be inaccurate, and intuitively, accelerated methods risk leveraging inaccurate information too aggressively. 
On the one hand, if progress is sufficiently fast (i.e., with a large committal rate), then an algorithm might never recover from aggressive yet inaccurate updates (\cref{thm:committal_rate_main_theorem}). 
On the other hand, large progress is necessary for fast convergence. 
The tension between these observations suggest that there might be an inherent trade-off between exploiting geometry and avoiding premature convergence in stochastic policy optimization. We formalize this intuition with the following results.
For the first result, we need to restrict to the class of policy optimization methods that do not decrease the probability of the optimal action whenever that action is chosen: In particular, a policy optimization method is said to be \emph{optimality-smart} if for any $t\ge 1$,
$\pi_{\tilde \theta_{t}}(a^*)\ge \pi_{\theta_t}(a^*)$ holds 
where $\tilde \theta_t$ is the parameter vector obtained when $a^*$ is chosen in every time step, starting at $\theta_1$, while $\theta_t$ is \emph{any} parameter vector that can be obtained with $t$ updates (regardless of the action sequence chosen), but also starting from $\theta_1$.
\begin{theorem}
\label{thm:committal_rate_optimal_action_special}
Let $\gA$ be optimality-smart and pick a bandit instance.
If $\gA$ together with on-policy sampling 
 leads to $\{\theta_t\}_{t\ge 1}$ such that $\{\pi_{\theta_t}\}_{t\ge 1}$ converges to a globally optimal policy at a rate $O(1/t^\alpha)$ with positive probability, for $\alpha > 0$, then $\kappa(\gA, a^*) \ge \alpha$.  
\end{theorem}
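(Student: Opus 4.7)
The plan is to exploit the \emph{optimality-smart} property to transfer a positive-probability convergence rate on the random trajectory to a deterministic upper bound on the always-sample-$a^*$ trajectory, which is exactly what the committal rate $\kappa(\gA, a^*)$ measures.

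First, I would unpack the hypothesis. The statement ``$\pi_{\theta_t}$ converges to a globally optimal policy at rate $O(1/t^\alpha)$ with positive probability'' means there exists an event $E$ with $\Pr(E) > 0$ on which there are (possibly random) finite quantities $C(\omega)$ and $t_0(\omega)$ such that for every $\omega \in E$ and every $t \ge t_0(\omega)$,
\begin{align*}
    1 - \pi_{\theta_t(\omega)}(a^*) \le \frac{C(\omega)}{t^\alpha}.
\end{align*}
Equivalently, on $E$ we have $\limsup_{t\to\infty} t^\alpha \cdot [1 - \pi_{\theta_t}(a^*)] < \infty$.

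Second, I would invoke optimality-smart pointwise. Let $\tilde\theta_t$ denote the (deterministic) parameter vector obtained by starting from $\theta_1$ and feeding the pair $(a^*, r(a^*))$ to $\gA$ in every step up to $t$. Since $\gA$ is optimality-smart, for every sample path $\omega$ and every $t \ge 1$,
\begin{align*}
    \pi_{\tilde\theta_t}(a^*) \ge \pi_{\theta_t(\omega)}(a^*), \qquad \text{hence} \qquad 1 - \pi_{\tilde\theta_t}(a^*) \le 1 - \pi_{\theta_t(\omega)}(a^*).
\end{align*}
This is the key inequality: the left-hand side is deterministic, while the right-hand side is random, yet the inequality holds for \emph{every} $\omega$.

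Third, since $\Pr(E) > 0$, the event $E$ is nonempty, so I can pick any $\omega^\star \in E$. Combining the previous two displays,
\begin{align*}
    1 - \pi_{\tilde\theta_t}(a^*) \le 1 - \pi_{\theta_t(\omega^\star)}(a^*) \le \frac{C(\omega^\star)}{t^\alpha}, \qquad \forall \, t \ge t_0(\omega^\star).
\end{align*}
Because $C(\omega^\star) < \infty$ is a finite constant (no longer random once $\omega^\star$ is fixed), this yields $\limsup_{t\to\infty} t^\alpha \cdot [1 - \pi_{\tilde\theta_t}(a^*)] \le C(\omega^\star) < \infty$. By the definition of committal rate, $\alpha$ lies in the supremum set defining $\kappa(\gA, a^*)$, and therefore $\kappa(\gA, a^*) \ge \alpha$, as claimed.

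The only subtle point is making sure the optimality-smart hypothesis is really being applied to a well-defined pair of trajectories: $\tilde\theta_t$ must be one of the admissible ``$t$-update'' trajectories the definition ranges over, which it is by construction. The rest is a direct transfer of a stochastic bound to a deterministic one via a pointwise monotonicity inequality, so I do not expect any technical obstacle beyond correctly interpreting ``positive probability'' to extract at least one good sample path $\omega^\star$.
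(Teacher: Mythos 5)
Your proposal is correct and is essentially the paper's own argument: both proofs use the optimality-smart inequality $\pi_{\tilde\theta_t}(a^*) \ge \pi_{\theta_t}(a^*)$ pointwise on every sample path, then use positivity of the probability of the good event to extract a single path witnessing the bound, and transfer it to the deterministic always-sample-$a^*$ sequence so that the committal-rate definition applies. The only cosmetic difference is that the paper phrases the hypothesis in terms of the reward sub-optimality and converts via $\left( \pi^* - \pi_{\theta_t} \right)^\top r \ge \left( 1 - \pi_{\theta_t}(a^*) \right) \cdot \Delta$, whereas you state the rate directly on $1 - \pi_{\theta_t}(a^*)$; the two are equivalent up to the constant $\Delta$.
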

This theorem implies that a large committal rate for the optimal action is necessary for achieving fast convergence to the globally optimal policy, since the sub-optimality dominates how close the optimal action's probability is to $1$, i.e., $\left( \pi^* - \pi_{\theta_t} \right)^\top r \ge \left( 1 - \pi_{\theta_t}(a^*) \right) \cdot \Delta$. Therefore $\left( \pi^* - \pi_{\theta_t} \right)^\top r \in O(1/t^\alpha)$ implies $1 - \pi_{\theta_t}(a^*) \in O(1/t^\alpha)$. Combining this result with \cref{thm:committal_rate_main_theorem} formally establishes the following inherent trade-off between exploiting geometry to accelerate convergence versus achieving global optimality almost surely (aggressiveness vs. stability).
\begin{theorem}[Geometry-Convergence trade-off]
\label{thm:geometry_convergence_tradeoff}
If an algorithm $\gA$ is optimality-smart, and  $\kappa(\gA, a^*) = \kappa(\gA, a)$ for at least one $a \not= a^*$, then $\gA$ with on-policy sampling can only exhibit at most one of the following two behaviors: 
\textbf{(i)} $\gA$ converges to a globally optimal policy almost surely; 
\textbf{(ii)} $\gA$ converges to a deterministic policy at a rate faster than $O(1/t)$ with positive probability.
\end{theorem}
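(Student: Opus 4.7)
The plan is to prove the contrapositive: assume behaviour \textbf{(ii)} holds and deduce that \textbf{(i)} must fail. Concretely, suppose that on a measurable event $E$ of positive probability, the policy $\pi_{\theta_t}$ converges to some deterministic limit $\pi^\infty$ at a rate faster than $O(1/t)$, which I interpret as $(\pi^\infty - \pi_{\theta_t})^\top r \in O(1/t^\alpha)$ for some $\alpha > 1$.

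I would split into two cases. If $\pi^\infty$ is sub-optimal, then on $E$ the algorithm fails to converge to the global optimum, so \textbf{(i)} is immediately violated and we are done. The substantive case is $\pi^\infty = \pi^*$. Here the plan is to invoke \cref{thm:committal_rate_optimal_action_special}: since $\gA$ is optimality-smart and the sub-optimality $(\pi^* - \pi_{\theta_t})^\top r$ shrinks at rate $O(1/t^\alpha)$ on an event of positive probability, the theorem yields $\kappa(\gA, a^*) \ge \alpha > 1$. The standing hypothesis of the present theorem then transfers this bound to a sub-optimal action: there exists $a \neq a^*$ with $\kappa(\gA, a) = \kappa(\gA, a^*) > 1$. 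Applying \cref{thm:committal_rate_main_theorem} (equivalently, the contrapositive of the necessary condition \cref{eq:neccond}) then shows that $\gA$ with on-policy sampling cannot converge almost surely to the global optimum starting from $\theta_1$, so \textbf{(i)} fails. The two cases together show that \textbf{(i)} and \textbf{(ii)} cannot coexist.

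The step I expect to be the main obstacle is translating the informal phrase ``rate faster than $O(1/t)$'' into a usable exponent $\alpha > 1$ so that \cref{thm:committal_rate_optimal_action_special} applies cleanly. The bridge is the elementary inequality $(\pi^* - \pi_{\theta_t})^\top r \ge (1 - \pi_{\theta_t}(a^*)) \cdot \Delta$ already noted in \cref{sec:geometry_convergence_tradeoff}, which converts a sub-optimality rate of $O(1/t^\alpha)$ into $1 - \pi_{\theta_t}(a^*) \in O(1/t^\alpha)$ on $E$, matching the hypothesis of Theorem~\ref{thm:committal_rate_optimal_action_special}. A secondary subtlety worth flagging is that \cref{thm:committal_rate_main_theorem} requires $\pi_{\theta_1}(a) > 0$ for the sub-optimal action $a$ used in the lower bound; this holds automatically under the softmax parameterization assumed throughout, and the hypothesis $\kappa(\gA, a) = \kappa(\gA, a^*)$ would otherwise be vacuous since $\pi_{\theta_t}(a) \equiv 0$ would force $\kappa(\gA, a) = 0 < \kappa(\gA, a^*)$ in the case $\pi^\infty = \pi^*$.
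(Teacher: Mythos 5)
Your proposal is correct and follows essentially the same route as the paper: invoke \cref{thm:committal_rate_optimal_action_special} to get $\kappa(\gA, a^*) \ge \alpha > 1$ from the fast convergence in \textbf{(ii)}, transfer this to a sub-optimal action via the hypothesis $\kappa(\gA, a^*) = \kappa(\gA, a)$, and conclude from \cref{thm:committal_rate_main_theorem} that almost sure global convergence fails. The paper phrases it as a direct contradiction assuming both behaviours hold (so that the deterministic limit in \textbf{(ii)} is automatically $\pi^*$), while your explicit case split on whether the limit is optimal is an equivalent and slightly more careful reading of the statement.
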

In other words, if $\gA$ has a chance to converge to a global optimum, then either $\gA$ converges to the globally optimal policy with probability $1$ ($\gA$ is stable) but at a rate no better than $O(1/t)$, or it achieves a faster than $O(1/t)$ convergence rate ($\gA$ is aggressive) but fails to converge to the globally optimal policy with some positive probability. 
This trade-off between the geometry and convergence is faced by any stochastic policy optimization algorithm that is not informed by external oracle information that allows it to distinguish optimal and sub-optimal actions based on on-policy samples.
\begin{remark}
\cref{thm:geometry_convergence_tradeoff} implies that an algorithm can achieve at most one of the mentioned two results. It is possible that an algorithm achieves neither (e.g., staying or wandering).
\end{remark}

\subsection{Exploiting External Information}
\label{sec:exploiting_external_information}

In \cref{thm:geometry_convergence_tradeoff}, the condition of $\kappa(\gA, a^*) = \kappa(\gA, a)$ for at least one sub-optimal action $a \in [K]$ is necessary for the trade-off to hold. 
If this condition can somehow be bypassed,
then it is possible to simultaneously achieve faster rates and almost sure convergence to a global optimum. 
For example, consider \cref{update_rule:softmax_natural_pg_special_on_policy_stochastic_gradient_oracle_baseline}. As mentioned before, we have $\kappa(\gA, a^*) = \infty$ and $\kappa(\gA, a) = 0$ for all $a \not= a^*$, breaking the mentioned condition, 
which allows $\gA$ to enjoy almost sure global convergence as well as a 
$O(e^{-c \cdot t})$
rate.
Of course, such a fortuitous outcome required a very specific baseline that is aware of both the optimal reward and the reward gap. Without introducing external mechanisms that inform an on-policy algorithm 
it appears that such information cannot be recovered sufficiently quickly from sample data alone \cite{tucker2018baseline}. 
Nevertheless, it remains an open question to prove that this is not possible, or whether some other strategy might allow an on-policy stochastic policy optimization algorithm to avoid the condition of \cref{thm:geometry_convergence_tradeoff} and achieve both fast rates and almost sure global convergence.

\begin{figure*}[ht]
\centering
\vskip -0.05in
\includegraphics[width=0.8\linewidth]{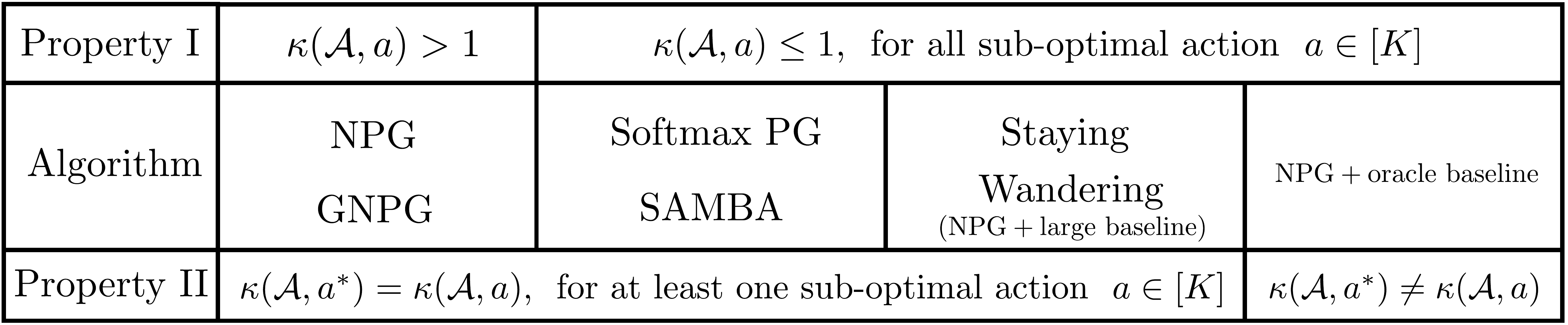}
\vskip -0.05in
\caption{
Different algorithmic behaviours subdivided by two properties of committal rate. SAMBA \citep{denisov2020regret} does not use parametric policies and is discussed in the appendix.
}
\label{fig:iteration_behaviours}
\vskip -0.05in
\end{figure*}

\cref{fig:iteration_behaviours} summarizes all the iteration behaviours we studied in this paper, organized by two properties of committal rate: \textbf{(i)} possible failure if $\kappa(\gA, a) > 1$ for at least one sub-optimal action $a$; and \textbf{(ii)} an inherent geometry-convergence trade-off if $\kappa(\gA, a^*) = \kappa(\gA, a)$ for at least one sub-optimal action $a$. It remains open to study where other algorithms suit themselves in this diagram.


\section{Initialization Sensitivity and Ensemble Methods}
\label{sec:ensemble_method}

We use the committal rate to further reveal mystery observed in practice about the initialization sensitivity \citep{henderson2018deep}. With the understanding of this unavoidable phenomenon, we introduce ensemble method and quantitatively characterize the successful rate in terms of number of trials.

\subsection{Initialization Sensitivity}

It has been observed empirically that RL algorithms are sensitive to initialization in practice: the same algorithm can produce remarkably different performance given different random seeds~\citep{henderson2018deep}. Some existing work has attempted to explain initialization sensitivity due to the softmax transform \citep{mei2020escaping}, but such results only hold for true gradients and apply to standard PG methods.

Using the committal rate theory developed above, we can provide a new explanation and additional understanding of the initialization sensitivity of practical policy optimization algorithms.  Most well-performing policy optimization algorithms in practice, such as TRPO and PPO \citep{schulman2015trust,schulman2017proximal}, are based on NPG, which exploits geometry to accelerate PG in true gradient settings. However, according to \cref{thm:geometry_convergence_tradeoff}, such fast convergence must incur a positive probability of failing to reach a global optimum, even in bandit settings. Therefore, the need to attempt multiple random seeds to achieve success is an unavoidable consequence of using these algorithms according to this theory.

\subsection{Ensemble Methods}

The committal rate theory also explains why ensemble methods \citep{wiering2008ensemble,Jung2020Population-Guided,parker2020effective}, \emph{i.e.}, running a policy optimization algorithm in multiple parallel threads and picking the best performing one, can provably work well. This is because a fast algorithm for the true gradient setting can have a positive probability of success or failure across different initializations while always converging quickly.  In which case, multiple independent runs can then be used to reduce the failure probability to any desired positive value, while retaining efficiency (if full parallelism can be maintained).
\begin{theorem}
\label{thm:ensemble_method}
With probability $1 - \delta$, the best single run among $O(\log{\left( 1/ \delta \right)})$ independent runs of NPG (GNPG) converges to a globally optimal policy at an 
$O(e^{-c \cdot t})$
rate.
\end{theorem}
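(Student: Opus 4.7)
The plan is to reduce the statement to a standard amplification argument: show that a single run of NPG (resp.\ GNPG) succeeds with some instance-dependent but strictly positive probability $p>0$, and that conditional on success the convergence rate is $O(e^{-c\cdot t})$; then take $N = \lceil \log(1/\delta)/\log(1/(1-p))\rceil = O(\log(1/\delta))$ independent runs and union-bound the failure events.

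First I would establish the single-run success probability. From \cref{thm:failure_probability_softmax_natural_pg_special_on_policy_stochastic_gradient} the NPG iterates concentrate on \emph{some} deterministic policy with full probability and, in particular, for each $a\in[K]$ the event $\{\pi_{\theta_t}(a)\to 1\}$ has positive probability. Specializing $a=a^*$ yields a constant $p=p(r,\theta_1)>0$ such that the single run converges to the globally optimal policy. The same argument applies verbatim to GNPG via \cref{thm:failure_probability_softmax_gnpg_special_on_policy_stochastic_gradient}. A convenient lower bound on $p$ is the probability that $a^*$ is sampled on the first step (namely $\pi_{\theta_1}(a^*)$) followed by the event that $a^*$ continues to dominate; this can be lower-bounded uniformly using that, after a successful run of consecutive $a^*$-samples, $\kappa(\gA,a^*)=\infty$ makes $\pi_{\theta_t}(a^*)$ approach $1$ so fast that the Borel--Cantelli-type product $\prod_t(1-\pi_{\theta_t}(a^*))$ is bounded away from $0$ (cf.\ the committal-rate analysis used for \cref{thm:committal_rate_stochastic_npg_gnpg}).

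Next I would quantify the rate on the success event. On $\{\pi_{\theta_t}(a^*)\to 1\}$ only finitely many sub-optimal samples occur almost surely, so from some random but almost-surely finite time $T_0$ onward every update takes the form $\theta_{t+1}\gets\theta_t+\eta\,\hat r_t$ with $a_t=a^*$ (NPG) or the corresponding normalized version (GNPG). A direct computation, identical to the deterministic analysis leading to \cref{thm:npg_rate_discrete_special} and \cref{prop:upper_bound_softmax_gnpg_special_true_gradient}, then yields $(\pi^*-\pi_{\theta_t})^\top r \le C\cdot e^{-c(t-T_0)}$ for constants $c,C>0$ that are deterministic functions of $r$ and $\eta$. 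Absorbing the random $T_0$ into the leading constant (on the success event) gives the claimed $O(e^{-c\cdot t})$ rate.

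The amplification step is then routine. Running $N$ independent copies, the probability that \emph{every} copy fails is at most $(1-p)^N \le e^{-pN}$; setting $N = \lceil p^{-1}\log(1/\delta)\rceil = O(\log(1/\delta))$ makes this $\le \delta$. Identifying the ``best'' run is done by evaluating each resulting policy (exactly in the bandit model of \cref{sec:turnover_results}, or by Hoeffding-style Monte Carlo evaluation in general), which adds only a logarithmic cost and does not affect the asymptotic rate. The main obstacle I anticipate is obtaining an explicit, non-vacuous lower bound on $p$: while positivity follows from \cref{thm:failure_probability_softmax_natural_pg_special_on_policy_stochastic_gradient,thm:failure_probability_softmax_gnpg_special_on_policy_stochastic_gradient}, a clean quantitative bound requires carefully bookkeeping the interaction between sampling and the exponentially fast committal updates, which is exactly where the infinite committal rate of NPG/GNPG is used to keep $\prod_t \pi_{\theta_t}(a^*) > 0$ with positive probability.
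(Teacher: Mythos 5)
Your proposal follows essentially the same route as the paper: lower-bound the single-run success probability by the probability of sampling $a^*$ at every step (the explicit product bound $\prod_{t}\pi_{\theta_t}(a^*)\ge p \in \Omega(1)$ extracted from \cref{thm:failure_probability_softmax_natural_pg_special_on_policy_stochastic_gradient,thm:failure_probability_softmax_gnpg_special_on_policy_stochastic_gradient}), observe that on this event $1-\pi_{\theta_t}(a^*)$ decays as $O(e^{-c\cdot t})$ with deterministic constants, and amplify over $O(\log(1/\delta))$ independent runs. The one place you drift is the rate step, where you condition on the larger event $\{\pi_{\theta_t}(a^*)\to 1\}$ and invoke a random finite time $T_0$ after which only $a^*$ is sampled --- the ``only finitely many sub-optimal samples'' claim does not follow from convergence to $1$ alone and is unnecessary, since the deterministic exponential bound can be read off directly on the ``always sample $a^*$'' event you already used for the probability lower bound, which is exactly what the paper does.
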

It is known that softmax PG can get stuck on long plateaus for even true gradient settings \citep{mei2020escaping,li2021softmax}, which means almost sure global convergence does not necessarily imply good practical performance. Therefore, it is a reasonable choice to perform well with the compromise of small failure probability. Here we consider simply best selection, and it remains open to understand other practical training tricks, such as proximal update \citep{schulman2017proximal,lazic2021optimization} and regularization \citep{mnih2016asynchronous} under stochastic settings.

\if
Although we prove the global optimality convergence of ensemble NPG (GNPG) with simply best selection with high probability, in practice, evolution updates~\citep{khadka2018evolution}, policy distillation~\citep{pourchot2018cem} and other tricks can be exploited to further improve the performances. 
\fi

\section{Discussions}
\label{sec:discussions}

\subsection{Sufficient and Necessary Conditions for Almost Sure Global Convergence}

We make the following conjecture with some intuitions for the sufficient and necessary condition for global convergence, a question left open in \cref{sec:geometry_convergence_tradeoff}.
\begin{conjecture}
Given a stochastic policy optimization algorithm $\gA$, if $\kappa(\gA, a^*) = \kappa(\gA, a)$ for at least one sub-optimal action $a$, then $\kappa(\gA, a^*) \in (0, 1]$ is a sufficient and necessary condition for global convergence to $\pi^*$ with \textbf{polynomial convergence rate} of $O(1/t^\alpha)$, where $\alpha > 0$.
\end{conjecture}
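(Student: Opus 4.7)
The plan is to split the biconditional into the necessity direction ``polynomial convergence to $\pi^*$ at rate $O(1/t^\alpha)$ implies $\kappa(\gA, a^*) \in (0,1]$'' and the sufficiency direction ``$\kappa(\gA, a^*) \in (0,1]$ together with the coupling hypothesis $\kappa(\gA, a^*) = \kappa(\gA, a)$ for some sub-optimal $a$ implies polynomial convergence to $\pi^*$''. Throughout I would work under \cref{asmp:positive_reward} and add optimality-smartness as a mild regularity condition, since otherwise pathological algorithms (the \emph{staying} and \emph{wandering} examples of \cref{sec:geometry_convergence_tradeoff}) satisfy every committal rate bound but never learn.

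For the necessity direction the argument is essentially assembly. \cref{thm:committal_rate_optimal_action_special} gives $\kappa(\gA, a^*) \ge \alpha > 0$ as soon as the rate holds with positive probability, so almost sure convergence at that rate forces $\kappa(\gA, a^*) > 0$. \cref{thm:committal_rate_main_theorem} gives $\kappa(\gA, a) \le 1$ for every sub-optimal action under almost sure global convergence, and invoking the coupling hypothesis $\kappa(\gA, a^*) = \kappa(\gA, a)$ transports this bound to $a^*$, yielding $\kappa(\gA, a^*) \in (0,1]$. I expect this half to require only bookkeeping of probability events and the subtlety of handling ``with probability $1$'' versus ``with positive probability'' in the quantifier of the rate.

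For the sufficiency direction the plan proceeds in three stages. First, a ``no premature commitment'' step: since $\kappa(\gA, a) \le 1$ for the coupled sub-optimal action (and, ideally, for every sub-optimal action once the hypothesis is appropriately strengthened), an adaptation of the Borel--Cantelli argument underlying \cref{thm:committal_rate_main_theorem} shows that no sub-optimal action is sampled forever from some finite time on, so $a^*$ is sampled infinitely often almost surely. Second, a quantitative step: convert $\kappa(\gA, a^*) = \alpha \in (0,1]$ into concrete polynomial progress of $\pi_{\theta_t}(a^*)$ toward $1$. The intuition is that a run of $k$ consecutive samples of $a^*$ drives $1 - \pi_{\theta_t}(a^*)$ down by order $1/k^{\alpha}$; a martingale/Azuma-type concentration on the number of $a^*$-visits up to time $t$ (which scales linearly in $t$ because $\pi_{\theta_t}(a^*)$ is bounded away from $0$ after the first stage) should then give $1 - \pi_{\theta_t}(a^*) \in O(1/t^{\alpha})$ almost surely. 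Third, convert this into sub-optimality: $(\pi^* - \pi_{\theta_t})^\top r \le (1-\pi_{\theta_t}(a^*))$, yielding the polynomial rate.

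The hard part will be the second stage. Committal rate is defined under a counterfactual deterministic sampling scheme, whereas the actual trajectory is stochastic and on-policy, so the per-action polynomial envelope need not transfer verbatim to the realized dynamics; one likely needs a uniform quantitative refinement of \cref{def:committal_rate} that controls the constant in front of $1/t^{\alpha}$, not only the exponent. A second, related difficulty is that the hypothesis $\kappa(\gA, a^*) = \kappa(\gA, a)$ for a \emph{single} sub-optimal $a$ may be too weak: other sub-optimal actions could in principle have $\kappa > 1$ and destroy almost sure convergence, so the conjecture may need to be stated with the coupling enforced on all sub-optimal actions or at least on $\max_{a \ne a^*} \kappa(\gA, a)$. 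Finally, ruling out the wandering and staying behaviours without explicit monotonicity of $\pi_{\theta_t}(a^*)$ along $a^*$-samples will likely require identifying a Lyapunov-style functional that decreases in expectation along on-policy trajectories; producing such a functional from committal rate alone is, to my mind, the central open technical problem behind the conjecture.
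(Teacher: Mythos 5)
This statement is a \emph{conjecture}: the paper does not prove it, and offers only a short paragraph of intuition explaining why it is plausible and where the difficulty lies. Your necessity direction reproduces that intuition faithfully: $\kappa(\gA,a^*)>0$ from \cref{thm:committal_rate_optimal_action_special} (which, as you note, requires optimality-smartness, a hypothesis absent from the conjecture as stated) and $\kappa(\gA,a^*)\le 1$ by transporting the bound of \cref{thm:committal_rate_main_theorem} through the coupling $\kappa(\gA,a^*)=\kappa(\gA,a)$. This half is essentially sound modulo the probability-one versus positive-probability bookkeeping you flag, and it is exactly what the paper means by ``the necessary condition is from \cref{thm:committal_rate_main_theorem}.''

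The sufficiency direction is where the conjecture actually lives, and there you have a plan but not a proof. The obstruction you identify in your second stage --- that the committal rate bounds $1-\pi_{\theta_t}(a^*)$ only along the counterfactual trajectory where $a^*$ is sampled at every step, and that this envelope need not transfer to the realized on-policy trajectory --- is precisely the gap the authors themselves name in \cref{sec:discussions}: ``$\kappa(\gA,a^*)\ge\alpha$ means `$1-\pi_{\theta_t}(a^*)\in O(1/t^\alpha)$ if we fix sampling $a^*$ forever,' and it is not clear if this implies [the same bound] if we run the algorithm using on-policy sampling.'' Your proposed repairs (a uniform quantitative refinement of \cref{def:committal_rate}, a concentration argument on the number of $a^*$-visits, a Lyapunov functional to exclude the staying and wandering behaviours, and strengthening the coupling hypothesis to all sub-optimal actions) are reasonable directions, but none is carried out, and the Borel--Cantelli step in your first stage also needs care: $\kappa(\gA,a)\le 1$ for the coupled action alone does not rule out some \emph{other} sub-optimal action having $\kappa>1$, a weakness you correctly flag. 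In short, your proposal reconstructs the paper's own reasoning and lands on the same open problem; it does not close it, but neither does the paper.
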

The necessary condition is from \cref{thm:committal_rate_main_theorem}. For the sufficient condition, \cref{thm:committal_rate_optimal_action_special} can potentially be strengthened to $\kappa(\gA, a^*) \ge \alpha$ is a sufficient and necessary condition for global convergence rate $O(1/t^\alpha)$ ($\alpha > 0$). The observation here is that \cref{asmp:positive_reward} leads to $\left( \pi^* - \pi_{\theta_t} \right)^\top r \le 1 - \pi_{\theta_t}(a^*)$. This suggests that if $1 - \pi_{\theta_t}(a^*) \in O(1/t^\alpha)$, then $\left( \pi^* - \pi_{\theta_t} \right)^\top r \in O(1/t^\alpha)$. However, a gap here is $\kappa(\gA, a^*) \ge \alpha$ means ``$1 - \pi_{\theta_t}(a^*) \in O(1/t^\alpha)$ if we fix sampling $a^*$ forever'', and it is not clear if this implies ``$1 - \pi_{\theta_t}(a^*) \in O(1/t^\alpha)$ if we run the algorithm $\gA$ using on-policy sampling $a_t \sim \pi_{\theta_t}(\cdot)$''.

\subsection{Lower Bounds in Bandit Literature}

In the bandit literature \citep{lattimore2020bandit}, the $\Omega(\log{T})$ result implies that the convergence speed in terms of sub-optimality (``average regret'') cannot be faster than $O(1/t)$. However, the lower bound construction there holds for stochastic reward settings. \cref{thm:geometry_convergence_tradeoff} holds for a simpler optimization setting: the reward is fixed and deterministic, but the policy gradient is estimated by on-policy sampling. Therefore, the difficulty and trade-off are from the restriction on the action-selection scheme (balancing the aggressiveness and the stability), not from estimating or tracking the reward signal.

\subsection{General MDPs}


The one-state MDP results already show the main findings, since a large portion is about constructing counterexamples showing that the stochastic policy optimization algorithms do not perform well as in the true gradient setting. A counterexample for one-state MDPs is also a counterexample for general MDPs. Therefore, there is no loss of generality by establishing negative results using one-state MDPs. We include extensions to general finite MDPs in \cref{sec:extension_general_mdps} for completeness.

\section{Conclusion and Future Work}
\label{sec:conclusions_future_work}

This paper introduces the committal rate theory, which not only explains why faster policy optimization algorithms in the true gradient setting become dominated by slower counterparts in the on-policy stochastic setting, but also reveals an inherent geometry-convergence trade-off in stochastic policy optimization. The theory also explains empirical observations of sensitivity to random initialization for practical policy optimization algorithms as well as the effectiveness of ensemble methods.

One interesting future direction is to study necessary and sufficient conditions for almost sure global convergence, which could be weaker than the bounded variance assumption. Another important direction is to investigate whether other techniques might be used in on-policy settings to break the condition of \cref{thm:geometry_convergence_tradeoff} to achieve almost sure global convergence with a fast rate. One also expects that some generalized versions of committal rate would be meaningful in stochastic reward settings.

\begin{ack}
The authors would like to thank anonymous reviewers for their valuable comments. Jincheng Mei, Bo Dai and Dale Schuurmans would like to thank Lihong Li for helpful early discussions. Jincheng Mei and Bo Dai would like to thank Nicolas Le Roux for providing feedback on a draft of
this manuscript. Jincheng Mei would like to thank Michael Bowling for carefully checking the paper draft in a thesis chapter.
Csaba Szepesv\'ari and Dale Schuurmans gratefully acknowledge funding from the Canada CIFAR AI Chairs Program, Amii and NSERC.

\end{ack}

\bibliography{all}
\bibliographystyle{unsrtnat}


\if0

\section*{Checklist}

\if0
The checklist follows the references.  Please
read the checklist guidelines carefully for information on how to answer these
questions.  For each question, change the default \answerTODO{} to \answerYes{},
\answerNo{}, or \answerNA{}.  You are strongly encouraged to include a {\bf
justification to your answer}, either by referencing the appropriate section of
your paper or providing a brief inline description.  For example:
\begin{itemize}
  \item Did you include the license to the code and datasets? \answerYes{See Section~\ref{gen_inst}.}
  \item Did you include the license to the code and datasets? \answerNo{The code and the data are proprietary.}
  \item Did you include the license to the code and datasets? \answerNA{}
\end{itemize}
Please do not modify the questions and only use the provided macros for your
answers.  Note that the Checklist section does not count towards the page
limit.  In your paper, please delete this instructions block and only keep the
Checklist section heading above along with the questions/answers below.
\fi

\begin{enumerate}

\item For all authors...
\begin{enumerate}
  \item Do the main claims made in the abstract and introduction accurately reflect the paper's contributions and scope?
    \answerYes{}
  \item Did you describe the limitations of your work?
    \answerYes{}
  \item Did you discuss any potential negative societal impacts of your work?
    \answerNA{}
  \item Have you read the ethics review guidelines and ensured that your paper conforms to them?
    \answerYes{}
\end{enumerate}

\item If you are including theoretical results...
\begin{enumerate}
  \item Did you state the full set of assumptions of all theoretical results?
    \answerYes{}
	\item Did you include complete proofs of all theoretical results?
    \answerYes{}
\end{enumerate}

\item If you ran experiments...
\begin{enumerate}
  \item Did you include the code, data, and instructions needed to reproduce the main experimental results (either in the supplemental material or as a URL)?
    \answerNA{}
  \item Did you specify all the training details (e.g., data splits, hyperparameters, how they were chosen)?
    \answerNA{}
	\item Did you report error bars (e.g., with respect to the random seed after running experiments multiple times)?
    \answerNA{}
	\item Did you include the total amount of compute and the type of resources used (e.g., type of GPUs, internal cluster, or cloud provider)?
    \answerNA{}
\end{enumerate}

\item If you are using existing assets (e.g., code, data, models) or curating/releasing new assets...
\begin{enumerate}
  \item If your work uses existing assets, did you cite the creators?
    \answerNA{}
  \item Did you mention the license of the assets?
    \answerNA{}
  \item Did you include any new assets either in the supplemental material or as a URL?
    \answerNA{}
  \item Did you discuss whether and how consent was obtained from people whose data you're using/curating?
    \answerNA{}
  \item Did you discuss whether the data you are using/curating contains personally identifiable information or offensive content?
    \answerNA{}
\end{enumerate}

\item If you used crowdsourcing or conducted research with human subjects...
\begin{enumerate}
  \item Did you include the full text of instructions given to participants and screenshots, if applicable?
    \answerNA{}
  \item Did you describe any potential participant risks, with links to Institutional Review Board (IRB) approvals, if applicable?
    \answerNA{}
  \item Did you include the estimated hourly wage paid to participants and the total amount spent on participant compensation?
    \answerNA{}
\end{enumerate}

\end{enumerate}

\fi


\newpage
\appendix
\newcommand{\ttheta}{\tilde\theta}

\begin{center}
\LARGE \textbf{Appendix}
\end{center}

The appendix is organized as follows.

\etocdepthtag.toc{mtappendix}
\etocsettagdepth{mtmainpaper}{none}
\etocsettagdepth{mtappendix}{subsubsection}
\begingroup
\parindent=0em
\etocsettocstyle{\rule{\linewidth}{\tocrulewidth}\vskip0.5\baselineskip}{\rule{\linewidth}{\tocrulewidth}}
\tableofcontents 
\endgroup

\section{Proofs for Algorithm Preferability (\texorpdfstring{\cref{sec:turnover_results}}{Lg}) }

\subsection{True Gradient Setting}

\subsubsection{Softmax PG}

\textbf{\cref{lem:non_uniform_lojasiewicz_softmax_special}} (Non-uniform \L{}ojasiewicz (N\L{}), \citep{mei2020global}) \textbf{.}
Let $a^*$ be the uniqe optimal action.
Denote $\pi^* = \argmax_{\pi \in \Delta}{ \pi^\top r}$. Then, 
\begin{align}
    \left\| \frac{d \pi_\theta^\top r}{d \theta} \right\|_2 \ge \pi_\theta(a^*) \cdot ( \pi^* - \pi_\theta )^\top r.
\end{align}
\begin{proof}
See the proof in \citep[Lemma 3]{mei2020global}. We include a proof for completeness.

Using the expression of the policy gradient in \cref{update_rule:softmax_pg_special_true_gradient}, we have,
\begin{align}
\label{eq:non_uniform_lojasiewicz_softmax_special_intermediate_1}
    \left\| \frac{d \pi_\theta^\top r}{d \theta} \right\|_2 &= \left[ \sum_{a \in \gA }{  \pi_\theta(a)^2 \cdot (r(a) - \pi_\theta^\top r)^2 } \right]^\frac{1}{2} \\
    &\ge  \pi_\theta(a^*) \cdot (r(a^*) - \pi_\theta^\top r). \qedhere
\end{align}
\end{proof}

\textbf{\cref{prop:upper_bound_softmax_pg_special_true_gradient}} (PG upper bound \cite{mei2020global})\textbf{.}
Using \cref{update_rule:softmax_pg_special_true_gradient} with $\eta = 2/5$, we have, for all $t \ge 1$,
\begin{align}
\label{eq:upper_bound_softmax_pg_special_true_gradient_result_1}
    ( \pi^* - \pi_{\theta_t} )^\top r \le 5 / (c^2 \cdot t),
\end{align}
such that $c = \inf_{t\ge 1} \pi_{\theta_t}(a^*) > 0$ is a constant that depends on $r$ and $\theta_1$, but it does not depend on the time $t$. 
In particular, if $\pi_{\theta_1}(a) = 1/K$, $\forall a$, then $c \ge 1/K$, i.e.,
\begin{align}
\label{eq:upper_bound_softmax_pg_special_true_gradient_result_2}
    ( \pi^* - \pi_{\theta_t} )^\top r \le 5 K^2 / t.
\end{align}
\begin{proof}
See the proof in \citep[Theorem 2]{mei2020global}. We include a proof for completeness.

\textbf{First part.} \cref{eq:upper_bound_softmax_pg_special_true_gradient_result_1}.

According to \citep[Lemma 2]{mei2020global}, for any $r \in \left[ 0, 1\right]^K$, $\theta \mapsto \pi_\theta^\top r$ is $5/2$-smooth,
\begin{align}
\label{eq:upper_bound_softmax_pg_special_true_gradient_intermediate_1}
    \left| ( \pi_{\theta^\prime} - \pi_\theta)^\top r - \Big\langle \frac{d \pi_\theta^\top r}{d \theta}, \theta^\prime - \theta \Big\rangle \right| \le \frac{5}{4} \cdot \| \theta^\prime - \theta \|_2^2.
\end{align}
Denote $\delta(\theta_t) \coloneqq ( \pi^* - \pi_{\theta_t} )^\top r$. We have, for all $t \ge 1$,
\begin{align}
\label{eq:upper_bound_softmax_pg_special_true_gradient_intermediate_2}
    \delta(\theta_{t+1}) - \delta(\theta_t) &= - \pi_{\theta_{t+1}}^\top r + \pi_{\theta_t}^\top r + \Big\langle \frac{d \pi_{\theta_t}^\top r}{d \theta_t}, \theta_{t+1} - \theta_{t} \Big\rangle - \Big\langle \frac{d \pi_{\theta_t}^\top r}{d \theta_t}, \theta_{t+1} - \theta_{t} \Big\rangle\\
    &\le \frac{5}{4} \cdot \| \theta_{t+1} - \theta_{t} \|_2^2 - \Big\langle \frac{d \pi_{\theta_t}^\top r}{d \theta_t}, \theta_{t+1} - \theta_{t} \Big\rangle \qquad \left( \text{by \cref{eq:upper_bound_softmax_pg_special_true_gradient_intermediate_1}} \right) \\
    &= - \frac{1}{5} \cdot \bigg\| \frac{d \pi_{\theta_t}^\top r}{d \theta_t} \bigg\|_2^2 
    \qquad \left( \text{using \cref{update_rule:softmax_pg_special_true_gradient} and } \eta = 2/5 \right) \\
    &\le - \frac{1}{5} \cdot \left[ \pi_{\theta_t}(a^*) \cdot ( \pi^* - \pi_{\theta_t} )^\top r \right]^2 
    \qquad \left( \text{by \cref{lem:non_uniform_lojasiewicz_softmax_special}} \right) \\
    &\le - \frac{c^2}{5} \cdot \delta(\theta_t)^2,
\end{align}
where $c = \inf_{t\ge 1} \pi_{\theta_t}(a^*) > 0$ is from \citep[Lemma 5]{mei2020global}. Then we have, for all $t \ge 1$,
\begin{align}
    \frac{1}{ \delta(\theta_t) } &= \frac{1}{\delta(\theta_1)} + \sum_{s=1}^{t-1}{ \left[ \frac{1}{\delta(\theta_{s+1})} - \frac{1}{\delta(\theta_{s})} \right] } \\
    &= \frac{1}{\delta(\theta_1)} + \sum_{s=1}^{t-1}{ \frac{1}{\delta(\theta_{s+1}) \cdot \delta(\theta_{s})} \cdot \left( \delta(\theta_{s}) - \delta(\theta_{s+1}) \right) } \\
    &\ge \frac{1}{\delta(\theta_1)} + \sum_{s=1}^{t-1}{ \frac{1}{\delta(\theta_{s+1}) \cdot \delta(\theta_{s})} \cdot \frac{c^2}{5} \cdot \delta(\theta_{s})^2 } \qquad \left( \text{by \cref{eq:upper_bound_softmax_pg_special_true_gradient_intermediate_2}} \right) \\
    &\ge \frac{1}{\delta(\theta_1)} + \frac{c^2}{5} \cdot \left( t - 1 \right) \qquad \left(0 < \delta(\theta_{t+1}) \le \delta(\theta_{t}), \text{ by \cref{eq:upper_bound_softmax_pg_special_true_gradient_intermediate_2}} \right) \\
    &\ge \frac{c^2}{5} \cdot t, \qquad \left( \delta(\theta_1) \le 1 < 5 / c^2 \right)
\end{align}
which implies \cref{eq:upper_bound_softmax_pg_special_true_gradient_result_1}.

\textbf{Second part.} \cref{eq:upper_bound_softmax_pg_special_true_gradient_result_2}.

Suppose $\pi_{\theta_1}(a) = 1/K$, $\forall a$. Using similar arguments in \citep[Proposition 2]{mei2020global}, we prove that if $\pi_{\theta_t}(a^*) \ge \pi_{\theta_t}(a)$, for all $a \not= a^*$, then $\pi_{\theta_{t+1}}(a^*) \ge \pi_{\theta_t}(a^*)$. We have, $\forall a \not= a^*$,
\begin{align}
\label{eq:upper_bound_softmax_pg_special_true_gradient_intermediate_3}
    \frac{d \pi_{\theta_t}^\top r}{d \theta_t(a^*)} &= \pi_{\theta_t}(a^*) \cdot \left( r(a^*) - \pi_{\theta_t}^\top r \right) \\
    &\ge \pi_{\theta_t}(a^*) \cdot \left( r(a) - \pi_{\theta_t}^\top r \right) \qquad  \left( r(a^*) - \pi_\theta^\top r > 0 \text{ and } r(a^*) > r(a) \right) \\
    &> \pi_{\theta_t}(a) \cdot \left( r(a) - \pi_{\theta_t}^\top r \right) \qquad  \left( \pi_{\theta_t}(a^*) \ge \pi_{\theta_t}(a), \text{ by assumption} \right) \\
    &= \frac{d \pi_{\theta_t}^\top r}{d \theta_t(a)}.
\end{align}
After one step policy gradient update, we have,
\begin{align}
    \pi_{\theta_{t+1}}(a^*) &= \frac{\exp\left\{ \theta_{t+1}(a^*) \right\}}{ \sum_{a}{ \exp\left\{ \theta_{t+1}(a) \right\}} } \\
    &= \frac{\exp\Big\{ \theta_{t}(a^*) + \eta \cdot \frac{d \pi_{\theta_t}^\top r}{d \theta_t(a^*)} \Big\}}{ \sum_{a}{ \exp\Big\{ \theta_{t}(a) + \eta \cdot \frac{d \pi_{\theta_t}^\top r}{d \theta_t(a)} \Big\}} } \\
    &\ge \frac{\exp\Big\{ \theta_{t}(a^*) + \eta \cdot \frac{d \pi_{\theta_t}^\top r}{d \theta_t(a^*)} \Big\}}{ \sum_{a}{ \exp\Big\{ \theta_{t}(a) + \eta \cdot \frac{d \pi_{\theta_t}^\top r}{d \theta_t(a^*)} \Big\}} } \qquad \left( \text{by \cref{eq:upper_bound_softmax_pg_special_true_gradient_intermediate_3}} \right) \\
    &= \frac{\exp\left\{ \theta_{t}(a^*) \right\}}{ \sum_{a}{ \exp\left\{ \theta_{t}(a) \right\}} } = \pi_{\theta_t}(a^*).
\end{align}
Note that $\pi_{\theta_1}(a^*) \ge \pi_{\theta_1}(a)$, and thus we have,
\begin{equation*}
    c = \inf_{t\ge 1} \pi_{\theta_t}(a^*) \ge \pi_{\theta_1}(a^*) = 1 / K. \qedhere
\end{equation*}
\end{proof}

\textbf{\cref{prop:lower_bound_softmax_pg_special_true_gradient}} (PG lower bound \cite{mei2020global})\textbf{.}
For sufficiently large $t \ge 1$,  \cref{update_rule:softmax_pg_special_true_gradient} with $\eta \in ( 0 , 1]$
exhibits 
\begin{align}
    (\pi^* - \pi_{\theta_t})^\top r \ge \Delta^2 / \left( 6 \cdot t \right),
\end{align}
where $\Delta = r(a^*) - \max_{a \not= a^*}{ r(a) } > 0$ is the reward gap of $r$.
\begin{proof}
See the proof in \citep[Theorem 9]{mei2020global}. We include a  proof for completeness.

According to \citep[Lemma 17]{mei2020global},
\begin{align}
\label{eq:lower_bound_softmax_pg_special_true_gradient_intermediate_1}
    \left\| \frac{d \pi_\theta^\top r}{d \theta} \right\|_2 \le \frac{\sqrt{2}}{\Delta} \cdot (\pi^* - \pi_\theta)^\top r.
\end{align}
Denote $\delta(\theta_t) \coloneqq ( \pi^* - \pi_{\theta_t} )^\top r$. We have, for all $t \ge 1$,
\begin{align}
\label{eq:lower_bound_softmax_pg_special_true_gradient_intermediate_2}
    \delta(\theta_t) - \delta(\theta_{t+1}) &= ( \pi_{\theta_{t+1}} - \pi_{\theta_t}) ^\top r - \Big\langle \frac{d \pi_{\theta_t}^\top r}{d \theta_t}, \theta_{t+1} - \theta_t \Big\rangle + \Big\langle \frac{d \pi_{\theta_t}^\top r}{d \theta_t}, \theta_{t+1} - \theta_t \Big\rangle \\
    &\le \frac{5}{4} \cdot \left\| \theta_{t+1} - \theta_t \right\|_2^2 + \Big\langle \frac{d \pi_{\theta_t}^\top r}{d \theta_t}, \theta_{t+1} - \theta_t \Big\rangle \qquad \left( \text{by \cref{eq:upper_bound_softmax_pg_special_true_gradient_intermediate_1}} \right) \\
    &\le \left( \frac{5}{4} + 1 \right) \cdot \bigg\| \frac{d \pi_{\theta_t}^\top r}{d \theta_t} \bigg\|_2^2 
    \qquad \left( \text{using \cref{update_rule:softmax_pg_special_true_gradient} and } \eta \in (0, 1] \right) \\
    &\le \frac{9}{2} \cdot \frac{1}{ \Delta^2} \cdot \delta(\theta_t)^2. 
    \qquad \left( \text{by \cref{eq:lower_bound_softmax_pg_special_true_gradient_intermediate_1}} \right)
\end{align}
According to \cref{prop:upper_bound_softmax_pg_special_true_gradient}, we have $\delta(\theta_t) \to 0$ as $t \to \infty$. We show that for all large enough $t \ge 1$, $\delta(\theta_t) \le \frac{10}{9} \cdot \delta(\theta_{t+1})$ by contradiction. Suppose $\delta(\theta_t) > \frac{10}{9} \cdot \delta(\theta_{t+1})$. We have,
\begin{align}
\label{eq:lower_bound_softmax_pg_special_true_gradient_intermediate_3}
    \delta(\theta_{t+1}) &\ge \delta(\theta_t) - \frac{9}{2} \cdot \frac{1}{ \Delta^2} \cdot \delta(\theta_t) \qquad \left( \text{by \cref{eq:lower_bound_softmax_pg_special_true_gradient_intermediate_2}} \right) \\
    &> \frac{10}{9} \cdot \delta(\theta_{t+1}) - \frac{9}{2} \cdot \frac{1}{ \Delta^2} \cdot \left( \frac{10}{9} \cdot \delta(\theta_{t+1}) \right)^2  \\
    &= \frac{10}{9} \cdot \delta(\theta_{t+1}) - \frac{50 }{9} \cdot \frac{1}{\Delta^2} \cdot \delta(\theta_{t+1})^2,
\end{align}
where the second inequality is because of the function $f: x \mapsto x - a \cdot x^2$ with $a > 0$ is monotonically increasing for all $0 < x \le \frac{1}{ 2 a }$. \cref{eq:lower_bound_softmax_pg_special_true_gradient_intermediate_3} implies that,
\begin{align}
    \delta(\theta_{t+1}) > \frac{\Delta^2}{50 },
\end{align}
for large enough $t \ge 1$, which is a contradiction with $\delta(\theta_t) \to 0$ as $t \to \infty$. Thus we have $\frac{\delta(\theta_{t+1})}{\delta(\theta_t)} \ge \frac{9}{10}$ holds for all large enough $t \ge 1$. Next, we have,
\begin{align}
    \frac{1}{ \delta(\theta_{t+1}) } - \frac{1}{ \delta(\theta_t) } &= \frac{1}{ \delta(\theta_{t+1}) \cdot \delta(\theta_t) } \cdot \left( \delta(\theta_t) - \delta(\theta_{t+1}) \right) \\
    &\le \frac{1}{ \delta(\theta_{t+1}) \cdot \delta(\theta_t) } \cdot \frac{9}{2} \cdot \frac{1}{ \Delta^2} \cdot \delta(\theta_t)^2 \qquad \left( \text{by \cref{eq:lower_bound_softmax_pg_special_true_gradient_intermediate_2}} \right) \\
    &\le \frac{5 }{ \Delta^2}. \qquad \left( \frac{\delta(\theta_t)}{\delta(\theta_{t+1})} \le \frac{10}{9} \right)
\end{align}
Summing up from $T_1$ (some large enough time) to $T_1 + t$, we have
\begin{align}
    \frac{1}{\delta(\theta_{T_1+t})} - \frac{1}{\delta(\theta_{T_1})} \le \frac{5}{\Delta^2} \cdot (t - 1) \le \frac{5}{\Delta^2} \cdot t.
\end{align}
Since $T_1$ is a finite time, $\delta(\theta_{T_1}) \ge 1/C$ for some constant $C > 0$. Rearranging, we have
\begin{align}
    (\pi^* - \pi_{\theta_{T_1+t}})^\top r  = \delta(\theta_{T_1+t}) \ge \frac{1}{ \frac{1}{\delta_{T_1}} + \frac{5}{\Delta^2} \cdot t} \ge \frac{1}{ C + \frac{5}{\Delta^2} \cdot t } \ge \frac{1}{ C + \frac{5 }{\Delta^2} \cdot (T_1 + t) }.
\end{align}
By abusing notation $t \coloneqq T_1 + t$ and $C \le \frac{t}{\Delta^2}$, we have
\begin{align}
    (\pi^* - \pi_{\theta_t})^\top r \ge \frac{1}{ C + \frac{5}{\Delta^2} \cdot t } \ge \frac{1}{  \frac{t}{\Delta^2} + \frac{5}{\Delta^2} \cdot t } = \frac{\Delta^2}{6 \cdot t },
\end{align}
for all large enough $t \ge 1$.
\end{proof}

\subsubsection{NPG}

\textbf{\cref{lem:natural_lojasiewicz_continuous_special}} (Natural Non-uniform \L{}ojasiewicz (N\L{}) inequality, continuous)\textbf{.}
Let $r \in (0,1)^K$. Denote $\Delta(a) \coloneqq r(a^*) - r(a)$, and $\Delta \coloneqq r(a^*) - \max_{a \not= a^*}{ r(a) }$ as the reward gap of $r$. We have, for any policy $\pi_\theta \coloneqq \softmax(\theta)$,
\begin{align}
    \Big\langle \frac{d \pi_\theta^\top r}{d \theta}, r \Big\rangle \ge \pi_\theta(a^*) \cdot \Delta \cdot \left( \pi^* - \pi_\theta \right)^\top r.
\end{align}
\begin{proof}
Without loss of generality, let $r(1) > r(2) > \cdots > r(K)$. We have,
\begin{align}
\MoveEqLeft
    \Big\langle \frac{d \pi_\theta^\top r}{d \theta}, r \Big\rangle = r^\top \left( \diagonalmatrix{(\pi_\theta)} - \pi_\theta \pi_\theta^\top \right) r \\
    &= \sum_{i=1}^{K}{ \pi_\theta(i) \cdot r(i)^2 } - \left[ \sum_{i=1}^{K}{ \pi_\theta(i) \cdot r(i) } \right]^2 \\
    &= \sum_{i=1}^{K}{ \pi_\theta(i) \cdot r(i)^2 } - \sum_{i=1}^{K}{ \pi_\theta(i)^2 \cdot r(i)^2 } - 2 \cdot \sum_{i=1}^{K-1}{ \pi_\theta(i) \cdot r(i) \cdot \sum_{j = i+1}^{K}{ \pi_\theta(j) \cdot r(j) } } \\
    &= \sum_{i=1}^{K}{ \pi_\theta(i) \cdot r(i)^2 \cdot \left[ 1 - \pi_\theta(i) \right] } - 2 \cdot \sum_{i=1}^{K-1}{ \pi_\theta(i) \cdot r(i) \cdot \sum_{j = i+1}^{K}{ \pi_\theta(j) \cdot r(j) } } \\
    &= \sum_{i=1}^{K}{ \pi_\theta(i) \cdot r(i)^2 \cdot \sum_{j \not= i}{\pi_\theta(j)}  } - 2 \cdot \sum_{i=1}^{K-1}{ \pi_\theta(i) \cdot r(i) \cdot \sum_{j = i+1}^{K}{ \pi_\theta(j) \cdot r(j) } } \\
    &= \sum_{i=1}^{K-1}{ \pi_\theta(i) \cdot \sum_{j = i+1}^{K}{\pi_\theta(j) \cdot \left[ r(i)^2 + r(j)^2 \right] }  } - 2 \cdot \sum_{i=1}^{K-1}{ \pi_\theta(i) \cdot r(i) \cdot \sum_{j = i+1}^{K}{ \pi_\theta(j) \cdot r(j) } } \\
    &= \sum_{i=1}^{K-1}{ \pi_\theta(i) \cdot \sum_{j = i+1}^{K}{\pi_\theta(j) \cdot \left[ r(i) - r(j) \right]^2 }  },
\end{align}
which can be lower bounded as,
\begin{align}
    \Big\langle \frac{d \pi_\theta^\top r}{d \theta}, r \Big\rangle &\ge \pi_\theta(1) \cdot \sum_{j = 2}^{K}{\pi_\theta(j) \cdot \left[ r(1) - r(j) \right]^2 } \qquad \left( \text{fewer terms} \right) \\
    &= \pi_\theta(a^*) \cdot \sum_{a \not= a^*}{\pi_\theta(a) \cdot \Delta(a)^2 } \qquad \left( a^* = 1 \right) \\
    &\ge \pi_\theta(a^*) \cdot \Delta \cdot \sum_{a \not= a^*}{\pi_\theta(a) \cdot \Delta(a) } \qquad \left( \Delta(a) \ge \Delta \right) \\
    &= \pi_\theta(a^*) \cdot \Delta \cdot \left( \pi^* - \pi_\theta \right)^\top r. \qedhere
\end{align}
\end{proof}

\begin{remark}
The natural N\L{} inequality of \cref{lem:natural_lojasiewicz_continuous_special} is tight. Consider $K = 2$, we have,
\begin{align}
\MoveEqLeft
    r^\top \left( \diagonalmatrix{(\pi_\theta)} - \pi_\theta \pi_\theta^\top \right) r = \pi_\theta(1) \cdot r(1)^2 + \pi_\theta(2) \cdot r(2)^2 - \left[ \pi_\theta(1) \cdot r(1) + \pi_\theta(2) \cdot r(2) \right]^2 \\
    &= \pi_\theta(1) \cdot r(1)^2 \cdot \left[ 1 - \pi_\theta(1) \right] + \pi_\theta(2) \cdot r(2)^2 \cdot \left[ 1 - \pi_\theta(2) \right] - 2 \cdot \pi_\theta(1) \cdot r(1) \cdot \pi_\theta(2) \cdot r(2) \\
    &= \pi_\theta(1) \cdot r(1)^2 \cdot \pi_\theta(2) + \pi_\theta(2) \cdot r(2)^2 \cdot \pi_\theta(1) - 2 \cdot \pi_\theta(1) \cdot r(1) \cdot \pi_\theta(2) \cdot r(2) \qquad \left( \pi_\theta(1) + \pi_\theta(2) = 1 \right) \\
    &= \pi_\theta(1) \cdot \pi_\theta(2) \cdot \left[ r(1) - r(2) \right]^2 \\
    &= \pi_\theta(a^*) \cdot \Delta \cdot \left( \pi^* - \pi_\theta \right)^\top r, \qquad \left( a^* = 1, \ \Delta = r(1) - r(2), \ \left( \pi^* - \pi_\theta \right)^\top r = \pi_\theta(2) \cdot \left[ r(1) - r(2) \right] \right)
\end{align}
which means the equality holds for the above problem.
\end{remark}

\begin{remark}
For the continuous natural PG flow: $\frac{d \theta_t}{d t} = \eta \cdot r$, and $\pi_{\theta_{t}} = \softmax(\theta_{t})$, \cref{lem:natural_lojasiewicz_continuous_special} can be used to characterize the progress at each time step. We have, for all $t \ge 1$,
\begin{align}
\MoveEqLeft
    \frac{d \left( \pi^* - \pi_{\theta_t} \right)^\top r }{d t} = - \frac{d \pi_{\theta_t}^\top r }{d t} \\
    &= - \left( \frac{d \theta_t}{ d t} \right)^\top \left( \frac{d \pi_{\theta_t}^\top r }{d \theta_t} \right) \\
    &= - \eta \cdot r^\top \left( \diagonalmatrix{(\pi_{\theta_t})} - \pi_{\theta_t} \pi_{\theta_t}^\top \right) r \qquad \left( \text{NPG flow} \right) \\
    &\le - \eta \cdot \pi_{\theta_t}(a^*) \cdot \Delta \cdot \left( \pi^* - \pi_{\theta_t} \right)^\top r, \qquad \left( \text{by \cref{lem:natural_lojasiewicz_continuous_special}} \right)
\end{align}
which means the progress at time $t$ is proportional to the sub-optimality gap $\left( \pi^* - \pi_{\theta_t} \right)^\top r$, leading to a linear convergence rate.
\end{remark}

\textbf{\cref{lem:natural_lojasiewicz_discrete_special}} (Natural N\L{} inequality, discrete)\textbf{.}
Given any policy $\pi$, define $\pi^\prime$ as
\begin{align}
    \pi^\prime(a) \coloneqq \frac{ \pi(a) \cdot e^{\eta \cdot r(a)} }{ \sum_{a^\prime}{ \pi(a^\prime) \cdot e^{\eta \cdot r(a^\prime)} } }, \quad \text{ for all } a \in [K],
\end{align}
where $\eta > 0$ is the learning rate. We have,
\begin{align}
    \left( \pi^\prime - \pi \right)^\top r &\ge \left[ 1 - \frac{1}{ \pi(a^*) \cdot \left( e^{ \eta \cdot \Delta } - 1 \right) + 1} \right] \cdot \left( \pi^* - \pi \right)^\top r.
\end{align}
\begin{proof}
Without loss of generality, let $r(1) > r(2) > \cdots > r(K)$. We have,
\begin{align}
\label{eq:natural_lojasiewicz_discrete_special_intermediate_1}
\MoveEqLeft
    \left( \pi^\prime - \pi \right)^\top r = \sum_{i=1}^{K}{ \left[ \pi^\prime(i) \cdot r(i) - \pi(i) \cdot r(i) \right] } \\
    &= \sum_{i=1}^{K}{ \left[ \frac{ \pi(i) \cdot e^{\eta \cdot r(i)} \cdot r(i) }{ \sum_{j=1}^{K}{ \pi(j) \cdot e^{\eta \cdot r(j)} } }  - \pi(i) \cdot r(i) \right] } \qquad \left( \text{by definition of } \pi^\prime \right) \\
    &= \frac{1}{ \sum_{j=1}^{K}{ \pi(j) \cdot e^{\eta \cdot r(j)} } } \cdot \left[ \sum_{i=1}^{K}{  \pi(i) \cdot e^{\eta \cdot r(i)} \cdot r(i)  } - \sum_{i=1}^{K}{ \pi(i) \cdot r(i) } \cdot \sum_{j=1}^{K}{ \pi(j) \cdot e^{\eta \cdot r(j)} } \right].
\end{align}
Next, we have,
\begin{align}
\label{eq:natural_lojasiewicz_discrete_special_intermediate_2}
\MoveEqLeft
    \sum_{i=1}^{K}{  \pi(i) \cdot e^{\eta \cdot r(i)} \cdot r(i)  } - \sum_{i=1}^{K}{ \pi(i) \cdot r(i) } \cdot \sum_{j=1}^{K}{ \pi(j) \cdot e^{\eta \cdot r(j)} } \\
    &= \sum_{i=1}^{K}{  \pi(i) \cdot e^{\eta \cdot r(i)} \cdot r(i)  } - \sum_{i=1}^{K}{ \pi(i)^2 \cdot e^{\eta \cdot r(i)} \cdot r(i)  } - \sum_{i=1}^{K}{ \pi(i) \cdot r(i) } \cdot \sum_{j \not= i}{ \pi(j) \cdot e^{\eta \cdot r(j)} } \\
    &= \sum_{i=1}^{K}{  \pi(i) \cdot e^{\eta \cdot r(i)} \cdot r(i) \cdot \left[ 1 - \pi(i) \right] } - \sum_{i=1}^{K}{ \pi(i) \cdot r(i) } \cdot \sum_{j \not= i}{ \pi(j) \cdot e^{\eta \cdot r(j)} } \\
    &= \sum_{i=1}^{K}{  \pi(i) \cdot e^{\eta \cdot r(i)} \cdot r(i)  } \cdot \sum_{j \not= i}{ \pi(j) } - \sum_{i=1}^{K}{ \pi(i) \cdot r(i) } \cdot \sum_{j \not= i}{ \pi(j) \cdot e^{\eta \cdot r(j)} } \\
    &= \sum_{i=1}^{K-1}{ \pi(i) \cdot \sum_{j = i+1}^{K}{\pi(j) \cdot \left[ e^{\eta \cdot r(i)} \cdot r(i) + e^{\eta \cdot r(j)} \cdot r(j) \right] }  } - \sum_{i=1}^{K-1}{ \pi(i) \cdot \sum_{j = i+1}^{K}{\pi(j) \cdot \left[ e^{\eta \cdot r(j)} \cdot r(i) + e^{\eta \cdot r(i)} \cdot r(j) \right] }  } \\
    &= \sum_{i=1}^{K-1}{ \pi(i) \cdot \sum_{j = i+1}^{K}{\pi(j) \cdot \left[ e^{\eta \cdot r(i) } - e^{ \eta \cdot r(j) } \right] \cdot \left[ r(i) - r(j) \right] }  },
\end{align}
which can be lower bounded as,
\begin{align}
\MoveEqLeft
    \sum_{i=1}^{K}{  \pi(i) \cdot e^{\eta \cdot r(i)} \cdot r(i)  } - \sum_{i=1}^{K}{ \pi(i) \cdot r(i) } \cdot \sum_{j=1}^{K}{ \pi(j) \cdot e^{\eta \cdot r(j)} } \\
    &\ge \pi(1) \cdot \sum_{j = 2}^{K}{\pi(j) \cdot \left[ e^{\eta \cdot r(1) } - e^{ \eta \cdot r(j) } \right] \cdot \left[ r(1) - r(j) \right] } \qquad \left( \text{fewer terms} \right) \\
    &\ge \pi(1) \cdot \sum_{j = 2}^{K}{\pi(j) \cdot \left[ e^{\eta \cdot r(1) } - e^{ \eta \cdot r(2) } \right] \cdot \left[ r(1) - r(j) \right] } \qquad \left( r(j) \le r(2), \text{ for all } j \ge 2 \right) \\
    &= \pi(1) \cdot e^{ \eta \cdot r(2) } \cdot \left( e^{ \eta \cdot \Delta } - 1 \right) \cdot \sum_{a \not= a^*}{\pi(a) \cdot \Delta(a) } \qquad \left( \Delta = r(1) - r(2) \right) \\
    &= \pi(1) \cdot e^{ \eta \cdot r(2) } \cdot \left( e^{ \eta \cdot \Delta } - 1 \right) \cdot \left( \pi^* - \pi \right)^\top r.
\end{align}
Combining \cref{eq:natural_lojasiewicz_discrete_special_intermediate_1,eq:natural_lojasiewicz_discrete_special_intermediate_2}, we have,
\begin{align}
\MoveEqLeft
    \left( \pi^\prime - \pi \right)^\top r \ge \frac{\pi(1) \cdot e^{ \eta \cdot r(2) } \cdot \left( e^{ \eta \cdot \Delta } - 1 \right) }{\pi(1) \cdot e^{ \eta \cdot r(1) } + \sum_{j=2}^{K}{ \pi(j) \cdot e^{ \eta \cdot r(j) } } } \cdot \left( \pi^* - \pi \right)^\top r \\
    &= \frac{\pi(1) \cdot \left( e^{ \eta \cdot \Delta } - 1 \right) }{\pi(1) \cdot e^{ \eta \cdot \Delta } + \sum_{j=2}^{K}{ \pi(j) \cdot e^{ \eta \cdot \left[ r(j) - r(2) \right] } } } \cdot \left( \pi^* - \pi \right)^\top r \\
    &\ge \frac{\pi(1) \cdot \left( e^{ \eta \cdot \Delta } - 1 \right) }{\pi(1) \cdot e^{ \eta \cdot \Delta } + \sum_{j=2}^{K}{ \pi(j) } } \cdot \left( \pi^* - \pi \right)^\top r \qquad \left( r(j) - r(2) \le 0, \text{ for all } j \ge 2 \right) \\
    &= \frac{\pi(1) \cdot \left( e^{ \eta \cdot \Delta } - 1 \right) }{\pi(1) \cdot e^{ \eta \cdot \Delta } + 1 - \pi(1) } \cdot \left( \pi^* - \pi \right)^\top r \\
    &= \left[ 1 - \frac{1}{ \pi(a^*) \cdot \left( e^{ \eta \cdot \Delta } - 1 \right) + 1} \right] \cdot \left( \pi^* - \pi \right)^\top r. \qquad \left( a^* = 1 \right) \qedhere
\end{align}
\end{proof}

\begin{remark}
The natural N\L{} inequality of \cref{lem:natural_lojasiewicz_discrete_special} is tight. Consider $K = 2$, we have,
\begin{align}
\MoveEqLeft
    \left( \pi^\prime - \pi \right)^\top r  = \frac{\pi(1) \cdot e^{\eta \cdot r(1)} \cdot r(1) + \pi(2) \cdot e^{\eta \cdot r(2)} \cdot r(2) }{ \pi(1) \cdot e^{\eta \cdot r(1)} + \pi(2) \cdot e^{\eta \cdot r(2)} } - \left[ \pi(1) \cdot r(1) + \pi(2) \cdot r(2) \right] \\
    &= \frac{\pi(1) \cdot \pi(2) \cdot \left[ r(1) - r(2) \right] \cdot \left[ e^{\eta \cdot r(1)} - e^{\eta \cdot r(2)} \right] }{ \pi(1) \cdot e^{\eta \cdot r(1)} + \pi(2) \cdot e^{\eta \cdot r(2)} } \\
    &= \frac{\pi(1) \cdot \left( e^{\eta \cdot \left[ r(1) - r(2) \right] } - 1 \right) }{ \pi(1) \cdot e^{\eta \cdot \left[ r(1) - r(2) \right] } + \pi(2) } \cdot \pi(2) \cdot \left[ r(1) - r(2) \right] \\
    &= \frac{\pi(a^*) \cdot \left( e^{\eta \cdot \Delta } - 1 \right) }{ \pi(a^*) \cdot e^{\eta \cdot \Delta } + 1 - \pi(a^*) } \cdot \left( \pi^* - \pi \right)^\top r, \qquad \left( a^* = 1, \ \Delta = r(1) - r(2) \right)
\end{align}
which means the equality holds for the above problem.
\end{remark}

\textbf{\cref{thm:npg_rate_discrete_special}} (NPG upper bound)\textbf{.}
Using \cref{update_rule:softmax_natural_pg_special_true_gradient} with any $\eta > 0$, i.e., $\forall t \ge 1$,
\begin{align}
    \theta_{t+1} \gets \theta_{t} + \eta \cdot r, \ \text{ and } \pi_{\theta_{t+1}} = \softmax(\theta_{t+1}),
\end{align}
where $\eta > 0$ is the learning rate. We have, for all $t \ge 1$,
\begin{align}
    \left( \pi^* - \pi_{\theta_t} \right)^\top r \le \left( \pi^* - \pi_{\theta_{1}} \right)^\top r \cdot e^{ - c \cdot (t - 1) },
\end{align}
where $c \coloneqq \log{ \left( \pi_{\theta_{1}}(a^*) \cdot \left( e^{ \eta \cdot \Delta } - 1 \right) + 1 \right) } > 0$ for any $\eta > 0$, and $\Delta = r(a^*) - \max_{a \not= a^*}{ r(a) } > 0$.
\begin{proof}
We have, for all $t \ge 1$,
\begin{align}
\MoveEqLeft
    \left( \pi^* - \pi_{\theta_{t+1}} \right)^\top r = \left( \pi^* - \pi_{\theta_{t}} \right)^\top r
    - \left( \pi_{\theta_{t+1}} - \pi_{\theta_{t}} \right)^\top r \\
    &\le \frac{1}{ \pi_{\theta_{t}}(a^*) \cdot \left( e^{ \eta \cdot \Delta } - 1 \right) + 1} \cdot \left( \pi^* - \pi_{\theta_{t}} \right)^\top r \qquad \left( \text{by \cref{lem:natural_lojasiewicz_discrete_special}} \right) \\
    &\le \frac{1}{ \pi_{\theta_{1}}(a^*) \cdot \left( e^{ \eta \cdot \Delta } - 1 \right) + 1} \cdot \left( \pi^* - \pi_{\theta_{t}} \right)^\top r \qquad \left( \text{see below} \right) \\
    &\le  \frac{ 1 }{ \left[ \pi_{\theta_{1}}(a^*) \cdot \left( e^{ \eta \cdot \Delta } - 1 \right) + 1 \right]^{t}} \cdot \left( \pi^* - \pi_{\theta_{1}} \right)^\top r \\
    &= \frac{ \left( \pi^* - \pi_{\theta_{1}} \right)^\top r }{ e^{ c \cdot t} },
\end{align}
where the second inequality is because of for all $t \ge 1$,
\begin{align}
    \pi_{\theta_{t+1}}(a^*) &= \frac{ \pi_{\theta_{t}}(a^*) \cdot e^{\eta \cdot r(a^*)} }{ \sum_{a}{ \pi_{\theta_{t}}(a) \cdot e^{\eta \cdot r(a)} } } \\
    &= \frac{ \pi_{\theta_{t}}(a^*) }{ \sum_{a}{ \pi_{\theta_{t}}(a) \cdot e^{ - \eta \cdot \Delta(a)} } } \\
    &\ge \pi_{\theta_{t}}(a^*). \qquad \left( \Delta(a) \ge 0 \right) \qedhere
\end{align}
\end{proof}

\subsubsection{GNPG}

\textbf{\cref{lem:non_uniform_smoothness_softmax_special}} (Non-uniform Smoothness (NS), \citep{mei2021leveraging})\textbf{.}
The spectral radius (largest absolute eigenvalue) of Hessian matrix $\frac{d^2 \pi_\theta^\top r}{d \theta^2}$ is upper bounded by $3 \cdot \Big\| \frac{d \pi_\theta^\top r}{d \theta} \Big\|_2$.
\begin{proof}
See the proof in \citep[Lemma 2]{mei2021leveraging}. We include a  proof for completeness.

Let  $S \coloneqq S(r,\theta)\in \R^{K\times K}$ be 
the second derivative of the value map $\theta \mapsto \pi_\theta^\top r$. Denote $H(\pi_\theta) \coloneqq \diagonalmatrix(\pi_\theta) - \pi_\theta \pi_\theta^\top$ as the Jacobian of $\theta \mapsto \softmax(\theta)$.
Now, by its definition we have
\begin{align}
    S &= \frac{d }{d \theta } \left\{ \frac{d \pi_\theta^\top r}{d \theta} \right\} \\
    &= \frac{d }{d \theta } \left\{ H(\pi_\theta) r \right\}  \\
    &= \frac{d }{d \theta } \left\{ ( \diagonalmatrix(\pi_\theta) - \pi_\theta \pi_\theta^\top) r \right\}.
\end{align}
Continuing with our calculation fix $i, j \in [K]$. Then, 
\begin{align}
    S_{(i, j)} &= \frac{d \{ \pi_\theta(i) \cdot  ( r(i) - \pi_\theta^\top r ) \} }{d \theta(j)} \\
    &= \frac{d \pi_\theta(i) }{d \theta(j)} \cdot ( r(i) - \pi_\theta^\top r ) + \pi_\theta(i) \cdot \frac{d \{ r(i) - \pi_\theta^\top r \} }{d \theta(j)} \\
    &= (\delta_{ij} \pi_\theta(j) -  \pi_\theta(i) \pi_\theta(j) ) \cdot ( r(i) - \pi_\theta^\top r ) - \pi_\theta(i) \cdot ( \pi_\theta(j) r(j) - \pi_\theta(j) \pi_\theta^\top r ) \\
    &= \delta_{ij} \pi_\theta(j) \cdot ( r(i) - \pi_\theta^\top r ) -  \pi_\theta(i) \pi_\theta(j) \cdot ( r(i) - \pi_\theta^\top r ) - \pi_\theta(i) \pi_\theta(j) \cdot ( r(j) -  \pi_\theta^\top r ),
\end{align}
where
\begin{align}
    \delta_{ij} = \begin{cases}
		1, & \text{if } i = j, \\
		0, & \text{otherwise}
	\end{cases}
\end{align}
is Kronecker's $\delta$-function. To show the bound on 
the spectral radius of $S$, pick $y \in \sR^K$. Then,
\begin{align}
\label{eq:non_uniform_smoothness_softmax_special_Hessian_spectral_radius}
    \left| y^\top S y \right| &= \left| \sum\limits_{i=1}^{K}{ \sum\limits_{j=1}^{K}{ S_{(i,j)} \cdot y(i) \cdot y(j)} } \right| \\
    &= \left| \sum_{i}{ \pi_\theta(i) ( r(i) - \pi_\theta^\top r ) y(i)^2 } - 2 \sum_{i} \pi_\theta(i) ( r(i) - \pi_\theta^\top r ) y(i) \sum_{j} \pi_\theta(j) y(j) \right| \\
    &= \left| \left( H(\pi_\theta) r \right)^\top \left( y \odot y \right) - 2 \cdot \left( H(\pi_\theta) r \right)^\top y \cdot \left( \pi_\theta^\top y \right) \right| \\
    &\le \left\| H(\pi_\theta) r \right\|_\infty \cdot \left\| y \odot y \right\|_1 + 2 \cdot \left\| H(\pi_\theta) r \right\|_2 \cdot \left\| y \right\|_2 \cdot \left\| \pi_\theta \right\|_1 \cdot \left\| y \right\|_\infty \\
    &\le 3 \cdot \left\| H(\pi_\theta) r \right\|_2 \cdot \left\| y \right\|_2^2 \\
    &= 3 \cdot \bigg\| \frac{d \pi_\theta^\top r}{d \theta} \bigg\|_2 \cdot \left\| y \right\|_2^2,
\end{align}
where $\odot$ is Hadamard (component-wise) product, and the third last inequality uses H{\" o}lder's inequality together with the triangle inequality.
\end{proof}

\textbf{\cref{prop:upper_bound_softmax_gnpg_special_true_gradient}} (GNPG upper bound \citep{mei2021leveraging})\textbf{.}
Using \cref{update_rule:softmax_gnpg_special_true_gradient} with $\eta = 1/6$, we have, for all $t \ge 1$,  
\begin{align}
    ( \pi^* - \pi_{\theta_t} )^\top r \le \left( \pi^* - \pi_{\theta_{1}}\right)^\top r \cdot e^{ - \frac{  c \cdot (t-1) }{12} },
\end{align}
where $c = \inf_{t\ge 1} \pi_{\theta_t}(a^*) > 0$ does not depend on $t$.
If $\pi_{\theta_1}(a) = 1/K$, $\forall a$, then $c \ge 1/K$.
\begin{proof}
See the proof in \citep[Theorem 2]{mei2021leveraging}. We include a  proof for completeness.

Denote $\theta_{\zeta_t} \coloneqq \theta_t + \zeta_t \cdot (\theta_{t+1} - \theta_t)$ with some $\zeta_t \in [0,1]$. According to Taylor's theorem,
\begin{align}
\label{eq:upper_bound_softmax_gnpg_special_true_gradient_intermediate_1}
\MoveEqLeft
    \left| ( \pi_{\theta_{t+1}} - \pi_{\theta_t})^\top r - \Big\langle \frac{d \pi_{\theta_t}^\top r}{d \theta_t}, \theta_{t+1} - \theta_t \Big\rangle \right| = \frac{1}{2} \cdot \left| \left( \theta_{t+1} - \theta_t \right)^\top S(r, \theta_{\zeta_t} ) \left( \theta_{t+1} - \theta_t \right) \right| \\
    &\le \frac{3 }{2} \cdot \bigg\| \frac{d \pi_{\theta_{\zeta_t}}^\top r}{d {\theta_{\zeta_t}}} \bigg\|_2 \cdot \| \theta_{t+1} - \theta_t \|_2^2 \qquad \left( \text{by \cref{lem:non_uniform_smoothness_softmax_special}} \right) \\
    &\le 3 \cdot \bigg\| \frac{d \pi_{\theta_t}^\top r}{d {\theta_t}} \bigg\|_2 \cdot \| \theta_{t+1} - \theta_t \|_2^2,
\end{align}
and the last inequality is from \citep[Lemma 3]{mei2021leveraging}. Denote $\delta(\theta_t) \coloneqq ( \pi^* - \pi_{\theta_t} )^\top r$. We have, for all $t \ge 1$,
\begin{align}
\label{eq:upper_bound_softmax_gnpg_special_true_gradient_intermediate_2}
    \delta(\theta_{t+1}) - \delta(\theta_t) &= - \pi_{\theta_{t+1}}^\top r + \pi_{\theta_t}^\top r + \Big\langle \frac{d \pi_{\theta_t}^\top r}{d \theta_t}, \theta_{t+1} - \theta_{t} \Big\rangle - \Big\langle \frac{d \pi_{\theta_t}^\top r}{d \theta_t}, \theta_{t+1} - \theta_{t} \Big\rangle \\
    &\le 3 \cdot \bigg\| \frac{d \pi_{\theta_t}^\top r}{d {\theta_t}} \bigg\|_2 \cdot \| \theta_{t+1} - \theta_t \|_2^2 - \Big\langle \frac{d \pi_{\theta_t}^\top r}{d \theta_t}, \theta_{t+1} - \theta_{t} \Big\rangle \qquad \left( \text{by \cref{eq:upper_bound_softmax_gnpg_special_true_gradient_intermediate_1}} \right) \\
    &= - \frac{1}{12} \cdot \bigg\| \frac{d \pi_{\theta_t}^\top r}{d \theta_t} \bigg\|_2 
    \qquad \left( \text{using \cref{update_rule:softmax_gnpg_special_true_gradient} and } \eta = 1/6 \right) \\
    &\le - \frac{1}{12} \cdot \pi_{\theta_t}(a^*) \cdot ( \pi^* - \pi_{\theta_t} )^\top r
    \qquad \left( \text{by \cref{lem:non_uniform_lojasiewicz_softmax_special}} \right) \\
    &\le - \frac{c}{12} \cdot \delta(\theta_t),
\end{align}
where $c = \inf_{t\ge 1} \pi_{\theta_t}(a^*) > 0$ is from \citep[Lemma 4]{mei2021leveraging}. Then we have, for all $t \ge 1$,
\begin{align}
    \delta(\theta_t) &\le \delta(\theta_{t-1}) \cdot \left( 1 - \frac{  c  }{12} \right) \\
    &\le \delta(\theta_{t-1}) \cdot e^{ - \frac{c}{12} }  \\
    &\le \delta(\theta_1) \cdot e^{ - \frac{c \cdot (t - 1)}{12} }. 
\end{align}
If $\pi_{\theta_1}(a) = 1/K$, $\forall a$, then similar arguments to \cref{eq:upper_bound_softmax_pg_special_true_gradient_intermediate_3} give $c \ge 1/K$.
\end{proof}

\subsection{On-policy Stochastic Gradient Setting}

\subsubsection{Softmax PG}

\textbf{\cref{lem:unbiased_bounded_variance_softmax_pg_special_on_policy_stochastic_gradient}.}
Let $\hat{r}$ be the IS estimator using on-policy sampling $a \sim \pi_{\theta}(\cdot)$. The stochastic softmax PG estimator is unbiased and bounded, i.e.,
\begin{align}
    \expectation_{a \sim \pi_\theta(\cdot)}{ \left[ \frac{d \pi_\theta^\top \hat{r} }{d \theta} \right] } &= \frac{d \pi_\theta^\top r}{d \theta}, \text{ and} \\
    \expectation_{a \sim \pi_\theta(\cdot)}{ \left\| \frac{d \pi_\theta^\top \hat{r} }{d \theta} \right\|_2^2 } &\le 2.
\end{align}
\begin{proof}
\textbf{First part.} $\expectation_{a \sim \pi_\theta(\cdot)}{ \left[ \frac{d \pi_\theta^\top \hat{r} }{d \theta} \right] } = \frac{d \pi_\theta^\top r}{d \theta}$.

We have, for all $i \in [K]$, the true softmax PG is,
\begin{align}
    \frac{d \pi_{\theta}^\top r}{d \theta(i)} = \pi_{\theta}(i) \cdot \left( r(i) - \pi_{\theta}^\top r \right).
\end{align}
On the other hand, we have, for all $i \in [K]$,
\begin{align}
    \frac{d \pi_{\theta}^\top \hat{r}}{d \theta(i)} &= \pi_{\theta}(i) \cdot \left( \hat{r}(i) - \pi_{\theta}^\top \hat{r} \right) \\
    &= \pi_{\theta}(i) \cdot \left( \frac{ \sI\left\{ a = i \right\} }{ \pi_{\theta}(i) } \cdot r(i) - \sum_{j}{ \sI\left\{ a = j \right\} \cdot r(j) } \right) \qquad \left( \text{by \cref{def:on_policy_importance_sampling}} \right) \\
    &= \sI\left\{ a = i \right\} \cdot r(i) - \pi_{\theta}(i) \cdot r(a).
\end{align}
The expectation of stochastic softmax PG is,
\begin{align}
    \expectation_{a \sim \pi_{\theta}(\cdot) }{ \left[ \frac{d \pi_{\theta}^\top \hat{r}}{d \theta(i)} \right] } &= \sum_{a \in [K]}{ \pi_\theta(a) \cdot \left( \sI\left\{ a = i \right\} \cdot r(i) - \pi_{\theta}(i) \cdot r(a) \right) } \\
    &= \pi_{\theta}(i) \cdot r(i) - \pi_{\theta}(i) \cdot \pi_{\theta}^\top r \\
    &= \frac{d \pi_{\theta}^\top r}{d \theta(i)}.
\end{align}

\textbf{Second part.} $\expectation_{a \sim \pi_\theta(\cdot)}{ \left\| \frac{d \pi_\theta^\top \hat{r} }{d \theta} \right\|_2^2 } \le 2$.

The squared stochastic PG norm is,
\begin{align}
\label{eq:unbiased_bounded_variance_softmax_pg_special_on_policy_stochastic_gradient_intermediate_1}
\MoveEqLeft
    \left\| \frac{d \pi_{\theta}^\top \hat{r}}{d \theta} \right\|_2^2 = \sum_{i=1}^{K}{ \left( \frac{d \pi_{\theta}^\top \hat{r}}{d \theta(i)} \right)^2 } = \sum_{i=1}^{K}{ \pi_{\theta}(i)^2 \cdot \left( \hat{r}(i) - \pi_{\theta}^\top \hat{r} \right)^2 } \\
    &= \sum_{i=1}^{K}{ \pi_{\theta}(i)^2 \cdot \Bigg[ \frac{ \left( \sI\left\{ a = i \right\} \right)^2 }{ \pi_{\theta}(i)^2 } \cdot r(i)^2 - 2 \cdot \frac{ \sI\left\{ a = i \right\} }{ \pi_{\theta}(i) } \cdot r(i) \cdot \sum_{j=1}^{K}{ \sI\left\{ a = j \right\} \cdot r(j) } }  + \bigg( \sum_{j=1}^{K}{ \sI\left\{ a = j \right\} \cdot r(j) } \bigg)^2 \Bigg] \\
    &= r(a)^2 - 2 \cdot \pi_\theta(a) \cdot r(a)^2 + \sum_{i=1}^{K}{ \pi_{\theta}(i)^2 \cdot r(a)^2 } \\
    &= \left( 1 - \pi_\theta(a) \right) \cdot r(a)^2 - \pi_\theta(a) \cdot r(a)^2 + \pi_\theta(a)^2 \cdot r(a)^2 + \sum_{a^\prime \not= a}{ \pi_{\theta}(a^\prime)^2 \cdot r(a)^2 } \\
    &= \left( 1 - \pi_\theta(a) \right)^2 \cdot r(a)^2 + \sum_{a^\prime \not= a}{ \pi_{\theta}(a^\prime)^2 \cdot r(a)^2 }.
\end{align}
Taking expectation of $a \sim \pi_\theta(\cdot)$, the expected squared stochastic PG norm is,
\begin{align}
\MoveEqLeft
    \expectation_{a \sim \pi_{\theta}(\cdot)}{ \left\| \frac{d \pi_{\theta}^\top \hat{r}}{d \theta} \right\|_2^2 } = \sum_{a \in [K]}{ \pi_{\theta}(a) \cdot \left( 1 - \pi_\theta(a) \right)^2 \cdot r(a)^2 } + \sum_{a \in [K]}{ \pi_{\theta}(a) \cdot \sum_{a^\prime \not= a}{ \pi_{\theta}(a^\prime)^2 \cdot r(a)^2 } } \\
    &\le \sum_{a \in [K]}{ \pi_{\theta}(a) \cdot \left( 1 - \pi_\theta(a) \right)^2 \cdot r(a)^2 } + \sum_{a \in [K]}{ \pi_{\theta}(a) \cdot \left[ \sum_{a^\prime \not= a}{ \pi_{\theta}(a^\prime) } \right]^2  \cdot r(a)^2  } \qquad \left( \left\| x \right\|_2 \le \left\| x \right\|_1 \right) \\
    &= 2 \cdot \sum_{a \in [K]}{ \pi_{\theta}(a) \cdot \left( 1 - \pi_{\theta}(a) \right)^2 \cdot r(a)^2  } \\
    &\le 2 \cdot \sum_{a \in [K]}{ \pi_{\theta}(a)} \qquad \left( r \in (0, 1]^K, \text{ and } \pi_{\theta}(a) \in (0, 1) \text{ for all } a \in [K] \right) \\
    &= 2. \qedhere
\end{align}
\end{proof}

\begin{lemma}[Non-uniform Smoothness (NS) between two iterations]
\label{lem:non_uniform_smoothness_special_two_iterations}
Let $\theta^\prime = \theta + \eta \cdot \frac{d \pi_{\theta}^\top \hat{r}}{d {\theta}}$ (using stochastic PG update). We have, for $\eta = \frac{1}{12} \cdot \Big\| \frac{d \pi_{\theta}^\top r}{d {\theta}} \Big\|_2$ (using true PG norm in learning rate),
\begin{align}
    \left| ( \pi_{\theta^\prime} - \pi_\theta)^\top r - \Big\langle \frac{d \pi_\theta^\top r}{d \theta}, \theta^\prime - \theta \Big\rangle \right| \le 3 \cdot \left\| \frac{d \pi_{\theta}^\top r}{d {\theta}} \right\|_2 \cdot \| \theta^\prime - \theta \|_2^2.
\end{align}
\end{lemma}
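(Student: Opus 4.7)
The plan is to combine Taylor's theorem with the non-uniform smoothness bound of \cref{lem:non_uniform_smoothness_softmax_special} and a self-bounding (Gr\"onwall-style) argument that controls the gradient norm along the update segment from $\theta$ to $\theta'$.

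First, I would apply Taylor's theorem to the smooth map $\theta \mapsto \pi_\theta^\top r$: there exists $\zeta \in [0,1]$ such that, writing $\theta_\zeta := \theta + \zeta (\theta' - \theta)$,
\begin{align}
(\pi_{\theta'} - \pi_\theta)^\top r - \Big\langle \tfrac{d \pi_\theta^\top r}{d \theta}, \theta' - \theta \Big\rangle = \tfrac{1}{2} (\theta' - \theta)^\top S(r, \theta_\zeta) (\theta' - \theta).
\end{align}
Invoking \cref{lem:non_uniform_smoothness_softmax_special} at $\theta_\zeta$ via \cref{eq:non_uniform_smoothness_softmax_special_Hessian_spectral_radius}, the right-hand side is bounded in absolute value by $\tfrac{3}{2} \cdot g(\theta_\zeta) \cdot \|\theta' - \theta\|_2^2$, where I abbreviate $g(\vartheta) := \| \tfrac{d \pi_\vartheta^\top r}{d \vartheta} \|_2$. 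Thus it suffices to prove $g(\theta_\zeta) \le 2 g(\theta)$ uniformly in $\zeta \in [0,1]$.

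Next, I would establish this bound via a self-bounding argument on the gradient field $\vartheta \mapsto \tfrac{d \pi_\vartheta^\top r}{d \vartheta}$, whose Jacobian is exactly $S(r, \vartheta)$. By the fundamental theorem of calculus together with \cref{lem:non_uniform_smoothness_softmax_special}, for any $\zeta \in [0,1]$ we obtain
\begin{align}
g(\theta_\zeta) \le g(\theta) + 3 \cdot M \cdot \|\theta_\zeta - \theta\|_2 \le g(\theta) + 3 \cdot M \cdot \|\theta' - \theta\|_2, \quad M := \sup_{\xi \in [0,1]} g(\theta + \xi(\theta' - \theta)).
\end{align}
Taking the supremum in $\zeta$ gives $M \le g(\theta) + 3 M \cdot \|\theta' - \theta\|_2$, which rearranges to $M \le 2 g(\theta)$ provided $\|\theta' - \theta\|_2 \le 1/6$.

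Finally, I would verify the step-size condition. Reusing the computation in \cref{eq:unbiased_bounded_variance_softmax_pg_special_on_policy_stochastic_gradient_intermediate_1} gives the deterministic bound $\| \tfrac{d \pi_\theta^\top \hat r}{d \theta}\|_2 \le \sqrt{2}$ under \cref{asmp:positive_reward}; since $g(\theta) \le 1$ trivially (each coordinate of the true gradient has $|\pi_\theta(i)(r(i) - \pi_\theta^\top r)| \le 1$ and $\sum_i \pi_\theta(i)^2 \le 1$), the prescribed $\eta$ yields $\|\theta' - \theta\|_2 \le \eta \sqrt{2} \le \tfrac{\sqrt{2}}{12} g(\theta) \le \tfrac{\sqrt{2}}{12} < \tfrac{1}{6}$, so the self-bounding bound applies. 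Combining the three steps produces $\tfrac{3}{2} \cdot 2 g(\theta) \cdot \|\theta' - \theta\|_2^2 = 3 g(\theta) \|\theta' - \theta\|_2^2$, which is the claim. The main obstacle is the self-bounding step, since \emph{a priori} $g(\theta_\zeta)$ could be much larger than $g(\theta)$; the resolution is to take the supremum over the whole segment and use the step-size smallness to absorb the coefficient on $M$, which is the only place where the precise form of $\eta$ enters.
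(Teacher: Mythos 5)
Your proposal is correct and follows essentially the same route as the paper: Taylor's theorem combined with \cref{lem:non_uniform_smoothness_softmax_special} reduces the claim to showing that the gradient norm along the segment from $\theta$ to $\theta'$ is at most $2\big\|\tfrac{d\pi_\theta^\top r}{d\theta}\big\|_2$, which both arguments obtain from the smallness of the step $\|\theta'-\theta\|_2 \le \eta\sqrt{2}$. The only difference is presentational: the paper controls the segment gradient by iterating the integral inequality into a geometric series $\sum_i \big(3\eta\|\tfrac{d\pi_\theta^\top \hat r}{d\theta}\|_2\big)^i$, whereas you take a supremum over the segment and rearrange a self-bounding inequality — the two computations yield the same constant.
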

\begin{proof}
Denote $\theta_\zeta \coloneqq \theta + \zeta \cdot (\theta^\prime - \theta)$ with some $\zeta \in [0,1]$. According to Taylor's theorem, $\forall \theta, \ \theta^\prime$,
\begin{align}
\label{eq:non_uniform_smoothness_special_two_iterations_intermediate_1}
    \left| ( \pi_{\theta^\prime} - \pi_\theta)^\top r - \Big\langle \frac{d \pi_\theta^\top r}{d \theta}, \theta^\prime - \theta \Big\rangle \right| &= \frac{1}{2} \cdot \left| \left( \theta^\prime - \theta \right)^\top \frac{d^2 \pi_{\theta_\zeta}^\top r}{d {\theta_\zeta}^2} \left( \theta^\prime - \theta \right) \right| \\
    &\le \frac{3 }{2} \cdot \bigg\| \frac{d \pi_{\theta_\zeta}^\top r}{d {\theta_\zeta}} \bigg\|_2 \cdot \| \theta^\prime - \theta \|_2^2. \qquad \left( \text{by \cref{lem:non_uniform_smoothness_softmax_special}} \right)
\end{align}
Denote $\zeta_1 \coloneqq \zeta$. Also denote $\theta_{\zeta_2} \coloneqq \theta + \zeta_2 \cdot (\theta_{\zeta_1} - \theta)$ with some $\zeta_2 \in [0,1]$. We have,
\begin{align}
\label{eq:non_uniform_smoothness_special_two_iterations_intermediate_2}
\MoveEqLeft
    \left\| \frac{d \pi_{\theta_{\zeta_1}}^\top r}{d {\theta_{\zeta_1}}} - \frac{d \pi_{\theta}^\top r}{d {\theta}} \right\|_2 = \left\| \int_{0}^{1} \bigg\langle \frac{d^2 \{ \pi_{\theta_{\zeta_2}}^\top r \} }{d {\theta_{\zeta_2}^2 }}, \theta_{\zeta_1} - \theta \bigg\rangle d \zeta_2 \right\|_2 \\
    &\le \int_{0}^{1} \left\| \frac{d^2 \{ \pi_{\theta_{\zeta_2}}^\top r \} }{d {\theta_{\zeta_2}^2 }} \right\|_2 \cdot \left\| \theta_{\zeta_1} - \theta \right\|_2 d \zeta_2 \qquad \left( \text{by Cauchy–Schwarz} \right) \\
    &\le \int_{0}^{1} 3 \cdot \left\| \frac{d  \pi_{\theta_{\zeta_2}}^\top r }{d {\theta_{\zeta_2} }} \right\|_2 \cdot \zeta_1 \cdot \left\| \theta^\prime - \theta \right\|_2 d \zeta_2 \qquad \left( \text{by \cref{lem:non_uniform_smoothness_softmax_special}} \right)  \\
    &\le \int_{0}^{1} 3 \cdot \left\| \frac{d  \pi_{\theta_{\zeta_2}}^\top r }{d {\theta_{\zeta_2} }} \right\|_2 \cdot \eta \cdot \left\| \frac{d \pi_{\theta}^\top \hat{r}}{d {\theta}} \right\|_2 \ d \zeta_2, \qquad \left( \zeta_1 \in [0, 1], \text{ using } \theta^\prime = \theta + \eta \cdot \frac{d \pi_{\theta}^\top \hat{r}}{d {\theta}} \right)
\end{align}
where the second inequality is because of the Hessian is symmetric, and its operator norm is equal to its spectral radius. Therefore we have,
\begin{align}
\label{eq:non_uniform_smoothness_special_two_iterations_intermediate_3}
    \left\| \frac{d \pi_{\theta_{\zeta_1}}^\top r}{d {\theta_{\zeta_1}}} \right\|_2 &\le \left\| \frac{d \pi_{\theta}^\top r}{d {\theta}} \right\|_2 + \left\| \frac{d \pi_{\theta_{\zeta_1}}^\top r}{d {\theta_{\zeta_1}}} - \frac{d \pi_{\theta}^\top r}{d {\theta}} \right\|_2 \qquad \left( \text{by triangle inequality} \right) \\
    &\le \left\| \frac{d \pi_{\theta}^\top r}{d {\theta}} \right\|_2 + 3 \eta \cdot \left\| \frac{d \pi_{\theta}^\top \hat{r}}{d {\theta}} \right\|_2 \cdot \int_{0}^{1} \left\| \frac{d  \pi_{\theta_{\zeta_2}}^\top r }{d {\theta_{\zeta_2} }} \right\|_2 d \zeta_2. \qquad \left( \text{by \cref{eq:non_uniform_smoothness_special_two_iterations_intermediate_2}} \right)
\end{align}
Denote $\theta_{\zeta_3} \coloneqq \theta + \zeta_3 \cdot (\theta_{\zeta_2} - \theta)$ with $\zeta_3 \in [0,1]$. Using similar calculation in \cref{eq:non_uniform_smoothness_special_two_iterations_intermediate_2}, we have,
\begin{align}
\label{eq:non_uniform_smoothness_special_two_iterations_intermediate_4}
    \left\| \frac{d  \pi_{\theta_{\zeta_2}}^\top r }{d {\theta_{\zeta_2} }} \right\|_2 &\le \left\| \frac{d \pi_{\theta}^\top r}{d {\theta}} \right\|_2 + \left\| \frac{d \pi_{\theta_{\zeta_2}}^\top r}{d {\theta_{\zeta_2}}} - \frac{d \pi_{\theta}^\top r}{d {\theta}} \right\|_2 \\
    &\le \left\| \frac{d \pi_{\theta}^\top r}{d {\theta}} \right\|_2 + 3 \eta \cdot \left\| \frac{d \pi_{\theta}^\top \hat{r}}{d {\theta}} \right\|_2 \cdot \int_{0}^{1}  \left\| \frac{d  \pi_{\theta_{\zeta_3}}^\top r }{d {\theta_{\zeta_3} }} \right\|_2 d \zeta_3.
\end{align}
Combining \cref{eq:non_uniform_smoothness_special_two_iterations_intermediate_3,eq:non_uniform_smoothness_special_two_iterations_intermediate_4}, we have,
\begin{align}
\label{eq:non_uniform_smoothness_special_two_iterations_intermediate_5}
    \left\| \frac{d \pi_{\theta_{\zeta_1}}^\top r}{d {\theta_{\zeta_1}}} \right\|_2 \le \left( 1 + 3 \eta \cdot \left\| \frac{d \pi_{\theta}^\top \hat{r}}{d {\theta}} \right\|_2 \right) \cdot \left\| \frac{d \pi_{\theta}^\top r}{d {\theta}} \right\|_2 + \left( 3 \eta \cdot \left\| \frac{d \pi_{\theta}^\top \hat{r}}{d {\theta}} \right\|_2 \right)^2 \cdot \int_{0}^{1} \int_{0}^{1} \left\| \frac{d  \pi_{\theta_{\zeta_3}}^\top r }{d {\theta_{\zeta_3} }} \right\|_2 d \zeta_3 d \zeta_2,
\end{align}
which implies,
\begin{align}
\label{eq:non_uniform_smoothness_special_two_iterations_intermediate_6}
\MoveEqLeft
    \left\| \frac{d \pi_{\theta_{\zeta_1}}^\top r}{d {\theta_{\zeta_1}}} \right\|_2 \le \sum_{i = 0}^{\infty}{ \left( 3 \eta \cdot \left\| \frac{d \pi_{\theta}^\top \hat{r}}{d {\theta}} \right\|_2 \right)^i } \cdot \left\| \frac{d \pi_{\theta}^\top r}{d {\theta}} \right\|_2 \\
    &= \frac{1}{1 - 3 \eta \cdot \Big\| \frac{d \pi_{\theta}^\top \hat{r}}{d {\theta}} \Big\|_2} \cdot \left\| \frac{d \pi_{\theta}^\top r}{d {\theta}} \right\|_2 \qquad \left( 3 \eta \cdot \left\| \frac{d \pi_{\theta}^\top \hat{r}}{d {\theta}} \right\|_2 \in ( 0 , 1), \text{ see below} \right) \\
    &= \frac{1}{1 - \frac{1}{4} \cdot \Big\| \frac{d \pi_{\theta}^\top \hat{r}}{d {\theta}} \Big\|_2 \cdot \Big\| \frac{d \pi_{\theta}^\top r}{d {\theta}} \Big\|_2 } \cdot \left\| \frac{d \pi_{\theta}^\top r}{d {\theta}} \right\|_2 \qquad \left( \eta = \frac{1}{12} \cdot \left\| \frac{d \pi_{\theta}^\top r}{d {\theta}} \right\|_2 \right) \\
    &\le \frac{1}{1 - \frac{1}{4} \cdot \Big\| \frac{d \pi_{\theta}^\top \hat{r}}{d {\theta}} \Big\|_2} \cdot \left\| \frac{d \pi_{\theta}^\top r}{d {\theta}} \right\|_2 \qquad \left( \left\| \frac{d \pi_{\theta}^\top r}{d {\theta}} \right\|_2 \le 1, \text{ see below} \right) \\
    &\le \frac{1}{1 - \frac{1}{2}} \cdot \left\| \frac{d \pi_{\theta}^\top r}{d {\theta}} \right\|_2 \qquad \left( \left\| \frac{d \pi_{\theta}^\top \hat{r}}{d {\theta}} \right\|_2 \le 2, \text{ see below} \right) \\
    &= 2 \cdot \left\| \frac{d \pi_{\theta}^\top r}{d {\theta}} \right\|_2,
\end{align}
where the last inequality is from,
\begin{align}
\label{eq:non_uniform_smoothness_special_two_iterations_intermediate_7}
    \left\| \frac{d \pi_{\theta}^\top \hat{r}}{d \theta} \right\|_2^2 &= \left( 1 - \pi_\theta(a) \right)^2 \cdot r(a)^2 + \sum_{a^\prime \not= a}{ \pi_{\theta}(a^\prime)^2 \cdot r(a)^2 } \qquad \left( \text{by \cref{eq:unbiased_bounded_variance_softmax_pg_special_on_policy_stochastic_gradient_intermediate_1}} \right) \\
    &\le 2 \cdot \left( 1 - \pi_\theta(a) \right)^2 \cdot r(a)^2 \qquad \left( \left\| x \right\|_2 \le \left\| x \right\|_1 \right) \\
    &\le 2, \qquad \left( r \in (0, 1]^K, \text{ and } \pi_{\theta}(a) \in (0, 1) \text{ for all } a \in [K] \right)  
\end{align}
which implies that,
\begin{align}
    \left\| \frac{d \pi_{\theta}^\top \hat{r}}{d \theta} \right\|_2 &\le \sqrt{2} \le 2,
\end{align}
and the second last inequality is from,
\begin{align}
    \left\| \frac{d \pi_{\theta}^\top r}{d {\theta}} \right\|_2^2 &= \sum_{i=1}^{K}{ \pi_{\theta}(i)^2 \cdot \left( r(i) - \pi_{\theta}^\top r \right)^2 } \\
    &\le \sum_{i=1}^{K}{ \pi_{\theta}(i)^2} \qquad \left( r \in (0, 1]^K \right) \\
    &\le \left[ \sum_{i=1}^{K}{ \pi_{\theta}(i)} \right]^2 \qquad \left( \left\| x \right\|_2 \le \left\| x \right\|_1 \right) \\
    &= 1.
\end{align}

Combining \cref{eq:non_uniform_smoothness_special_two_iterations_intermediate_1,eq:non_uniform_smoothness_special_two_iterations_intermediate_6} finishes the proof.
\end{proof}

\textbf{\cref{thm:almost_sure_global_convergence_softmax_pg_special_on_policy_stochastic_gradient}.}
Using \cref{update_rule:softmax_pg_special_on_policy_stochastic_gradient} with learning rate 
\begin{align}
    \eta = \frac{1}{12} \cdot \bigg\| \frac{d \pi_{\theta_t}^\top r}{d {\theta_t}} \bigg\|_2,
\end{align}
for all $t \ge 1$, we have $\left( \pi^* - \pi_{\theta_t} \right)^\top r \to 0$ as $t \to \infty$ in probability.
\begin{proof}
See \citep[Proposition 4]{chung2020beyond}. We provide a different proof using the non-uniform smoothness.

Denote $\delta(\theta_t) \coloneqq ( \pi^* - \pi_{\theta_t} )^\top r$.  We have, for all $t \ge 1$,
\begin{align}
\label{eq:almost_sure_global_convergence_softmax_pg_special_on_policy_stochastic_gradient_intermediate_1}
\MoveEqLeft
    \delta(\theta_{t+1}) - \delta(\theta_t) = - \pi_{\theta_{t+1}}^\top r + \pi_{\theta_t}^\top r + \Big\langle \frac{d \pi_{\theta_t}^\top r}{d \theta_t}, \theta_{t+1} - \theta_{t} \Big\rangle - \Big\langle \frac{d \pi_{\theta_t}^\top r}{d \theta_t}, \theta_{t+1} - \theta_{t} \Big\rangle\\
    &\le  3 \cdot \bigg\| \frac{d \pi_{\theta_t}^\top r}{d {\theta_t}} \bigg\|_2 \cdot \| \theta_{t+1} - \theta_{t} \|_2^2 - \Big\langle \frac{d \pi_{\theta_t}^\top r}{d \theta_t}, \theta_{t+1} - \theta_{t} \Big\rangle \qquad \left( \text{by \cref{lem:non_uniform_smoothness_special_two_iterations}} \right) \\
    &= 3 \cdot \eta^2 \cdot \bigg\| \frac{d \pi_{\theta_t}^\top r}{d {\theta_t}} \bigg\|_2 \cdot \bigg\| \frac{d \pi_{\theta_t}^\top \hat{r}_t}{d \theta_t} \bigg\|_2^2 - \eta \cdot \Big\langle \frac{d \pi_{\theta_t}^\top r}{d \theta_t}, \frac{d \pi_{\theta_t}^\top \hat{r}_t}{d \theta_t} \Big\rangle. \qquad \left( \text{using \cref{update_rule:softmax_pg_special_on_policy_stochastic_gradient}} \right)
\end{align}
Next, taking expectation over the random sampling on \cref{eq:almost_sure_global_convergence_softmax_pg_special_on_policy_stochastic_gradient_intermediate_1}, we have, 
\begin{align}
\label{eq:almost_sure_global_convergence_softmax_pg_special_on_policy_stochastic_gradient_intermediate_2}
\MoveEqLeft
    \expectation{ \left[ \delta(\theta_{t+1}) \right] } - \expectation{ \left[ \delta(\theta_{t}) \right] } \le 3 \cdot \eta^2 \cdot \bigg\| \frac{d \pi_{\theta_t}^\top r}{d {\theta_t}} \bigg\|_2 \cdot \expectation{ \left[ \bigg\| \frac{d \pi_{\theta_t}^\top \hat{r}_t}{d \theta_t} \bigg\|_2^2 \right] } - \eta \cdot \Big\langle \frac{d \pi_{\theta_t}^\top r}{d \theta_t}, \expectation{ \left[ \frac{d \pi_{\theta_t}^\top \hat{r}_t}{d \theta_t} \right] } \Big\rangle \\
    &= 3 \cdot \eta^2 \cdot \bigg\| \frac{d \pi_{\theta_t}^\top r}{d {\theta_t}} \bigg\|_2 \cdot \expectation{ \left[ \bigg\| \frac{d \pi_{\theta_t}^\top \hat{r}_t}{d \theta_t} \bigg\|_2^2 \right] } - \eta \cdot \bigg\| \frac{d \pi_{\theta_t}^\top r}{d \theta_t} \bigg\|_2^2 \qquad \left( \text{unbiased PG, by \cref{lem:unbiased_bounded_variance_softmax_pg_special_on_policy_stochastic_gradient}} \right) \\
    &\le 3 \cdot 2 \cdot \eta^2 \cdot \bigg\| \frac{d \pi_{\theta_t}^\top r}{d {\theta_t}} \bigg\|_2 - \eta \cdot \bigg\| \frac{d \pi_{\theta_t}^\top r}{d \theta_t} \bigg\|_2^2 \qquad \left( \expectation{ \left[ \bigg\| \frac{d \pi_{\theta_t}^\top \hat{r}_t}{d \theta_t} \bigg\|_2^2 \right] } \le 2, \text{ by \cref{lem:unbiased_bounded_variance_softmax_pg_special_on_policy_stochastic_gradient}} \right) \\
    &= - \frac{1}{ 24} \cdot \bigg\| \frac{d \pi_{\theta_t}^\top r}{d \theta_t} \bigg\|_2^3 \qquad \left( \text{using } \eta = \frac{1}{12} \cdot \bigg\| \frac{d \pi_{\theta_t}^\top r}{d {\theta_t}} \bigg\|_2 \right) \\
    &\le - \frac{1}{ 24} \cdot \expectation{ \left[ \pi_{\theta_t}(a^*)^3 \right] } \cdot \expectation{ \left[ \delta(\theta_{t})^3 \right] } 
    \qquad \left( \text{by \cref{lem:non_uniform_lojasiewicz_softmax_special}} \right) \\
    &\le - \frac{c}{ 24 } \cdot \left( \expectation{ \left[ \delta(\theta_{t}) \right] } \right)^3, \qquad \left( \text{by Jensen's inequality}\right)
\end{align}
where
\begin{align}
\label{eq:almost_sure_global_convergence_softmax_pg_special_on_policy_stochastic_gradient_intermediate_3}
    c &\coloneqq \inf_{t\ge 1} \expectation{ \left[ \pi_{\theta_t}(a^*)^3 \right] } \\
    &\ge \inf_{t\ge 1} \left( \expectation{ \left[ \pi_{\theta_t}(a^*) \right] } \right)^3 \qquad \left( \text{by Jensen's inequality}\right) \\
    &> 0,
\end{align}
and the last inequality is from \citep[Lemma 5]{mei2020global}, since the expected iteration equals the true gradient update, which converges to global optimal policy. Denote $\tilde{\delta}(\theta_t) \coloneqq \expectation{ \left[ \delta(\theta_{t}) \right] } $. We have, for all $t \ge 1$,
\begin{align}
\label{eq:almost_sure_global_convergence_softmax_pg_special_on_policy_stochastic_gradient_intermediate_4}
\MoveEqLeft
    \frac{1}{ \tilde{\delta}(\theta_t)^{2} } = \frac{1}{\tilde{\delta}(\theta_1)^{2}} + \sum_{s=1}^{t-1}{ \left[ \frac{1}{\tilde{\delta}(\theta_{s+1})^{2}} - \frac{1}{\tilde{\delta}(\theta_{s})^{2}} \right] } \\
    &= \frac{1}{\tilde{\delta}(\theta_1)^{2}} + \sum_{s=1}^{t-1}{ \frac{1}{\tilde{\delta}(\theta_{s+1})^{2} } \cdot \left[ 1 - \frac{\tilde{\delta}(\theta_{s+1})^{2}}{ \tilde{\delta}(\theta_{s})^{2} } \right] } \\
    &\ge \frac{1}{\tilde{\delta}(\theta_1)^{2}} + \sum_{s=1}^{t-1}{ \frac{2}{ \bcancel{ \tilde{\delta}(\theta_{s+1})^{2}} } \cdot \frac{ \bcancel{ \tilde{\delta}(\theta_{s+1})^{2} } }{ \tilde{\delta}(\theta_{s})^{2} } \cdot \left[ 1 - \frac{\tilde{\delta}(\theta_{s+1})}{ \tilde{\delta}(\theta_{s}) } \right] } \qquad \left( 1 - x^2 \ge 2 \cdot x^2 \cdot (1-x) \text{ for all } x \in (0, 1] \right) \\
    &= \frac{1}{\tilde{\delta}(\theta_1)^{2}} + 2 \cdot \sum_{s=1}^{t-1}{ \frac{1}{ \tilde{\delta}(\theta_{s})^{3} } \cdot \left( \tilde{\delta}(\theta_{s}) - \tilde{\delta}(\theta_{s+1}) \right) } \\
    &\ge \frac{1}{\tilde{\delta}(\theta_1)^{2}} + 2 \cdot \sum_{s=1}^{t-1}{ \frac{1}{ \bcancel{ \tilde{\delta}(\theta_{s})^{3} } } \cdot \frac{c}{24} \cdot \bcancel{\tilde{\delta}(\theta_{s})^{3}} } \qquad \left( \text{by \cref{eq:almost_sure_global_convergence_softmax_pg_special_on_policy_stochastic_gradient_intermediate_2}} \right) \\
    &= \frac{1}{\tilde{\delta}(\theta_1)^{2}} + \frac{c}{ 12} \cdot \left( t - 1 \right) \\
    &\ge \frac{c \cdot t}{ 12}, \qquad \left( \tilde{\delta}(\theta_1)^2 \le 1 < \frac{12}{c} \right)
\end{align}
which implies that,
\begin{align}
    \expectation_{a_t \sim \pi_{\theta_t}(\cdot)}{ \left[ ( \pi^* - \pi_{\theta_t} )^\top r \right] } \le \frac{ \sqrt{12} }{ \sqrt{c}}\cdot \frac{1}{ \sqrt{t} },
\end{align}
where $c$ is from \cref{eq:almost_sure_global_convergence_softmax_pg_special_on_policy_stochastic_gradient_intermediate_3}. This implies $\left( \pi^* - \pi_{\theta_t} \right)^\top r \to 0$ as $t \to \infty$ in probability, i.e.,
\begin{align}
    \lim_{t \to \infty}{ \probability{ \left( \left( \pi^* - \pi_{\theta_t} \right)^\top r > \epsilon \right) } } = 0,
\end{align}
for all $\epsilon > 0$.
\end{proof}


\subsubsection{NPG}

\textbf{\cref{lem:bias_variance_softmax_natural_pg_special_on_policy_stochastic_gradient}.}
For NPG, we have, $\expectation_{a \sim \pi_\theta(\cdot)}{ \left[ \hat{r} \right] } = r$, and $\expectation_{a \sim \pi_\theta(\cdot)}{ \left\| \hat{r} \right\|_2^2 } = \sum_{a \in [K]}{ \frac{ r(a)^2 }{ \pi_{\theta}(a) }  }$.
\begin{proof}
\textbf{First part.} $\expectation_{a \sim \pi_\theta(\cdot)}{ \left[ \hat{r} \right] } = r$.

We have, for all $i \in [K]$,
\begin{align}
    \expectation_{a \sim \pi_\theta(\cdot)}{ \left[ \hat{r}(i) \right] } = \sum_{a \in [K]}{ \pi_\theta(a) \cdot \frac{ \sI\left\{ a = i \right\} }{ \pi_{\theta}(i) } \cdot r(i) } = r(i).
\end{align}

\textbf{Second part.} $\expectation_{a \sim \pi_\theta(\cdot)}{ \left\| \hat{r} \right\|_2^2 } = \sum_{a \in [K]}{ \frac{ r(a)^2 }{ \pi_{\theta}(a) }  }$.

The squared $\ell_2$ norm of natural policy gradient is,
\begin{align}
    \left\| \hat{r} \right\|_2^2 = \sum_{i}{ \hat{r}(i)^2 } = \sum_{i}{ \frac{ \left( \sI\left\{ a = i \right\} \right)^2 }{ \pi_{\theta}(i)^2 } \cdot r(i)^2 } = \sum_{i}{ \frac{ \sI\left\{ a = i \right\} }{ \pi_{\theta}(i)^2 } \cdot r(i)^2 }.
\end{align}
The expected squared norm is,
\begin{align}
    \expectation_{a \sim \pi_\theta(\cdot)}{ \left\| \hat{r} \right\|_2^2 } &= \sum_{a \in [K]}{ \pi_\theta(a) \cdot \sum_{i}{ \frac{ \sI\left\{ a = i \right\} }{ \pi_{\theta}(i)^2 } \cdot r(i)^2 } } \\
    &= \sum_{a \in [K]}{ \pi_\theta(a) \cdot \frac{1}{ \pi_{\theta}(a)^2 } \cdot r(a)^2 } \\
    &= \sum_{a \in [K]}{ \frac{ r(a)^2 }{ \pi_{\theta}(a) }  }. \qedhere
\end{align}
\end{proof}

\textbf{\cref{thm:failure_probability_softmax_natural_pg_special_on_policy_stochastic_gradient}.}
Using \cref{update_rule:softmax_natural_pg_special_on_policy_stochastic_gradient}, we have: \textbf{(i)} with positive probability, $\sum_{a \not= a^*}{ \pi_{\theta_t}(a)} \to 1$ as $t \to \infty$; \textbf{(ii)} $\forall a \in [K]$, with positive probability, $\pi_{\theta_t}(a) \to 1$, as $t \to \infty$.
\begin{proof}
\textbf{First part.} With positive probability, $\sum_{a \not= a^*}{ \pi_{\theta_t}(a)} \to 1$ as $t \to \infty$.

Let $\probability$ denote the probability measure that over the probability space $(\Omega,\gF)$ that holds all of our random variables.
Let $\gB = \{ a\in [K]: a\ne a^* \}$.
By abusing notation, for any $\pi:[K] \to [0,1]$ map
we let $\pi_{\theta_t}(\gB)$ to stand for $\sum_{a\in \gB}\pi_{\theta_t}(a)$.
Define for $t\ge 1$ the event
$\gB_t = \{ a_t \ne a^* \} (= \{ a_t\in \gB \})$ and let $\gE_t = \gB_1 \cap \dots \cap \gB_t$.
Thus, $\gE_t$ is the event that $a^*$ was not chosen in the first $t$ time steps.
Note that $\{\gE_t\}_{t\ge 1}$ is a nested sequence and thus, by the monotone convergence theorem, 
\begin{align}
\label{eq:failure_probability_softmax_natural_pg_special_on_policy_stochastic_gradient_intermediate_1}
    \lim_{t\to\infty}\probability{(\gE_t)} = \probability{(\gE)}\,,
\end{align}
where $\gE = \cap_{t\ge 1}\gB_t$.
We start with a lower bound on the probability of $\gE_t$.
The lower bound is stated in a generic form:
In particular, let  $(b_t)_{t\ge 1}$ be a deterministic sequence which satisfies that for any $t\ge 1$,
\begin{align}
\label{eq:failure_probability_softmax_natural_pg_special_on_policy_stochastic_gradient_intermediate_2}
    \mathbb{I}_{\gE_{t-1}} \cdot  \pi_{\theta_t}(\gB) \ge \mathbb{I}_{\gE_{t-1}} \cdot b_t \qquad \text{  holds $\probability$-almost surely},
\end{align}
where we let $\gE_0=\Omega$ and for an event $\gE$, $\mathbb{I}_{\gE}$ stands for the characteristic function of $\gE$ (i.e., $\mathbb{I}_{\gE}(\omega)=1$ if $\omega\in \gE$ and $\mathbb{I}_{\gE}(\omega)=0$, otherwise).
We make the following claim:

\noindent \underline{Claim 1}: Under the above assumption, for any $t\ge 1$
it holds that 
\begin{align}
\label{eq:failure_probability_softmax_natural_pg_special_on_policy_stochastic_gradient_intermediate_3}
    \probability(\gE_t) \ge \prod_{s=1}^t b_s\,.
\end{align}

For the proof of this claim let $\gH_t$ denote the sequence formed of the first $t$ actions:
\begin{align}
    \gH_t \coloneqq \left( a_1, a_2, \cdots, a_t \right).
\end{align}
By definition, 
\begin{align}
    \theta_t = \gA\left( \theta_1, a_1, r(a_1), \theta_2, a_2, r(a_2), \cdots, \theta_{t-1}, a_{t-1}, r(a_{t-1}) \right).
\end{align}
By our assumption that the $t$th action is chosen from $\pi_{\theta_t}$, it follows that 
$\probability$ satisfies that
for all $a$ and $t \ge 1$, 
\begin{align}
\label{eq:failure_probability_softmax_natural_pg_special_on_policy_stochastic_gradient_intermediate_4}
    \probability{ \left( a_t = a \ | \ \gH_{t-1} \right) } = \pi_{\theta_t}(a) \qquad \text{ $\probability$-almost surely.} 
\end{align}

We prove the claim by induction on $t$.
For $t=1$, from \cref{eq:failure_probability_softmax_natural_pg_special_on_policy_stochastic_gradient_intermediate_2,eq:failure_probability_softmax_natural_pg_special_on_policy_stochastic_gradient_intermediate_4}, using that $\gE_0 = \Omega$ and $H_0=()$, we have that $\probability$-almost surely,
\begin{align}
    \probability{ ( \gE_1 ) } = \pi_{\theta_1}(\gB).
\end{align}
Suppose the claim holds up to $t-1$. We have,
\begin{align}
\label{eq:failure_probability_softmax_natural_pg_special_on_policy_stochastic_gradient_intermediate_5}
\MoveEqLeft
    \probability{ ( \gE_t ) } = \expectation{ \left[ \probability{ (  \gE_t  \ | \ \gH_{t-1} ) } \right] }  \tag{by the tower rule}\\
    &= \expectation{  \left[ \sI_{\gE_{t-1}} \cdot \probability{ ( \gB_t \ | \ \gH_{t-1} ) } \right] } \tag{$\gE_{t-1}$ is $\gH_{t-1}$-measurable}\\
    &= \expectation{  \left[ \sI_{\gE_{t-1}} \cdot \pi_{\theta_t}(\gB) \right] }
    \tag{by \cref{eq:failure_probability_softmax_natural_pg_special_on_policy_stochastic_gradient_intermediate_4}}  \\
    &\ge \expectation{  \left[ \sI_{\gE_{t-1}} \cdot b_t \right] } \tag{by \cref{eq:failure_probability_softmax_natural_pg_special_on_policy_stochastic_gradient_intermediate_2}}\\
    &= b_t \cdot \probability{ ( \gE_{t-1} ) } \tag{$b_t$ is deterministic}\\
    &= \prod_{s=1}^{t} b_s\,. \tag{induction hypothesis}
\end{align}

Now, we claim the following:

\noindent \underline{Claim 2}: A suitable choice for $b_t$ is
\begin{align}
\label{eq:failure_probability_softmax_natural_pg_special_on_policy_stochastic_gradient_intermediate_6}
    b_t = \exp\left\{ \frac{ - \exp\left\{ \theta_1(a^*) \right\} }{ (K-1) \cdot \exp{ \Big\{ \frac{ \sum_{a \not= a^*}{ \theta_1(a) } + \eta \cdot r_{\min} \cdot (t-1) }{K-1} \Big\} } } \right\}.
\end{align}

\underline{Proof of Claim 2}:
Clearly, it suffices to show that for any sequence $(a_1,\dots,a_{t-1})$ such that $a_s\ne a^*$, $\theta_t := \gA(\theta_1,a_1,r(a_1),\dots,a_{t-1},r(a_{t-1}))$ is such that
$\pi_{\theta_t}(\gB) \ge b_t$ with $b_t$ as defined in \cref{eq:failure_probability_softmax_natural_pg_special_on_policy_stochastic_gradient_intermediate_6}. 

We have, for each sub-optimal action $a \not= a^*$,
\begin{align}
\label{eq:failure_probability_softmax_natural_pg_special_on_policy_stochastic_gradient_intermediate_7}
\MoveEqLeft
    \theta_t(a) = \theta_1(a) + \eta \cdot \sum_{s=1}^{t-1}{ \hat{r}_s(a) } \qquad \left( \text{by \cref{update_rule:softmax_natural_pg_special_on_policy_stochastic_gradient}} \right) \\
    &= \theta_1(a) + \eta \cdot \sum_{s=1}^{t-1}{ \frac{ \sI\left\{ a_s = a \right\} }{ \pi_{\theta_s}(a) } \cdot r(a) } \qquad \left( \text{by \cref{def:on_policy_importance_sampling}} \right)  \\
    &\ge \theta_1(a) + \eta \cdot \sum_{s=1}^{t-1}{ \sI\left\{ a_s = a \right\} \cdot r(a) } \qquad \left( \pi_{\theta_s}(a) \in (0, 1), \ r(a) \in (0, 1] \right) \\
    &\ge \theta_1(a) + \eta \cdot r_{\min} \cdot \sum_{s=1}^{t-1}{ \sI\left\{ a_s = a \right\} }, \qquad \left( r_{\min} \coloneqq \min_{a \not= a^*}{ r(a) } \right)
\end{align}
where $r_{\min} \in (0, 1]$ according to \cref{asmp:positive_reward}, i.e., $r(a) \in (0, 1]$ for all $a \in [K]$. Then we have,
\begin{align}
\label{eq:failure_probability_softmax_natural_pg_special_on_policy_stochastic_gradient_intermediate_8}
\MoveEqLeft
    \sum_{a \not= a^*}{ \exp\left\{ \theta_t(a) \right\} } \ge (K-1) \cdot \exp{ \left\{ \frac{ \sum_{a \not= a^*}{ \theta_t(a) } }{K-1} \right\} } \qquad \left( \text{by Jensen's inequality} \right) \\
    &\ge (K-1) \cdot \exp{ \left\{ \frac{ \sum_{a \not= a^*}{ \theta_1(a) } + \eta \cdot r_{\min} \cdot \sum_{a \not= a^*}{ \sum_{s=1}^{t-1}{ \sI\left\{ a_s = a \right\} } } }{K-1} \right\} } \qquad \left( \text{by \cref{eq:failure_probability_softmax_natural_pg_special_on_policy_stochastic_gradient_intermediate_7}} \right) \\
    &= (K-1) \cdot \exp{ \left\{ \frac{ \sum_{a \not= a^*}{ \theta_1(a) } + \eta \cdot r_{\min} \cdot (t-1) }{K-1} \right\} }. \qquad \left( a_1 \not= a^*, a_2 \not= a^*, \cdots, a_{t-1} \not= a^* \right)
\end{align}
On the other hand, we have,
\begin{align}
\label{eq:failure_probability_softmax_natural_pg_special_on_policy_stochastic_gradient_intermediate_9}
    \theta_t(a^*) &= \theta_1(a^*) + \eta \cdot \sum_{s=1}^{t-1}{ \frac{ \sI\left\{ a_s = a^* \right\} }{ \pi_{\theta_s}(a^*) } \cdot r(a^*) } \qquad \left( \text{by \cref{update_rule:softmax_natural_pg_special_on_policy_stochastic_gradient,def:on_policy_importance_sampling}} \right) \\
    &= \theta_1(a^*). \qquad \left( a_s \not= a^* \text{ for all } s \in \left\{ 1, 2, \dots, t-1 \right\} \right)
\end{align}
Next, we have,
\begin{align}
\label{eq:failure_probability_softmax_natural_pg_special_on_policy_stochastic_gradient_intermediate_10}
\MoveEqLeft
    \sum_{a \not= a^*}{ \pi_{\theta_t}(a)} = 1 - \pi_{\theta_t}(a^*) \\
    &= 1 - \frac{ \exp\left\{ \theta_t(a^*) \right\} }{ \sum_{a \not= a^*}{ \exp\left\{ \theta_t(a) \right\} } + \exp\left\{ \theta_t(a^*) \right\} } \\
    &\ge 1 - \frac{ \exp\left\{ \theta_1(a^*) \right\} }{ (K-1) \cdot \exp{ \left\{ \frac{ \sum_{a \not= a^*}{ \theta_1(a) } + \eta \cdot r_{\min} \cdot (t-1) }{K-1} \right\} } + \exp\left\{ \theta_1(a^*) \right\} }. \qquad \left( \text{by \cref{eq:failure_probability_softmax_natural_pg_special_on_policy_stochastic_gradient_intermediate_8,eq:failure_probability_softmax_natural_pg_special_on_policy_stochastic_gradient_intermediate_9}} \right)
\end{align}
According to \cref{lem:auxiliary_lemma_1}, for all $x \in (0, 1)$,
\begin{align}
\label{eq:failure_probability_softmax_natural_pg_special_on_policy_stochastic_gradient_intermediate_11}
    1 - x \ge \exp\Big\{ \frac{-1 }{ 1/x - 1 } \Big\}.
\end{align}
Let 
\begin{align}
    x = \frac{ \exp\left\{ \theta_1(a^*) \right\} }{ (K-1) \cdot \exp{ \Big\{ \frac{ \sum_{a \not= a^*}{ \theta_1(a) } + \eta \cdot r_{\min} \cdot (t-1) }{K-1} \Big\} } + \exp\left\{ \theta_1(a^*) \right\} } \in (0, 1).
\end{align}
We have,
\begin{align}
\label{eq:failure_probability_softmax_natural_pg_special_on_policy_stochastic_gradient_intermediate_12}
\MoveEqLeft
    \sum_{a \not= a^*}{ \pi_{\theta_t}(a)} \ge \exp\left\{ \frac{-1 }{ \frac{ (K-1) \cdot \exp{ \big\{ \frac{ \sum_{a \not= a^*}{ \theta_1(a) } + \eta \cdot r_{\min} \cdot (t-1) }{K-1} \big\} } + \exp\left\{ \theta_1(a^*) \right\} }{ \exp\left\{ \theta_1(a^*) \right\} } - 1 } \right\} \qquad \left( \text{by \cref{eq:failure_probability_softmax_natural_pg_special_on_policy_stochastic_gradient_intermediate_10,eq:failure_probability_softmax_natural_pg_special_on_policy_stochastic_gradient_intermediate_11}} \right) \\
    &= \exp\left\{ \frac{ - \exp\left\{ \theta_1(a^*) \right\} }{ (K-1) \cdot \exp{ \Big\{ \frac{ \sum_{a \not= a^*}{ \theta_1(a) } + \eta \cdot r_{\min} \cdot (t-1) }{K-1} \Big\} } } \right\} = b_t,
\end{align}
finishing the proof of the claim.

Combining \cref{eq:failure_probability_softmax_natural_pg_special_on_policy_stochastic_gradient_intermediate_1} with 
the conclusions of Claim 1 and 2 together, we get
\begin{align}
\label{eq:failure_probability_softmax_natural_pg_special_on_policy_stochastic_gradient_intermediate_13}
\MoveEqLeft
\probability{(\gE)}
    \ge \prod_{t=1}^{\infty}{ \exp\left\{ \frac{ - \exp\left\{ \theta_1(a^*) \right\} }{ (K-1) \cdot \exp{ \Big\{ \frac{ \sum_{a \not= a^*}{ \theta_1(a) } + \eta \cdot r_{\min} \cdot (t-1) }{K-1} \Big\} } } \right\} } \qquad \left( \text{by \cref{eq:failure_probability_softmax_natural_pg_special_on_policy_stochastic_gradient_intermediate_12}} \right) \\
    &= \exp{ \left\{ - \frac{ \exp\left\{ \theta_1(a^*) \right\} }{ \exp{ \Big\{ \frac{ \sum_{a \not= a^*}{ \theta_1(a) } }{K-1} \Big\} } } \cdot \frac{ \exp{ \big\{ \frac{ \eta \cdot r_{\min} }{K-1} \big\} } }{ K - 1 } \cdot \sum_{t=1}^{\infty}{ \frac{1}{ \exp{ \big\{ \frac{ \eta \cdot r_{\min} \cdot t }{K-1} \big\} } } } \right\} } \\
    &\ge \exp{ \left\{ - \frac{ \exp\left\{ \theta_1(a^*) \right\} }{ \exp{ \Big\{ \frac{ \sum_{a \not= a^*}{ \theta_1(a) } }{K-1} \Big\} } } \cdot \frac{ \exp{ \big\{ \frac{ \eta \cdot r_{\min} }{K-1} \big\} } }{ K - 1} \cdot \int_{t=0}^{\infty}{ \frac{1}{ \exp{ \big\{ \frac{ \eta \cdot r_{\min} \cdot t }{K-1} \big\} } } dt } \right\} } \\
    &= \exp{ \left\{ - \frac{ \exp\left\{ \theta_1(a^*) \right\} }{ \exp{ \Big\{ \frac{ \sum_{a \not= a^*}{ \theta_1(a) } }{K-1} \Big\} } } \cdot \frac{ \exp{ \big\{ \frac{ \eta \cdot r_{\min} }{K-1} \big\} } }{ K - 1 } \cdot \frac{K-1}{ \eta \cdot r_{\min} } \right\} } \\
    &= \exp{ \left\{ - \frac{ \exp\left\{ \theta_1(a^*) \right\} }{ \exp{ \Big\{ \frac{ \sum_{a \not= a^*}{ \theta_1(a) } }{K-1} \Big\} } } \cdot \frac{ \exp{ \big\{ \frac{ \eta \cdot r_{\min} }{K-1} \big\} } }{ \eta \cdot r_{\min} } \right\} }.
\end{align}
Note that $r_{\min} \in \Theta(1)$, $\exp\left\{ \theta_1(a^*) \right\} \in \Theta(1)$, $\eta \in \Theta(1)$, $\exp{ \big\{ \frac{ \eta \cdot r_{\min} }{K-1} \big\} } \in \Theta(1)$ and,
\begin{align}
    \exp{ \bigg\{ \frac{ \sum_{a \not= a^*}{ \theta_1(a) } }{K-1} \bigg\} } \in \Theta(1).
\end{align}
Therefore, we have ``the probability of sampling sub-optimal actions forever using on-policy sampling $a_t \sim \pi_{\theta_t}(\cdot)$'' is lower bounded by a constant of $ \frac{1}{ \exp\left\{ \Theta(1) \right\} } \in \Theta(1)$, which implies that with positive probability $\Theta(1)$, we have $\sum_{a \not= a^*}{ \pi_{\theta_t}(a)} \to 1$ as $t \to \infty$.

\textbf{Second part.} $\forall a \in [K]$, with positive probability, $\pi_{\theta_t}(a) \to 1$, as $t \to \infty$.

The proof is similar to the first part. Let $\gB = \{ a \}$.
For any $\pi:[K] \to [0,1]$ map
we let $\pi_{\theta_t}(\gB)$ to stand for $\pi_{\theta_t}(a)$.
Define for $t\ge 1$ the event
$\gB_t = \{ a_t = a \} (= \{ a_t\in \gB \})$ and let $\gE_t = \gB_1 \cap \dots \cap \gB_t$.
Thus, $\gE_t$ is the event that $a$ was chosen in the first $t$ time steps.
Note that $\{\gE_t\}_{t\ge 1}$ is a nested sequence and thus, by the monotone convergence theorem, $\lim_{t\to\infty}\probability{(\gE_t)} = \probability{(\gE)}$, where $\gE = \cap_{t\ge 1}\gB_t$. We show that by letting
\begin{align}
\label{eq:failure_probability_softmax_natural_pg_special_on_policy_stochastic_gradient_intermediate_14}
    b_t = \exp\bigg\{ \frac{ - \sum_{a^\prime \not= a}{ \exp\{ \theta_1(a^\prime) \} } }{ \exp\{ \theta_1(a) + \eta \cdot r(a) \cdot \left( t - 1 \right) \} } \bigg\},
\end{align}
we have \cref{eq:failure_probability_softmax_natural_pg_special_on_policy_stochastic_gradient_intermediate_2,eq:failure_probability_softmax_natural_pg_special_on_policy_stochastic_gradient_intermediate_3} hold using the arguments in the first part.

It suffices to show that for any sequence $(a_1,\dots,a_{t-1})$ such that $a_s = a$, for all $s \in \{ 1, 2, \dots, t-1 \}$, $\theta_t := \gA(\theta_1,a_1,r(a_1),\dots,a_{t-1},r(a_{t-1}))$ is such that
$\pi_{\theta_t}(\gB) \ge b_t$ with $b_t$ as defined in \cref{eq:failure_probability_softmax_natural_pg_special_on_policy_stochastic_gradient_intermediate_14}. Now suppose $a_1 = a, a_2 = a, \cdots, a_{t-1} = a$. We have,
\begin{align}
\label{eq:failure_probability_softmax_natural_pg_special_on_policy_stochastic_gradient_intermediate_15}
    \theta_{t}(a) &= \theta_1(a) + \eta \cdot \sum_{s=1}^{t-1}{ \hat{r}_s(a) } \qquad \left( \text{by \cref{update_rule:softmax_natural_pg_special_on_policy_stochastic_gradient}} \right) \\
    &= \theta_1(a) + \eta \cdot \sum_{s=1}^{t-1}{ \frac{ \sI\left\{ a_s = a \right\} }{ \pi_{\theta_s}(a) } \cdot r(a) } \qquad \left( \text{by \cref{def:on_policy_importance_sampling}} \right) \\
    &= \theta_1(a) + \eta \cdot \sum_{s=1}^{t-1}{ \frac{ r(a) }{ \pi_{\theta_s}(a) }  } \qquad \left( a_s = a \text{ for all } s \in \left\{ 1, 2, \dots, t-1 \right\} \right) \\
    &\ge \theta_1(a) + \eta \cdot \sum_{s=1}^{t-1}{ r(a) } \qquad \left( \pi_{\theta_s}(a) \in (0, 1) \right) \\
    &= \theta_1(a) +  \eta \cdot r(a) \cdot \left( t - 1 \right).
\end{align}
On the other hand, we have, for any other action $a^\prime \not= a$,
\begin{align}
\label{eq:failure_probability_softmax_natural_pg_special_on_policy_stochastic_gradient_intermediate_16}
    \theta_{t}(a^\prime) &= \theta_1(a^\prime) + \eta \cdot \sum_{s=1}^{t-1}{ \frac{ \sI\left\{ a_s = a^\prime \right\} }{ \pi_{\theta_s}(a^\prime) } \cdot r(a^\prime) } \qquad \left( \text{by \cref{update_rule:softmax_natural_pg_special_on_policy_stochastic_gradient,def:on_policy_importance_sampling}} \right) \\
    &= \theta_1(a^\prime). \qquad \left( a_s \not= a^\prime \text{ for all } s \in \left\{ 1, 2, \dots, t-1 \right\} \right)
\end{align}
Therefore, we have,
\begin{align}
\label{eq:failure_probability_softmax_natural_pg_special_on_policy_stochastic_gradient_intermediate_17}
\MoveEqLeft
    \pi_{\theta_t}(a) = 1 - \sum_{a^\prime \not= a}{ \pi_{\theta_t}(a^\prime) } \\
    &= 1 - \frac{ \sum_{a^\prime \not= a}{ \exp\{ \theta_t(a^\prime) \} } }{ \exp\{ \theta_t(a) \} + \sum_{a^\prime \not= a}{ \exp\{ \theta_t(a^\prime) \} } } \\
    &\ge 1 - \frac{ \sum_{a^\prime \not= a}{ \exp\{ \theta_1(a^\prime) \} } }{ \exp\{ \theta_1(a) + \eta \cdot r(a) \cdot \left( t - 1 \right) \} + \sum_{a^\prime \not= a}{ \exp\{ \theta_1(a^\prime) \} } }. \qquad \left( \text{by \cref{eq:failure_probability_softmax_natural_pg_special_on_policy_stochastic_gradient_intermediate_15,eq:failure_probability_softmax_natural_pg_special_on_policy_stochastic_gradient_intermediate_16}} \right)
\end{align}
Let 
\begin{align}
    x = \frac{ \sum_{a^\prime \not= a}{ \exp\{ \theta_1(a^\prime) \} } }{ \exp\{ \theta_1(a) + \eta \cdot r(a) \cdot \left( t - 1 \right) \} + \sum_{a^\prime \not= a}{ \exp\{ \theta_1(a^\prime) \} } } \in (0, 1).
\end{align}
We have,
\begin{align}
\label{eq:failure_probability_softmax_natural_pg_special_on_policy_stochastic_gradient_intermediate_18}
\MoveEqLeft
    \pi_{\theta_t}(a) \ge 1 - x \qquad \left( \text{by \cref{eq:failure_probability_softmax_natural_pg_special_on_policy_stochastic_gradient_intermediate_17}} \right) \\
    &\ge \exp\left\{ \frac{-1 }{ \frac{ \exp\{ \theta_1(a) + \eta \cdot r(a) \cdot \left( t - 1 \right) \} + \sum_{a^\prime \not= a}{ \exp\{ \theta_1(a^\prime) \} } }{ \sum_{a^\prime \not= a}{ \exp\{ \theta_1(a^\prime) \} } } - 1 } \right\} \qquad \left( \text{by \cref{lem:auxiliary_lemma_1}} \right) \\
    &= \exp\bigg\{ \frac{ - \sum_{a^\prime \not= a}{ \exp\{ \theta_1(a^\prime) \} } }{ \exp\{ \theta_1(a) + \eta \cdot r(a) \cdot \left( t - 1 \right) \} } \bigg\} \\
    &= b_t.
\end{align}
Therefore we have,
\begin{align}
\label{eq:failure_probability_softmax_natural_pg_special_on_policy_stochastic_gradient_intermediate_19}
\MoveEqLeft
    \prod_{t=1}^{\infty}{ \pi_{\theta_t}(a) } \ge \prod_{t=1}^{\infty}{ \exp\bigg\{ \frac{ - \sum_{a^\prime \not= a}{ \exp\{ \theta_1(a^\prime) \} } }{ \exp\{ \theta_1(a) + \eta \cdot r(a) \cdot \left( t - 1 \right) \} } \bigg\} } \qquad \left( \text{by \cref{eq:failure_probability_softmax_natural_pg_special_on_policy_stochastic_gradient_intermediate_18}} \right) \\
    &= \exp\bigg\{ - \sum_{a^\prime \not= a}{ \exp\{ \theta_1(a^\prime) \} } \cdot \frac{ \exp\{ \eta \cdot r(a) \} }{ \exp\{ \theta_1(a) \} } \cdot  \sum_{t=1}^{\infty}{  \frac{ 1 }{ \exp\{ \eta \cdot r(a) \cdot t \} } } \bigg\} \\
    &\ge \exp\bigg\{ - \sum_{a^\prime \not= a}{ \exp\{ \theta_1(a^\prime) \} } \cdot \frac{ \exp\{ \eta \cdot r(a) \} }{ \exp\{ \theta_1(a) \} } \cdot \int_{t=0}^{\infty}{  \frac{ 1 }{ \exp\{ \eta \cdot r(a) \cdot t \} } dt } \bigg\} \\
    &= \exp\bigg\{ - \frac{ \exp\{ \eta \cdot r(a) \} }{ \eta \cdot r(a)} \cdot \frac{ \sum_{a^\prime \not= a}{ \exp\{ \theta_1(a^\prime) \} } }{ \exp\{ \theta_1(a) \} } \bigg\} \\
    &\in \Omega(1),
\end{align}
where the last line is due to $r(a) \in \Theta(1)$, $\exp\left\{ \theta_1(a) \right\} \in \Theta(1)$ for all $a \in [K]$, and $\eta \in \Theta(1)$. With \cref{eq:failure_probability_softmax_natural_pg_special_on_policy_stochastic_gradient_intermediate_19}, we have ``the probability of sampling action $a$ forever using on-policy sampling $a_t \sim \pi_{\theta_t}(\cdot)$'' is lower bounded by a constant of $\Omega(1)$. Therefore, for all $a \in [K]$, with positive probability $\Omega(1)$, $\pi_{\theta_t}(a) \to 1$, as $t \to \infty$. 
\end{proof}

\subsubsection{GNPG}

\begin{lemma}
Using on-policy IS estimator of \cref{def:on_policy_importance_sampling}, the stochastic GNPG is biased, i.e.,
\begin{align}
    \expectation_{a \sim \pi_{\theta}(\cdot)} \left[ \frac{d \pi_{\theta}^\top \hat{r}}{d \theta} \bigg/ \left\| \frac{d \pi_{\theta}^\top \hat{r}}{d \theta} \right\|_2 \right] \not= \frac{d \pi_{\theta}^\top r}{d \theta} \bigg/ \left\| \frac{d \pi_{\theta}^\top r}{d \theta} \right\|_2.
\end{align}
\end{lemma}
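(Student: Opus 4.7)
The plan is to establish the stated inequality by exhibiting an explicit counterexample in the simplest possible setting: a two-action bandit ($K=2$) with a specific reward vector $r$ and an initial policy $\pi_\theta$. A single counterexample suffices because the claim is the negation of a universal equality.

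First I would set $K = 2$, fix an arbitrary reward $r$ with $r(1) \neq r(2)$, and take the uniform policy $\pi_\theta = (1/2, 1/2)$. Under this choice, the true softmax gradient is $\frac{d\pi_\theta^\top r}{d\theta} = \frac{r(1)-r(2)}{4} \cdot (1,-1)^\top$, so its normalization equals $\mathrm{sign}(r(1)-r(2)) \cdot (1,-1)^\top/\sqrt{2}$, a fixed nonzero unit vector.

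Next I would compute the stochastic importance-sampled gradient for each of the two sampling outcomes. When $a = 1$ is drawn, the IS estimator satisfies $\hat r = (2r(1), 0)$, and a direct application of the formula in \cref{update_rule:softmax_pg_special_on_policy_stochastic_gradient} gives $\frac{d\pi_\theta^\top \hat r}{d\theta} = \frac{r(1)}{2} \cdot (1,-1)^\top$; after normalization this is $(1,-1)^\top/\sqrt{2}$. When $a = 2$ is drawn, a symmetric calculation yields the normalized stochastic gradient $(-1,1)^\top/\sqrt{2}$. The key observation is that the reward magnitudes $r(1), r(2)$ cancel out entirely under normalization, so the normalized stochastic gradient takes only one of two opposite unit vectors regardless of how large or small $r(1)$ and $r(2)$ are.

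Taking the expectation under $\pi_\theta$ with both actions of probability $1/2$ then gives $\expectation_{a \sim \pi_\theta(\cdot)}\!\left[\frac{d\pi_\theta^\top \hat r}{d\theta} \big/ \left\|\frac{d\pi_\theta^\top \hat r}{d\theta}\right\|_2\right] = \frac{1}{2}\cdot\frac{(1,-1)^\top}{\sqrt{2}} + \frac{1}{2}\cdot\frac{(-1,1)^\top}{\sqrt{2}} = \mathbf{0}$, which cannot equal the nonzero unit vector $\mathrm{sign}(r(1)-r(2))\cdot(1,-1)^\top/\sqrt{2}$. This establishes bias. The argument generalizes beyond the uniform policy: for any $\pi_\theta(1) = p \in (0,1)$ the expected normalized stochastic gradient equals $(2p-1)\cdot(1,-1)^\top/\sqrt{2}$, which matches the true direction only in the degenerate limit $p \to 1$. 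There is no real obstacle here — the only subtlety is noting that the normalization step destroys the unbiasedness guaranteed by \cref{lem:unbiased_bounded_variance_softmax_pg_special_on_policy_stochastic_gradient}, precisely because expectation and the nonlinear operation $x \mapsto x/\|x\|_2$ do not commute.
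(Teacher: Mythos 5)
Your proposal is correct and follows essentially the same route as the paper: a two-action counterexample in which the normalized stochastic gradient collapses to one of two opposite unit vectors, so its expectation is $\tfrac{1}{\sqrt{2}}(\pi_\theta(1)-\pi_\theta(2))\cdot(1,-1)^\top$ (your $(2p-1)$ factor), which differs from the true normalized gradient $\tfrac{1}{\sqrt{2}}\cdot(1,-1)^\top$ except in a degenerate limit. The paper works with a general $\pi_\theta$ and emphasizes the sign flip when $\pi_\theta(1)<\pi_\theta(2)$, whereas you specialize to the uniform policy where the expectation is exactly zero; both are the same computation.
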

\begin{proof}
Consider a two-action example with $r(1) > r(2)$. The true normalized PG of $a^* = 1$ is,
\begin{align}
    g(1) &\coloneqq \frac{d \pi_{\theta}^\top r}{d \theta(1)} \bigg/ \left\| \frac{d \pi_{\theta}^\top r}{d \theta} \right\|_2 \\
    &= \frac{ \pi_{\theta}(1) \cdot \left( r(1) - \pi_\theta^\top r \right) }{ \sqrt{ \pi_{\theta}(1)^2 \cdot \left( r(1) - \pi_\theta^\top r \right)^2 + \pi_{\theta}(2)^2 \cdot \left( r(2) - \pi_\theta^\top r \right)^2 } } \\
    &= \frac{ \pi_{\theta}(1) \cdot \pi_{\theta}(2) \cdot \left( r(1) - r(2) \right) }{ \sqrt{ \pi_{\theta}(1)^2 \cdot \pi_{\theta}(2)^2 \cdot \left( r(1) - r(2) \right)^2 + \pi_{\theta}(1)^2 \cdot \pi_{\theta}(2)^2 \cdot \left( r(1) - r(2) \right)^2 } } \\
    &= \frac{1}{ \sqrt{2} }.
\end{align}
On the other hand, the stochastic normalized PG of $a^* = 1$ is,
\begin{align}
    \hat{g}(1) &\coloneqq \expectation_{a \sim \pi_{\theta}(\cdot)} \left[ \frac{d \pi_{\theta}^\top \hat{r}}{d \theta(1)} \bigg/ \left\| \frac{d \pi_{\theta}^\top \hat{r}}{d \theta} \right\|_2 \right] \\
    &= \pi_{\theta}(1) \cdot \frac{ \pi_{\theta}(1) \cdot \left( \frac{r(1)}{ \pi_{\theta}(1) } - \pi_{\theta}(1) \cdot \frac{r(1)}{ \pi_{\theta}(1) } \right) }{ \sqrt{ \pi_{\theta}(1)^2 \cdot \left( \frac{r(1)}{ \pi_{\theta}(1) } - \pi_{\theta}(1) \cdot \frac{r(1)}{ \pi_{\theta}(1) } \right)^2 + \pi_{\theta}(2)^2 \cdot \left( 0 - \pi_{\theta}(1) \cdot \frac{r(1)}{ \pi_{\theta}(1) } \right)^2 } } \\
    &\qquad + \pi_{\theta}(2) \cdot \frac{ \pi_{\theta}(1) \cdot \left( 0 - \pi_{\theta}(2) \cdot \frac{r(2)}{ \pi_{\theta}(2) } \right) }{ \sqrt{ \pi_{\theta}(1)^2 \cdot \left( 0 - \pi_{\theta}(2) \cdot \frac{r(2)}{ \pi_{\theta}(2) } \right)^2 + \pi_{\theta}(2)^2 \cdot \left( \frac{r(2)}{ \pi_{\theta}(2) } - \pi_{\theta}(2) \cdot \frac{r(2)}{ \pi_{\theta}(2) } \right)^2 } } \\
    &= \pi_{\theta}(1) \cdot \frac{ \pi_{\theta}(2) \cdot r(1) }{ \sqrt{ \pi_{\theta}(2)^2 \cdot r(1)^2 + \pi_{\theta}(2)^2 \cdot r(1)^2 } } - \pi_{\theta}(2) \cdot \frac{ \pi_{\theta}(1) \cdot r(2) }{ \sqrt{ \pi_{\theta}(1)^2 \cdot r(2)^2 + \pi_{\theta}(1)^2 \cdot r(2)^2 } } \\
    &= \frac{1}{ \sqrt{2} } \cdot \left( \pi_{\theta}(1) - \pi_{\theta}(2) \right).
\end{align}
It is clear that the true normalized PG of $a^* = 1$ is always positive $g(1) > 0$, while the expectation of the stochastic normalized PG estimator of $a^* = 1$ is negative when $\pi_{\theta}(1) <  \pi_{\theta}(2)$.
\end{proof}

\textbf{\cref{thm:failure_probability_softmax_gnpg_special_on_policy_stochastic_gradient}.}
Using \cref{update_rule:softmax_gnpg_special_on_policy_stochastic_gradient}, 
we have, $\forall a \in [K]$, with positive probability, $\pi_{\theta_t}(a) \to 1$, as $t \to \infty$. 
\begin{proof}
The proof is similar to the second part of \cref{thm:failure_probability_softmax_natural_pg_special_on_policy_stochastic_gradient}. We first calculate the stochastic normalized PG in each iteration. Denote $a_t$ as the action sampled at $t$-th iteration. We have,
\begin{align}
\label{eq:failure_probability_softmax_gnpg_special_on_policy_stochastic_gradient_intermediate_1}
    \frac{d \pi_{\theta_t}^\top \hat{r}_t}{d \theta_t(a_t)} &= \pi_{\theta_t}(a_t) \cdot ( \hat{r}_t(a_t) - \pi_{\theta_t}^\top \hat{r}_t ) \\
    &= \pi_{\theta_t}(a_t) \cdot \left( \frac{ r(a_t) }{ \pi_{\theta_t}(a_t) } - \pi_{\theta_t}(a_t) \cdot \frac{ r(a_t) }{ \pi_{\theta_t}(a_t) } \right) \qquad \left( \text{by \cref{def:on_policy_importance_sampling}} \right) \\
    &= \left( 1 - \pi_{\theta_t}(a_t) \right) \cdot r(a_t).
\end{align}
On the other hand, for all $a^\prime \not= a_t$,
\begin{align}
\label{eq:failure_probability_softmax_gnpg_special_on_policy_stochastic_gradient_intermediate_2}
    \frac{d \pi_{\theta_t}^\top \hat{r}_t}{d \theta_t(a^\prime)} &= \pi_{\theta_t}(a^\prime) \cdot ( \hat{r}_s(a^\prime) - \pi_{\theta_t}^\top \hat{r}_t ) \\
    &= \pi_{\theta_t}(a^\prime) \cdot \left( 0 - \pi_{\theta_t}(a_t) \cdot \frac{ r(a_t) }{ \pi_{\theta_t}(a_t) } \right) \qquad \left( \text{by \cref{def:on_policy_importance_sampling}} \right) \\
    &= - \pi_{\theta_t}(a^\prime) \cdot r(a_t).
\end{align}
Therefore, the stochastic PG norm is,
\begin{align}
\MoveEqLeft
\label{eq:failure_probability_softmax_gnpg_special_on_policy_stochastic_gradient_intermediate_3}
    \bigg\| \frac{d \pi_{\theta_t}^\top \hat{r}_t}{d \theta_t} \bigg\|_2 = \left[ \left( \frac{d \pi_{\theta_t}^\top \hat{r}_t}{d \theta_t(a_t)} \right)^2 + \sum_{a^\prime \not= a_t}{ \left( \frac{d \pi_{\theta_t}^\top \hat{r}_t}{d \theta_t(a^\prime)} \right)^2} \right]^{\frac{1}{2}} \\
    &= \left[ \left( 1 - \pi_{\theta_t}(a_t) \right)^2 \cdot r(a_t)^2 + \sum_{a^\prime \not= a_t}{  \pi_{\theta_t}(a^\prime)^2 \cdot r(a_t)^2 } \right]^{\frac{1}{2}} \qquad \left( \text{by \cref{eq:failure_probability_softmax_gnpg_special_on_policy_stochastic_gradient_intermediate_1,eq:failure_probability_softmax_gnpg_special_on_policy_stochastic_gradient_intermediate_2}} \right) \\
    &\le \left[ \left( 1 - \pi_{\theta_t}(a_t) \right)^2 \cdot r(a_t)^2 +  \bigg( \sum_{a^\prime \not= a_t}{  \pi_{\theta_t}(a^\prime)} \bigg)^2 \cdot r(a_t)^2  \right]^{\frac{1}{2}} \qquad \left( \left\| x \right\|_2 \le \left\| x \right\|_1 \right) \\
    &= \sqrt{2} \cdot \left( 1 - \pi_{\theta_t}(a_t) \right) \cdot r(a_t).
\end{align}
The proof is then similar to the second part of \cref{thm:failure_probability_softmax_natural_pg_special_on_policy_stochastic_gradient}. Let $\gB = \{ a \}$.
For any $\pi:[K] \to [0,1]$ map
we let $\pi_{\theta_t}(\gB)$ to stand for $\pi_{\theta_t}(a)$.
Define for $t\ge 1$ the event
$\gB_t = \{ a_t = a \} (= \{ a_t\in \gB \})$ and let $\gE_t = \gB_1 \cap \dots \cap \gB_t$.
Thus, $\gE_t$ is the event that $a$ was chosen in the first $t$ time steps.
Note that $\{\gE_t\}_{t\ge 1}$ is a nested sequence and thus, by the monotone convergence theorem, $\lim_{t\to\infty}\probability{(\gE_t)} = \probability{(\gE)}$, where $\gE = \cap_{t\ge 1}\gB_t$. We show that by letting
\begin{align}
\label{eq:failure_probability_softmax_gnpg_special_on_policy_stochastic_gradient_intermediate_4}
    b_t = \exp\bigg\{ \frac{ - \sum_{a^\prime \not= a}{ \exp\{ \theta_1(a^\prime) \} } }{ \exp\big\{ \theta_1(a) + \frac{\eta}{\sqrt{2}} \cdot \left( t - 1 \right) \big\} } \bigg\},
\end{align}
we have \cref{eq:failure_probability_softmax_natural_pg_special_on_policy_stochastic_gradient_intermediate_2,eq:failure_probability_softmax_natural_pg_special_on_policy_stochastic_gradient_intermediate_3} hold using the arguments in the first part of \cref{thm:failure_probability_softmax_natural_pg_special_on_policy_stochastic_gradient}.

It suffices to show that for any sequence $(a_1,\dots,a_{t-1})$ such that $a_s = a$, for all $s \in \{ 1, 2, \dots, t-1 \}$, $\theta_t := \gA(\theta_1,a_1,r(a_1),\dots,a_{t-1},r(a_{t-1}))$ is such that
$\pi_{\theta_t}(\gB) \ge b_t$ with $b_t$ as defined in \cref{eq:failure_probability_softmax_gnpg_special_on_policy_stochastic_gradient_intermediate_4}. Now suppose $a_1 = a, a_2 = a, \cdots, a_{t-1} = a$. We have,
\begin{align}
\label{eq:failure_probability_softmax_gnpg_special_on_policy_stochastic_gradient_intermediate_5}
    \theta_{t}(a) &= \theta_1(a) + \eta \cdot \sum_{s=1}^{t-1}{ \frac{d \pi_{\theta_s}^\top \hat{r}_s}{d {\theta_s}(a)} \bigg/ \bigg\| \frac{d \pi_{\theta_s}^\top \hat{r}_s}{d {\theta_s}} \bigg\|_2 } \qquad \left( \text{by \cref{update_rule:softmax_gnpg_special_on_policy_stochastic_gradient}} \right) \\
    &\ge \theta_1(a) + \eta \cdot \sum_{s=1}^{t-1}{ \frac{ \left( 1 - \pi_{\theta_s}(a) \right) \cdot r(a) }{ \sqrt{2} \cdot \left( 1 - \pi_{\theta_s}(a) \right) \cdot r(a) } } \qquad \left( \text{by \cref{eq:failure_probability_softmax_gnpg_special_on_policy_stochastic_gradient_intermediate_1,eq:failure_probability_softmax_gnpg_special_on_policy_stochastic_gradient_intermediate_3}} \right) \\
    &= \theta_1(a) + \frac{\eta}{\sqrt{2}} \cdot \left( t - 1 \right).
\end{align}
On the other hand, for all $a^\prime \not= a$, we have,
\begin{align}
\label{eq:failure_probability_softmax_gnpg_special_on_policy_stochastic_gradient_intermediate_6}
    \theta_{t}(a^\prime) &= \theta_1(a^\prime) - \eta \cdot \sum_{s=1}^{t-1}{ \left( \pi_{\theta_s}(a^\prime) \cdot r(a) \right) \bigg/ \bigg\| \frac{d \pi_{\theta_s}^\top \hat{r}_s}{d {\theta_s}} \bigg\|_2 } \qquad \left( \text{by \cref{update_rule:softmax_gnpg_special_on_policy_stochastic_gradient,eq:failure_probability_softmax_gnpg_special_on_policy_stochastic_gradient_intermediate_2}} \right) \\ 
    &\le \theta_1(a^\prime).
\end{align}
Then we have,
\begin{align}
\label{eq:failure_probability_softmax_gnpg_special_on_policy_stochastic_gradient_intermediate_7}
\MoveEqLeft
    \pi_{\theta_t}(a) = 1 - \frac{ \sum_{a^\prime \not= a}{ \exp\{ \theta_t(a^\prime) \} } }{ \exp\{ \theta_t(a) \} + \sum_{a^\prime \not= a}{ \exp\{ \theta_t(a^\prime) \} } } \\
    &\ge 1 - \frac{ \sum_{a^\prime \not= a}{ \exp\{ \theta_1(a^\prime) \} } }{ \exp\big\{ \theta_1(a) + \frac{\eta}{\sqrt{2}} \cdot \left( t - 1 \right) \big\} + \sum_{a^\prime \not= a}{ \exp\{ \theta_1(a^\prime) \} } } \qquad \left( \text{by \cref{eq:failure_probability_softmax_gnpg_special_on_policy_stochastic_gradient_intermediate_5,eq:failure_probability_softmax_gnpg_special_on_policy_stochastic_gradient_intermediate_6}} \right) \\
    &\ge \exp\bigg\{ \frac{ - \sum_{a^\prime \not= a}{ \exp\{ \theta_1(a^\prime) \} } }{ \exp\big\{ \theta_1(a) + \frac{\eta}{\sqrt{2}} \cdot \left( t - 1 \right) \big\} } \bigg\} \qquad \left( \text{by \cref{lem:auxiliary_lemma_1}} \right) \\
    &= b_t.
\end{align}
Using similar calculation to \cref{eq:failure_probability_softmax_natural_pg_special_on_policy_stochastic_gradient_intermediate_18}, we have,
\begin{align}
\label{eq:failure_probability_softmax_gnpg_special_on_policy_stochastic_gradient_intermediate_8} 
\MoveEqLeft
    \prod_{t=1}^{\infty}{ \pi_{\theta_t}(a) } \ge \prod_{t=1}^{\infty}{ \exp\bigg\{ \frac{ - \sum_{a^\prime \not= a}{ \exp\{ \theta_1(a^\prime) \} } }{ \exp\big\{ \theta_1(a) + \frac{\eta}{\sqrt{2}} \cdot \left( t - 1 \right) \big\} } \bigg\} } \qquad \left( \text{by \cref{eq:failure_probability_softmax_gnpg_special_on_policy_stochastic_gradient_intermediate_7}} \right) \\
    &= \exp\bigg\{ - \frac{  \sum_{a^\prime \not= a}{ \exp\{ \theta_1(a^\prime) \} }}{ \exp\{ \theta_1(a) \} } \cdot \exp\Big\{ \frac{\eta}{\sqrt{2}} \Big\} \cdot \sum_{t=1}^{\infty}{  \frac{ 1 }{ \exp\big\{ \frac{\eta}{\sqrt{2}} \cdot t \big\} } } \bigg\} \\
    &\ge \exp\bigg\{ - \frac{  \sum_{a^\prime \not= a}{ \exp\{ \theta_1(a^\prime) \} }}{ \exp\{ \theta_1(a) \} } \cdot \exp\Big\{ \frac{\eta}{\sqrt{2}} \Big\} \cdot \int_{t=0}^{\infty}{  \frac{ 1 }{ \exp\big\{ \frac{\eta}{\sqrt{2}} \cdot t \big\} } dt } \bigg\} \\
    &= \exp\bigg\{ - \frac{  \sum_{a^\prime \not= a}{ \exp\{ \theta_1(a^\prime) \} }}{ \exp\{ \theta_1(a) \} } \cdot \frac{\sqrt{2} \cdot \exp\big\{ \frac{\eta}{\sqrt{2}} \big\} }{\eta} \bigg\} \\
    &\in \Omega(1),
\end{align}
where the last line is due to, $\exp\left\{ \theta_1(a) \right\} \in \Theta(1)$ for all $a \in [K]$, and $\eta \in \Theta(1)$. With \cref{eq:failure_probability_softmax_gnpg_special_on_policy_stochastic_gradient_intermediate_8}, we have ``the probability of sampling action $a$ forever using on-policy sampling $a_t \sim \pi_{\theta_t}(\cdot)$'' is lower bounded by a constant of $\Omega(1)$. Therefore, for all $a \in [K]$, with positive probability $\Omega(1)$, $\pi_{\theta_t}(a) \to 1$, as $t \to \infty$.
\end{proof}

\section{Proofs for Committal Rate (\texorpdfstring{\cref{sec:committal_rate}}{Lg}) }

\textbf{\cref{thm:committal_rate_main_theorem}} (Committal rate main theorem)\textbf{.} 
Consider a policy optimization method $\gA$, together with $r\in (0,1]^K$ and an initial parameter vector $\theta_1\in \sR^K$.
Then,
\begin{align}
\max_{a:r(a)<r(a^*),\pi_{\theta_1}(a)>0} \kappa(\gA, a) \le 1
\end{align}
is a necessary condition for ensuring the almost sure convergence of the policies obtained using $\gA$ and online sampling to the global optimum starting from $\theta_1$.
\begin{proof}
It suffices to prove that if $\kappa(\gA, a) > 1$ happens for a suboptimal action $a \in [K]$ while $\pi_{\theta_1}(a)>0$, then if we let $\{\theta_t\}_{t\ge 1}$ be the parameter sequence obtained by using $\gA$ with online sampling, i.e., when $a_t \sim \pi_{\theta_t}(\cdot)$,  then 
the event $\gE = \{ a_t=a \text{ holds for all  } t\ge 1\}$
happens with positive probability, and it also holds that
$\pi_{\theta_t}$ converges to a sub-optimal deterministic policy with positive probability.

For convenience, denote $\alpha \coloneqq \kappa(\gA, a)$. Define the history of actions for the first $t$ iterations,
\begin{align}
    \gH_t \coloneqq \left( a_1, a_2, \cdots, a_t \right).
\end{align}
Given the historical iterations, sampled actions and rewards, the next iteration is a deterministic result of the algorithm,
\begin{align}
    \theta_t = \gA\left( \theta_1, a_1, r(a_1), \theta_2, a_2, r(a_2), \cdots, \theta_{t-1}, a_{t-1}, r(a_{t-1}) \right).
\end{align}
Let $\probability$ denote the probability measure that over the probability space $(\Omega,\gF)$ that holds all of our random variables.
By construction, 
$\probability$ satisfies that
for all $a$ and $t \ge 1$, 
\begin{align}
\label{eq:committal_rate_main_theorem_intermediate_1}
    \probability{ \left( a_t = a \ | \ \gH_{t-1} \right) } = \pi_{\theta_t}(a)  \qquad \text{ $\probability$ almost surely.} 
\end{align}
Define the following event, for all $t \ge 1$,
\begin{align}
    \gE_t \coloneqq \left\{ a_s = a, \text{ for all } 1 \le s \le t \right\}.
\end{align}
We have $\gE_t \supseteq \gE_{t+1}$, and $\gE_t$ approaches the limit event,
\begin{align}
    \gE \coloneqq \left\{ a_t = a, \text{ for all } t \ge 1 \right\}.
\end{align}
We have $\probability{ ( \gE_t ) }$ is monotonically decreasing and lower bounded by zero. According to monotone convergence theorem, 
\begin{align}
\label{eq:committal_rate_main_theorem_intermediate_2}
    \probability{ ( \gE ) } = \lim_{t \to \infty}{ \probability{ ( \gE_t ) } }.
\end{align}
Next, we prove by induction on $t$ the following holds
\begin{align}
    \probability{ ( \gE_t ) } &= \probability{ ( a_t = a  \ | \ \gE_{t-1} ) } \cdot  \probability{ ( \gE_{t-1} ) } \\
    &= \prod_{s=1}^{t}{ \pi_{\ttheta_s}(a) },
\end{align}
where $\ttheta_1 = \theta_1$, and,
\begin{align}
\label{eq:committal_rate_main_theorem_intermediate_3}
    \ttheta_t = \gA\big( \theta_1, \underbrace{a, r(a)}_{s=1}, \cdots, \underbrace{a, r(a)}_{s=t-1} \big),
\end{align}
which means $a$ is used for the first $t-1$ iterations.

First, by definition of $\ttheta_1$, we have,
\begin{align}
    \probability{ ( \gE_1 ) } = \pi_{\theta_1}(a) = \pi_{\ttheta_1}(a),
\end{align}
where the first equation is from \cref{eq:committal_rate_main_theorem_intermediate_1}. Suppose the equation holds up to $t-1$. We have,
\begin{align}
\label{eq:committal_rate_main_theorem_intermediate_4}
\MoveEqLeft
    \probability{ ( \gE_t ) } = \expectation{ \left[ \probability{ ( a_t = a, \cdots, a_1 = a \ | \ \gH_{t-1} ) } \right] } \qquad \left( \text{by the tower rule} \right) \\
    &= \expectation{  \left[ \sI\left\{ a_{t-1} = a, \cdots, a_1 = a \right\} \cdot \probability{ ( a_t = a \ | \ \gH_{t-1} ) } \right] } \qquad \left( \{ a_1, \cdots, a_{t-1} \} \text{ is deterministic given } \gH_{t-1} \right) \\
    &= \expectation{  \left[ \sI\left\{ a_{t-1} = a, \cdots, a_1 = a \right\} \cdot \pi_{\theta_t}(a) \right] } \qquad \left( \text{by \cref{eq:committal_rate_main_theorem_intermediate_1}} \right)  \\
    &= \expectation{  \left[ \sI\left\{ a_{t-1} = a, \cdots, a_1 = a \right\} \cdot \pi_{\ttheta_t}(a) \right] } \qquad \left( \text{by \cref{eq:committal_rate_main_theorem_intermediate_3}} \right) \\
    &= \pi_{\ttheta_t}(a) \cdot \probability{ ( \gE_{t-1} ) } \qquad \left( \text{from calculating the expectation} \right) \\
    &= \prod_{s=1}^{t}{ \pi_{\ttheta_s}(a) }. \qquad \left( \text{by induction hypothesis} \right)
\end{align}
Next, we show that $\prod_{t=1}^{\infty}{ \pi_{\ttheta_t}(a) } > 0$,
where $\pi_{\ttheta_t}(a)$ is the probability at $t$th iteration given $\gA$ is used when in the first $t-1$ iterations action $a$ is used. This is the sequence used in the definition of committal rate $\kappa$.
Further, for simplicity, assume that in the definition of $\kappa$, the supremum is achieved. It follows that there exists a universal constant $C > 0$
such that on $\gE$,
for all $t \ge 1$, 
\begin{align}
\label{eq:committal_rate_main_theorem_intermediate_5}
    1 - \pi_{\ttheta_t}(a) &= t^\alpha \cdot  \left[ 1 - \pi_{\ttheta_t}(a) \right] \cdot \frac{1}{t^\alpha} \\
    &\le \frac{ C }{ t^\alpha }. \qquad \left( \text{by \cref{def:committal_rate}} \right)
\end{align}
Let $u_t \coloneqq 1 - \pi_{\ttheta_t}(a) \in (0, 1)$ for all $t \ge 1$. We have,
\begin{align}
\label{eq:committal_rate_main_theorem_intermediate_6}
    \sum_{t=1}^{\infty}{ u_t } &\le \sum_{t=1}^{\infty}{ \frac{ C }{ t^\alpha } } \qquad \left( \text{by \cref{eq:committal_rate_main_theorem_intermediate_5}} \right) \\
    &< \infty. \qquad \left( \text{by \cref{lem:positive_infinite_series}, }\alpha \coloneqq \kappa(\gA, a) > 1 \right)
\end{align}
Therefore we have,
\begin{align}
\label{eq:committal_rate_main_theorem_intermediate_7}
    \prod_{t=1}^{\infty}{ \pi_{\ttheta_t}(a) } &= \prod_{t=1}^{\infty}{ \left( 1 - u_t \right) } \\
    &> 0. \qquad \left( \text{by \cref{lem:positive_infinite_product_2,eq:committal_rate_main_theorem_intermediate_6}} \right)
\end{align}
Hence, we have,
\begin{align}
    \probability(\gE) &= \lim_{T\to\infty} \probability(\gE_T) \qquad \left( \text{by \cref{eq:committal_rate_main_theorem_intermediate_2}} \right) \\
    &= \lim_{T\to\infty} \prod_{t=1}^{T}{ \pi_{\ttheta_t}(a) } \qquad \left( \text{by \cref{eq:committal_rate_main_theorem_intermediate_4}} \right) \\
    &= \prod_{t=1}^{\infty}{ \pi_{\ttheta_t}(a) }>0, \qquad \left( \text{by \cref{eq:committal_rate_main_theorem_intermediate_7}} \right)
\end{align}
and thus $\pi_{\theta_t}(a) \to 1$ as $t\to\infty$.
\end{proof}

\if0
Now, by induction on $t$, it follows that for any $t \ge 1$,
$\probability{(\gE_t)} = \prod_{t=1}^{T}{ \pi_{\ttheta_t}(a) }$ and also 
$\ttheta_T = \theta_T$ on $\gE_{T-1}(\subset \gE)$.
Hence, $\mathbb{P}(\gE) = \lim_{T\to\infty} \mathbb{P}(\gE_T) 
= \lim_{T\to\infty} \prod_{t=1}^{T}{ \pi_{\ttheta_t}(a) }
= \prod_{t=1}^{\infty}{ \pi_{\ttheta_t}(a) }>0$
and thus  $\pi_{\ttheta_t}(a) \to 1$ as $t\to\infty$.
Now, since on $\gE$, $\ttheta_t = \theta_t$ holds for any $t\ge 1$, it follows that on $\gE$, 
 $\lim_{t\to\infty} \pi_{\theta_t}(a) = 1$, finishing the proof.
\fi

\textbf{\cref{thm:committal_rate_stochastic_npg_gnpg}.} Let \cref{asmp:positive_reward} holds. For the stochastic updates NPG and GNPG from \cref{update_rule:softmax_natural_pg_special_on_policy_stochastic_gradient,update_rule:softmax_gnpg_special_on_policy_stochastic_gradient}  
we obtain $\kappa(\text{NPG}, a) = \infty$ and $\kappa(\text{GNPG}, a) = \infty$ for all $a \in [K]$ respectively.
\begin{proof}
\textbf{First part (NPG).} We first show that $\kappa(\text{NPG}, a) = \infty$ for all $a \in [K]$. According to \cref{def:committal_rate}, let action $a$ be sampled forever after initialization. We have, for stochastic NPG update,
\begin{align}
\label{eq:committal_rate_stochastic_npg_gnpg_intermediate_1}
\MoveEqLeft
    1 - \pi_{\theta_t}(a) = \sum_{a^\prime \not= a}{ \pi_{\theta_t}(a^\prime) } \\
    &\le \frac{ \sum_{a^\prime \not= a}{ \exp\{ \theta_1(a^\prime) \} } }{ \exp\{ \theta_1(a) + \eta \cdot r(a) \cdot \left( t - 1 \right) \} + \sum_{a^\prime \not= a}{ \exp\{ \theta_1(a^\prime) \} } }. \qquad \left( \text{by \cref{eq:failure_probability_softmax_natural_pg_special_on_policy_stochastic_gradient_intermediate_17}} \right)
\end{align}
Since $\exp\{ \theta_1(i) \} \in \Theta(1)$ for all $i \in [K]$, we have, for any finite $\alpha \in ( 0, \infty)$,
\begin{align}
\label{eq:committal_rate_stochastic_npg_gnpg_intermediate_2}
    \lim_{t \to \infty}{ t^\alpha \cdot  \left[ 1 - \pi_{\theta_t}(a) \right] } &\le \lim_{t \to \infty}{ \frac{ t^\alpha \cdot \sum_{a^\prime \not= a}{ \exp\{ \theta_1(a^\prime) \} } }{ \exp\{ \theta_1(a) + \eta \cdot r(a) \cdot \left( t - 1 \right) \} + \sum_{a^\prime \not= a}{ \exp\{ \theta_1(a^\prime) \} } } } \qquad \left( \text{by \cref{eq:committal_rate_stochastic_npg_gnpg_intermediate_1}} \right) \\
    &= \lim_{t \to \infty}{ \frac{ \Theta(t^\alpha) }{ \Theta( \exp\{ \eta \cdot r(a) \cdot \left( t - 1 \right)  \} ) } } = 0,
\end{align}
which means $\kappa(\text{NPG}, a) = \infty$ for all $a \in [K]$.

\textbf{Second part (GNPG).} We next show that $\kappa(\text{GNPG}, a) = \infty$ for all $a \in [K]$. Let action $a$ be sampled forever after initialization. We have, for stochastic GNPG update,
\begin{align}
\MoveEqLeft
    1 - \pi_{\theta_t}(a) = \sum_{a^\prime \not= a}{ \pi_{\theta_t}(a^\prime) } \\
    &\le \frac{ \sum_{a^\prime \not= a}{ \exp\{ \theta_1(a^\prime) \} } }{ \exp\big\{ \theta_1(a) + \frac{\eta}{\sqrt{2}} \cdot \left( t - 1 \right) \big\} + \sum_{a^\prime \not= a}{ \exp\{ \theta_1(a^\prime) \} } }. \qquad \left( \text{by \cref{eq:failure_probability_softmax_gnpg_special_on_policy_stochastic_gradient_intermediate_7}} \right)
\end{align}
Using similar arguments to \cref{eq:committal_rate_stochastic_npg_gnpg_intermediate_2}, we have $\kappa(\text{GNPG}, a) = \infty$ for all $a \in [K]$.
\end{proof}

\textbf{\cref{thm:committal_rate_softmax_pg}.}
Softmax PG obtains
$\kappa(\text{PG}, a) = 1$ for all $a \in [K]$.
\begin{proof}
\textbf{First part.} $\kappa(\textnormal{PG}, a) \ge 1$.

According to \cref{def:committal_rate}, let action $a$ be sampled forever  after initialization. We have, for stochastic PG update,
\begin{align}
\label{eq:committal_rate_softmax_pg_intermediate_1}
\MoveEqLeft
    \left( 1 - \pi_{\theta_{t+1}}(a) \right) - \left( 1 - \pi_{\theta_t}(a) \right) = \pi_{\theta_t}(a) - \pi_{\theta_{t+1}}(a) + \Big\langle \frac{d \pi_{\theta_t}(a)}{d \theta_t}, \theta_{t+1} - \theta_{t} \Big\rangle - \Big\langle \frac{d \pi_{\theta_t}(a)}{d \theta_t}, \theta_{t+1} - \theta_{t} \Big\rangle \\
    &\le \frac{5}{4} \cdot \| \theta_{t+1} - \theta_{t} \|_2^2 - \Big\langle \frac{d \pi_{\theta_t}(a)}{d \theta_t}, \theta_{t+1} - \theta_{t} \Big\rangle \qquad \left( \text{by \cref{eq:upper_bound_softmax_pg_special_true_gradient_intermediate_1}, smoothness} \right) \\
    &= \frac{5 \cdot \eta^2}{4} \cdot \bigg\| \frac{d \pi_{\theta_t}^\top \hat{r}_t}{d \theta_t}  \bigg\|_2^2 - \eta \cdot \bigg\langle \frac{d \pi_{\theta_t}(a)}{d \theta_t}, \frac{d \pi_{\theta_t}^\top \hat{r}_t}{d \theta_t} \bigg\rangle \qquad \left( \text{using \cref{update_rule:softmax_pg_special_on_policy_stochastic_gradient}} \right) \\
    &= \frac{5 \cdot \eta^2}{4} \cdot \bigg( \sum_{a^\prime \not= a}{ \pi_{\theta_t}(a^\prime)^2 \cdot r(a)^2 } + \left( 1 - \pi_{\theta_t}(a) \right)^2 \cdot r(a)^2 \bigg) - \eta \cdot \bigg\langle \frac{d \pi_{\theta_t}(a)}{d \theta_t}, \frac{d \pi_{\theta_t}^\top \hat{r}_t}{d \theta_t} \bigg\rangle \quad \left( \text{by \cref{eq:failure_probability_softmax_gnpg_special_on_policy_stochastic_gradient_intermediate_1,eq:failure_probability_softmax_gnpg_special_on_policy_stochastic_gradient_intermediate_2}} \right) \\
    &\le \frac{5 \cdot \eta^2}{2} \cdot \left( 1 - \pi_{\theta_t}(a) \right)^2 \cdot r(a)^2 - \eta \cdot \bigg\langle \frac{d \pi_{\theta_t}(a)}{d \theta_t}, \frac{d \pi_{\theta_t}^\top \hat{r}_t}{d \theta_t} \bigg\rangle \qquad \left( \left\| x \right\|_2 \le \left\| x \right\|_1 \right) \\
    &= \frac{5 \cdot \eta^2}{2} \cdot \left( 1 - \pi_{\theta_t}(a) \right)^2 \cdot r(a)^2 - \eta \cdot \pi_{\theta_t}(a) \cdot r(a) \cdot \bigg( \sum_{a^\prime \not= a}{ \pi_{\theta_t}(a^\prime)^2 } + \left( 1 - \pi_{\theta_t}(a) \right)^2 \bigg) \quad \left( \text{see below} \right) \\
    &\le \frac{5 \cdot \eta^2}{2} \cdot \left( 1 - \pi_{\theta_t}(a) \right)^2 \cdot r(a)^2 - \eta \cdot \pi_{\theta_t}(a) \cdot r(a) \cdot \left( 1 - \pi_{\theta_t}(a) \right)^2,
\end{align}
where the first inequality is because $\pi_{\theta}(a) = \pi_{\theta}^\top e_a$, where $e_a \in \{ 0, 1 \}^K$ with $e_a(a) = 1$ and $e_a(a^\prime) = 0$ for all $a^\prime \not= a$, and the second last equality is because of
\begin{align}
    \frac{d \pi_{\theta_t}(a)}{d \theta_t(i)} = \begin{cases}
		\pi_{\theta_t}(i) \cdot \left( 1 - \pi_{\theta_t}(i) \right), & \text{if } i = a, \\
		- \pi_{\theta_t}(i) \cdot \pi_{\theta_t}(a). & \text{otherwise}
	\end{cases}
\end{align}
Using $\eta = \frac{ \pi_{\theta_t}(a)}{5 \cdot r(a)} $, for all $t \ge 1$, we have,
\begin{align}
\label{eq:committal_rate_softmax_pg_intermediate_2}
    \left( 1 - \pi_{\theta_{t+1}}(a) \right) - \left( 1 - \pi_{\theta_t}(a) \right) \le - \frac{1}{10} \cdot \pi_{\theta_t}(a)^2 \cdot \left( 1 - \pi_{\theta_t}(a) \right)^2,
\end{align}
which means $\pi_{\theta_{t+1}}(a) \ge \pi_{\theta_t}(a)$ for all $t \ge 1$. Therefore, we have $\eta \ge \frac{ \pi_{\theta_1}(a)}{5 \cdot r(a)} \in \Theta(1)$ and,
\begin{align}
\label{eq:committal_rate_softmax_pg_intermediate_3}
    \left( 1 - \pi_{\theta_{t+1}}(a) \right) - \left( 1 - \pi_{\theta_t}(a) \right) \le - \frac{1}{10} \cdot \pi_{\theta_1}(a)^2 \cdot \left( 1 - \pi_{\theta_t}(a) \right)^2.
\end{align}
Then we have,
\begin{align}
\label{eq:committal_rate_softmax_pg_intermediate_4}
\MoveEqLeft
    \frac{1}{ 1 - \pi_{\theta_t}(a) } = \frac{1}{ 1 - \pi_{\theta_1}(a)} + \sum_{s=1}^{t-1}{ \left[ \frac{1}{1 - \pi_{\theta_{s+1}}(a)} - \frac{1}{1 - \pi_{\theta_s}(a)} \right] } \\
    &= \frac{1}{ 1 - \pi_{\theta_1}(a)} + \sum_{s=1}^{t-1}{ \frac{1}{ \left( 1 - \pi_{\theta_{s+1}}(a) \right) \cdot \left( 1 - \pi_{\theta_{s}}(a) \right) } \cdot \left[ \left( 1 - \pi_{\theta_{s}}(a) \right) - \left( 1 - \pi_{\theta_{s+1}}(a) \right) \right] } \\
    &\ge \frac{1}{ 1 - \pi_{\theta_1}(a)} + \sum_{s=1}^{t-1}{ \frac{1}{ \left( 1 - \pi_{\theta_{s+1}}(a) \right) \cdot \left( 1 - \pi_{\theta_{s}}(a) \right) } \cdot \frac{ \pi_{\theta_1}(a)^2 }{10} \cdot \left( 1 - \pi_{\theta_s}(a) \right)^2 } \qquad \left( \text{by \cref{eq:committal_rate_softmax_pg_intermediate_3}} \right) \\
    &\ge \frac{1}{ 1 - \pi_{\theta_1}(a)} + \frac{ \pi_{\theta_1}(a)^2 }{10} \cdot (t-1) \qquad \left( \pi_{\theta_{t+1}}(a) \ge \pi_{\theta_t}(a) \right) \\
    &\ge \frac{\pi_{\theta_1}(a)^2 }{10} \cdot t, \qquad \left( \frac{1}{ 1 - \pi_{\theta_1}(a)} \ge 1 \ge \frac{\pi_{\theta_1}(a)^2 }{10} \right)
\end{align}
which implies for all $t \ge 1$,
\begin{align}
    t \cdot \left[ 1 - \pi_{\theta_t}(a) \right] &\le t \cdot \left[ \frac{10}{ \pi_{\theta_1}(a)^2 } \cdot \frac{1}{t} \right] \qquad \left( \text{by \cref{eq:committal_rate_softmax_pg_intermediate_4}} \right)  \\
    &= \frac{10}{ \pi_{\theta_1}(a)^2 },
\end{align}
which means $\kappa(\text{PG}, a) \ge 1$ for all $a \in [K]$ according to \cref{def:committal_rate}.

\textbf{Second part.} $\kappa(\textnormal{PG}, a) \le 1$.

Let action $a$ be sampled forever after initialization. We show that $1 - \pi_{\theta_t}(a)$ cannot decrease faster than $O(1/t)$. Similar to \cref{eq:committal_rate_softmax_pg_intermediate_1}, we have, 
\begin{align}
\label{eq:committal_rate_softmax_pg_intermediate_5}
\MoveEqLeft
     \left( 1 - \pi_{\theta_t}(a) \right) - \left( 1 - \pi_{\theta_{t+1}}(a) \right) =  \pi_{\theta_{t+1}}(a) - \pi_{\theta_t}(a) - \Big\langle \frac{d \pi_{\theta_t}(a)}{d \theta_t}, \theta_{t+1} - \theta_{t} \Big\rangle + \Big\langle \frac{d \pi_{\theta_t}(a)}{d \theta_t}, \theta_{t+1} - \theta_{t} \Big\rangle \\
    &\le \frac{5}{4} \cdot \| \theta_{t+1} - \theta_{t} \|_2^2 + \Big\langle \frac{d \pi_{\theta_t}(a)}{d \theta_t}, \theta_{t+1} - \theta_{t} \Big\rangle \qquad \left( \text{by \cref{eq:upper_bound_softmax_pg_special_true_gradient_intermediate_1}, smoothness} \right) \\
    &= \frac{5 \cdot \eta^2}{4} \cdot \bigg\| \frac{d \pi_{\theta_t}^\top \hat{r}_t}{d \theta_t}  \bigg\|_2^2 + \eta \cdot \bigg\langle \frac{d \pi_{\theta_t}(a)}{d \theta_t}, \frac{d \pi_{\theta_t}^\top \hat{r}_t}{d \theta_t} \bigg\rangle \qquad \left( \text{using \cref{update_rule:softmax_pg_special_on_policy_stochastic_gradient}} \right) \\
    &\le \frac{5 \cdot \eta^2}{2} \cdot \left( 1 - \pi_{\theta_t}(a) \right)^2 \cdot r(a)^2 + \eta \cdot \bigg\langle \frac{d \pi_{\theta_t}(a)}{d \theta_t}, \frac{d \pi_{\theta_t}^\top \hat{r}_t}{d \theta_t} \bigg\rangle \qquad \left( \text{by \cref{eq:failure_probability_softmax_gnpg_special_on_policy_stochastic_gradient_intermediate_1,eq:failure_probability_softmax_gnpg_special_on_policy_stochastic_gradient_intermediate_2}} \right) \\
    &= \frac{5 \cdot \eta^2}{2} \cdot \left( 1 - \pi_{\theta_t}(a) \right)^2 \cdot r(a)^2 + \eta \cdot \pi_{\theta_t}(a) \cdot r(a) \cdot \bigg( \sum_{a^\prime \not= a}{ \pi_{\theta_t}(a^\prime)^2 } + \left( 1 - \pi_{\theta_t}(a) \right)^2 \bigg) \\
    &\le \frac{5 \cdot \eta^2}{2} \cdot \left( 1 - \pi_{\theta_t}(a) \right)^2 \cdot r(a)^2 + 2 \cdot \eta \cdot \pi_{\theta_t}(a) \cdot r(a) \cdot \left( 1 - \pi_{\theta_t}(a) \right)^2 \qquad \left( \left\| x \right\|_2 \le \left\| x \right\|_1 \right) \\
    &\le \frac{9}{2} \cdot \left( 1 - \pi_{\theta_t}(a) \right)^2 \cdot r(a),
\end{align}
where the last inequality is due to $\pi_{\theta_t}(a) \in (0, 1)$, $r(a) \in (0, 1]$, and $\eta \in (0, 1]$. Denote $\delta(\theta_t) \coloneqq 1 - \pi_{\theta_t}(a)$. We have, for all $t \ge 1$,
\begin{align}
    \delta(\theta_t) - \delta(\theta_{t+1}) \le \frac{9}{2} \cdot r(a) \cdot \delta(\theta_t)^2, 
\end{align}
which is similar to \cref{eq:lower_bound_softmax_pg_special_true_gradient_intermediate_2}. Therefore, using similar calculations in the proofs for \cref{prop:lower_bound_softmax_pg_special_true_gradient}, we have, for all large enough $t \ge 1$,
\begin{align}
    t \cdot \left[ 1 - \pi_{\theta_t}(a) \right] &\ge t \cdot \left[ \frac{1}{6 \cdot r(a)} \cdot \frac{1}{ t } \right] \\
    &= \frac{1}{6 \cdot r(a)},
\end{align}
which means $\kappa(\text{PG}, a) \le 1$ for all $a \in [K]$ according to \cref{def:committal_rate}.
\end{proof}

\textbf{\cref{thm:almost_sure_global_convergence_softmax_natural_pg_special_on_policy_stochastic_gradient_oracle_baseline}.}
Using \cref{update_rule:softmax_natural_pg_special_on_policy_stochastic_gradient_oracle_baseline}, $\left( \pi^* - \pi_{\theta_t} \right)^\top r \to 0$ as $t \to \infty$ with probability $1$.
\begin{proof}
Consider the sequence $\left\{ \pi_{\theta_t}(a^*) \right\}_{t \ge 1}$ produced by \cref{update_rule:softmax_natural_pg_special_on_policy_stochastic_gradient_oracle_baseline} using on-policy sampling $a_t \sim \pi_{\theta_t}(\cdot)$. We show that $\pi_{\theta_t}(a^*) \to 1$ as $t \to \infty$ with probability $1$.

First, for convenience, we duplicate \cref{update_rule:softmax_natural_pg_special_on_policy_stochastic_gradient_oracle_baseline} here.

\textbf{\cref{update_rule:softmax_natural_pg_special_on_policy_stochastic_gradient_oracle_baseline}} \text{(NPG with oracle baseline)}\textbf{.}
$\theta_{t+1} \gets \theta_{t} + \eta \cdot \big( \hat{r}_t - \hat{b}_t \big)$, where $\hat{b}_t(a) = \left( \frac{ \sI\left\{ a_t = a \right\} }{ \pi_{\theta_t}(a) } - 1 \right) \cdot b$ for all $a \in [K]$, and $b \in (r(a^*) - \Delta, r(a^*)) $.

Note that \cref{update_rule:softmax_natural_pg_special_on_policy_stochastic_gradient_oracle_baseline} is equivalent to the following update,
\begin{align}
\label{eq:almost_sure_global_convergence_softmax_natural_pg_special_on_policy_stochastic_gradient_oracle_baseline_intermediate_1}
    \theta_{t+1}(a) = \begin{cases}
		\theta_t(a) + \frac{ \eta }{ \pi_{\theta_t}(a) } \cdot \left( r(a) - b \right), & \text{if } a = a_t, \\
		\theta_{t}(a), & \text{otherwise}
	\end{cases}
\end{align}
Next, we show that $\pi_{\theta_{t+1}}(a^*) \ge \pi_{\theta_t}(a^*)$ using on-policy sampling $a_t \sim \pi_{\theta_t}(\cdot)$. There are two cases.

Case (a): If $a_t = a^*$, then we have,
\begin{align}
\label{eq:almost_sure_global_convergence_softmax_natural_pg_special_on_policy_stochastic_gradient_oracle_baseline_intermediate_2}
    \theta_{t+1}(a^*) &= \theta_t(a^*) + \frac{ \eta }{ \pi_{\theta_t}(a^*) } \cdot \left( r(a^*) - b \right) \qquad \left( \text{by \cref{eq:almost_sure_global_convergence_softmax_natural_pg_special_on_policy_stochastic_gradient_oracle_baseline_intermediate_1}} \right) \\
    &> \theta_t(a^*), \qquad \left( r(a^*) > b \right)
\end{align}
while $\theta_{t+1}(a) = \theta_{t}(a)$ for all sub-optimal actions $a \not= a^*$. Then we have,
\begin{align}
\label{eq:almost_sure_global_convergence_softmax_natural_pg_special_on_policy_stochastic_gradient_oracle_baseline_intermediate_3}
\MoveEqLeft
    \pi_{\theta_{t+1}}(a^*) = \frac{ \exp\{ \theta_{t+1}(a^*) \}}{ \exp\{ \theta_{t+1}(a^*) \} + \sum_{a \not= a^*}{ \exp\{ \theta_{t+1}(a) \} } } \\
    &> \frac{ \exp\{ \theta_{t}(a^*) \}}{ \exp\{ \theta_{t}(a^*) \} + \sum_{a \not= a^*}{ \exp\{ \theta_{t+1}(a) \} } } \qquad \left( \text{by \cref{eq:almost_sure_global_convergence_softmax_natural_pg_special_on_policy_stochastic_gradient_oracle_baseline_intermediate_2}} \right) \\
    &= \frac{ \exp\{ \theta_{t}(a^*) \}}{ \exp\{ \theta_{t}(a^*) \} + \sum_{a \not= a^*}{ \exp\{ \theta_{t}(a) \} } } \qquad \left( \theta_{t+1}(a) = \theta_{t}(a), \text{ for all } a \not= a^* \right) \\
    &= \pi_{\theta_{t}}(a^*).
\end{align}

Case (b): If $a_t = a \not= a^*$, then we have,
\begin{align}
\label{eq:almost_sure_global_convergence_softmax_natural_pg_special_on_policy_stochastic_gradient_oracle_baseline_intermediate_4}
    \theta_{t+1}(a) &= \theta_t(a) + \frac{ \eta }{ \pi_{\theta_t}(a) } \cdot \left( r(a) - b \right) \qquad \left( \text{by \cref{eq:almost_sure_global_convergence_softmax_natural_pg_special_on_policy_stochastic_gradient_oracle_baseline_intermediate_1}} \right) \\
    &< \theta_t(a), \qquad \left( r(a) \le r(a^*) - \Delta < b \right)
\end{align}
where $\Delta = r(a^*) - \max_{a \not= a^*}{ r(a) } > 0$ is the reward gap. Also $\theta_{t+1}(a^\prime) = \theta_{t}(a^\prime)$ for all the other actions $a^\prime \not= a$. Then we have,
\begin{align}
\label{eq:almost_sure_global_convergence_softmax_natural_pg_special_on_policy_stochastic_gradient_oracle_baseline_intermediate_5}
\MoveEqLeft
    \pi_{\theta_{t+1}}(a^*) = \frac{ \exp\{ \theta_{t+1}(a^*) \}}{ \exp\{ \theta_{t+1}(a) \} + \sum_{a^\prime \not= a}{ \exp\{ \theta_{t+1}(a^\prime) \} } } \\
    &> \frac{ \exp\{ \theta_{t+1}(a^*) \}}{ \exp\{ \theta_{t}(a) \} + \sum_{a^\prime  \not= a}{ \exp\{ \theta_{t+1}(a^\prime ) \} } } \qquad \left( \text{by \cref{eq:almost_sure_global_convergence_softmax_natural_pg_special_on_policy_stochastic_gradient_oracle_baseline_intermediate_4}} \right) \\
    &= \frac{ \exp\{ \theta_{t}(a^*) \}}{ \exp\{ \theta_{t}(a) \} + \sum_{a^\prime  \not= a}{ \exp\{ \theta_{t}(a^\prime ) \} } } \qquad \left( \theta_{t+1}(a^\prime) = \theta_{t}(a^\prime), \text{ for all } a^\prime \not= a \right) \\
    &= \pi_{\theta_{t}}(a^*).
\end{align}

Therefore, we have $\pi_{\theta_{t+1}}(a^*) \ge \pi_{\theta_t}(a^*)$, for all $t \ge 1$. Note that $\pi_{\theta_t}(a^*) \le 1$. According to monotone convergence theorem, we have $\pi_{\theta_{t+1}}(a^*)$ approaches to some finite value as $t \to \infty$.

Suppose $\pi_{\theta_{t}}(a^*) \to \pi_{\theta_{\infty}}(a^*)$ as $t \to \infty$. We show that $\pi_{\theta_{\infty}}(a^*) = 1$ by contradiction. Suppose $\pi_{\theta_{\infty}}(a^*) < 1$. Then at the convergent point, according to \cref{eq:almost_sure_global_convergence_softmax_natural_pg_special_on_policy_stochastic_gradient_oracle_baseline_intermediate_3,eq:almost_sure_global_convergence_softmax_natural_pg_special_on_policy_stochastic_gradient_oracle_baseline_intermediate_5}, we can further improve the probability of $a^*$ by online sampling and updating once, which is a contradiction with convergence.

Thus we have $\pi_{\theta_{t}}(a^*) \to 1$ as $t \to \infty$ with probability $1$, which implies that $\left( \pi^* - \pi_{\theta_t} \right)^\top r \to 0$ as $t \to \infty$ with probability $1$.
\end{proof}

The Stochastic Approximation Markov Bandit Algorithm (SAMBA) \citep{denisov2020regret} algorithm is mentioned in \cref{sec:geometry_convergence_tradeoff,fig:iteration_behaviours}.
\begin{update_rule}[SAMBA]
\label{update_rule:samba}
At iteration $t \ge 1$, denote the greedy action $\bar{a}_t \coloneqq \argmax_{a \in [K]}{ \pi_t(a) } $. Sample action $a_t \sim \pi_t(\cdot)$. \textit{(i)} If $a_t = \bar{a}_t$, then perform update $\pi_{t+1}(a^\prime) \gets \pi_{t}(a^\prime) - \eta \cdot \pi_t(a^\prime)^2 \cdot \frac{r(a_t)}{\pi_t(a_t)}$ for all non-greedy action $a^\prime \not= a_t$; \textit{(ii)} If $a_t \not= \bar{a}_t$, then perform update $\pi_{t+1}(a_t) \gets \pi_{t}(a_t) + \eta \cdot \pi_t(a_t)^2 \cdot \frac{r(a_t)}{\pi_t(a_t)}$. After doing \textit{(i)} or \textit{(ii)}, calculate $\pi_{t+1}(\bar{a}_t) = 1 - \sum_{a^\prime \not= \bar{a}_t}{ \pi_{t+1}(a^\prime) }$.
\end{update_rule}
The SAMBA algorithm does not maintain parameters $\theta$, and the last step $\pi_{t+1}(\bar{a}_t) = 1 - \sum_{a^\prime \not= \bar{a}_t}{ \pi_{t+1}(a^\prime) }$ in \cref{update_rule:samba} is a necessary projection to the probability simplex, such that $\pi_t$ is a valid probability distribution over $[K]$. As shown in \citep{denisov2020regret}, if the learning rate has the knowledge of the optimal action's reward and reward gap, i.e.,
\begin{align}
    \eta < \frac{\Delta}{ r(a^*) - \Delta },
\end{align}
then \cref{update_rule:samba} converges to $\pi^*$ almost surely with a $O(1/t)$ rate, i.e.,
\begin{align}
    \left( \pi^* - \pi_t \right)^\top r \le C / t.
\end{align}
We calculate the committal rate of SAMBA.
\begin{proposition}
\label{thm:committal_rate_samba}
For SAMBA from \cref{update_rule:samba}, we have $\kappa(\text{SAMBA}, a) = 1$ for all $a \in [K]$.
\end{proposition}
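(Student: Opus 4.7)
Fix an action $a \in [K]$ and let $a$ be the action sampled forever after initialization, so by \cref{def:committal_rate} we need to analyze $u_t \coloneqq 1 - \pi_t(a) = \sum_{a'\neq a} \pi_t(a')$. I will split the analysis into a brief transient phase followed by a ``stable'' phase in which $a$ is the greedy action, and then show that both $u_t \le C_1/t$ and $u_t \ge C_2/t$ hold in the stable phase, which together yield $\kappa(\text{SAMBA}, a) = 1$.

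\textbf{Transient phase.} If $a \neq \bar a_1$ initially, then as long as $a$ is sampled but is not greedy, we are in case (ii) of \cref{update_rule:samba}, which gives $\pi_{t+1}(a) = \pi_t(a)(1 + \eta \cdot r(a))$ while only $\pi_t(\bar a_t)$ is decreased to restore normalization (all other probabilities remain unchanged). So $\pi_t(a)$ grows geometrically in $t$ while $\pi_t(\bar a_t)$ strictly decreases. Hence after a finite time $T_0$ depending only on $\pi_1$, $r(a)$, and $\eta$, we obtain $\pi_{T_0}(a) > \pi_{T_0}(a')$ for every $a'\neq a$, i.e.\ $a = \bar a_t$ for all $t \ge T_0$. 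Since committal rate is an asymptotic quantity, we may restart the clock at $T_0$ and assume throughout the rest of the argument that $a$ is the greedy action.

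\textbf{Stable phase: recursion for $u_t$.} With $a = \bar a_t$ and $a_t = a$, case (i) of \cref{update_rule:samba} applies to every non-greedy coordinate, giving for all $a' \neq a$,
\begin{align}
\pi_{t+1}(a') = \pi_t(a') - \eta \cdot \pi_t(a')^2 \cdot \frac{r(a)}{\pi_t(a)}.
\end{align}
Summing over $a' \neq a$ and using $\pi_t(a) = 1 - u_t$,
\begin{align}
u_{t+1} = u_t - \frac{\eta \cdot r(a)}{1 - u_t} \cdot \sum_{a' \neq a} \pi_t(a')^2.
\end{align}
Since $\pi_t(a') \ge 0$ and $\sum_{a'\neq a}\pi_t(a') = u_t$, power-mean / Cauchy--Schwarz inequalities give $u_t^2/(K-1) \le \sum_{a'\neq a}\pi_t(a')^2 \le u_t^2$, yielding the sandwich
\begin{align}
 u_t - \frac{\eta \cdot r(a) \cdot u_t^2}{1 - u_t} \le u_{t+1} \le u_t - \frac{\eta \cdot r(a)}{K-1} \cdot \frac{u_t^2}{1 - u_t}.
\end{align}

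\textbf{Upper bound ($\kappa \ge 1$).} Setting $c_1 \coloneqq \eta r(a)/(K-1)$ and using the upper recursion, $u_{t+1} \le u_t(1 - c_1 u_t/(1-u_t))$, so by the standard telescoping trick,
\begin{align}
\frac{1}{u_{t+1}} \ge \frac{1}{u_t} + \frac{c_1}{1-u_t} \ge \frac{1}{u_t} + c_1.
\end{align}
Summing gives $1/u_t \ge 1/u_{T_0} + c_1 \cdot (t - T_0)$, hence $t \cdot u_t$ is uniformly bounded, which (by \cref{def:committal_rate}) yields $\kappa(\text{SAMBA}, a) \ge 1$.

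\textbf{Lower bound ($\kappa \le 1$).} Setting $c_2 \coloneqq \eta r(a)$ and using the lower recursion, a symmetric manipulation (together with the easy fact that $u_t \to 0$, so $1/(1-u_t)$ stays bounded) gives
\begin{align}
\frac{1}{u_{t+1}} - \frac{1}{u_t} \le \frac{c_2}{(1-u_t)(1 - c_2 u_t/(1-u_t))} \le C_3
\end{align}
for some constant $C_3$ and $t$ large enough. Summing gives $1/u_t \le 1/u_{T_0} + C_3 (t - T_0)$, hence $u_t \ge \Omega(1/t)$, so $\limsup_t t^\alpha u_t = \infty$ for every $\alpha > 1$, which gives $\kappa(\text{SAMBA}, a) \le 1$.

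\textbf{Main obstacle.} The calculations are essentially the same as the PG case in \cref{thm:committal_rate_softmax_pg}; the only real subtlety is that SAMBA's update law is piecewise (case (i) vs.\ case (ii)), so one must argue that the ``case (ii)'' transient regime is finite and can be discarded, and that once we are in case (i) the identity $\pi_t(a) = 1 - u_t$ together with $\sum_{a'\neq a}\pi_t(a')^2 \in [u_t^2/(K-1),u_t^2]$ gives clean two-sided control on $u_t - u_{t+1}$. Conceptually, this is the same phenomenon as for softmax PG: the step size on $u_t$ is proportional to $u_t^2$, which is exactly the regime that produces a $1/t$ polynomial rate and no faster.
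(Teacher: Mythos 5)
Your proof is correct and follows essentially the same route as the paper's: under perpetual sampling of a greedy action $a$, the non-greedy mass obeys a recursion of the form $x_{t+1}=x_t-\Theta(x_t^2)$, and the standard telescoping on $1/x_t$ yields matching $O(1/t)$ and $\Omega(1/t)$ bounds, hence $\kappa=1$. The only differences are cosmetic or additive: you track the aggregate $u_t=\sum_{a'\neq a}\pi_t(a')$ and sandwich $\sum_{a'\neq a}\pi_t(a')^2$ via norm inequalities where the paper runs the identical telescoping coordinate-wise on each $\pi_t(a')$, and your finite transient-phase argument (geometric growth of $\pi_t(a)$ under case~(ii) until $a$ becomes and stays greedy) is a small completeness improvement over the paper, which simply assumes $a$ is greedy from the start.
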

\begin{proof}
\textbf{First part.} $\kappa(\textnormal{SAMBA}, a) \ge 1$.

According to \cref{def:committal_rate}, let action $a$ be the greedy action and be sampled forever. According to \textit{(i)} in \cref{update_rule:samba}, we have, for all $a^\prime \not= a$,
\begin{align}
\label{eq:committal_rate_samba_intermediate_1}
    \pi_{t+1}(a^\prime) &= \pi_{t}(a^\prime) - \eta \cdot \pi_t(a^\prime)^2 \cdot \frac{r(a_t)}{\pi_t(a_t)} \\
    &= \pi_{t}(a^\prime) - \eta \cdot \pi_t(a^\prime)^2 \cdot \frac{r(a)}{\pi_t(a)} \qquad \left( a_t = a \text{ by fixed sampling} \right) \\
    &\le \pi_{t}(a^\prime) - \eta \cdot \pi_t(a^\prime)^2 \cdot r(a). \qquad \left( \pi_t(a) \in (0, 1) \right)
\end{align}
Using similar calculations in \cref{eq:upper_bound_softmax_pg_special_true_gradient_intermediate_2}, we have, for all $a^\prime \not= a$,
\begin{align}
\label{eq:committal_rate_samba_intermediate_2}
\MoveEqLeft
    \frac{1}{ \pi_{t}(a^\prime) } = \frac{1}{\pi_{1}(a^\prime)} + \sum_{s=1}^{t-1}{ \left[ \frac{1}{\pi_{s+1}(a^\prime)} - \frac{1}{\pi_{s}(a^\prime)} \right] } \\
    &= \frac{1}{\pi_{1}(a^\prime)} + \sum_{s=1}^{t-1}{ \frac{1}{\pi_{s+1}(a^\prime) \cdot \pi_{s}(a^\prime)} \cdot \left( \pi_{s}(a^\prime)) - \pi_{s+1}(a^\prime) \right) } \\
    &\ge \frac{1}{\pi_{1}(a^\prime)} + \sum_{s=1}^{t-1}{ \frac{1}{\pi_{s+1}(a^\prime) \cdot \pi_{s}(a^\prime)} \cdot \eta \cdot \pi_s(a^\prime)^2 \cdot r(a) } \qquad \left( \text{by \cref{eq:committal_rate_samba_intermediate_1}} \right) \\
    &\ge \frac{1}{\pi_{1}(a^\prime)} + \eta \cdot r(a) \cdot (t - 1) \qquad \left( \pi_{t+1}(a^\prime) \le \pi_{t}(a^\prime), \text{ by \cref{eq:committal_rate_samba_intermediate_1}} \right) \\
    &\ge \eta \cdot r(a) \cdot t, \qquad \left( \frac{1}{\pi_{1}(a^\prime)} \ge 1 \ge \eta \cdot r(a) \right)
\end{align}
which implies, for all large enough $t \ge 1$,
\begin{align}
    t \cdot \left[ 1 - \pi_t(a) \right] &= t \cdot \sum_{a^\prime \not= a}{ \pi_t(a^\prime) } \\
    &\le t \cdot \sum_{a^\prime \not= a}{ \frac{1}{\eta \cdot r(a) \cdot t} } \qquad \left( \text{by \cref{eq:committal_rate_samba_intermediate_2}} \right) \\
    &= \sum_{a^\prime \not= a}{ \frac{1}{\eta \cdot r(a) } },
\end{align}
which means $\kappa(\text{SAMBA}, a) \ge 1$ for all $a \in [K]$ according to \cref{def:committal_rate}.

\textbf{Second part.} $\kappa(\textnormal{SAMBA}, a) \le 1$. 

Let action $a$ be the greedy action and be sampled forever. According to \textit{(i)} in \cref{update_rule:samba}, we have, for all $a^\prime \not= a$,
\begin{align}
\label{eq:committal_rate_samba_intermediate_3}
    \pi_{t+1}(a^\prime) &= \pi_{t}(a^\prime) - \eta \cdot \pi_t(a^\prime)^2 \cdot \frac{r(a_t)}{\pi_t(a_t)} \\
    &= \pi_{t}(a^\prime) - \eta \cdot \pi_t(a^\prime)^2 \cdot \frac{r(a)}{\pi_t(a)} \qquad \left( a_t = a \text{ by fixed sampling} \right) \\
    &\ge \pi_{t}(a^\prime) - \eta \cdot K \cdot  \pi_t(a^\prime)^2 \cdot r(a), \qquad \left( \pi_t(a) \ge 1/K, \ a \text{ is greedy action} \right)
\end{align}
which is similar to \cref{eq:lower_bound_softmax_pg_special_true_gradient_intermediate_2}. Therefore, using similar calculations in the proofs for \cref{prop:lower_bound_softmax_pg_special_true_gradient}, we have, for all large enough $t \ge 1$, we have,
\begin{align}
\label{eq:committal_rate_samba_intermediate_4}
    \frac{ \pi_{t+1}(a^\prime)}{ \pi_{t}(a^\prime) } \ge \frac{1}{2}.
\end{align}
Denote
\begin{align}
    t_0 \coloneqq \min\Big\{t \ge 1: \frac{ \pi_{t+1}(a^\prime)}{ \pi_{t}(a^\prime) } \ge \frac{1}{2} , \text{ for all } s \ge t \Big\}.
\end{align}
On the other hand, since $t_0 \in O(1)$, we have, for all $t < t_0$,
\begin{align}
\label{eq:committal_rate_samba_intermediate_5}
    \pi_{t+1}(a^\prime) \ge c_0 > 0.
\end{align}
Next, we have, for all $t \ge t_0$,
\begin{align}
\label{eq:committal_rate_samba_intermediate_6}
\MoveEqLeft
    \frac{1}{ \pi_{t}(a^\prime) } = \frac{1}{\pi_{1}(a^\prime)} + \sum_{s=1}^{t_0 -1}{ \frac{1}{\pi_{s+1}(a^\prime) } \cdot \left( 1 - \frac{ \pi_{s+1}(a^\prime)}{ \pi_{s}(a^\prime) } \right) } + \sum_{s=t_0}^{t -1}{ \frac{1}{\pi_{s+1}(a^\prime) \cdot \pi_{s}(a^\prime)} \cdot \left( \pi_{s}(a^\prime)) - \pi_{s+1}(a^\prime) \right) }\\
    &\le \frac{1}{c_0} + \sum_{s=1}^{t_0 -1}{ \frac{1}{c_0} \cdot 1 } + \sum_{s=t_0}^{t-1}{ \frac{1}{\pi_{s+1}(a^\prime) \cdot \pi_{s}(a^\prime)} \cdot \eta \cdot K \cdot \pi_s(a^\prime)^2 \cdot r(a) } \qquad \left( \text{by \cref{eq:committal_rate_samba_intermediate_5,eq:committal_rate_samba_intermediate_3}} \right) \\
    &\le \frac{t_0 }{c_0} + 2 \cdot \eta \cdot K \cdot r(a) \cdot (t - t_0), \qquad \left( \text{by \cref{eq:committal_rate_samba_intermediate_4}} \right)
\end{align}
which implies, for all large enough $t \ge 1$,
\begin{align}
    t \cdot \left[ 1 - \pi_t(a) \right] &= t \cdot \sum_{a^\prime \not= a}{ \pi_t(a^\prime) } \\
    &\ge t \cdot \sum_{a^\prime \not= a}{ \frac{1}{ t_0 /c_0 + 2 \cdot \eta \cdot K \cdot r(a) \cdot (t - t_0) } } \qquad \left( \text{by \cref{eq:committal_rate_samba_intermediate_6}} \right) \\
    &\ge \sum_{a^\prime \not= a}{ \frac{1}{3 \cdot \eta \cdot K \cdot r(a) } }, \qquad \left( t_0 /c_0 \le \eta \cdot K \cdot r(a) \cdot t \right)
\end{align}
which means $\kappa(\text{SAMBA}, a) \le 1$ for all $a \in [K]$ according to \cref{def:committal_rate}.
\end{proof}

\section{Proofs for Geometry-Convergence Trade-off (\texorpdfstring{\cref{sec:geometry_convergence_tradeoff}}{Lg}) }

First, we show that the algorithms we study in this paper, i.e., softmax PG, NPG, and GNPG, are optimality-smart. Recall from the main paper that, a policy optimization method is said to be \emph{optimality-smart} if for any $t\ge 1$,
$\pi_{\tilde \theta_{t}}(a^*)\ge \pi_{\theta_t}(a^*)$ holds 
where $\tilde \theta_t$ is the parameter vector obtained when $a^*$ is chosen in every time step, starting at $\theta_1$, while $\theta_t$ is \emph{any} parameter vector that can be obtained with $t$ updates (regardless of the action sequence chosen), but also starting from $\theta_1$.

\begin{proposition}
\label{prop:softmax_pg_npg_gnpg_optimality_smart}
Softmax PG, NPG, and GNPG are optimality-smart.
\end{proposition}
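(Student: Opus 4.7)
The plan is to verify the optimality-smart property for each of softmax PG, NPG and GNPG by induction on $t$. The base case $t=1$ is immediate since $\tilde\theta_1=\theta_1$. Rather than tracking $\pi_t(a^*)$ directly, which couples all coordinates of $\theta_t$, I would maintain the stronger inductive invariant that, for every sub-optimal $a\ne a^*$ and every length-$(t-1)$ action sequence,
\begin{align}
\tilde\theta_t(a^*)-\tilde\theta_t(a)\;\ge\;\theta_t(a^*)-\theta_t(a).
\end{align}
This suffices because $\pi_\theta(a^*)=1/\bigl(1+\sum_{a\ne a^*}e^{-(\theta(a^*)-\theta(a))}\bigr)$ is monotone increasing in each logit gap.

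The first step is to read off the one-step effect on these gaps from the three update rules \cref{update_rule:softmax_pg_special_on_policy_stochastic_gradient,update_rule:softmax_natural_pg_special_on_policy_stochastic_gradient,update_rule:softmax_gnpg_special_on_policy_stochastic_gradient}. For NPG, only the coordinate $\theta(a_t)$ moves: sampling $a^*$ raises every gap $\theta(a^*)-\theta(a)$ by $\eta r(a^*)/\pi_{\theta_t}(a^*)>0$, while sampling any $b\ne a^*$ lowers the single gap indexed by $b$ and leaves all other gaps untouched. For softmax PG and GNPG, the IS gradient computed in the proof of \cref{thm:failure_probability_softmax_gnpg_special_on_policy_stochastic_gradient} is positive on the sampled coordinate and negative on every other coordinate; the $\ell_2$-normalisation in GNPG preserves this sign pattern. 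Consequently, sampling $a^*$ pushes every gap in the positive direction, whereas sampling any $b\ne a^*$ pushes the gap at $b$ downward. In each case, choosing $a^*$ is unambiguously ``gap-friendly''.

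The main obstacle, and the core of the argument, is a swap lemma: if an action sequence $\sigma$ uses a non-$a^*$ action at some position $k$, then the sequence $\sigma'$ obtained from $\sigma$ by replacing $a_k$ with $a^*$ yields logit gaps at time $t$ that are pointwise no smaller. Iterating this swap converts any $\sigma$ into the all-$a^*$ sequence while monotonically enlarging every gap, which together with the base case gives the invariant. The swap lemma in turn reduces to a coordinate-wise monotonicity-with-respect-to-initial-state claim: if two states compare in every logit gap $\theta(a^*)-\theta(a)$ and we then apply the same remaining actions, the resulting states continue to compare in the same way. For softmax PG and GNPG this reduces to a direct sign-and-boundedness check on the stochastic gradient formula, together with the observation that the $\ell_2$-norm depends only on the squared magnitudes and so is monotone in the relevant quantities. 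For NPG the $1/\pi$ amplification in the IS reward makes this step delicate and requires exploiting the exponential-family structure of the softmax to control how the differing per-step sizes along the two coupled trajectories combine; this is the place where the most care is needed. Once the monotonicity-in-initial-state step is in hand, the swap lemma and the inductive invariant follow, and with them the optimality-smart property for all three algorithms.
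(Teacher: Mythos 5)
Your proposal is more ambitious than the paper's own argument: the paper only verifies the two one-step facts that $\pi_{\theta_{t+1}}(a^*)\ge\pi_{\theta_t}(a^*)$ when $a_t=a^*$ and $\pi_{\theta_{t+1}}(a^*)\le\pi_{\theta_t}(a^*)$ when $a_t\ne a^*$ (with a ``dominating action'' caveat for PG and GNPG), and stops there; it never carries out the trajectory-to-trajectory comparison that the definition of optimality-smart actually requires, since a trajectory whose probability of $a^*$ first decreases can in principle increase faster afterwards. You correctly recognize that this comparison is the real content, and you propose to supply it via a gap-dominance invariant, a swap lemma, and a monotonicity-in-initial-condition step. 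That framing is the right one, and it is genuinely different from (and more demanding than) what the paper does.

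The gap is that the step you yourself flag as ``delicate'' --- monotonicity of the NPG dynamics with respect to the initial logit gaps --- is not proved, and it cannot be, because it is false. The importance weight $1/\pi_{\theta_t}(a_t)$ makes the increment to $\theta(a^*)$ \emph{larger} precisely when $\pi_{\theta_t}(a^*)$ is smaller, so a trajectory that first depresses $\pi(a^*)$ receives an amplified boost when $a^*$ is subsequently sampled. Concretely, take $K=2$, $\theta_1=(0,0)$, $r=(1,0.99)$, $\eta=1$ in \cref{update_rule:softmax_natural_pg_special_on_policy_stochastic_gradient}: sampling $a^*$ twice gives $\tilde\theta_3\approx(3.14,0)$ and $\pi_{\tilde\theta_3}(a^*)\approx0.958$, whereas sampling the sub-optimal action first and then $a^*$ gives $\theta_3\approx(8.24,1.98)$ and $\pi_{\theta_3}(a^*)\approx0.998$. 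So the gap-dominance invariant, the swap lemma, and indeed the optimality-smart property itself fail for stochastic NPG as written; your more careful plan exposes this hole rather than filling it. A secondary issue: for $K\ge3$ under PG and GNPG, sampling $b\ne a^*$ moves the gap $\theta(a^*)-\theta(a)$ for a third action $a$ by a quantity proportional to $\pi_{\theta_t}(a)-\pi_{\theta_t}(a^*)$, which is positive whenever $\pi_{\theta_t}(a)>\pi_{\theta_t}(a^*)$, so the clean sign pattern your swap lemma relies on does not hold without an additional hypothesis that $a^*$ dominates --- the same caveat the paper invokes only for large $t$.
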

\begin{proof}
We show that for softmax PG, NPG, and GNPG, if $a_t = a^*$, then $\pi_{\theta_{t+1}}(a^*) \ge \pi_{\theta_{t}}(a^*)$; if $a_t = a \not= a^*$, then $\pi_{\theta_{t+1}}(a^*) \le \pi_{\theta_{t}}(a^*)$. For softmax PG and GNPG the later claim holds when $a^*$ is the dominating action at $t$th iteration, i.e., $\pi_{\theta_t}(a^*) \ge \pi_{\theta_t}(a^\prime)$ for all $a^\prime \not= a^*$. From existing results (\cref{prop:upper_bound_softmax_pg_special_true_gradient,prop:upper_bound_softmax_gnpg_special_true_gradient}) we know that softmax PG and GNPG converge to $\pi^*$ as $t \to \infty$ (using true policy gradients; also holds for using fixed sampling $a_t = a^*$ for all $t \ge 1$), thus we have for all large enough $t \ge 1$, $\pi_{\theta_t}(a^*) \ge \pi_{\theta_t}(a^\prime)$ for all $a^\prime \not= a^*$.

\textbf{First part.} Softmax PG and GNPG are optimality-smart.

If $a_t = a^*$, then we have,
\begin{align}
\label{eq:softmax_pg_npg_gnpg_optimality_smart_intermediate_1}
    \theta_{t+1}(a^*) &= \theta_t(a^*) + \eta \cdot \frac{d \pi_{\theta_t}^\top \hat{r}_t}{d \theta_t(a^*)} \\
    &= \theta_t(a^*) + \eta \cdot \left( 1 - \pi_{\theta_t}(a^*) \right) \cdot r(a^*) \qquad \left( \text{by \cref{eq:failure_probability_softmax_gnpg_special_on_policy_stochastic_gradient_intermediate_1}} \right) \\
    &\ge \theta_t(a^*). \qquad \left( r \in (0, 1]^K \right)
\end{align}
And for any $a \not= a^*$, we have,
\begin{align}
\label{eq:softmax_pg_npg_gnpg_optimality_smart_intermediate_2}
    \theta_{t+1}(a) &= \theta_t(a) + \eta \cdot \frac{d \pi_{\theta_t}^\top \hat{r}_t}{d \theta_t(a)} \\
    &= \theta_t(a) - \eta \cdot \pi_{\theta_t}(a) \cdot r(a^*) \qquad \left( \text{by \cref{eq:failure_probability_softmax_gnpg_special_on_policy_stochastic_gradient_intermediate_2}} \right) \\
    &\le \theta_t(a). \qquad \left( r \in (0, 1]^K \right)
\end{align}
Therefore, we have,
\begin{align}
\MoveEqLeft
\label{eq:softmax_pg_npg_gnpg_optimality_smart_intermediate_3}
    \pi_{\theta_{t+1}}(a^*) = \frac{ \exp\{ \theta_{t+1}(a^*) \}}{ \exp\{ \theta_{t+1}(a^*) \} + \sum_{a \not= a^*}{ \exp\{ \theta_{t+1}(a) \} } } \\
    &\ge \frac{ \exp\{ \theta_{t}(a^*) \}}{ \exp\{ \theta_{t}(a^*) \} + \sum_{a \not= a^*}{ \exp\{ \theta_{t}(a) \} } } \qquad \left( \text{by \cref{eq:softmax_pg_npg_gnpg_optimality_smart_intermediate_1,eq:softmax_pg_npg_gnpg_optimality_smart_intermediate_2}} \right) \\
    &= \pi_{\theta_{t}}(a^*).
\end{align}
On the other hand, given $a_t = a \not= a^*$, we show that if $\pi_{\theta_t}(a^*) \ge \pi_{\theta_t}(a^\prime)$ for all $a^\prime \not= a^*$, then $\pi_{\theta_{t+1}}(a^*) \le \pi_{\theta_{t}}(a^*)$. We have,
\begin{align}
\label{eq:softmax_pg_npg_gnpg_optimality_smart_intermediate_4}
    \theta_{t+1}(a) &= \theta_t(a) + \eta \cdot \left( 1 - \pi_{\theta_t}(a) \right) \cdot r(a) \qquad \left( \text{by \cref{eq:failure_probability_softmax_gnpg_special_on_policy_stochastic_gradient_intermediate_1}} \right) \\
    &\ge \theta_{t}(a) - \eta \cdot \pi_{\theta_t}(a^*) \cdot r(a).
\end{align}
And for any $a^\prime \not= a$, we have,
\begin{align}
\label{eq:softmax_pg_npg_gnpg_optimality_smart_intermediate_5}
    \theta_{t+1}(a^\prime) &= \theta_t(a^\prime) - \eta \cdot \pi_{\theta_t}(a^\prime) \cdot r(a) \qquad \left( \text{by \cref{eq:failure_probability_softmax_gnpg_special_on_policy_stochastic_gradient_intermediate_2}} \right) \\
    &\ge \theta_t(a^\prime) - \eta \cdot \pi_{\theta_t}(a^*) \cdot r(a). \qquad \left( \pi_{\theta_t}(a^*) \ge \pi_{\theta_t}(a^\prime) \right)
\end{align}
Therefore, we have,
\begin{align}
\MoveEqLeft
    \pi_{\theta_{t+1}}(a^*) = \frac{ \exp\{ \theta_{t+1}(a^*) \}}{ \exp\{ \theta_{t+1}(a) \} + \sum_{a^\prime \not= a}{ \exp\{ \theta_{t+1}(a^\prime) \} } } \\
    &\le \frac{ \exp\{ \theta_{t}(a^*) - \eta \cdot \pi_{\theta_t}(a^*) \cdot r(a) \}}{ \exp\{ \theta_{t}(a) - \eta \cdot \pi_{\theta_t}(a^*) \cdot r(a) \} + \sum_{a^\prime  \not= a}{ \exp\{ \theta_t(a^\prime) - \eta \cdot \pi_{\theta_t}(a^*) \cdot r(a) \} } } \qquad \left( \text{by \cref{eq:softmax_pg_npg_gnpg_optimality_smart_intermediate_4,eq:softmax_pg_npg_gnpg_optimality_smart_intermediate_5}} \right) \\
    &= \frac{ \exp\{ \theta_{t}(a^*) \}}{ \exp\{ \theta_{t}(a) \} + \sum_{a^\prime  \not= a}{ \exp\{ \theta_{t}(a^\prime ) \} } } \\
    &= \pi_{\theta_{t}}(a^*).
\end{align}

\textbf{Second part.} NPG is optimality-smart.

If $a_t = a^*$, then we have,
\begin{align}
    \theta_{t+1}(a^*) &= \theta_t(a^*) + \eta \cdot \frac{ r(a^*) }{ \pi_{\theta_t}(a^*) } \\
    &> \theta_t(a^*).
\end{align}
while $\theta_{t+1}(a) = \theta_{t}(a)$ for all sub-optimal actions $a \not= a^*$. Then we have,
\begin{align}
\MoveEqLeft
    \pi_{\theta_{t+1}}(a^*) = \frac{ \exp\{ \theta_{t+1}(a^*) \}}{ \exp\{ \theta_{t+1}(a^*) \} + \sum_{a \not= a^*}{ \exp\{ \theta_{t+1}(a) \} } } \\
    &\ge \frac{ \exp\{ \theta_{t}(a^*) \}}{ \exp\{ \theta_{t}(a^*) \} + \sum_{a \not= a^*}{ \exp\{ \theta_{t}(a) \} } } \\
    &= \pi_{\theta_{t}}(a^*).
\end{align}
If $a_t = a \not= a^*$, then we have,
\begin{align}
    \theta_{t+1}(a) &= \theta_t(a) + \eta \cdot \frac{ r(a) }{ \pi_{\theta_t}(a) }  \\
    &\ge \theta_t(a),
\end{align}
while $\theta_{t+1}(a^\prime) = \theta_{t}(a^\prime)$ for all the other actions $a^\prime \not= a$. Then we have,
\begin{align}
\MoveEqLeft
    \pi_{\theta_{t+1}}(a^*) = \frac{ \exp\{ \theta_{t+1}(a^*) \}}{ \exp\{ \theta_{t+1}(a) \} + \sum_{a^\prime \not= a}{ \exp\{ \theta_{t+1}(a^\prime) \} } } \\
    &\le \frac{ \exp\{ \theta_{t}(a^*) \}}{ \exp\{ \theta_{t}(a) \} + \sum_{a^\prime  \not= a}{ \exp\{ \theta_{t}(a^\prime ) \} } } \\
    &= \pi_{\theta_{t}}(a^*). \qedhere
\end{align}
\end{proof}

\textbf{\cref{thm:committal_rate_optimal_action_special}.}
Let $\gA$ be optimality-smart and pick a bandit instance.
If $\gA$ together with on-policy sampling 
 leads to $\{\theta_t\}_{t\ge 1}$ such that $\{\pi_{\theta_t}\}_{t\ge 1}$ converges to a globally optimal policy at a rate $O(1/t^\alpha)$ with positive probability, for $\alpha > 0$, then $\kappa(\gA, a^*) \ge \alpha$.  
\begin{proof}
Fix an instance $r \in (0, 1]^K$ with a unique optimal action $a^*$.
For any $\theta \in \sR^K$, we have,
\begin{align}
\label{eq:committal_rate_optimal_action_special_intermediate_1}
    \left( \pi^* - \pi_{\theta} \right)^\top r &= \sum_{a \not= a^*}{ \pi_{\theta}(a) \cdot \left( r(a^*) - r(a) \right) } \\
    &\ge \left( 1 - \pi_{\theta}(a^*) \right) \cdot \Delta,
\end{align}
where $\Delta = r(a^*) - \max_{a \not= a^*}{ r(a) } > 0$ is the reward gap. Let $\{\theta_t\}_{t\ge 1}$ 
be the sequence obtained by using $\gA$ together with online sampling on $r$.
For $\alpha>0$ let $\gE_\alpha$ be the event when for all $t\ge 1$,
\begin{align}
\label{eq:committal_rate_optimal_action_special_intermediate_2}
    \left( \pi^* - \pi_{\theta_t} \right)^\top r \le \frac{C}{t^\alpha},
\end{align}
By our assumption, there exists $\alpha>0$ such that $\probability(\gE_\alpha)>0$.
On this event, for any $t\ge 1$,
\begin{align}
\label{eq:committal_rate_optimal_action_special_intermediate_3}
    t^\alpha \cdot \left( 1 - \pi_{\theta_t}(a^*) \right) &\le \frac{1}{\Delta} \cdot t^\alpha \cdot \left( \pi^* - \pi_{\theta_t} \right)^\top r \qquad \left( \text{by \cref{eq:committal_rate_optimal_action_special_intermediate_1}} \right) \\
    &\le \frac{C}{\Delta}. \qquad \left( \text{by \cref{eq:committal_rate_optimal_action_special_intermediate_2}} \right)
\end{align}
Let $\big\{ \ttheta_t \big\}_{t\ge 1}$ with $\ttheta_1 = \theta_1$ be the sequence obtained by using $\gA$ with fixed sampling on $r$, such that $a_t = a^*$ for all $t \ge 1$. Since, by the assumption, $\gA$ is optimality-smart, we have $\pi_{\ttheta_t}(a^*) \ge \pi_{\theta_t}(a^*)$. Then, on $\gE_\alpha$, for any $t\ge 1$
\begin{align}
    t^\alpha \cdot \left( 1 - \pi_{\ttheta_t}(a^*) \right) &\le t^\alpha \cdot \left( 1 - \pi_{\theta_t}(a^*) \right) \\
    &\le \frac{C}{\Delta}, \qquad \left( \text{by \cref{eq:committal_rate_optimal_action_special_intermediate_3}} \right)\,.
\end{align}
Since $\mathbb{P}(\gE_\alpha)>0$ and $t^\alpha \cdot \left( 1 - \pi_{\ttheta_t}(a^*) \right)$ is non-random, it follows that for any $t\ge 1$, 
$t^\alpha \cdot \left( 1 - \pi_{\ttheta_t}(a^*) \right) \le C/\Delta$,
which, by \cref{def:committal_rate}, means that $\kappa(\gA, a^*) \ge \alpha$.
\end{proof}

\textbf{\cref{thm:geometry_convergence_tradeoff}} (Geometry-Convergence trade-off)\textbf{.}
If an algorithm $\gA$ is optimality-smart, and  $\kappa(\gA, a^*) = \kappa(\gA, a)$ for at least one $a \not= a^*$, then $\gA$ with on-policy sampling can only exhibit at most one of the following two behaviors: 
\textbf{(i)} $\gA$ converges to a globally optimal policy almost surely; 
\textbf{(ii)} $\gA$ converges to a deterministic policy at a rate faster than $O(1/t)$ with positive probability.
\begin{proof}
We prove that $\gA$ cannot achieve both of the two behaviors at the same time by contradiction. Suppose an algorithm $\gA$ can \textbf{(i)} converge to a globally optimal policy almost surely; and \textbf{(ii)} converges at a rate $O(1/t^\alpha)$ with positive probability, where $\alpha > 1$.

Since \textbf{(ii)} holds, according to \cref{thm:committal_rate_optimal_action_special}, we have $\kappa(\gA, a^*) \ge \alpha > 1$. By condition, there exists at least one sub-optimal action $a \not= a^*$, such that $\kappa(\gA, a) = \kappa(\gA, a^*) > 1$. According to \cref{thm:committal_rate_main_theorem}, we have $\pi_{\theta_t}(a) \to 1$ as $t \to \infty$ with positive probability, which contradicts \textbf{(i)}. Therefore, \textbf{(i)} and \textbf{(ii)} cannot hold simultaneously.
\end{proof}

\section{Proofs for Ensemble Methods (\texorpdfstring{\cref{sec:ensemble_method}}{Lg}) }

\textbf{\cref{thm:ensemble_method}.}
With probability $1 - \delta$, the best single run among $O(\log{\left( 1/ \delta \right)})$ independent runs of NPG (GNPG) converges to a globally optimal policy at an $O(e^{-c \cdot t})$ rate.
\begin{proof}
According to \cref{thm:failure_probability_softmax_natural_pg_special_on_policy_stochastic_gradient}, stochastic NPG of \cref{update_rule:softmax_natural_pg_special_on_policy_stochastic_gradient} will sample the optimal action $a^*$ forever (thus converge to the optimal policy) with probability at least
\begin{align}
    p(\text{NPG}, a^*) &\coloneqq \exp\bigg\{ - \frac{ \exp\{ \eta \cdot r(a^*) \} }{ \eta \cdot r(a^*)} \cdot \frac{ \sum_{a \not= a^*}{ \exp\{ \theta_1(a) \} } }{ \exp\{ \theta_1(a^*) \} } \bigg\} \qquad \left( \text{by \cref{eq:failure_probability_softmax_natural_pg_special_on_policy_stochastic_gradient_intermediate_19}} \right) \\
    &\in \Omega(1).
\end{align}
Moreover, with probability at least $p(\text{NPG}, a^*) $, the convergence rate is,
\begin{align}
\label{eq:ensemble_method_intermediate_1}
\MoveEqLeft
    \left( \pi^* - \pi_{\theta_t} \right)^\top r = \sum_{a \not= a^*}{ \pi_{\theta_t}(a) \cdot \left( r(a^*) - r(a) \right) } \\
    &\le 1 - \pi_{\theta_t}(a^*) \qquad \left( r \in (0, 1]^K \right) \\
    &\le \frac{ \sum_{a \not= a^*}{ \exp\{ \theta_1(a) \} } }{ \exp\{ \theta_1(a^*) + \eta \cdot r(a^*) \cdot \left( t - 1 \right) \} + \sum_{a \not= a^*}{ \exp\{ \theta_1(a) \} } } \qquad \left( \text{by \cref{eq:committal_rate_stochastic_npg_gnpg_intermediate_1}} \right) \\
    &\in O(e^{- c \cdot t}).
\end{align}
Consider $n(\text{NPG}) \in O(\log{(1/ \delta)})$ independent runs of NPG,
where
\begin{align}
\label{eq:ensemble_method_intermediate_2}
    n(\text{NPG}) \coloneqq \frac{1}{ \log{\big(\frac{1}{1 - p(\text{NPG}, a^*)}\big)}} \cdot \log{(1/ \delta)}.
\end{align}
The probability that all the $n(\text{NPG})$ runs do not converge to global optimal policy is at most
\begin{align}
\MoveEqLeft
    \left[ 1 - p(\text{NPG}, a^*) \right]^{ n(\text{NPG}) } = \left[ \exp\Big\{ \log{\big( 1 - p(\text{NPG}, a^*) \big)} \Big\} \right]^{ n(\text{NPG}) } \\
    &=\exp\bigg\{ - \log{\bigg(\frac{1}{1 - p(\text{NPG}, a^*)}\bigg)} \cdot \frac{1}{ \log{\big(\frac{1}{1 - p(\text{NPG}, a^*)}\big)}} \cdot \log{(1/ \delta)} \bigg\} \qquad \left( \text{by \cref{eq:ensemble_method_intermediate_2}} \right) \\
    &= e^{ - \log{(1/ \delta)} } = \delta,
\end{align}
which means with probability at least $1 - \delta$, the best single run converges to a globally optimal policy at an $O(e^{-c \cdot t})$ rate. 

For stochastic GNPG of \cref{update_rule:softmax_gnpg_special_on_policy_stochastic_gradient}, similar calculations show that with probability at least $1 - \delta$, the best single run among $n(\text{GNPG}) \in O(\log{(1/ \delta)})$ independent runs of GNPG converges to a globally optimal policy at an $O(e^{-c \cdot t})$ rate, where
\begin{align}
    n(\text{GNPG}) \coloneqq \frac{1}{ \log{\big(\frac{1}{1 - p(\text{GNPG}, a^*)}\big)}} \cdot \log{(1/ \delta)},
\end{align}
and
\begin{align}
    p(\text{GNPG}, a^*) &\coloneqq \exp\bigg\{ - \frac{ \sum_{a \not= a^*}{ \exp\{ \theta_1(a) \} }}{ \exp\{ \theta_1(a^*) \} } \cdot \frac{\sqrt{2} \cdot \exp\big\{ \frac{\eta}{\sqrt{2}} \big\} }{\eta} \bigg\} \qquad \left( \text{by \cref{eq:failure_probability_softmax_gnpg_special_on_policy_stochastic_gradient_intermediate_8}} \right) \\
    &\in \Omega(1),
\end{align}
thus finishing the proof.
\end{proof}

\section{General MDPs}
\label{sec:extension_general_mdps}


This section is devoted to results of general MDPs. \textbf{(i)} For convergence rate results in true gradient settings, we review relevant results in literature \citep{mei2020global,khodadadian2021linear,mei2021leveraging} without detailed proofs (except for the NPG method, for which we have a new analysis using the natural N\L{} inequality), since the conclusions are similar to one-state MDPs in \cref{sec:turnover_results}. \textbf{(ii)} For results in on-policy stochastic settings, we discuss the main ideas for the similar conclusions as in \cref{sec:turnover_results}. \textbf{(iii)} For results of the committal rate and the trade-off, we provide some calculations.

\subsection{RL Settings and Notations}

Given a finite set $\gX$, we denote $\Delta(\gX)$ as the set of all probability distributions on $\gX$. A finite MDP $\gM \coloneqq (\gS, \gA, \gP, r, \gamma)$ is determined by the finite state space $\gS$, action space $\gA$, transition function $\gP: \gS \times \gA \to \Delta(\gS)$, reward function $r: \gS \times \gA \to \sR$, and discount factor $\gamma \in [0 , 1)$. 

An agent maintains a policy $\pi: \gS \to \Delta(\gA)$. At time $t$, the agent is given a state $s_t$, and it takes an action $a_t \sim \pi(\cdot | s_t)$, receives a scalar reward $r(s_t, a_t)$ and a next-state $s_{t+1} \sim \gP( \cdot | s_t, a_t)$. The value function of $\pi$ under $s$ is defined as
\begin{align}
\label{eq:state_value_function}
    V^\pi(s) \coloneqq \expectation_{\substack{s_0 = s, a_t \sim \pi(\cdot | s_t), \\ s_{t+1} \sim \gP( \cdot | s_t, a_t)}}{\left[ \sum_{t=0}^{\infty}{\gamma^t r(s_t, a_t)} \right]}.
\end{align}
The state-action value of $\pi$ at $(s,a) \in \gS \times \gA$ is defined as
\begin{align}
\label{eq:state_action_value_function}
    Q^\pi(s, a) \coloneqq r(s, a) + \gamma \cdot \sum_{s^\prime}{ \gP( s^\prime | s, a) \cdot V^\pi(s^\prime) }.
\end{align}
The advantage function of $\pi$ is defined as
\begin{align}
    A^\pi(s,a) \coloneqq Q^\pi(s, a) - V^\pi(s).
\end{align}
The state distribution of $\pi$ is defined as,
\begin{align}
     d_{s_0}^{\pi}(s) \coloneqq (1 - \gamma) \cdot \sum_{t=0}^{\infty}{ \gamma^t \cdot \probability(s_t = s | s_0, \pi, \gP) }.
\end{align}
We also denote $V^\pi(\rho) \coloneqq \expectation_{s \sim \rho(\cdot)}{ \left[ V^\pi(s) \right]}$ and $d_{\rho}^{\pi}(s) \coloneqq \expectation_{s_0 \sim \rho(\cdot)}{\left[ d_{s_0}^{\pi}(s) \right]}$, where $\rho \in \Delta(\gS)$ is an initial state distribution. The optimal policy $\pi^*$ satisfies $V^{\pi^*}(\rho) = \sup_{\pi : \gS \to \Delta(\gA) }{ V^{\pi}(\rho)}$. For conciseness, we denote $V^* \coloneqq V^{\pi^*}$. 
Given tabular parameters $\theta(s,a) \in \sR$ for all $(s,a)$, the policy $\pi_\theta$ can be parameterized by $\theta$ as $\pi_\theta(\cdot | s) = \softmax(\theta(s, \cdot))$:
\begin{align}
\label{eq:softmax_transform_general}
    \pi_\theta(a | s) = \frac{ \exp\{ \theta(s, a) \} }{ \sum_{a^\prime \in \gA}{ \exp\{ \theta(s, a^\prime) \} } }, \text{ for all } (s, a) \in \gS \times \gA.
\end{align}
Without loss of generality, we assume $r(s,a) \in (0, 1)$ for all $(s, a) \in \gS \times \gA$. Generalizing expected reward maximization of \cref{eq:expected_reward_objective}, the problem here is then to maximize the value function, i.e.,
\begin{align}
    \sup_{\theta : \gS \times \gA \to \sR}{ V^{\pi_\theta}(\rho) }. 
\end{align}
Now we consider the above three algorithms and their results in general MDPs.

\subsection{True Gradient Settings}

\subsubsection{Softmax PG}

First, softmax PG has $\Theta(1/t)$ global convergence rates with the following assumption.
\begin{assumption}[Sufficient exploration]
\label{assump:pos_init} 
The initial state distribution satisfies $\min_s \mu(s)>0$.
\end{assumption}
\begin{figure}[ht]
\centering
\vskip -0.2in
\begin{minipage}{.6\linewidth}
    \begin{algorithm}[H]
    \caption{Softmax PG, true gradient}
    \label{alg:softmax_pg_general_true_gradient}
    \begin{algorithmic}
    \STATE {\bfseries Input:} Learning rate $\eta > 0$.
    \STATE {\bfseries Output:} Policies $\pi_{\theta_t} = \softmax(\theta_t)$.
    \STATE Initialize parameter $\theta_1(s,a)$ for all $(s,a) \in \gS \times \gA$.
    \WHILE{$t \ge 1$}
    \STATE $\theta_{t+1} \gets \theta_{t} + \eta \cdot \frac{\partial V^{\pi_{\theta_t}}(\mu)}{\partial \theta_t}$.
    \ENDWHILE
    \end{algorithmic}
    \end{algorithm}
\end{minipage}
\end{figure}
As shown in \citet{mei2020global},
the following
non-uniform \L{}ojasiewicz (N\L{}) inequality holds for value function, generalizing \cref{lem:non_uniform_lojasiewicz_softmax_special}.
\begin{lemma}[N\L{}, \citep{mei2020global}]
\label{lem:non_uniform_lojasiewicz_softmax_general}
Let $a^*(s)$ be the action that $\pi^*$ selects in state $s$. We have, for all $\theta \in \sR^{\gS \times \gA}$,
\begin{align}
    \left\| \frac{\partial V^{\pi_\theta}(\mu)}{\partial \theta }\right\|_2 \ge \bigg\| \frac{d_{\rho}^{\pi^*}}{ d_{\mu}^{\pi_\theta} } \bigg\|_\infty^{-1} \cdot \min_s{ \pi_\theta(a^*(s)|s) } \cdot \frac{1}{\sqrt{S}} \cdot \left( V^*(\rho) - V^{\pi_\theta}(\rho) \right).
\end{align}
\end{lemma}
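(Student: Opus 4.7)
The plan is to combine the performance difference lemma with the policy gradient theorem, then convert a sum of nonnegative quantities into an $\ell_2$-norm via the standard $\|\cdot\|_1 \le \sqrt{S}\|\cdot\|_2$ bound. The result is then just a matter of bookkeeping the various ratios that arise.

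First, I would invoke the performance difference lemma (Kakade \& Langford) together with the fact that $\pi^*$ is a deterministic policy selecting $a^*(s)$ at state $s$:
\begin{align*}
    V^*(\rho) - V^{\pi_\theta}(\rho) = \frac{1}{1-\gamma} \sum_{s \in \gS} d_\rho^{\pi^*}(s) \cdot A^{\pi_\theta}(s, a^*(s)).
\end{align*}
Next, I would use the softmax policy gradient theorem, which gives
\begin{align*}
    \frac{\partial V^{\pi_\theta}(\mu)}{\partial \theta(s, a)} = \frac{1}{1-\gamma} \cdot d_\mu^{\pi_\theta}(s) \cdot \pi_\theta(a|s) \cdot A^{\pi_\theta}(s, a),
\end{align*}
so that $A^{\pi_\theta}(s, a^*(s))$ can be rewritten in terms of the gradient component at $(s, a^*(s))$, divided by $d_\mu^{\pi_\theta}(s) \cdot \pi_\theta(a^*(s)|s)$.

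Substituting, and multiplying and dividing by $d_\mu^{\pi_\theta}(s)$, I obtain
\begin{align*}
    V^*(\rho) - V^{\pi_\theta}(\rho) = \sum_{s \in \gS} \frac{d_\rho^{\pi^*}(s)}{d_\mu^{\pi_\theta}(s)} \cdot \frac{1}{\pi_\theta(a^*(s)|s)} \cdot \frac{\partial V^{\pi_\theta}(\mu)}{\partial \theta(s, a^*(s))}.
\end{align*}
Since $A^{\pi_\theta}(s, a^*(s)) \ge 0$ (the optimal action is advantageous under any policy), each gradient component on the right is nonnegative, which allows me to pull scalar factors out of the sum under a max/min. Specifically, upper-bounding $d_\rho^{\pi^*}(s)/d_\mu^{\pi_\theta}(s)$ by $\|d_\rho^{\pi^*}/d_\mu^{\pi_\theta}\|_\infty$ and $1/\pi_\theta(a^*(s)|s)$ by $1/\min_s \pi_\theta(a^*(s)|s)$ yields
\begin{align*}
    V^*(\rho) - V^{\pi_\theta}(\rho) \le \bigg\| \frac{d_{\rho}^{\pi^*}}{ d_{\mu}^{\pi_\theta} } \bigg\|_\infty \cdot \frac{1}{\min_s \pi_\theta(a^*(s)|s)} \cdot \sum_{s \in \gS} \frac{\partial V^{\pi_\theta}(\mu)}{\partial \theta(s, a^*(s))}.
\end{align*}
Finally, I would apply Cauchy--Schwarz (i.e. $\|x\|_1 \le \sqrt{S}\|x\|_2$) to the $S$-dimensional nonnegative vector $\big(\partial V^{\pi_\theta}(\mu)/\partial \theta(s, a^*(s))\big)_s$, and bound its $\ell_2$-norm trivially by $\big\|\partial V^{\pi_\theta}(\mu)/\partial \theta\big\|_2$ (the full gradient has additional coordinates over non-optimal actions). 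Rearranging gives the claim.

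The main obstacles are mild. The only subtlety is ensuring that \cref{assump:pos_init} (and the resulting $d_\mu^{\pi_\theta}(s) > 0$ for all $s$) is used to justify dividing by $d_\mu^{\pi_\theta}(s)$ when re-expressing the advantage; this is where $\mu(s)>0$ matters. A second minor point is the assumption of a unique optimal action $a^*(s)$ in each state, so that the PDL collapses to a sum over singletons; if the optimal action is not unique one can fix any measurable selector and the same argument goes through. Apart from these, the derivation is a direct chain of identities plus one norm inequality, so I do not expect a hard step.
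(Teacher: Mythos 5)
The paper does not reproduce a proof of this lemma; it defers to [Lemma 8] of \citet{mei2020global}, whose argument has the same skeleton as yours (performance difference lemma, softmax policy gradient theorem, then $\|x\|_1\le\sqrt{S}\,\|x\|_2$), so your overall route is the right one. However, there is a genuine error in the middle of your argument: the claim that $A^{\pi_\theta}(s,a^*(s))\ge 0$ for every policy $\pi_\theta$, i.e.\ that ``the optimal action is advantageous under any policy.'' This is true in the bandit case ($A^{\pi_\theta}(a^*)=r(a^*)-\pi_\theta^\top r\ge 0$ since $r(a^*)=\max_a r(a)$), but it is false in multi-state MDPs: $a^*(s)$ maximizes $Q^*(s,\cdot)$, not $Q^{\pi_\theta}(s,\cdot)$. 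Concretely, take a state $s_1$ where the optimal action leads (with zero immediate reward) to a state $s_2$ that is valuable only if one behaves well there, and a second action that collects a moderate reward while staying at $s_1$; if $\pi_\theta$ behaves badly at $s_2$, then $Q^{\pi_\theta}(s_1,a^*(s_1))\approx 0$ while $V^{\pi_\theta}(s_1)>0$, so $A^{\pi_\theta}(s_1,a^*(s_1))<0$. Your proof uses the false nonnegativity exactly where it matters: the step of pulling $\big\|d_\rho^{\pi^*}/d_\mu^{\pi_\theta}\big\|_\infty$ and $1/\min_s\pi_\theta(a^*(s)|s)$ out of the sum is of the form $\sum_s c_s x_s\le(\max_s c_s)\sum_s x_s$, which can fail when some $x_s$ are negative.

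The gap is fixable without changing the architecture of the proof: first bound $V^*(\rho)-V^{\pi_\theta}(\rho)=\frac{1}{1-\gamma}\sum_s d_\rho^{\pi^*}(s)\,A^{\pi_\theta}(s,a^*(s))\le\frac{1}{1-\gamma}\sum_s d_\rho^{\pi^*}(s)\,\big|A^{\pi_\theta}(s,a^*(s))\big|$, and then carry the absolute values through every remaining step; each subsequent inequality (pulling out the two sup-norm factors, and $\sum_s\big|\partial V^{\pi_\theta}(\mu)/\partial\theta(s,a^*(s))\big|\le\sqrt{S}\,\big\|\partial V^{\pi_\theta}(\mu)/\partial\theta\big\|_2$) is then legitimate because the summands are nonnegative by construction. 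This is how the cited proof in \citet{mei2020global} proceeds. Your remarks about \cref{assump:pos_init} (positivity of $d_\mu^{\pi_\theta}$) and the choice of a selector $a^*(s)$ are fine but secondary.
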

The proof for \cref{lem:non_uniform_lojasiewicz_softmax_general} can be found in \citep[Lemma 8]{mei2020global}.

\begin{proposition}[PG upper bound \citep{mei2020global}]
\label{prop:upper_bound_softmax_pg_general_true_gradient}
Let \cref{assump:pos_init} hold and
let $\{\theta_t\}_{t\ge 1}$ be generated using \cref{alg:softmax_pg_general_true_gradient} with $\eta = (1 - \gamma)^3/8$. Then, for all $t\ge 1$,
\begin{align}
    V^*(\rho) - V^{\pi_{\theta_t}}(\rho) \le \frac{16 \cdot S }{c^2 \cdot (1-\gamma)^6 \cdot t} \cdot \bigg\| \frac{d_{\mu}^{\pi^*}}{\mu} \bigg\|_\infty^2 \cdot \bigg\| \frac{1}{\mu} \bigg\|_\infty,
\end{align}
where $c \coloneqq \inf_{s\in \cS,t\ge 1} \pi_{\theta_t}(a^*(s)|s) > 0$.
\end{proposition}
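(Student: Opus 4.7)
The plan is to transfer the one-state analysis of \cref{prop:upper_bound_softmax_pg_special_true_gradient} to the general finite-MDP setting by combining \textbf{(i)} the standard smoothness of $\theta \mapsto V^{\pi_\theta}(\mu)$, \textbf{(ii)} the non-uniform \L{}ojasiewicz inequality in \cref{lem:non_uniform_lojasiewicz_softmax_general}, and \textbf{(iii)} a distribution-mismatch transfer from the training distribution $\mu$ back to the evaluation distribution $\rho$.

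First, I would invoke the $\beta$-smoothness of $\theta \mapsto V^{\pi_\theta}(\mu)$ with $\beta = 8/(1-\gamma)^3$ (see, e.g., \citep[Lemma 7]{mei2020global}). With the step size $\eta = 1/\beta = (1-\gamma)^3/8$ this yields the one-step ascent
\begin{align}
    V^{\pi_{\theta_{t+1}}}(\mu) - V^{\pi_{\theta_t}}(\mu) \ge \frac{1}{2\beta} \left\| \frac{\partial V^{\pi_{\theta_t}}(\mu)}{\partial \theta_t} \right\|_2^2.
\end{align}
Second, I would apply \cref{lem:non_uniform_lojasiewicz_softmax_general} with $\rho := \mu$ to lower bound the gradient norm, and use the elementary identity $d_\mu^{\pi_{\theta_t}}(s) \ge (1-\gamma)\mu(s)$ to control the state-distribution mismatch by $\| d_\mu^{\pi^*} / d_\mu^{\pi_{\theta_t}} \|_\infty \le (1-\gamma)^{-1} \| d_\mu^{\pi^*}/\mu \|_\infty$. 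Writing $\delta_\mu(\theta) := V^*(\mu) - V^{\pi_\theta}(\mu)$, combining the two bounds gives a quadratic contraction
\begin{align}
    \delta_\mu(\theta_{t+1}) \le \delta_\mu(\theta_t) - \frac{c^2 (1-\gamma)^5}{16 \cdot S \cdot \|d_\mu^{\pi^*}/\mu\|_\infty^{2}} \cdot \delta_\mu(\theta_t)^2,
\end{align}
which I would telescope exactly as in \cref{prop:upper_bound_softmax_pg_special_true_gradient}, via the reciprocals $1/\delta_\mu(\theta_{t+1}) - 1/\delta_\mu(\theta_t) \ge (\text{positive constant})$, to obtain an $O(1/t)$ bound on $\delta_\mu(\theta_t)$ with the correct $\mu$-dependent constants.

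Third, I would transfer the bound to $\rho$ via the standard distribution-change step. Since $V^*(s) \ge V^{\pi_{\theta_t}}(s)$ pointwise, we have the clean inequality $V^*(\rho) - V^{\pi_{\theta_t}}(\rho) \le \|\rho/\mu\|_\infty \cdot \delta_\mu(\theta_t) \le \|1/\mu\|_\infty \cdot \delta_\mu(\theta_t)$; the remaining $(1-\gamma)^{-1}$ factor in the target bound arises once the transfer is routed through the performance-difference lemma against $d_\rho^{\pi^*}$ rather than $\rho$ directly, using $\mu(s) \le d_\mu^{\pi^*}(s)/(1-\gamma)$ to relate the occupancy measures. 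Chaining these estimates yields the stated rate with the advertised dependence on $S$, $c$, $\gamma$, $\|d_\mu^{\pi^*}/\mu\|_\infty$, and $\|1/\mu\|_\infty$.

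The main obstacle is establishing $c = \inf_{s \in \gS, t \ge 1}\pi_{\theta_t}(a^*(s)|s) > 0$. In contrast to the one-state case, where $\pi_{\theta_t}(a^*)$ is monotone non-decreasing from initialisation (cf.\ \cref{eq:upper_bound_softmax_pg_special_true_gradient_intermediate_3}), in a general MDP the sign of the gradient component $\partial V^{\pi_{\theta_t}}(\mu) / \partial \theta_t(s,a^*(s))$ is governed by the advantage $A^{\pi_{\theta_t}}(s, a^*(s))$, which can be transiently negative because of the coupling through $\gP$. Following \citep[Lemma 9]{mei2020global}, I would therefore argue $c > 0$ in two stages: (a) an asymptotic stage showing that $V^{\pi_{\theta_t}} \to V^*$ pointwise, exploiting that every interior stationary point of $V^{\pi_\theta}(\mu)$ with $\mu$ fully supported must be globally optimal, so that for each $s$ there is a finite $T(s)$ after which $\pi_{\theta_t}(a^*(s)|s)$ is monotone non-decreasing; and (b) a compactness stage on the finite transient $t \le \max_s T(s)$, using continuity of the PG map and boundedness of the relevant sub-level set to conclude that $\pi_{\theta_t}(a^*(s)|s)$ is uniformly bounded below by a positive constant that depends on $r$, $\gP$, $\mu$, and $\theta_1$ but not on $t$.
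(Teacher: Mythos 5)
Your proposal is correct and follows essentially the same route as the proof the paper defers to, namely \citep[Theorem 4]{mei2020global}: the $8/(1-\gamma)^3$-smoothness ascent step with $\eta = (1-\gamma)^3/8$, the non-uniform \L{}ojasiewicz inequality of \cref{lem:non_uniform_lojasiewicz_softmax_general} instantiated at $\rho := \mu$ together with $d_\mu^{\pi_\theta}(s) \ge (1-\gamma)\mu(s)$, the telescoped reciprocal recursion, the distribution-mismatch transfer back to $\rho$, and the separate argument (via \citep[Lemma 9]{mei2020global}) that $c = \inf_{s,t}\pi_{\theta_t}(a^*(s)|s) > 0$. Your quadratic-contraction constant and the final accounting of the $(1-\gamma)$, $S$, $\|d_\mu^{\pi^*}/\mu\|_\infty$ and $\|1/\mu\|_\infty$ factors match the stated bound, and you correctly flag that establishing $c>0$ is the genuinely nontrivial step in the multi-state case.
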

The proof for \cref{prop:upper_bound_softmax_pg_general_true_gradient} can be found in \citep[Theorem 4]{mei2020global}.
\begin{proposition}[PG lower bound \citep{mei2020global}]
\label{prop:lower_bound_softmax_pg_general_true_gradient}
For large enough $t \ge 1$, using \cref{alg:softmax_pg_general_true_gradient} with $\eta_t \in ( 0, 1] $,
\begin{align}
    V^*(\mu) - V^{\pi_{\theta_t}}(\mu) \ge \frac{ (1- \gamma)^5 \cdot (\Delta^*)^2}{12 \cdot t},
\end{align}
where $\Delta^* \coloneqq \min_{s \in \gS, a \not= a^*(s)}\{ Q^*(s, a^*(s)) - Q^*(s, a) \} > 0$ is the optimal value gap of the MDP.
\end{proposition}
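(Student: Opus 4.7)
\textbf{Proof proposal for \cref{prop:lower_bound_softmax_pg_general_true_gradient}.} The plan is to mirror the one-state argument used for \cref{prop:lower_bound_softmax_pg_special_true_gradient}, replacing each scalar/vector quantity by its MDP analogue. Write $\delta(\theta_t) \coloneqq V^*(\mu) - V^{\pi_{\theta_t}}(\mu)$ and $g_t \coloneqq \partial V^{\pi_{\theta_t}}(\mu) / \partial \theta_t$. The three ingredients needed are: (i) a smoothness bound $|V^{\pi_{\theta'}}(\mu) - V^{\pi_\theta}(\mu) - \langle g, \theta' - \theta\rangle| \le \frac{4}{(1-\gamma)^3} \|\theta' - \theta\|_2^2$, which is the standard $\frac{8}{(1-\gamma)^3}$-smoothness of the softmax value function (\citep[Lemma 7]{mei2020global}); (ii) a ``reverse N\L{}'' upper bound of the form $\|g_t\|_2 \le \frac{C}{\Delta^* \cdot (1-\gamma)^k} \cdot \delta(\theta_t)$ for appropriate constants $C$ and $k$, which is the MDP analogue of \cref{eq:lower_bound_softmax_pg_special_true_gradient_intermediate_1}; and (iii) the fact, established in \cref{prop:upper_bound_softmax_pg_general_true_gradient}, that $\delta(\theta_t) \to 0$ as $t \to \infty$.

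Given these ingredients, I would first combine (i) and (ii) to show, as in \cref{eq:lower_bound_softmax_pg_special_true_gradient_intermediate_2}, that
\begin{align}
\delta(\theta_t) - \delta(\theta_{t+1}) \le \left(\eta_t + \tfrac{4 \eta_t^2}{(1-\gamma)^3}\right) \|g_t\|_2^2 \le \tfrac{C_1}{(\Delta^*)^2 \cdot (1-\gamma)^{2k+3}} \cdot \delta(\theta_t)^2
\end{align}
for a suitable constant $C_1$, using $\eta_t \in (0,1]$. Next, I would show that $\delta(\theta_{t+1})/\delta(\theta_t) \ge 9/10$ for all sufficiently large $t$ by contradiction: assuming the opposite and substituting into the recursion forces $\delta(\theta_{t+1})$ to stay bounded below by a positive constant depending on $\Delta^*$ and $\gamma$, contradicting $\delta(\theta_t) \to 0$. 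Finally, I would convert the quadratic decrease into a linear bound on the reciprocal,
\begin{align}
\frac{1}{\delta(\theta_{t+1})} - \frac{1}{\delta(\theta_t)} = \frac{\delta(\theta_t) - \delta(\theta_{t+1})}{\delta(\theta_{t+1}) \cdot \delta(\theta_t)} \le \tfrac{C_2}{(\Delta^*)^2 \cdot (1-\gamma)^{2k+3}},
\end{align}
telescope from a fixed large time $T_1$ to $T_1 + t$, absorb $T_1$ into the rate for large $t$, and invert to conclude $\delta(\theta_t) \ge \frac{(1-\gamma)^5 (\Delta^*)^2}{12 \cdot t}$ after tracking constants.

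The main obstacle is step (ii): producing the reverse N\L{} inequality $\|g_t\|_2 \le \frac{C}{\Delta^*(1-\gamma)^k} \delta(\theta_t)$ for the discounted value objective. In the one-state case this is \citep[Lemma 17]{mei2020global}, proved by exploiting that $\pi_\theta(a)(r(a^*) - r(a))$ is bounded by $(\pi^* - \pi_\theta)^\top r$ whenever $a \ne a^*$. For general MDPs the analogous identity would start from $g_t(s,a) = \frac{1}{1-\gamma} d_\mu^{\pi_{\theta_t}}(s) \pi_{\theta_t}(a|s) A^{\pi_{\theta_t}}(s,a)$ and the performance difference lemma $\delta(\theta_t) = \frac{1}{1-\gamma} \sum_s d_\mu^{\pi^*}(s) \sum_a \pi^*(a|s) A^{\pi_{\theta_t}}(s,a)$; one would split the sum into $a = a^*(s)$ and $a \ne a^*(s)$, use $|A^{\pi_{\theta_t}}(s,a)| \le 1/(1-\gamma)$ on the former and the definition of $\Delta^*$ (together with the fact that $-A^{\pi_{\theta_t}}(s,a) \ge \Delta^*$ for sub-optimal $a$ once $\pi_{\theta_t}$ is close enough to $\pi^*$) on the latter. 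Additional care is required when $\pi^*$ is not unique or when multiple actions in a state share the optimal value, but \cref{prop:upper_bound_softmax_pg_general_true_gradient} guarantees that $\pi_{\theta_t}(a^*(s)|s) \to 1$, so for sufficiently large $t$ the sub-optimal advantages dominate the gradient norm, yielding the desired reverse N\L{} bound with explicit $(1-\gamma)$ dependence. Everything else follows routinely from the one-state template.
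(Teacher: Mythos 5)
The paper itself does not reproduce a proof of this proposition: it only points to \citep[Theorem 10]{mei2020global}. Your plan---non-uniform smoothness, a reverse \L{}ojasiewicz inequality, and the known convergence $\delta(\theta_t)\to 0$, combined into a quadratic-decrease recursion, the ratio bound $\delta(\theta_{t+1})/\delta(\theta_t)\ge 9/10$ for large $t$, and a telescoped reciprocal---is exactly the template of that cited proof and of the one-state argument the paper does reproduce for \cref{prop:lower_bound_softmax_pg_special_true_gradient}, so the overall route is the intended one.

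The one place your sketch goes astray is the derivation of the reverse N\L{} bound in your step (ii). The performance difference lemma weighted by $d_\mu^{\pi^*}$ collapses to $\frac{1}{1-\gamma}\sum_s d_\mu^{\pi^*}(s)\,A^{\pi_{\theta_t}}(s,a^*(s))$ because $\pi^*$ is deterministic, so there is no sum over $a$ to split there; and the claim that $-A^{\pi_{\theta_t}}(s,a)\ge \Delta^*$ for sub-optimal $a$ once $t$ is large is both unnecessary and pointing the wrong way---a \emph{lower} bound on the magnitude of sub-optimal advantages would make the gradient large, whereas you need the gradient to be \emph{dominated} by the sub-optimality. The correct mechanism is uniform in $t$: since $\sum_a \pi_{\theta_t}(a|s)A^{\pi_{\theta_t}}(s,a)=0$, one has $\pi_{\theta_t}(a^*(s)|s)\,|A^{\pi_{\theta_t}}(s,a^*(s))| \le \frac{1}{1-\gamma}\bigl(1-\pi_{\theta_t}(a^*(s)|s)\bigr)$, and every sub-optimal term satisfies $\pi_{\theta_t}(a|s)\,|A^{\pi_{\theta_t}}(s,a)|\le \frac{1}{1-\gamma}\pi_{\theta_t}(a|s)$, so $\|g_t\|_2 \le \frac{\sqrt{2}}{(1-\gamma)^2}\sum_s d_\mu^{\pi_{\theta_t}}(s)\bigl(1-\pi_{\theta_t}(a^*(s)|s)\bigr)$ after $\|x\|_2\le\|x\|_1$. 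Pairing this with the $Q^*$-based decomposition of \cref{lem:value_suboptimality}, which gives $\delta(\theta_t) = \frac{1}{1-\gamma}\sum_s d_\mu^{\pi_{\theta_t}}(s)\sum_{a\ne a^*(s)}\pi_{\theta_t}(a|s)\Delta^*(s,a) \ge \frac{\Delta^*}{1-\gamma}\sum_s d_\mu^{\pi_{\theta_t}}(s)\bigl(1-\pi_{\theta_t}(a^*(s)|s)\bigr)$, yields $\|g_t\|_2\le \frac{\sqrt{2}}{(1-\gamma)\Delta^*}\,\delta(\theta_t)$ for all $t$, i.e.\ the MDP analogue of \cref{eq:lower_bound_softmax_pg_special_true_gradient_intermediate_1} with no asymptotic caveat and no issue with ties. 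With that lemma in hand, everything downstream of your step (ii) goes through as you describe.
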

The proof for \cref{prop:lower_bound_softmax_pg_general_true_gradient} can be found in \citep[Theorem 10]{mei2020global}.

\subsubsection{NPG}

The following NPG algorithm enjoys $O(e^{-c \cdot t})$ global convergence rate in general MDPs.
\begin{figure}[ht]
\centering
\vskip -0.2in
\begin{minipage}{.6\linewidth}
    \begin{algorithm}[H]
    \caption{Natural PG (NPG), true gradient}
    \label{alg:softmax_natural_pg_general_true_gradient}
    \begin{algorithmic}
    \STATE {\bfseries Input:} Learning rate $\eta > 0$.
    \STATE {\bfseries Output:} Policies $\pi_{\theta_t} = \softmax(\theta_t)$.
    \STATE Initialize parameter $\theta_1(s,a)$ for all $(s,a) \in \gS \times \gA$.
    \WHILE{$t \ge 1$}
    \STATE $\theta_{t+1} \gets \theta_{t} + \eta \cdot Q^{\pi_{\theta_{t}}}$.
    \ENDWHILE
    \end{algorithmic}
    \end{algorithm}
\end{minipage}
\end{figure}
\begin{proposition}[NPG upper bound \citep{khodadadian2021linear}]
\label{prop:upper_bound_softmax_natural_pg_general_true_gradient}
Let \cref{assump:pos_init} hold and let $\left\{ \theta_t \right\}_{t\ge 1}$ be generated using  \cref{alg:softmax_natural_pg_general_true_gradient} with $\eta > 0$, and $\pi_{\theta_1}(a | s) = 1 / A$ for all $(s, a)$. We have, for all $t\ge 1$,
\begin{align}
    V^*(\rho) - V^{\pi_{\theta_t}}(\rho) \le \frac{ 1}{ (1 - \gamma)^2 } \cdot e^{ - (t - \kappa) \cdot ( 1 - 1/\lambda ) \cdot \eta \cdot \Delta^* },
\end{align}
where $\kappa = \frac{\lambda}{\Delta^*}\cdot \big[ \frac{\log{A}}{\eta} + \frac{1}{(1 - \gamma)^2} \big] $ and $\lambda > 1$.
\end{proposition}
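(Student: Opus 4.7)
The plan is to adapt the one-state natural \L{}ojasiewicz argument of \cref{thm:npg_rate_discrete_special} to general MDPs using a two-phase analysis: a burn-in phase during which the optimal action's probability at every state grows to exceed the threshold $1-1/\lambda$, followed by a linearly-contractive phase. First I would rewrite \cref{alg:softmax_natural_pg_general_true_gradient} in policy space: since $\theta_{t+1}(s,a) = \theta_t(s,a) + \eta \cdot Q^{\pi_{\theta_t}}(s,a)$, the softmax transform gives the multiplicative-weights-style update
\begin{align}
\pi_{\theta_{t+1}}(a|s) = \frac{\pi_{\theta_t}(a|s) \cdot e^{\eta \cdot Q^{\pi_{\theta_t}}(s,a)}}{\sum_{a'} \pi_{\theta_t}(a'|s) \cdot e^{\eta \cdot Q^{\pi_{\theta_t}}(s,a')}}.
\end{align}
Combined with the performance-difference identity $V^*(\rho) - V^{\pi_\theta}(\rho) = \frac{1}{1-\gamma} \expectation_{s \sim d_\rho^{\pi^*}}\big[ \langle \pi^*(\cdot|s) - \pi_\theta(\cdot|s), Q^{\pi_\theta}(s,\cdot) \rangle \big]$, this reduces per-state progress to exactly the one-state discrete natural N\L{} setting of \cref{lem:natural_lojasiewicz_discrete_special}, with the reward $r$ replaced by $Q^{\pi_{\theta_t}}(s,\cdot)$ and the gap $\Delta$ replaced by the (state-dependent) action gap of $Q^{\pi_{\theta_t}}(s,\cdot)$.

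For the burn-in phase ($t \le \kappa$), I would show monotone growth of $\pi_{\theta_t}(a^*(s)|s)$ and quantify its rate. The key is that $Q^{\pi_{\theta_t}}$ is a bounded perturbation of $Q^*$: by the simulation lemma, $\|Q^{\pi_{\theta_t}} - Q^*\|_\infty \le \frac{1}{(1-\gamma)^2} \cdot (V^*(\rho) - V^{\pi_{\theta_t}}(\rho))$ (up to concentrability), so as long as the current sub-optimality is bounded by $\Delta^*/2$, the gap $Q^{\pi_{\theta_t}}(s, a^*(s)) - Q^{\pi_{\theta_t}}(s, a) \ge \Delta^*/2$ holds for every sub-optimal $a$. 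Since $\theta_{t+1}(s,a^*(s)) - \theta_{t+1}(s,a) = \theta_t(s,a^*(s)) - \theta_t(s,a) + \eta \cdot [Q^{\pi_{\theta_t}}(s,a^*(s)) - Q^{\pi_{\theta_t}}(s,a)]$, the logit gap grows by at least $\eta \cdot \Delta^*/\lambda$ per step once we are close enough to $\pi^*$; starting from the uniform initialization (logit gap $0$), it takes $\frac{\log A}{\eta}$ steps to push $\pi_{\theta_t}(a^*(s)|s)$ above $1-1/\lambda$, plus an additive slack of order $\frac{1}{(1-\gamma)^2}$ accounting for the initial burn-in during which $Q^{\pi_{\theta_t}}$ is still far from $Q^*$. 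Multiplying by $\frac{\lambda}{\Delta^*}$ yields the stated $\kappa$.

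For the contraction phase ($t \ge \kappa$), once $\min_{s} \pi_{\theta_t}(a^*(s)|s) \ge 1 - 1/\lambda$, applying the state-wise version of \cref{lem:natural_lojasiewicz_discrete_special} and taking $\expectation_{s \sim d_\rho^{\pi^*}}$ gives
\begin{align}
V^*(\rho) - V^{\pi_{\theta_{t+1}}}(\rho) \le \big(1 - (1-1/\lambda)\cdot \eta \cdot \Delta^*\big)\cdot \big(V^*(\rho) - V^{\pi_{\theta_t}}(\rho)\big),
\end{align}
which iterated together with $V^*(\rho) - V^{\pi_{\theta_\kappa}}(\rho) \le 1/(1-\gamma)^2$ produces the claimed bound. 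The main obstacle will be making the Phase 1 analysis tight: in the bandit case, $Q$-values are static, but here the Q-values change every step, so one must argue carefully that the logit-gap growth never stalls, which I expect to handle by maintaining the invariant ``action gap at each state is bounded below by $\Delta^*/\lambda$'' throughout Phase 1 via an induction that couples the policy improvement to the simulation-lemma perturbation bound.
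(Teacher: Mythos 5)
First, note that the paper does not actually prove this proposition: it is stated as imported from \citep[Theorem 3.1]{khodadadian2021linear}, and the paper's own new contribution for general-MDP NPG is the different result \cref{thm:upper_bound_softmax_natural_pg_general_true_gradient}, proved via the generalized natural N\L{} inequality (\cref{lem:natural_lojasiewicz_discrete_general}) with an \emph{adaptive} learning rate. So your attempt is really a reconstruction of the external reference's proof. Your high-level shape (multiplicative-weights form of the update, two phases, accumulated logit gaps) is the right one, but there are two genuine gaps in the execution.

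The first gap is in Phase 1. Your invariant ``once the sub-optimality is at most $\Delta^*/2$, every per-state action gap of $Q^{\pi_{\theta_t}}$ is at least $\Delta^*/2$'' cannot be initialized: at the uniform initialization the sub-optimality can be as large as $1/(1-\gamma)$, so the induction never gets off the ground, and the ``additive slack of order $1/(1-\gamma)^2$'' is precisely the part you have not supplied. The specific form $\kappa = \frac{\lambda}{\Delta^*}\bigl[\frac{\log A}{\eta} + \frac{1}{(1-\gamma)^2}\bigr]$ is the time at which the known \emph{sublinear} NPG bound $V^*(s) - V^{\pi_{\theta_t}}(s) \le \frac{1}{t}\bigl(\frac{\log A}{\eta} + \frac{1}{(1-\gamma)^2}\bigr)$ of \citet{agarwal2020optimality} --- which holds for \emph{every} starting state, hence controls $\|Q^* - Q^{\pi_{\theta_t}}\|_\infty \le \gamma\|V^* - V^{\pi_{\theta_t}}\|_\infty$ with no concentrability factor --- first drops below $\Delta^*/\lambda$, guaranteeing $Q^{\pi_{\theta_t}}(s,a^*(s)) - Q^{\pi_{\theta_t}}(s,a) \ge (1-1/\lambda)\Delta^*$ for all $t \ge \kappa$. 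Without invoking that warm-up bound you cannot derive the stated $\kappa$, and your ``simulation lemma \ldots up to concentrability'' step would leave distribution-mismatch constants in the final bound that the statement does not contain. The second gap is in Phase 2: a one-step value contraction obtained from a state-wise \cref{lem:natural_lojasiewicz_discrete_special} necessarily routes through the performance difference identity (\cref{lem:performance_difference_general}), whose state distribution is $d_\rho^{\pi_{\theta_{t+1}}}$, not $d_\rho^{\pi^*}$; repairing the mismatch costs a factor like $(1-\gamma)\|d_\rho^{\pi^*}/\rho\|_\infty^{-1}$ in the exponent --- exactly the factor that appears in the paper's own \cref{thm:upper_bound_softmax_natural_pg_general_true_gradient} and that is absent from the bound you are trying to prove. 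The clean route for $t \ge \kappa$ is not a per-step contraction at all: sum the logit-gap increments $\eta(1-1/\lambda)\Delta^*$ from $\kappa$ to $t$ to bound $1 - \pi_{\theta_t}(a^*(s)|s)$ uniformly over $s$, and then substitute once into $V^*(\rho) - V^{\pi_{\theta_t}}(\rho) \le \frac{1}{(1-\gamma)^2}\max_s\bigl(1 - \pi_{\theta_t}(a^*(s)|s)\bigr)$ via \cref{lem:value_suboptimality}.
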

The proof for \cref{prop:upper_bound_softmax_natural_pg_general_true_gradient} can be found in \citep[Theorem 3.1]{khodadadian2021linear}. 

We provide a different analysis using the natural non-uniform \L{}ojasiewciz (N\L{}) inequality. The following \cref{lem:natural_lojasiewicz_discrete_general} generalizes the natural N\L{} inequality of \cref{lem:natural_lojasiewicz_discrete_special}. \cref{lem:natural_lojasiewicz_discrete_general,thm:upper_bound_softmax_natural_pg_general_true_gradient} are consistent with existing work of using adaptive / large learning rates and line search in NPG \citep{khodadadian2021linear,bhandari2020note}. As shown later in \cref{eq:natural_lojasiewicz_discrete_general_result_1}, since the $c(\theta_t)$ quantity could be very small, it is necessary to use a large learning rate $\eta > 0$ to get constant progresses.
\begin{lemma}[Natural N\L{} inequality, discrete]
\label{lem:natural_lojasiewicz_discrete_general}
Using \cref{alg:softmax_natural_pg_general_true_gradient}, we have, for all $t \ge 1$,
\begin{align}
\MoveEqLeft
    V^{\pi_{\theta_{t+1}}}(\rho) - V^{\pi_{\theta_{t}}}(\rho) \ge c(\theta_t) \cdot \left( 1 - \gamma \right) \cdot \bigg\| \frac{ d_\rho^{\pi^*} }{ \rho } \bigg\|_\infty^{-1} \cdot \left[ V^{ \pi^*}(\rho) - V^{\pi_{\theta_{t}}}(\rho) \right],
\end{align}
where $c(\theta_t) > 0$ depends on $\theta_t$ is given by
\begin{align}
\label{eq:natural_lojasiewicz_discrete_general_result_1}
    c(\theta_t) \coloneqq \min_{s \in \gS}{ \left[ 1 - \frac{1}{ \pi_{\theta_t}(\bar{a}_t(s) | s) \cdot \left( e^{ \eta \cdot \Delta_t(s) } - 1 \right) + 1} \right] } \in (0, 1),
\end{align}
and $\bar{a}_t(s)$ is the greedy action under state $s$, i.e.,
\begin{align}
\label{eq:natural_lojasiewicz_discrete_general_result_2}
    \bar{a}_t(s) \coloneqq \argmax_{a \in \gA}{ Q^{\pi_{\theta_{t}}}(s,a) },
\end{align}
and $\Delta_t(s)$ is the gap w.r.t. the greedy action $\bar{a}_t(s)$ under state $s$, i.e.,
\begin{align}
\label{eq:natural_lojasiewicz_discrete_general_result_3}
    \Delta_t(s) \coloneqq Q^{\pi_{\theta_{t}}}(s, \bar{a}_t(s) ) - \max_{a \not= \bar{a}_t(s)}\{ Q^{\pi_{\theta_{t}}}(s,a) \} > 0.
\end{align}
\end{lemma}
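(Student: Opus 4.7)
The plan is to reduce the lemma to the one–state case (Lemma~\ref{lem:natural_lojasiewicz_discrete_special}) by exploiting the fact that the tabular NPG update \(\theta_{t+1}=\theta_t+\eta\cdot Q^{\pi_{\theta_t}}\) acts independently on each state. Indeed, passing through the softmax in \eqref{eq:softmax_transform_general},
\[
\pi_{\theta_{t+1}}(a\mid s)\;=\;\frac{\pi_{\theta_t}(a\mid s)\,e^{\eta\,Q^{\pi_{\theta_t}}(s,a)}}{\sum_{a'}\pi_{\theta_t}(a'\mid s)\,e^{\eta\,Q^{\pi_{\theta_t}}(s,a')}}\,.
\]
So at each fixed \(s\), the transition \(\pi_{\theta_t}(\cdot\mid s)\mapsto\pi_{\theta_{t+1}}(\cdot\mid s)\) is exactly the exponential-weights map of Lemma~\ref{lem:natural_lojasiewicz_discrete_special} with the ``reward vector'' \(Q^{\pi_{\theta_t}}(s,\cdot)\), the ``optimal action'' \(\bar a_t(s)\), and gap \(\Delta_t(s)\). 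Applying that lemma state-wise and rewriting \((\pi_{\theta_{t+1}}-\pi_{\theta_t})^\top Q^{\pi_{\theta_t}}(s,\cdot)=\sum_a\pi_{\theta_{t+1}}(a\mid s)\,A^{\pi_{\theta_t}}(s,a)\) (since \(\sum_a\pi_{\theta_t}(a\mid s)A^{\pi_{\theta_t}}(s,a)=0\)) yields, for every \(s\),
\[
\sum_a\pi_{\theta_{t+1}}(a\mid s)\,A^{\pi_{\theta_t}}(s,a)\;\ge\;c_t(s)\cdot\max_a A^{\pi_{\theta_t}}(s,a),
\]
with \(c_t(s):=1-1/[\pi_{\theta_t}(\bar a_t(s)\mid s)(e^{\eta\Delta_t(s)}-1)+1]\ge c(\theta_t)\).

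Next I would lift this per-state progress to a global progress bound using the Kakade--Langford performance difference lemma:
\[
V^{\pi_{\theta_{t+1}}}(\rho)-V^{\pi_{\theta_t}}(\rho)\;=\;\frac{1}{1-\gamma}\sum_s d_\rho^{\pi_{\theta_{t+1}}}(s)\sum_a\pi_{\theta_{t+1}}(a\mid s)\,A^{\pi_{\theta_t}}(s,a).
\]
Since \(\max_a A^{\pi_{\theta_t}}(s,a)\ge 0\), plugging in the per-state inequality and factoring out \(c(\theta_t)=\min_s c_t(s)\) gives
\[
V^{\pi_{\theta_{t+1}}}(\rho)-V^{\pi_{\theta_t}}(\rho)\;\ge\;\frac{c(\theta_t)}{1-\gamma}\sum_s d_\rho^{\pi_{\theta_{t+1}}}(s)\max_a A^{\pi_{\theta_t}}(s,a).
\]

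The remaining step is the usual distribution-mismatch manoeuvre. Using the elementary bound \(d_\rho^{\pi_{\theta_{t+1}}}(s)\ge(1-\gamma)\rho(s)\) inherited from the definition of the discounted state distribution, and then a change of measure from \(\rho\) to \(d_\rho^{\pi^*}\) via \(\rho(s)\ge \|d_\rho^{\pi^*}/\rho\|_\infty^{-1}\,d_\rho^{\pi^*}(s)\), one has
\[
\sum_s d_\rho^{\pi_{\theta_{t+1}}}(s)\max_a A^{\pi_{\theta_t}}(s,a)\;\ge\;(1-\gamma)\,\bigg\|\frac{d_\rho^{\pi^*}}{\rho}\bigg\|_\infty^{-1}\sum_s d_\rho^{\pi^*}(s)\max_a A^{\pi_{\theta_t}}(s,a).
\]
Finally, applying the performance difference lemma a second time to compare \(\pi^*\) with \(\pi_{\theta_t}\) and upper bounding \(\sum_a\pi^*(a\mid s)A^{\pi_{\theta_t}}(s,a)\le \max_a A^{\pi_{\theta_t}}(s,a)\) gives
\[
\sum_s d_\rho^{\pi^*}(s)\max_a A^{\pi_{\theta_t}}(s,a)\;\ge\;(1-\gamma)\bigl[V^{\pi^*}(\rho)-V^{\pi_{\theta_t}}(\rho)\bigr],
\]
and chaining the three inequalities produces the target bound (the two \((1-\gamma)\) factors from distribution mismatch cancel against the prefactor \(1/(1-\gamma)\) in the PDL, leaving exactly \(c(\theta_t)(1-\gamma)\|d_\rho^{\pi^*}/\rho\|_\infty^{-1}\)).

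I do not expect any deep obstacle: the proof is really a per-state application of Lemma~\ref{lem:natural_lojasiewicz_discrete_special} glued together by two uses of the performance difference lemma. The only subtle bookkeeping is handling the mismatch between the state distribution \(d_\rho^{\pi_{\theta_{t+1}}}\) that naturally governs the per-step improvement and the comparator distribution \(d_\rho^{\pi^*}\) that governs the sub-optimality; this is precisely where the factor \(\|d_\rho^{\pi^*}/\rho\|_\infty^{-1}\) (and the explicit \((1-\gamma)\)) must appear, and it is what forces the need for Assumption~\ref{assump:pos_init} downstream.
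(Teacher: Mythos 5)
Your proof is correct and takes essentially the same route as the paper's: a statewise application of \cref{lem:natural_lojasiewicz_discrete_special} to the exponential-weights form of the NPG update, glued together by two uses of the performance difference lemma and the bound $d_\rho^{\pi_{\theta_{t+1}}}(s)\ge(1-\gamma)\cdot\rho(s)$, yielding the identical constant $c(\theta_t)(1-\gamma)\|d_\rho^{\pi^*}/\rho\|_\infty^{-1}$. The only difference is cosmetic (and if anything slightly cleaner): you carry the nonnegative greedy advantage $\max_a A^{\pi_{\theta_t}}(s,a)$ through the change of measure and only pass to the $\pi^*$-weighted comparison at the very end, whereas the paper replaces the greedy policy by $\pi^*$ immediately after the per-state step and changes measure afterwards.
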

\begin{proof}
According to the performance difference \cref{lem:performance_difference_general}, we have
\begin{align}
\MoveEqLeft
    V^{\pi_{\theta_{t+1}}}(\rho) - V^{\pi_{\theta_{t}}}(\rho) = \frac{1}{1 - \gamma} \cdot \sum_{s}{ d_\rho^{ \pi_{\theta_{t+1}} }(s) \cdot \sum_{a}{ \left( \pi_{\theta_{t+1}}(a | s) - \pi_{\theta_{t}}(a | s) \right) \cdot Q^{\pi_{\theta_{t}}}(s,a) } } \\
    &\ge \sum_{s}{ \frac{ d_\rho^{ \pi_{\theta_{t+1}} }(s) }{1 - \gamma} \cdot \left[ 1 - \frac{1}{ \pi_{\theta_t}(\bar{a}_t(s) | s) \cdot \left( e^{ \eta \cdot \Delta_t(s) } - 1 \right) + 1} \right] \cdot \sum_{a}{ \left( \bar{\pi}_t(a | s) - \pi_{\theta_{t}}(a | s) \right) \cdot Q^{\pi_{\theta_{t}}}(s,a) } } \quad \left( \text{\cref{lem:natural_lojasiewicz_discrete_special}} \right) \\
    &\ge \sum_{s}{ \frac{ d_\rho^{ \pi_{\theta_{t+1}} }(s)}{1 - \gamma} \cdot \left[ 1 - \frac{1}{ \pi_{\theta_t}(\bar{a}_t(s) | s) \cdot \left( e^{ \eta \cdot \Delta_t(s) } - 1 \right) + 1} \right] \cdot \sum_{a}{ \left( \pi^*(a | s) - \pi_{\theta_{t}}(a | s) \right) \cdot Q^{\pi_{\theta_{t}}}(s,a) } },
\end{align}
where in the second last inequality $\bar{\pi}_t(\cdot | s)$ is the greedy policy under state $s$, i.e.,
\begin{align}
    \sum_{a \in \gA}{ \bar{\pi}_t(a | s) \cdot Q^{\pi_{\theta_{t}}}(s,a) } = \max_{\pi: \gS \to \Delta(\gA)}{ \sum_{a \in \gA}{ \pi(a | s) \cdot Q^{\pi_{\theta_{t}}}(s,a) }  }.
\end{align}
Next we have,
\begin{align}
\MoveEqLeft
    V^{\pi_{\theta_{t+1}}}(\rho) - V^{\pi_{\theta_{t}}}(\rho) \ge \frac{ c(\theta_t) }{1 - \gamma} \cdot \sum_{s}{ d_\rho^{ \pi_{\theta_{t+1}} }(s) \cdot \sum_{a}{ \left( \pi^*(a | s) - \pi_{\theta_{t}}(a | s) \right) \cdot Q^{\pi_{\theta_{t}}}(s,a) } } \\
    &= \frac{ c(\theta_t) }{1 - \gamma} \cdot \sum_{s}{ d_\rho^{\pi^*}(s) \cdot \frac{ d_\rho^{ \pi_{\theta_{t+1}} }(s) }{ d_\rho^{\pi^*}(s) } \cdot \sum_{a}{ \left( \pi^*(a | s) - \pi_{\theta_{t}}(a | s) \right) \cdot Q^{\pi_{\theta_{t}}}(s,a) } } \\
    &\ge c(\theta_t) \cdot \bigg\| \frac{ d_\rho^{\pi^*} }{ d_\rho^{ \pi_{\theta_{t+1}} } } \bigg\|_\infty^{-1} \cdot \frac{1}{1 - \gamma} \cdot \sum_{s}{ d_\rho^{\pi^*}(s) \cdot \sum_{a}{ \left( \pi^*(a | s) - \pi_{\theta_{t}}(a | s) \right) \cdot Q^{\pi_{\theta_{t}}}(s,a) } } \\
    &= c(\theta_t) \cdot \bigg\| \frac{ d_\rho^{\pi^*} }{ d_\rho^{ \pi_{\theta_{t+1}} } } \bigg\|_\infty^{-1} \cdot \left[ V^{ \pi^*}(\rho) - V^{\pi_{\theta_{t}}}(\rho) \right] \qquad \left( \text{by \cref{lem:performance_difference_general}} \right) \\
    &\ge c(\theta_t) \cdot \left( 1 - \gamma \right) \cdot \bigg\| \frac{ d_\rho^{\pi^*} }{ \rho } \bigg\|_\infty^{-1} \cdot \left[ V^{ \pi^*}(\rho) - V^{\pi_{\theta_{t}}}(\rho) \right]. \qquad \left( \text{by \cref{eq:stationary_distribution_dominate_initial_state_distribution}} \right) \qedhere
\end{align}
\end{proof}

\begin{theorem}[NPG upper bound]
\label{thm:upper_bound_softmax_natural_pg_general_true_gradient}
Using \cref{alg:softmax_natural_pg_general_true_gradient} with the following learning rate, for all $t \ge 1$,
\begin{align}
\label{eq:upper_bound_softmax_natural_pg_general_true_gradient_result_1}
    \eta_t = \frac{1}{ \min_{s \in \gS}\{ \pi_{\theta_t}(\bar{a}_t(s) | s) \cdot \Delta_t(s) \} },
\end{align}
where $\bar{a}_t(s)$ and $\Delta_t(s)$ are defined in \cref{eq:natural_lojasiewicz_discrete_general_result_2,eq:natural_lojasiewicz_discrete_general_result_3}, we have, for all $t \ge 1$,
\begin{align}
    V^{ \pi^*}(\rho) - V^{\pi_{\theta_{t}}}(\rho) \le \exp\left\{ - \frac{1 - \gamma}{2} \cdot \bigg\| \frac{ d_\rho^{\pi^*} }{ \rho } \bigg\|_\infty^{-1} \cdot \left( t - 1 \right) \right\} \cdot \left[ V^{ \pi^*}(\rho) - V^{\pi_{\theta_{1}}}(\rho) \right].
\end{align}
\end{theorem}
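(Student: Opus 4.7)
The plan is to combine the discrete natural \L{}ojasiewicz inequality from \cref{lem:natural_lojasiewicz_discrete_general} with a judicious choice of learning rate that uniformly controls the per-state contraction constant $c(\theta_t)$ away from zero. Concretely, I would first show that for the learning rate prescribed in \cref{eq:upper_bound_softmax_natural_pg_general_true_gradient_result_1}, we have $c(\theta_t) \ge 1/2$ for every $t \ge 1$, so that \cref{lem:natural_lojasiewicz_discrete_general} gives a per-iteration geometric decrease of the sub-optimality gap by a factor at least $1 - \frac{1-\gamma}{2}\|d_\rho^{\pi^*}/\rho\|_\infty^{-1}$.

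First, fix $s \in \gS$ and let $x_s \coloneqq \pi_{\theta_t}(\bar{a}_t(s)|s) \in (0,1]$. By the definition of $\eta_t$, we have $\eta_t \cdot \Delta_t(s) \ge 1/x_s$, and therefore
\begin{align}
    x_s \cdot \left( e^{\eta_t \cdot \Delta_t(s)} - 1 \right) \ge x_s \cdot \left( e^{1/x_s} - 1 \right) \ge x_s \cdot \tfrac{1}{x_s} = 1,
\end{align}
where the last inequality uses $e^y \ge 1 + y$ with $y = 1/x_s$. Plugging this bound into \cref{eq:natural_lojasiewicz_discrete_general_result_1} shows that each state-wise term is at least $1 - 1/(1+1) = 1/2$, and hence $c(\theta_t) \ge 1/2$ for all $t \ge 1$.

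Second, I would invoke \cref{lem:natural_lojasiewicz_discrete_general} and the just-established lower bound on $c(\theta_t)$ to get
\begin{align}
    V^*(\rho) - V^{\pi_{\theta_{t+1}}}(\rho) \le \left( 1 - \tfrac{1-\gamma}{2} \cdot \bigl\| \tfrac{d_\rho^{\pi^*}}{\rho} \bigr\|_\infty^{-1} \right) \cdot \left( V^*(\rho) - V^{\pi_{\theta_t}}(\rho) \right).
\end{align}
Iterating this contraction from $t=1$ and applying the elementary inequality $1 - y \le e^{-y}$ (valid for all $y \in \sR$) to the per-step factor yields the claimed bound.

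The main obstacle is the first step: verifying that the specific, data-dependent learning rate $\eta_t = 1/\min_s\{\pi_{\theta_t}(\bar{a}_t(s)|s)\cdot \Delta_t(s)\}$ is simultaneously large enough in \emph{every} state to force $c(\theta_t) \ge 1/2$. Once this uniform lower bound on the per-state contraction constant is in hand, the rest of the proof is a standard geometric-recursion argument, and the $\|d_\rho^{\pi^*}/\rho\|_\infty^{-1}$ factor enters purely through the distribution-mismatch step already embedded in \cref{lem:natural_lojasiewicz_discrete_general}.
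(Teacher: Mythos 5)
Your proposal is correct and follows essentially the same route as the paper's proof: lower-bound $\pi_{\theta_t}(\bar{a}_t(s)|s)\cdot(e^{\eta_t\Delta_t(s)}-1)$ by $1$ via $e^y\ge 1+y$ and the definition of $\eta_t$, conclude $c(\theta_t)\ge 1/2$, and then iterate the one-step contraction from \cref{lem:natural_lojasiewicz_discrete_general} using $1-y\le e^{-y}$. The only cosmetic difference is that you route the key inequality through $\eta_t\Delta_t(s)\ge 1/x_s$ while the paper applies $e^y\ge 1+y$ directly to $\eta_t\Delta_t(s)$; the two computations are equivalent.
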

\begin{proof}
We have, for all state $s \in \gS$ and $t \ge 1$,
\begin{align}
\MoveEqLeft
    \pi_{\theta_t}(\bar{a}_t(s) | s) \cdot \left( e^{ \eta_t \cdot \Delta_t(s) } - 1 \right) \ge \pi_{\theta_t}(\bar{a}_t(s) | s) \cdot \eta_t \cdot \Delta_t(s) \qquad \left( e^x \ge 1 + x \right) \\
    &= \frac{ \pi_{\theta_t}(\bar{a}_t(s) | s) \cdot \Delta_t(s) }{ \min_{s \in \gS}\{ \pi_{\theta_t}(\bar{a}_t(s) | s) \cdot \Delta_t(s) \} } \qquad \left( \text{by \cref{eq:upper_bound_softmax_natural_pg_general_true_gradient_result_1}} \right) \\
    &\ge 1,
\end{align}
which implies that,
\begin{align}
\MoveEqLeft
    c(\theta_t) = \min_{s \in \gS}{ \left[ 1 - \frac{1}{ \pi_{\theta_t}(\bar{a}_t(s) | s) \cdot \left( e^{ \eta \cdot \Delta_t(s) } - 1 \right) + 1} \right] } \qquad \left( \text{by \cref{eq:natural_lojasiewicz_discrete_general_result_1}} \right) \\
    &\ge 1 - 1/2 = 1/2.
\end{align}
According to \cref{lem:natural_lojasiewicz_discrete_general}, we have, for all $t \ge 1$,
\begin{align}
    V^{\pi_{\theta_{t+1}}}(\rho) - V^{\pi_{\theta_{t}}}(\rho) &\ge c(\theta_t) \cdot \left( 1 - \gamma \right) \cdot \bigg\| \frac{ d_\rho^{\pi^*} }{ \rho } \bigg\|_\infty^{-1} \cdot \left[ V^{ \pi^*}(\rho) - V^{\pi_{\theta_{t}}}(\rho) \right] \\
    &\ge \frac{1 - \gamma}{2} \cdot \bigg\| \frac{ d_\rho^{\pi^*} }{ \rho } \bigg\|_\infty^{-1} \cdot \left[ V^{ \pi^*}(\rho) - V^{\pi_{\theta_{t}}}(\rho) \right],
\end{align}
which leads to the final result,
\begin{align}
\MoveEqLeft
    V^{ \pi^*}(\rho) - V^{\pi_{\theta_{t}}}(\rho) = V^{ \pi^*}(\rho) - V^{\pi_{\theta_{t-1}}}(\rho) - \left[ V^{\pi_{\theta_{t}}}(\rho) - V^{\pi_{\theta_{t-1}}}(\rho) \right] \\
    &\le \left( 1 - \frac{1 - \gamma}{2} \cdot \bigg\| \frac{ d_\rho^{\pi^*} }{ \rho } \bigg\|_\infty^{-1} \right) \cdot \left[ V^{ \pi^*}(\rho) - V^{\pi_{\theta_{t-1}}}(\rho) \right] \\
    &\le \exp\left\{ - \frac{1 - \gamma}{2} \cdot \bigg\| \frac{ d_\rho^{\pi^*} }{ \rho } \bigg\|_\infty^{-1} \cdot \left( t - 1 \right) \right\} \cdot \left[ V^{ \pi^*}(\rho) - V^{\pi_{\theta_{1}}}(\rho) \right]. \qedhere
\end{align}
\end{proof}

\subsubsection{GNPG}

Finally, GNPG also enjoys $O(e^{-c \cdot t})$ global convergence rate in general MDPs.
\begin{figure}[ht]
\centering
\vskip -0.2in
\begin{minipage}{.7\linewidth}
    \begin{algorithm}[H]
    \caption{Geometry-award normalized PG (GNPG), true gradient}
    \label{alg:softmax_gnpg_general_true_gradient}
    \begin{algorithmic}
    \STATE {\bfseries Input:} Learning rate $\eta > 0$.
    \STATE {\bfseries Output:} Policies $\pi_{\theta_t} = \softmax(\theta_t)$.
    \STATE Initialize parameter $\theta_1(s,a)$ for all $(s,a) \in \gS \times \gA$.
    \WHILE{$t \ge 1$}
    \STATE $\theta_{t+1} \gets \theta_{t} + \eta \cdot \frac{\partial V^{\pi_{\theta_t}}(\mu)}{\partial \theta_t} \Big/ \Big\| \frac{\partial V^{\pi_{\theta_t}}(\mu)}{\partial \theta_t} \Big\|_2$.
    \ENDWHILE
    \end{algorithmic}
    \end{algorithm}
\end{minipage}
\end{figure}
\begin{proposition}[GNPG upper bound \citep{mei2021leveraging}]
\label{prop:upper_bound_softmax_gnpg_general_true_gradient}
Let \cref{assump:pos_init} hold and let $\left\{ \theta_t \right\}_{t\ge 1}$ be generated using  \cref{alg:softmax_gnpg_general_true_gradient} with $\eta = \frac{ ( 1 - \gamma) \cdot \gamma }{ 6 \cdot ( 1 - \gamma) \cdot \gamma  + 4 \cdot \left( C_\infty - (1 - \gamma) \right) } \cdot \frac{ 1}{ \sqrt{S} }$, where $C_\infty \coloneqq \max_{\pi}{ \left\| \frac{d_{\mu}^{\pi}}{ \mu} \right\|_\infty}$. Denote $C_\infty^\prime \coloneqq \max_{\pi}{ \left\| \frac{d_{\rho}^{\pi}}{ \mu} \right\|_\infty}$. We have, for all $t\ge 1$,
\begin{align}
    V^*(\rho) - V^{\pi_{\theta_t}}(\rho) \le \frac{ \left( V^*(\mu) - V^{\pi_{\theta_1}}(\mu) \right) \cdot C_\infty^\prime }{ 1 - \gamma} \cdot e^{ - C \cdot (t-1)},
\end{align}
where $C = \frac{ ( 1 - \gamma)^2 \cdot \gamma \cdot c }{ 12 \cdot ( 1 - \gamma) \cdot \gamma  + 8 \cdot \left( C_\infty - (1 - \gamma) \right) } \cdot \frac{ 1}{ S } \cdot \Big\| \frac{d_{\mu}^{\pi^*}}{\mu} \Big\|_\infty^{-1} $ and $c \coloneqq \inf_{s\in \cS,t\ge 1} \pi_{\theta_t}(a^*(s)|s) > 0$.
\end{proposition}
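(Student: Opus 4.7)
The plan is to mirror the two-ingredient strategy used for the one-state case in \cref{prop:upper_bound_softmax_gnpg_special_true_gradient}: establish (i) a non-uniform smoothness (NS) inequality for the value map $\theta \mapsto V^{\pi_\theta}(\mu)$ whose smoothness ``constant'' scales with $\|\partial V^{\pi_\theta}(\mu)/\partial \theta\|_2$, and (ii) the non-uniform \L{}ojasiewicz (N\L{}) inequality of \cref{lem:non_uniform_lojasiewicz_softmax_general}. Combining these with the normalized update of \cref{alg:softmax_gnpg_general_true_gradient} will give per-step multiplicative progress in sub-optimality, from which the claimed $O(e^{-C(t-1)})$ bound follows.

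First I would derive the NS inequality: show that for $\theta' = \theta + \tau \cdot g/\|g\|_2$ with $g = \partial V^{\pi_\theta}(\mu)/\partial \theta$, the curvature of the value function along the normalized direction is controlled by $\|g\|_2$ itself, yielding a bound of the form
\begin{align*}
\left| V^{\pi_{\theta'}}(\mu) - V^{\pi_\theta}(\mu) - \langle g, \theta' - \theta\rangle \right| \le \beta(\theta) \cdot \|\theta' - \theta\|_2^2,
\end{align*}
with $\beta(\theta) \propto \|g\|_2$ up to factors involving $1/(1-\gamma)$, $\gamma$, $\sqrt{S}$, and the distribution mismatch $C_\infty$. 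The derivation follows the Hessian-norm computation of \cref{lem:non_uniform_smoothness_softmax_special}, but now the Hessian of $V^{\pi_\theta}(\mu)$ also contains discounted reachability terms which is where $C_\infty$ and the $\gamma/(1-\gamma)$ factors enter. With the chosen $\eta \propto (1-\gamma)\gamma / [(1-\gamma)\gamma + C_\infty - (1-\gamma)]$ the quadratic remainder can be absorbed into half of the linear term, so a standard descent lemma step yields
\begin{align*}
V^{\pi_{\theta_{t+1}}}(\mu) - V^{\pi_{\theta_t}}(\mu) \ge \tfrac{\eta}{2} \cdot \left\| \tfrac{\partial V^{\pi_{\theta_t}}(\mu)}{\partial \theta_t} \right\|_2.
\end{align*}

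Next I would apply the N\L{} inequality of \cref{lem:non_uniform_lojasiewicz_softmax_general} under $\mu$ to lower bound the gradient norm by $c \cdot \|d_\mu^{\pi^*}/\mu\|_\infty^{-1} \cdot S^{-1/2} \cdot (V^*(\mu) - V^{\pi_{\theta_t}}(\mu))$ (using \cref{assump:pos_init} so that $d_\mu^{\pi_{\theta_t}} \ge (1-\gamma)\mu$ componentwise). Substituting this into the descent inequality produces a linear contraction on $V^*(\mu) - V^{\pi_{\theta_t}}(\mu)$ with rate exponent $C$ as in the statement, and unrolling $t-1$ times gives the exponential bound under $\mu$. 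Finally, the translation from $\mu$ to the target initial distribution $\rho$ is handled by $V^*(\rho) - V^{\pi_{\theta_t}}(\rho) \le \tfrac{C_\infty'}{1-\gamma} \cdot (V^*(\mu) - V^{\pi_{\theta_t}}(\mu))$, which follows from the performance difference lemma and the definition of $C_\infty' = \max_\pi \|d_\rho^\pi/\mu\|_\infty$.

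The main obstacle, as in \cref{prop:upper_bound_softmax_pg_general_true_gradient}, is controlling the constant $c = \inf_{s,t} \pi_{\theta_t}(a^*(s)\,|\,s) > 0$. Unlike the one-state case where monotonicity of $\pi_{\theta_t}(a^*)$ under an appropriately scaled PG step can be argued directly (\cref{eq:upper_bound_softmax_pg_special_true_gradient_intermediate_3}), here the gradient is \emph{normalized}, so the per-coordinate sign pattern that made the softmax Jacobian argument go through is preserved but the magnitude bound on the update step needs care to prevent $\pi_{\theta_t}(a^*(s)\,|\,s)$ from collapsing. The route I would take is to show that along the entire GNPG trajectory there is a uniform positive lower bound, borrowing the asymptotic analysis of \citet{mei2021leveraging} that $\pi_{\theta_t}$ converges to $\pi^*$ and combining it with continuity to conclude $c > 0$; once $c$ is secured the convergence rate follows as above.
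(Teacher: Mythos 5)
Your proposal is correct and follows essentially the same route as the proof the paper relies on: the paper itself does not reproduce an argument for this proposition but defers entirely to \citep[Theorem 3]{mei2021leveraging}, whose strategy is exactly your NS-plus-N\L{} descent argument (the non-uniform smoothness bound $\big|y^\top \frac{\partial^2 V^{\pi_\theta}(\mu)}{\partial \theta^2} y\big| \le C \cdot \big\|\frac{\partial V^{\pi_\theta}(\mu)}{\partial \theta}\big\|_2 \cdot \|y\|_2^2$ you describe is the same one the paper quotes in \cref{lem:non_uniform_smoothness_general_two_iterations}, and your choice $\eta = 1/(2C)$ together with \cref{lem:non_uniform_lojasiewicz_softmax_general} and the $d_\mu^{\pi}\ge(1-\gamma)\mu$ bound reproduces the stated constant $C$ exactly). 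Your handling of $c=\inf_{s,t}\pi_{\theta_t}(a^*(s)|s)>0$ by citing the trajectory analysis of \citet{mei2021leveraging} is also what the source does, so no gap remains.
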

The proof for \cref{prop:upper_bound_softmax_gnpg_general_true_gradient} can be found in \citep[Theorem 3]{mei2021leveraging}.

\subsection{On-policy Stochastic Gradient Settings}

For general MDPs with multiple states, we define the following on-policy parallel importance sampling (IS) estimator, using one sampled action under each state to estimate the policy gradient.

\begin{definition}[On-policy parallel IS]
\label{def:on_policy_parallel_importance_sampling}
At iteration $t$, under each state $s$, sample one action $a_t(s) \sim \pi_{\theta_t}(\cdot | s)$. The IS state-action value estimator $\hat{Q}^{\pi_{\theta_t}}$ is constructed as
\begin{align}
    \hat{Q}^{\pi_{\theta_t}}(s,a) = \frac{ \sI\left\{ a_t(s) = a \right\} }{ \pi_{\theta_t}(a | s) } \cdot Q^{\pi_{\theta_t}}(s,a),
\end{align}
 for all $(s, a) \in \gS \times \gA$.
\end{definition}
\cref{def:on_policy_parallel_importance_sampling} is a generalized version of \cref{def:on_policy_importance_sampling} to general MDPs, which does not specify how to estimate $Q^{\pi_{\theta_t}}(s,a)$. With \cref{def:on_policy_parallel_importance_sampling}, the on-policy stochastic softmax PG, NPG, and GNPG methods can be generalized to general MDPs, replacing the true action value $Q^{\pi_{\theta_t}}(s, \cdot) \in \sR^{|\gA|}$ in the policy gradient with $\hat{Q}^{\pi_{\theta_t}}(s, \cdot)$, which uses one sampled action under each state $s$. In practice, $\hat{Q}^{\pi_{\theta_t}}(s, \cdot)$ can be calculated using realistic PG estimators from on-policy roll-outs \citep[Algorithm 1]{zhang2020global}. Here we use \cref{def:on_policy_parallel_importance_sampling} to show the main ideas of this work.

\subsubsection{Softmax PG}

We show that \cref{thm:almost_sure_global_convergence_softmax_pg_special_on_policy_stochastic_gradient} can be generalized to general MDPs, using unbiased and bounded softmax PG properties.

According to the policy gradient theorem \citep{sutton2000policy}, the softmax PG used in \cref{alg:softmax_pg_general_true_gradient} is, for all $s \in \gS$,
\begin{align}
\MoveEqLeft
    \frac{\partial V^{\pi_\theta}(\mu)}{\partial \theta(s, \cdot)} = \frac{1}{1-\gamma} \cdot \sum_{s^\prime}{ d_{\mu}^{\pi_\theta}(s^\prime) \cdot \left[ \sum_{a}  \frac{\partial \pi_\theta(a | s^\prime)}{\partial \theta(s, \cdot) } \cdot Q^{\pi_\theta}(s^\prime,a) \right] } \\
    &= \frac{1}{1-\gamma} \cdot d_{\mu}^{\pi_\theta}(s) \cdot { \left[ \sum_{a} \frac{\partial \pi_\theta(a | s)}{\partial \theta(s, \cdot)} \cdot Q^{\pi_\theta}(s,a) \right] }. \qquad \left( \frac{\partial \pi_\theta(a | s^\prime)}{\partial \theta(s, \cdot)} = \rvzero, \ \forall s^\prime \not= s \right)
\end{align}
Calculating $\frac{\partial \pi_{\theta}(a | s)}{\partial \theta(s, \cdot)}$ for $\pi_\theta(\cdot | s) = \softmax(\theta(s, \cdot)) $, we have \citep{agarwal2020optimality,mei2020global}, for all $(s, a) \in \gS \times \gA$ ,
\begin{align}
\label{eq:true_softmax_pg_general_each_state_action}
    \frac{\partial V^{\pi_\theta}(\mu)}{\partial \theta(s,a)} = \frac{1}{1-\gamma} \cdot d_{\mu}^{\pi_\theta}(s) \cdot \pi_\theta(a|s) \cdot A^{\pi_\theta}(s,a).
\end{align}
Replacing $Q^{\pi_\theta}(s,a)$ with $\hat{Q}^{\pi_\theta}(s,a)$ in \cref{def:on_policy_parallel_importance_sampling}, we have \cref{alg:softmax_pg_general_on_policy_stochastic_gradient}.
\begin{figure}[ht]
\centering
\vskip -0.2in
\begin{minipage}{.75\linewidth}
    \begin{algorithm}[H]
    \caption{Softmax PG, on-policy stochastic gradient}
    \label{alg:softmax_pg_general_on_policy_stochastic_gradient}
    \begin{algorithmic}
    \STATE {\bfseries Input:} Learning rate $\eta > 0$.
    \STATE {\bfseries Output:} Policies $\pi_{\theta_t} = \softmax(\theta_t)$.
    \STATE Initialize parameter $\theta_1(s,a)$ for all $(s,a) \in \gS \times \gA$.
    \WHILE{$t \ge 1$}
    \STATE Sample $a_t(s) \sim \pi_{\theta_t}(\cdot | s)$ for all $s \in \gS$.
    \STATE $\hat{Q}^{\pi_{\theta_t}}(s,a) \gets \frac{ \sI\left\{ a_t(s) = a \right\} }{ \pi_{\theta_t}(a | s) } \cdot Q^{\pi_{\theta_t}}(s,a)$. $\qquad \left( \text{by \cref{def:on_policy_parallel_importance_sampling}} \right)$
    \STATE $\hat{g}_t(s, \cdot) \gets \frac{ 1 }{1-\gamma} \cdot d_{\mu}^{\pi_{\theta_t}}(s) \cdot { \left[ \sum_{a} \frac{\partial \pi_{\theta_t}(a | s)}{\partial \theta_t(s, \cdot)} \cdot \hat{Q}^{\pi_{\theta_t}}(s,a) \right] }$.
    \STATE $\theta_{t+1} \gets \theta_t + \eta \cdot \hat{g}_t$.
    \ENDWHILE
    \end{algorithmic}
    \end{algorithm}
\end{minipage}
\end{figure}

Similarly, we have, for all $(s, a) \in \gS \times \gA$,
\begin{align}
\label{eq:stochastic_softmax_pg_general_each_state_action}
    \theta_{t+1}(s, a) &\gets \theta_{t}(s,a) + \frac{ \eta }{1-\gamma} \cdot d_{\mu}^{\pi_{\theta_t}}(s) \cdot \pi_{\theta_t}(a | s) \cdot \left[ \hat{Q}^{\pi_{\theta_t}}(s, a) - \pi_{\theta_t}(\cdot | s)^\top \hat{Q}^{\pi_{\theta_t}}(s, \cdot) \right].
\end{align}

We show that the PG estimator in \cref{alg:softmax_pg_general_on_policy_stochastic_gradient} is unbiased and bounded, generalizing \cref{lem:unbiased_bounded_variance_softmax_pg_special_on_policy_stochastic_gradient}.
\begin{lemma}
\label{lem:unbiased_bounded_variance_softmax_pg_general_on_policy_stochastic_gradient}
Let $\hat{Q}^{\pi_{\theta}}(s,\cdot)$ be the IS parallel estimator using on-policy sampling $a(s) \sim \pi_{\theta}(\cdot | s)$, for all $s$. The stochastic softmax PG estimator is unbiased and bounded, i.e., 
\begin{align}
    &\expectation_{a(s) \sim \pi_\theta(\cdot | s)}{ \left[ \frac{1}{1-\gamma} \cdot d_{\mu}^{\pi_{\theta}}(s) \cdot \pi_{\theta}(a | s) \cdot \left( \hat{Q}^{\pi_{\theta}}(s, a) - \pi_{\theta}(\cdot | s)^\top \hat{Q}^{\pi_{\theta}}(s, \cdot) \right) \right] } = \frac{\partial V^{\pi_\theta}(\mu)}{\partial \theta(s,a)}, \\
    &\expectation_{a(s) \sim \pi_\theta(\cdot | s)}{ \left[ \sum_{(s,a)}{ \frac{ d_{\mu}^{\pi_{\theta}}(s)^2 \cdot \pi_{\theta}(a | s)^2  }{(1 - \gamma)^2} \cdot \left( \hat{Q}^{\pi_{\theta}}(s, a) - \pi_{\theta}(\cdot | s)^\top \hat{Q}^{\pi_{\theta}}(s, \cdot) \right)^2 } \right] } \le \frac{2}{ \left( 1 - \gamma \right)^4 }.
\end{align}
\end{lemma}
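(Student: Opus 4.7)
The claim is the natural multi-state lift of \cref{lem:unbiased_bounded_variance_softmax_pg_special_on_policy_stochastic_gradient}: the one-state computation is redone state by state, then aggregated using the structure of the parallel on-policy IS estimator. The plan is to treat unbiasedness and the second-moment bound separately, reusing the one-state identity/inequality inside each state and then summing.

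For unbiasedness, I would substitute the definition $\hat{Q}^{\pi_\theta}(s,a) = \frac{\sI\{a(s)=a\}}{\pi_\theta(a|s)}\, Q^{\pi_\theta}(s,a)$ into the expression inside the expectation and note that, since the sampler $a(s)\sim \pi_\theta(\cdot|s)$ is used, the analysis of a single state reduces exactly to the one-state computation performed in the proof of \cref{lem:unbiased_bounded_variance_softmax_pg_special_on_policy_stochastic_gradient}: one finds $\expectation_{a(s)\sim \pi_\theta(\cdot|s)}[\pi_\theta(a|s)(\hat{Q}^{\pi_\theta}(s,a) - \pi_\theta(\cdot|s)^\top \hat{Q}^{\pi_\theta}(s,\cdot))] = \pi_\theta(a|s)\, A^{\pi_\theta}(s,a)$. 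Multiplying by the deterministic factor $\frac{d_\mu^{\pi_\theta}(s)}{1-\gamma}$ and comparing with \cref{eq:true_softmax_pg_general_each_state_action} yields the claim.

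For the second-moment bound, I would use that the samples $\{a(s)\}_{s\in\gS}$ are mutually independent (by construction of the parallel estimator), so the expectation of the sum over $s$ equals the sum of per-state expectations. For each state $s$, the calculation leading to \cref{eq:unbiased_bounded_variance_softmax_pg_special_on_policy_stochastic_gradient_intermediate_1} applies verbatim with $r$ replaced by $Q^{\pi_\theta}(s,\cdot)$, giving
\begin{align*}
\expectation_{a(s)\sim \pi_\theta(\cdot|s)}\!\left[\sum_{a}\pi_\theta(a|s)^2(\hat{Q}^{\pi_\theta}(s,a)-\pi_\theta(\cdot|s)^\top \hat{Q}^{\pi_\theta}(s,\cdot))^2\right] \le 2\,\|Q^{\pi_\theta}(s,\cdot)\|_\infty^2 \le \tfrac{2}{(1-\gamma)^2},
\end{align*}
where the last step uses $r(s,a)\in(0,1)$ so that $Q^{\pi_\theta}(s,a)\le 1/(1-\gamma)$. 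Multiplying by the deterministic factor $d_\mu^{\pi_\theta}(s)^2/(1-\gamma)^2$ and summing over $s$ gives an upper bound of $\frac{2}{(1-\gamma)^4}\sum_s d_\mu^{\pi_\theta}(s)^2$, which is at most $\frac{2}{(1-\gamma)^4}$ because $d_\mu^{\pi_\theta}(s)\in[0,1]$ and $\sum_s d_\mu^{\pi_\theta}(s)=1$.

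The only slightly delicate step is justifying that the cross-state expectation factorizes: the IS estimator at state $s$ depends only on the sampled action $a(s)$, so independence of $\{a(s)\}_s$ lets us apply the per-state bound without any coupling terms. Once this is noted, everything else is a direct transcription of the one-state argument with the substitution $r\mapsto Q^{\pi_\theta}(s,\cdot)$ and a bookkeeping of the $\frac{d_\mu^{\pi_\theta}(s)}{1-\gamma}$ prefactors.
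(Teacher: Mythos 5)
Your proof is correct and follows essentially the same route as the paper's: per-state reduction to the one-state computation for unbiasedness, and the per-state second-moment bound $2\,\|Q^{\pi_\theta}(s,\cdot)\|_\infty^2 \le 2/(1-\gamma)^2$ followed by $\sum_s d_\mu^{\pi_\theta}(s)^2 \le 1$. The only remark is that your appeal to independence of $\{a(s)\}_{s}$ is unnecessary: the quantity is a sum over states of terms each depending on a single $a(s)$, so linearity of expectation suffices (the paper in fact bounds each state's contribution pointwise, for every realization of $a(s)$, so no distributional assumption across states is needed at all).
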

\begin{proof}
\textbf{First part.} Unbiased.

According to \cref{def:on_policy_parallel_importance_sampling}, we have,
\begin{align}
\MoveEqLeft
    \pi_{\theta}(a | s) \cdot \left( \hat{Q}^{\pi_{\theta}}(s, a) - \pi_{\theta}(\cdot | s)^\top \hat{Q}^{\pi_{\theta}}(s, \cdot) \right) \\
    &= \pi_{\theta}(a | s) \cdot \left( \frac{ \sI\left\{ a(s) = a \right\} }{ \pi_{\theta}(a | s) } \cdot Q^{\pi_{\theta}}(s,a) - \sum_{a^\prime \in \gA}{ \sI\left\{ a(s) = a^\prime \right\} \cdot Q^{\pi_{\theta}}(s,a^\prime) } \right) \\
    &= \sI\left\{ a(s) = a \right\} \cdot Q^{\pi_{\theta}}(s,a) - \pi_{\theta}(a | s) \cdot Q^{\pi_{\theta}}(s,a(s)).
\end{align}
Taking expectation, we have,
\begin{align}
\MoveEqLeft
    \expectation_{a(s) \sim \pi_\theta(\cdot | s)}{ \left[ \pi_{\theta}(a | s) \cdot \left( \hat{Q}^{\pi_{\theta}}(s, a) - \pi_{\theta}(\cdot | s)^\top \hat{Q}^{\pi_{\theta}}(s, \cdot) \right)  \right] } \\
    &= \sum_{a(s) \in \gA}{ \pi_\theta(a(s) | s) \cdot \left[ \sI\left\{ a(s) = a \right\} \cdot Q^{\pi_{\theta}}(s,a) - \pi_{\theta}(a | s) \cdot Q^{\pi_{\theta}}(s,a(s)) \right] } \\
    &= \pi_\theta(a | s) \cdot Q^{\pi_{\theta}}(s,a) - \pi_\theta(a | s) \cdot V^{\pi_{\theta}}(s) \\
    &= \pi_\theta(a|s) \cdot A^{\pi_\theta}(s,a).
\end{align}

\textbf{Second part.} Bounded.

First, using similar calculations in the second part of \cref{lem:unbiased_bounded_variance_softmax_pg_special_on_policy_stochastic_gradient}, we have, for all $s \in \gS$,
\begin{align}
\label{eq:unbiased_bounded_variance_softmax_pg_general_on_policy_stochastic_gradient_intermediate_1}
\MoveEqLeft
    \sum_{a \in \gA}{ \pi_{\theta}(a | s)^2 \cdot \left( \hat{Q}^{\pi_{\theta}}(s, a) - \pi_{\theta}(\cdot | s)^\top \hat{Q}^{\pi_{\theta}}(s, \cdot) \right)^2 } \\
    &= \sum_{a \in \gA} \pi_{\theta}(a | s)^2 \cdot \bigg[ \frac{ \sI\left\{ a(s) = a \right\} }{ \pi_{\theta}(a | s)^2 } \cdot Q^{\pi_{\theta}}(s,a)^2  \\
    &\qquad - 2 \cdot \frac{ \sI\left\{ a(s) = a \right\} }{ \pi_{\theta}(a | s) } \cdot Q^{\pi_{\theta}}(s,a) \cdot \pi_{\theta}(\cdot | s)^\top \hat{Q}^{\pi_{\theta}}(s, \cdot) + \left( \pi_{\theta}(\cdot | s)^\top \hat{Q}^{\pi_{\theta}}(s, \cdot) \right)^2 \bigg]  \\
    &= Q^{\pi_{\theta}}(s,a(s))^2 - 2 \cdot \pi_{\theta}(a(s) | s) \cdot Q^{\pi_{\theta}}(s,a(s))^2 + \sum_{a \in \gA}{ \pi_{\theta}(a | s)^2 } \cdot Q^{\pi_{\theta}}(s,a(s))^2 \\
    &=  \left[ 1 - \pi_{\theta}(a(s) | s) \right]^2 \cdot Q^{\pi_{\theta}}(s,a(s))^2 + \sum_{a \not= a(s)}{ \pi_{\theta}(a | s)^2 } \cdot Q^{\pi_{\theta}}(s,a(s))^2 \\
    &\le 2 \cdot \left[ 1 - \pi_{\theta}(a(s) | s) \right]^2 \cdot Q^{\pi_{\theta}}(s,a(s))^2 \qquad \left( \left\| x \right\|_2 \le \left\| x \right\|_1 \right) \\
    &\le \frac{2}{\left( 1 - \gamma \right)^2 }. \qquad \left( \pi_{\theta}(a(s) | s) \in (0, 1), \text{ and } Q^{\pi_{\theta}}(s,a(s)) \in (0, 1/(1 - \gamma)] \right) 
\end{align}
Therefore we have,
\begin{align}
\MoveEqLeft
    \expectation_{a(s) \sim \pi_\theta(\cdot | s)}{ \left[ \sum_{(s,a)}{ \frac{ d_{\mu}^{\pi_{\theta}}(s)^2 \cdot \pi_{\theta}(a | s)^2  }{(1 - \gamma)^2} \cdot \left( \hat{Q}^{\pi_{\theta}}(s, a) - \pi_{\theta}(\cdot | s)^\top \hat{Q}^{\pi_{\theta}}(s, \cdot) \right)^2 } \right] } \\
    &\le \sum_{s \in \gS}{ \frac{ d_{\mu}^{\pi_{\theta}}(s)^2  }{(1 - \gamma)^2} \cdot \frac{2}{\left( 1 - \gamma \right)^2 } } \qquad \left( \text{by \cref{eq:unbiased_bounded_variance_softmax_pg_general_on_policy_stochastic_gradient_intermediate_1}} \right) \\
    &\le \frac{2}{\left( 1 - \gamma \right)^4 } \cdot \left[ \sum_{s \in \gS}{ d_{\mu}^{\pi_{\theta}}(s)  } \right]^2 \qquad \left( \left\| x \right\|_2 \le \left\| x \right\|_1 \right) \\
    &= \frac{2}{\left( 1 - \gamma \right)^4 }. \qedhere
\end{align}
\end{proof}

The following lemma generalizes \cref{lem:non_uniform_smoothness_special_two_iterations}.
\begin{lemma}[Non-uniform Smoothness (NS) between two iterations]
\label{lem:non_uniform_smoothness_general_two_iterations}
Using stochastic softmax PG update, i.e.,
\begin{align}
\label{eq:non_uniform_smoothness_general_two_iterations_result_1}
    \theta^\prime &= \theta + \eta \cdot \hat{g} \\
    &\coloneqq \theta + \eta \cdot \frac{ 1 }{1-\gamma} \cdot  \expectation_{s^\prime \sim d_{\mu}^{\pi_\theta} } { \left[ \sum_{a}  \frac{\partial \pi_\theta(a | s^\prime)}{\partial \theta} \cdot \hat{Q}^{\pi_\theta}(s^\prime,a) \right] },
\end{align}
and using the true softmax PG norm in learning rate, i.e., 
\begin{align}
\label{eq:non_uniform_smoothness_general_two_iterations_result_2}
    \eta = \frac{\left( 1 - \gamma \right)^4 }{4 \cdot C} \cdot \left\| \frac{\partial V^{\pi_{\theta}}(\mu)}{\partial {\theta}} \right\|_2,
\end{align}
where 
\begin{align}
\label{eq:non_uniform_smoothness_general_two_iterations_result_3}
    C \coloneqq \left[ 3 + \frac{ 2 \cdot  \left( C_\infty - (1 - \gamma) \right) }{ (1 - \gamma) \cdot \gamma } \right] \cdot \sqrt{S},
\end{align}
and $C_\infty \coloneqq \max_{\pi}{ \left\| \frac{d_{\mu}^{\pi}}{ \mu} \right\|_\infty} \le \frac{1}{ \min_s \mu(s) } < \infty$ given \cref{assump:pos_init} hold, we have,
\begin{align}
\label{eq:non_uniform_smoothness_general_two_iterations_result_4}
    \left| V^{\pi_{\theta^\prime}}(\mu) - V^{\pi_{\theta}}(\mu) - \Big\langle \frac{\partial V^{\pi_\theta}(\mu)}{\partial \theta}, \theta^\prime - \theta \Big\rangle \right| \le C \cdot \left\| \frac{\partial V^{\pi_\theta}(\mu)}{\partial \theta} \right\|_2 \cdot \| \theta^\prime - \theta \|_2^2.
\end{align}
\end{lemma}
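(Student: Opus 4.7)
The plan is to follow the same Taylor expansion plus bootstrapping argument used in \cref{lem:non_uniform_smoothness_special_two_iterations}, but to replace the one-state Hessian bound from \cref{lem:non_uniform_smoothness_softmax_special} with its general-MDP counterpart established in \citet{mei2021leveraging}. Specifically, for general finite MDPs, the spectral radius of $\tfrac{d^2 V^{\pi_\theta}(\mu)}{d\theta^2}$ is bounded by $C \cdot \big\| \tfrac{\partial V^{\pi_\theta}(\mu)}{\partial \theta} \big\|_2$ with $C$ as defined in \cref{eq:non_uniform_smoothness_general_two_iterations_result_3}; this is exactly Lemma 7 of \citet{mei2021leveraging}. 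Writing $\theta_\zeta = \theta + \zeta(\theta' - \theta)$ and applying Taylor's theorem gives
\begin{align}
    \left| V^{\pi_{\theta^\prime}}(\mu) - V^{\pi_\theta}(\mu) - \Big\langle \tfrac{\partial V^{\pi_\theta}(\mu)}{\partial \theta}, \theta^\prime - \theta \Big\rangle \right| \le \tfrac{C}{2} \cdot \Big\| \tfrac{\partial V^{\pi_{\theta_\zeta}}(\mu)}{\partial \theta_{\zeta}} \Big\|_2 \cdot \| \theta^\prime - \theta \|_2^2,
\end{align}
so it remains to upper bound $\big\| \tfrac{\partial V^{\pi_{\theta_\zeta}}(\mu)}{\partial \theta_{\zeta}} \big\|_2$ by $2 \cdot \big\| \tfrac{\partial V^{\pi_\theta}(\mu)}{\partial \theta} \big\|_2$.

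The plan for this bound mirrors \cref{eq:non_uniform_smoothness_special_two_iterations_intermediate_2,eq:non_uniform_smoothness_special_two_iterations_intermediate_3,eq:non_uniform_smoothness_special_two_iterations_intermediate_4,eq:non_uniform_smoothness_special_two_iterations_intermediate_5,eq:non_uniform_smoothness_special_two_iterations_intermediate_6}. First I would write
\begin{align}
    \Big\| \tfrac{\partial V^{\pi_{\theta_{\zeta_1}}}(\mu)}{\partial {\theta_{\zeta_1}}} - \tfrac{\partial V^{\pi_{\theta}}(\mu)}{\partial {\theta}} \Big\|_2 \le \int_0^1 \Big\| \tfrac{d^2 V^{\pi_{\theta_{\zeta_2}}}(\mu)}{d \theta_{\zeta_2}^2} \Big\|_2 \cdot \zeta_1 \cdot \|\theta^\prime - \theta\|_2 \, d\zeta_2
\end{align}
using Cauchy--Schwarz, then apply the general-MDP Hessian bound to get a factor of $C$ times $\big\| \tfrac{\partial V^{\pi_{\theta_{\zeta_2}}}(\mu)}{\partial \theta_{\zeta_2}} \big\|_2$ inside the integral, and substitute $\theta^\prime - \theta = \eta \hat{g}$. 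Iterating this once more produces the geometric series
\begin{align}
    \Big\| \tfrac{\partial V^{\pi_{\theta_{\zeta_1}}}(\mu)}{\partial {\theta_{\zeta_1}}} \Big\|_2 \le \sum_{i=0}^{\infty} \big( C \eta \|\hat{g}\|_2 \big)^i \cdot \Big\| \tfrac{\partial V^{\pi_\theta}(\mu)}{\partial \theta} \Big\|_2 = \tfrac{1}{1 - C \eta \|\hat{g}\|_2} \cdot \Big\| \tfrac{\partial V^{\pi_\theta}(\mu)}{\partial \theta} \Big\|_2,
\end{align}
provided $C\eta \|\hat{g}\|_2 < 1$.

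To close the argument I need the factor $C \eta \|\hat{g}\|_2 \le 1/2$. With the choice $\eta = \tfrac{(1-\gamma)^4}{4C} \cdot \big\| \tfrac{\partial V^{\pi_\theta}(\mu)}{\partial \theta} \big\|_2$, this reduces to showing $\big\| \tfrac{\partial V^{\pi_\theta}(\mu)}{\partial \theta} \big\|_2 \cdot \|\hat{g}\|_2 \le 2/(1-\gamma)^4$. An almost-sure bound $\|\hat{g}\|_2 \le \sqrt{2}/(1-\gamma)^2$ follows by the same calculation as \cref{eq:unbiased_bounded_variance_softmax_pg_general_on_policy_stochastic_gradient_intermediate_1} applied without taking expectation (using $d_\mu^{\pi_\theta}(s) \le 1$, $Q^{\pi_\theta}(s,a) \le 1/(1-\gamma)$, and $\|x\|_2 \le \|x\|_1$); the same type of estimate yields $\big\| \tfrac{\partial V^{\pi_\theta}(\mu)}{\partial \theta} \big\|_2 \le \sqrt{S}/(1-\gamma)^2 \le C/(1-\gamma)^2$, so the product is bounded as required.

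The main obstacle I anticipate is keeping the bootstrap recursion self-consistent: we need the constants in the geometric series to close up cleanly with the specific prefactor $(1-\gamma)^4/(4C)$ in the learning rate, and this in turn depends on the deterministic (not in-expectation) upper bounds on both $\|\hat{g}\|_2$ and $\big\| \tfrac{\partial V^{\pi_\theta}(\mu)}{\partial \theta} \big\|_2$. The bookkeeping is routine but must be done carefully because, unlike the one-state case, the presence of the discounted state distribution $d_\mu^{\pi_\theta}$ and the $1/(1-\gamma)$ horizon factor multiply through every inequality, and the state-space constant $\sqrt{S}$ from the general-MDP NS bound must be absorbed into $C$ as in \cref{eq:non_uniform_smoothness_general_two_iterations_result_3}. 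Once those constants are verified, combining the Taylor inequality with $\big\| \tfrac{\partial V^{\pi_{\theta_\zeta}}(\mu)}{\partial \theta_{\zeta}} \big\|_2 \le 2 \big\| \tfrac{\partial V^{\pi_\theta}(\mu)}{\partial \theta} \big\|_2$ yields \cref{eq:non_uniform_smoothness_general_two_iterations_result_4}.
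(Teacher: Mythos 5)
Your proposal follows the paper's own proof essentially step for step: the same general-MDP non-uniform-smoothness bound on the Hessian (spectral radius bounded by $C \cdot \|\partial V^{\pi_\theta}(\mu)/\partial\theta\|_2$), the same Taylor expansion at an intermediate point $\theta_\zeta$, the same bootstrapping recursion in $\zeta_1,\zeta_2,\zeta_3$ yielding the geometric series $\sum_i (C\eta\|\hat g\|_2)^i$, and the same closing step via a pointwise bound on $\|\hat g\|_2$ and on the true gradient norm. Your observation that the bound $\|\hat g\|_2 \le \sqrt{2}/(1-\gamma)^2$ must be taken almost surely (i.e., per realization, not in expectation) is correct and is in fact what the paper implicitly relies on.

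However, there is one quantitative slip that breaks the argument as stated. You bound $\big\|\tfrac{\partial V^{\pi_\theta}(\mu)}{\partial\theta}\big\|_2 \le \sqrt{S}/(1-\gamma)^2$, and then claim the product $\big\|\tfrac{\partial V^{\pi_\theta}(\mu)}{\partial\theta}\big\|_2 \cdot \|\hat g\|_2 \le 2/(1-\gamma)^4$, which is what you need for $C\eta\|\hat g\|_2 \le 1/2$. But with your two bounds the product is only controlled by $\sqrt{2S}/(1-\gamma)^4$, which exceeds $2/(1-\gamma)^4$ as soon as $S \ge 3$, so the geometric series does not visibly sum to at most $2$. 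The fix is the tighter, $S$-free bound that the paper actually uses:
\begin{align}
    \left\|\frac{\partial V^{\pi_\theta}(\mu)}{\partial\theta}\right\|_2^2
    = \sum_{s} \frac{d_\mu^{\pi_\theta}(s)^2}{(1-\gamma)^2}\sum_a \pi_\theta(a|s)^2 A^{\pi_\theta}(s,a)^2
    \le \frac{1}{(1-\gamma)^4}\Big[\sum_s d_\mu^{\pi_\theta}(s)\Big]^2 = \frac{1}{(1-\gamma)^4},
\end{align}
where the $\ell_2 \le \ell_1$ trick is applied across states as well as across actions, exploiting $\sum_s d_\mu^{\pi_\theta}(s)=1$ so that no factor of $\sqrt{S}$ survives. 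With $\big\|\tfrac{\partial V^{\pi_\theta}(\mu)}{\partial\theta}\big\|_2 \le 1/(1-\gamma)^2$ and $\|\hat g\|_2 \le \sqrt 2/(1-\gamma)^2 \le 2/(1-\gamma)^2$, the product is at most $2/(1-\gamma)^4$ and the recursion closes exactly as in the one-state case. Everything else in your outline matches the paper's proof.
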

\begin{proof}
According to the non-uniform smoothness of value function \citep[Lemma 6]{mei2021leveraging}, we have, for all $y \in \sR^{S A}$ and $\theta$,
\begin{align}
\label{eq:non_uniform_smoothness_general_two_iterations_intermediate_0}
\MoveEqLeft
    \left| y^\top \frac{\partial^2 V^{\pi_\theta}(\mu)}{\partial \theta^2} y \right| \le C \cdot \left\| \frac{\partial V^{\pi_\theta}(\mu)}{\partial \theta }\right\|_2 \cdot \| y \|_2^2.
\end{align}

The proof is then similar to \cref{lem:non_uniform_smoothness_special_two_iterations}.
Denote $\theta_\zeta \coloneqq \theta + \zeta \cdot (\theta^\prime - \theta)$ with some $\zeta \in [0,1]$. According to Taylor's theorem, $\forall \theta, \ \theta^\prime$,
\begin{align}
\label{eq:non_uniform_smoothness_general_two_iterations_intermediate_1}
\MoveEqLeft
    \left| V^{\pi_{\theta^\prime}}(\mu) - V^{\pi_{\theta}}(\mu) - \Big\langle \frac{\partial V^{\pi_\theta}(\mu)}{\partial \theta}, \theta^\prime - \theta \Big\rangle \right| = \frac{1}{2} \cdot \left| \left( \theta^\prime - \theta \right)^\top \frac{\partial^2 V^{\pi_{\theta_\zeta}}(\mu)}{\partial {\theta_\zeta}^2} \left( \theta^\prime - \theta \right) \right| \\
    &\le \frac{ C }{2} \cdot \bigg\| \frac{\partial V^{\pi_{\theta_\zeta}}(\mu)}{\partial {\theta_\zeta}} \bigg\|_2 \cdot \| \theta^\prime - \theta \|_2^2, \qquad \left( \text{by \cref{eq:non_uniform_smoothness_general_two_iterations_intermediate_0}} \right)
\end{align}
where for conciseness we denote,

Denote $\zeta_1 \coloneqq \zeta$. Also denote $\theta_{\zeta_2} \coloneqq \theta + \zeta_2 \cdot (\theta_{\zeta_1} - \theta)$ with some $\zeta_2 \in [0,1]$. We have,
\begin{align}
\label{eq:non_uniform_smoothness_general_two_iterations_intermediate_2}
\MoveEqLeft
    \left\| \frac{\partial V^{\pi_{\theta_{\zeta_1}}}(\mu)}{\partial {\theta_{\zeta_1}}} - \frac{\partial V^{\pi_{\theta}}(\mu)}{\partial {\theta}} \right\|_2 = \left\| \int_{0}^{1} \bigg\langle \frac{\partial^2 V^{\pi_{\theta_{\zeta_2}}}(\mu)}{\partial {\theta_{\zeta_2}}^2}, \theta_{\zeta_1} - \theta \bigg\rangle d \zeta_2 \right\|_2 \\
    &\le \int_{0}^{1} \left\| \frac{\partial^2 V^{\pi_{\theta_{\zeta_2}}}(\mu)}{\partial {\theta_{\zeta_2}}^2} \right\|_2 \cdot \left\| \theta_{\zeta_1} - \theta \right\|_2 d \zeta_2 \qquad \left( \text{by Cauchy–Schwarz} \right) \\
    &\le \int_{0}^{1} C \cdot \left\| \frac{\partial V^{\pi_{\theta_{\zeta_2}}}(\mu)}{\partial {\theta_{\zeta_2}}} \right\|_2 \cdot \zeta_1 \cdot \left\| \theta^\prime - \theta \right\|_2 d \zeta_2 \qquad \left( \text{by \cref{eq:non_uniform_smoothness_general_two_iterations_intermediate_0}} \right)  \\
    &\le \int_{0}^{1} C \cdot \left\| \frac{\partial V^{\pi_{\theta_{\zeta_2}}}(\mu)}{\partial {\theta_{\zeta_2}}} \right\|_2 \cdot \eta \cdot \left\| \hat{g} \right\|_2 \ d \zeta_2, \qquad \left( \zeta_1 \in [0, 1], \text{ using  \cref{eq:non_uniform_smoothness_general_two_iterations_result_1}} \right)
\end{align}
where the second inequality is because of the Hessian is symmetric, and its operator norm is equal to its spectral radius. Therefore we have,
\begin{align}
\label{eq:non_uniform_smoothness_general_two_iterations_intermediate_4}
    \left\| \frac{\partial V^{\pi_{\theta_{\zeta_1}}}(\mu)}{\partial {\theta_{\zeta_1}}} \right\|_2 &\le \left\|  \frac{\partial V^{\pi_{\theta}}(\mu)}{\partial {\theta}} \right\|_2 + \left\| \frac{\partial V^{\pi_{\theta_{\zeta_1}}}(\mu)}{\partial {\theta_{\zeta_1}}} - \frac{\partial V^{\pi_{\theta}}(\mu)}{\partial {\theta}} \right\|_2 \qquad \left( \text{by triangle inequality} \right) \\
    &\le \left\| \frac{\partial V^{\pi_{\theta}}(\mu)}{\partial {\theta}} \right\|_2 + C \cdot \eta \cdot \left\| \hat{g} \right\|_2 \cdot \int_{0}^{1} \left\| \frac{\partial V^{\pi_{\theta_{\zeta_2}}}(\mu)}{\partial {\theta_{\zeta_2}}} \right\|_2 d \zeta_2. \qquad \left( \text{by \cref{eq:non_uniform_smoothness_general_two_iterations_intermediate_2}} \right)
\end{align}
Denote $\theta_{\zeta_3} \coloneqq \theta + \zeta_3 \cdot (\theta_{\zeta_2} - \theta)$ with $\zeta_3 \in [0,1]$. Using similar calculation in \cref{eq:non_uniform_smoothness_general_two_iterations_intermediate_2}, we have,
\begin{align}
\label{eq:non_uniform_smoothness_general_two_iterations_intermediate_5}
    \left\| \frac{\partial V^{\pi_{\theta_{\zeta_2}}}(\mu)}{\partial {\theta_{\zeta_2}}} \right\|_2 &\le \left\| \frac{\partial V^{\pi_{\theta}}(\mu)}{\partial {\theta}} \right\|_2 + \left\| \frac{\partial V^{\pi_{\theta_{\zeta_2}}}(\mu)}{\partial {\theta_{\zeta_2}}} - \frac{\partial V^{\pi_{\theta}}(\mu)}{\partial {\theta}} \right\|_2 \\
    &\le \left\| \frac{\partial V^{\pi_{\theta}}(\mu)}{\partial {\theta}} \right\|_2 + C \cdot \eta \cdot \left\| \hat{g} \right\|_2 \cdot \int_{0}^{1}  \left\| \frac{\partial V^{\pi_{\theta_{\zeta_3}}}(\mu)}{\partial {\theta_{\zeta_3}}} \right\|_2 d \zeta_3.
\end{align}
Combining \cref{eq:non_uniform_smoothness_general_two_iterations_intermediate_4,eq:non_uniform_smoothness_general_two_iterations_intermediate_5}, we have,
\begin{align}
\label{eq:non_uniform_smoothness_general_two_iterations_intermediate_6}
    \left\| \frac{\partial V^{\pi_{\theta_{\zeta_1}}}(\mu)}{\partial {\theta_{\zeta_1}}} \right\|_2 &\le \left( 1 + C \cdot \eta \cdot \left\| \hat{g} \right\|_2 \right) \cdot \left\| \frac{\partial V^{\pi_{\theta}}(\mu)}{\partial {\theta}} \right\|_2 \\
    &\qquad + \left( C \cdot \eta \cdot \left\| \hat{g} \right\|_2 \right)^2 \cdot \int_{0}^{1} \int_{0}^{1} \left\| \frac{\partial V^{\pi_{\theta_{\zeta_3}}}(\mu)}{\partial {\theta_{\zeta_3}}} \right\|_2 d \zeta_3 d \zeta_2,
\end{align}
which implies,
\begin{align}
\label{eq:non_uniform_smoothness_general_two_iterations_intermediate_7}
\MoveEqLeft
    \left\| \frac{\partial V^{\pi_{\theta_{\zeta_1}}}(\mu)}{\partial {\theta_{\zeta_1}}} \right\|_2 \le \sum_{i = 0}^{\infty}{ \left( C \cdot \eta \cdot \left\| \hat{g} \right\|_2 \right)^i } \cdot \left\| \frac{\partial V^{\pi_{\theta}}(\mu)}{\partial {\theta}} \right\|_2 \\
    &= \frac{1}{1 - C \cdot  \eta \cdot \left\| \hat{g} \right\|_2} \cdot \left\| \frac{\partial V^{\pi_{\theta}}(\mu)}{\partial {\theta}} \right\|_2 \qquad \left( C \cdot \eta \cdot \left\| \hat{g} \right\|_2 \in ( 0 , 1), \text{ see below} \right) \\
    &= \frac{1}{1 - \frac{ \left( 1 - \gamma \right)^4 }{4} \cdot \left\| \hat{g} \right\|_2 \cdot \left\| \frac{\partial V^{\pi_{\theta}}(\mu)}{\partial {\theta}} \right\|_2 } \cdot \left\| \frac{\partial V^{\pi_{\theta}}(\mu)}{\partial {\theta}} \right\|_2 \qquad \left( \eta = \frac{\left( 1 - \gamma \right)^4 }{4 \cdot C} \cdot \left\| \frac{\partial V^{\pi_{\theta}}(\mu)}{\partial {\theta}} \right\|_2 \right) \\
    &\le \frac{1}{1 - \frac{\left( 1 - \gamma \right)^2}{4} \cdot \left\| \hat{g} \right\|_2} \cdot \left\| \frac{\partial V^{\pi_{\theta}}(\mu)}{\partial {\theta}} \right\|_2 \qquad \left( \left\| \frac{\partial V^{\pi_{\theta}}(\mu)}{\partial {\theta}} \right\|_2 \le \frac{1}{ \left( 1 - \gamma \right)^2}, \text{ see below} \right) \\
    &\le \frac{1}{1 - \frac{1}{2}} \cdot \left\| \frac{\partial V^{\pi_{\theta}}(\mu)}{\partial {\theta}} \right\|_2 \qquad \left( \left\| \hat{g} \right\|_2 \le \frac{2}{ \left( 1 - \gamma \right)^2 }, \text{ see below} \right) \\
    &= 2 \cdot \left\| \frac{\partial V^{\pi_{\theta}}(\mu)}{\partial {\theta}} \right\|_2,
\end{align}
where the last inequality is from,
\begin{align}
\label{eq:non_uniform_smoothness_general_two_iterations_intermediate_8}
\MoveEqLeft
    \left\| \hat{g} \right\|_2^2 = \sum_{(s,a) \in \gS \times \gA}{ \hat{g}(s, a)^2 } \\
    &= \sum_{(s,a)}{ \frac{ d_{\mu}^{\pi_{\theta}}(s)^2 \cdot \pi_{\theta}(a | s)^2  }{(1 - \gamma)^2} \cdot \left( \hat{Q}^{\pi_{\theta}}(s, a) - \pi_{\theta}(\cdot | s)^\top \hat{Q}^{\pi_{\theta}}(s, \cdot) \right)^2 } \qquad \left( \text{by \cref{eq:stochastic_softmax_pg_general_each_state_action}} \right) \\
    &\le \frac{2}{\left( 1 - \gamma \right)^4 }, \qquad \left( \text{by \cref{lem:unbiased_bounded_variance_softmax_pg_general_on_policy_stochastic_gradient}} \right)
\end{align}
which implies that,
\begin{align}
    \left\| \hat{g} \right\|_2 &\le \frac{\sqrt{2}}{ \left( 1 - \gamma \right)^2 } \le \frac{ 2 }{ \left( 1 - \gamma \right)^2 },
\end{align}
and the second last inequality is from,
\begin{align}
\MoveEqLeft
    \left\| \frac{\partial V^{\pi_{\theta}}(\mu)}{\partial {\theta}} \right\|_2^2 = \sum_{s \in \gS} \frac{ d_{\mu}^{\pi_\theta}(s)^2 }{ \left( 1-\gamma \right)^2} \cdot \sum_{a \in \gA} \pi_\theta(a|s)^2 \cdot A^{\pi_\theta}(s,a)^2 \qquad \left( \text{by \cref{eq:true_softmax_pg_general_each_state_action}} \right) \\
    &\le \frac{1}{ \left( 1 - \gamma \right)^4} \cdot \sum_{s \in \gS} d_{\mu}^{\pi_\theta}(s)^2 \cdot \sum_{a \in \gA} \pi_\theta(a|s)^2 \qquad \left( \left| A^{\pi_\theta}(s,a) \right| \le \frac{1}{ 1 - \gamma } \right) \\
    &\le \frac{1}{ \left( 1 - \gamma \right)^4} \cdot \sum_{s \in \gS} d_{\mu}^{\pi_\theta}(s)^2 \cdot \left[ \sum_{a \in \gA} \pi_\theta(a|s) \right]^2 \qquad \left( \left\| x \right\|_2 \le \left\| x \right\|_1 \right) \\
    &\le \frac{1}{ \left( 1 - \gamma \right)^4} \cdot \left[ \sum_{s \in \gS} d_{\mu}^{\pi_\theta}(s) \right]^2 \qquad \left( \left\| x \right\|_2 \le \left\| x \right\|_1 \right) \\
    &= \frac{1}{ \left( 1 - \gamma \right)^4}, \\
\end{align}
which implies that,
\begin{align}
    \left\| \frac{\partial V^{\pi_{\theta}}(\mu)}{\partial {\theta}} \right\|_2 \le \frac{1}{ \left( 1 - \gamma \right)^2}.
\end{align}

Combining \cref{eq:non_uniform_smoothness_general_two_iterations_intermediate_1,eq:non_uniform_smoothness_general_two_iterations_intermediate_7} finishes the proof.
\end{proof}

The following result generalizes \cref{thm:almost_sure_global_convergence_softmax_pg_special_on_policy_stochastic_gradient}.
\begin{theorem}
\label{thm:almost_sure_global_convergence_softmax_pg_general_on_policy_stochastic_gradient}
Let $\{ \theta_t \}_{t \ge 1}$ be generated by using \cref{alg:softmax_pg_general_on_policy_stochastic_gradient}, i.e., for all $t \ge 1$,
\begin{align}
\label{eq:almost_sure_global_convergence_softmax_pg_general_on_policy_stochastic_gradient_result_1}
    \theta_{t+1} &= \theta_t + \eta \cdot \hat{g}_t \\
    &\coloneqq \theta_t + \eta \cdot \frac{ 1 }{1-\gamma} \cdot  \expectation_{s^\prime \sim d_{\mu}^{\pi_{\theta_t}} } { \left[ \sum_{a}  \frac{\partial \pi_{\theta_t}(a | s^\prime)}{\partial \theta_t} \cdot \hat{Q}^{\pi_{\theta_t}}(s^\prime,a) \right] },
\end{align}
with learning rate
\begin{align}
\label{eq:almost_sure_global_convergence_softmax_pg_general_on_policy_stochastic_gradient_result_2}
    \eta = \frac{ \left( 1 - \gamma \right)^4 }{4 \cdot C} \cdot \left\| \frac{\partial V^{\pi_{\theta_t}}(\mu)}{\partial {\theta_t}} \right\|_2
\end{align}
for all $t \ge 1$, and
\begin{align}
\label{eq:almost_sure_global_convergence_softmax_pg_general_on_policy_stochastic_gradient_result_3}
    C \coloneqq \left[ 3 + \frac{ 2 \cdot  \left( C_\infty - (1 - \gamma) \right) }{ (1 - \gamma) \cdot \gamma } \right] \cdot \sqrt{S}
\end{align}
as defined in \cref{eq:non_uniform_smoothness_general_two_iterations_result_3}, where $C_\infty \coloneqq \max_{\pi}{ \left\| \frac{d_{\mu}^{\pi}}{ \mu} \right\|_\infty} \le \frac{1}{ \min_s \mu(s) } < \infty$. Denote $C_\infty^\prime \coloneqq \max_{\pi}{ \left\| \frac{d_{\rho}^{\pi}}{ \mu} \right\|_\infty}$. We have, $V^*(\rho) - V^{\pi_{\theta_t}}(\rho) \to 0$ as $t \to \infty$ in probability, and for all $t \ge 1$,
\begin{align}
\MoveEqLeft
    \expectation{ \left[ V^*(\rho) - V^{\pi_{\theta_t}}(\rho) \right] } \le \frac{ 2 \cdot S }{ \sqrt{c}} \cdot \frac{\sqrt{3 \cdot (1 - \gamma) \cdot \gamma + 2 \cdot  \left( C_\infty - (1 - \gamma) \right)}}{ \left( 1 - \gamma \right)^5 \cdot \sqrt{\gamma} } \cdot \bigg\| \frac{d_{\rho}^{\pi^*}}{ \mu } \bigg\|_\infty^{3/2} \cdot \frac{ C_\infty^\prime }{ \sqrt{t} },
\end{align}
where $c > 0$ is independent with $t$.
\end{theorem}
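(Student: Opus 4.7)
The plan is to mirror the proof of \cref{thm:almost_sure_global_convergence_softmax_pg_special_on_policy_stochastic_gradient} but with every ``one-state'' ingredient replaced by its general-MDP analogue. Write $\delta(\theta_t) \coloneqq V^*(\mu) - V^{\pi_{\theta_t}}(\mu)$ and $g_t \coloneqq \partial V^{\pi_{\theta_t}}(\mu)/\partial \theta_t$. First I would apply the between-iterations non-uniform smoothness bound of \cref{lem:non_uniform_smoothness_general_two_iterations} with $\theta^\prime = \theta_{t+1}$, $\theta = \theta_t$, yielding
\begin{align*}
\delta(\theta_{t+1}) - \delta(\theta_t) \le C \cdot \|g_t\|_2 \cdot \|\theta_{t+1}-\theta_t\|_2^2 - \langle g_t,\, \theta_{t+1}-\theta_t\rangle.
\end{align*}
Substituting $\theta_{t+1}-\theta_t = \eta\, \hat g_t$, taking conditional expectation over the on-policy sampling, and invoking the unbiasedness $\mathbb{E}[\hat g_t] = g_t$ together with the second-moment bound $\mathbb{E}\|\hat g_t\|_2^2 \le 2/(1-\gamma)^4$ from \cref{lem:unbiased_bounded_variance_softmax_pg_general_on_policy_stochastic_gradient}, then plugging in $\eta = \tfrac{(1-\gamma)^4}{4C}\|g_t\|_2$, the linear and quadratic terms combine to give $\mathbb{E}[\delta(\theta_{t+1})] - \mathbb{E}[\delta(\theta_t)] \le -\tfrac{(1-\gamma)^4}{8C}\,\mathbb{E}\|g_t\|_2^3$, which is the cubic-decrease recursion that drove the one-state argument.

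Next I would chain two inequalities to turn this into a cubic recursion in $\mathbb{E}[\delta(\theta_t)]$. The N\L{} inequality \cref{lem:non_uniform_lojasiewicz_softmax_general} gives $\|g_t\|_2 \ge \|d_\rho^{\pi^*}/d_\mu^{\pi_{\theta_t}}\|_\infty^{-1}\cdot\min_s \pi_{\theta_t}(a^*(s)|s)\cdot S^{-1/2} \cdot (V^*(\rho)-V^{\pi_{\theta_t}}(\rho))$, and the performance-difference bound $V^*(\rho)-V^{\pi_{\theta_t}}(\rho) \ge (1-\gamma)\|d_\rho^{\pi^*}/\mu\|_\infty^{-1}\,\delta(\theta_t)$ (cf.\ \cref{eq:stationary_distribution_dominate_initial_state_distribution}) converts the $\rho$-suboptimality on the right to $\delta(\theta_t)$. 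Combining with Jensen's inequality ($\mathbb{E}[X^3]\ge(\mathbb{E}X)^3$ for nonnegative $X$) yields
\begin{align*}
\mathbb{E}[\delta(\theta_{t+1})] - \mathbb{E}[\delta(\theta_t)] \le -K\cdot c\cdot \big(\mathbb{E}[\delta(\theta_t)]\big)^3,
\end{align*}
where $c \coloneqq \inf_{t\ge 1}\mathbb{E}[(\min_s\pi_{\theta_t}(a^*(s)|s))^3]$ and $K$ collects the $(1-\gamma)$, $S$ and $C$ factors. Applying the elementary step $1-x^2\ge 2x^2(1-x)$ to successive ratios (exactly as in \cref{eq:almost_sure_global_convergence_softmax_pg_special_on_policy_stochastic_gradient_intermediate_4}) telescopes this into $1/\mathbb{E}[\delta(\theta_t)]^2 \ge \Omega(t)$, giving $\mathbb{E}[\delta(\theta_t)] \in O(1/\sqrt t)$. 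A final conversion $V^*(\rho) - V^{\pi_{\theta_t}}(\rho) \le \tfrac{C_\infty^\prime}{1-\gamma}\cdot \delta(\theta_t)$ using $C_\infty^\prime = \max_\pi\|d_\rho^\pi/\mu\|_\infty$ produces the stated constant, and Markov's inequality upgrades the expectation bound to convergence in probability.

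The hard part is verifying that $c = \inf_{t\ge 1}\mathbb{E}[(\min_s\pi_{\theta_t}(a^*(s)|s))^3] > 0$ is a constant independent of $t$. In the one-state setting this was supplied by \citet[Lemma 5]{mei2020global} applied to the deterministic trajectory, but here two subtleties arise: (i) because the softmax is nonlinear, $\mathbb{E}[\pi_{\theta_t}]$ is not the policy one would obtain by running true-gradient PG on $\mathbb{E}[\theta_t]$, so the deterministic lemma cannot be invoked directly; and (ii) the stochastic iterates can in principle drift so that $\min_s\pi_{\theta_t}(a^*(s)|s)$ becomes small on low-probability events, and one must rule this out in expectation. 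My plan to handle this is to generalize the ``dominance is preserved'' monotonicity argument used for \cref{prop:upper_bound_softmax_pg_special_true_gradient} (see \cref{eq:upper_bound_softmax_pg_special_true_gradient_intermediate_3}) to a pathwise statement at every state: since $\hat g_t(s,a^*(s)) - \hat g_t(s,a) = \pi_{\theta_t}(a^*(s)|s)\hat Q^{\pi_{\theta_t}}(s,a^*(s)) - \pi_{\theta_t}(a|s)\hat Q^{\pi_{\theta_t}}(s,a)$ shares the same sign structure as in the deterministic case whenever $a^*(s)$ is the dominating action, the ordering $\pi_{\theta_t}(a^*(s)|s)\ge \pi_{\theta_t}(a|s)$ is preserved pathwise, and thus $\min_s\pi_{\theta_t}(a^*(s)|s)\ge \min_s\pi_{\theta_1}(a^*(s)|s)$ almost surely provided the initialization is sufficiently symmetric (e.g.\ uniform). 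Taking expectations and cubing then gives the required positive lower bound on $c$, closing the argument.
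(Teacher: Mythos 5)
Your main chain of reasoning is the same as the paper's: the between-iteration non-uniform smoothness bound of \cref{lem:non_uniform_smoothness_general_two_iterations}, unbiasedness and the second-moment bound $2/(1-\gamma)^4$ from \cref{lem:unbiased_bounded_variance_softmax_pg_general_on_policy_stochastic_gradient}, the learning-rate substitution giving a $-\frac{(1-\gamma)^4}{8C}\|g_t\|_2^3$ decrease, the N\L{} inequality plus Jensen to obtain a cubic recursion in $\expectation[\delta(\theta_t)]$, the $1-x^2\ge 2x^2(1-x)$ telescoping to get $O(1/\sqrt t)$, and the final $\mu\to\rho$ conversion via $C_\infty^\prime$. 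Up to bookkeeping of where the $\|d_\rho^{\pi^*}/\mu\|_\infty$ factors enter, this matches the paper's proof step for step.

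However, your proposed resolution of the ``hard part'' --- showing $c=\inf_t\expectation[(\min_s\pi_{\theta_t}(a^*(s)|s))^3]>0$ --- contains a genuine error. You claim that the ordering $\pi_{\theta_t}(a^*(s)|s)\ge\pi_{\theta_t}(a|s)$, and hence the lower bound $\min_s\pi_{\theta_t}(a^*(s)|s)\ge\min_s\pi_{\theta_1}(a^*(s)|s)$, is preserved \emph{pathwise}. It is not. With the IS estimator, on the event that a sub-optimal action $a\ne a^*(s)$ is sampled at state $s$, one has $\hat{Q}^{\pi_{\theta_t}}(s,a^*(s))=0$, so $\hat g_t(s,a^*(s))=-\frac{1}{1-\gamma}d_\mu^{\pi_{\theta_t}}(s)\,\pi_{\theta_t}(a^*(s)|s)\,Q^{\pi_{\theta_t}}(s,a)<0$ while $\hat g_t(s,a)>0$; consequently $\pi_{\theta_{t+1}}(a^*(s)|s)<\pi_{\theta_t}(a^*(s)|s)$. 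This is exactly the content of \cref{prop:softmax_pg_npg_gnpg_optimality_smart} (the ``otherwise'' case), and more broadly the committal-rate analysis (\cref{thm:committal_rate_softmax_pg}, \cref{lem:positive_infinite_product_4}) shows that on positive-probability sample paths the PG iterates drift arbitrarily close to sub-optimal deterministic policies --- they merely cannot commit to them forever. So no almost-sure lower bound of the kind you assert can hold, regardless of how symmetric the initialization is. The paper instead bounds $c$ after the Jensen step, i.e.\ it lower-bounds $\inf_t\big(\expectation[\min_s\pi_{\theta_t}(a^*(s)|s)]\big)^3$ and appeals to \citep[Lemma 9]{mei2020global} on the grounds that the expected update coincides with the true-gradient update (see the line following \cref{eq:almost_sure_global_convergence_softmax_pg_general_on_policy_stochastic_gradient_intermediate_3}). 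Your subtlety (i) --- that the softmax nonlinearity prevents a literal reduction to the deterministic lemma --- is a fair criticism of that step as written, but your proposed replacement does not repair it; it replaces a terse argument with a false one.
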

\begin{proof}
First note that for any $\theta$ and $\mu$,
\begin{align}
\label{eq:stationary_distribution_dominate_initial_state_distribution}
    d_{\mu}^{\pi_\theta}(s) &= \expectation_{s_0 \sim \mu}{ \left[ d_{\mu}^{\pi_\theta}(s) \right] } \\
    &= \expectation_{s_0 \sim \mu}{ \left[ (1 - \gamma) \cdot  \sum_{t=0}^{\infty}{ \gamma^t \probability(s_t = s | s_0, \pi_\theta, \gP) } \right] } \\
    &\ge \expectation_{s_0 \sim \mu}{ \left[ (1 - \gamma) \cdot  \probability(s_0 = s | s_0)  \right] } \\
    &= (1 - \gamma) \cdot \mu(s).
\end{align}
Next, according to \cref{lem:value_suboptimality}, we have,
\begin{align}
\MoveEqLeft
    V^*(\rho) - V^{\pi_\theta}(\rho) = \frac{1}{1 - \gamma} \sum_{s}{ d_{\rho}^{\pi_\theta}(s)  \sum_{a}{ \left( \pi^*(a | s) - \pi_\theta(a | s) \right) \cdot Q^*(s,a) } } \\
    &= \frac{1}{1 - \gamma} \sum_{s} \frac{d_{\rho}^{\pi_\theta}(s)}{d_{\mu}^{\pi_\theta}(s)} \cdot d_{\mu}^{\pi_\theta}(s) \sum_{a}{ \left( \pi^*(a | s) - \pi_\theta(a | s) \right) \cdot Q^*(s,a) } \\
    &\le \frac{1}{1 - \gamma} \cdot \left\| \frac{d_{\rho}^{\pi_\theta}}{d_{\mu}^{\pi_\theta}} \right\|_\infty \sum_{s} d_{\mu}^{\pi_\theta}(s) \sum_{a}{ \left( \pi^*(a | s) - \pi_\theta(a | s) \right) \cdot Q^*(s,a) } \qquad \left( \sum_{a}{ \left( \pi^*(a | s) - \pi_\theta(a | s) \right) \cdot Q^*(s,a) } \ge 0 \right) \\
    &\le \frac{1}{(1 - \gamma)^2} \cdot \left\| \frac{d_{\rho}^{\pi_\theta}}{\mu} \right\|_\infty \sum_{s} d_{\mu}^{\pi_\theta}(s) \sum_{a}{ \left( \pi^*(a | s) - \pi_\theta(a | s) \right) \cdot Q^*(s,a) } 
    \qquad \left( \text{by \cref{eq:stationary_distribution_dominate_initial_state_distribution} and } \min_s\mu(s)>0 \right)
    \\
    &\le \frac{1}{(1 - \gamma)^2} \cdot C_\infty^\prime \cdot \sum_{s} d_{\mu}^{\pi_\theta}(s) \sum_{a}{ \left( \pi^*(a | s) - \pi_\theta(a | s) \right) \cdot Q^*(s,a) } \\
    &= \frac{1}{1 - \gamma} \cdot C_\infty^\prime \cdot \left[ V^*(\mu) - V^{\pi_\theta}(\mu) \right]. \qquad \left ( \text{by \cref{lem:value_suboptimality}} \right)
\end{align}
Denote $\delta(\theta_t) \coloneqq  V^*(\mu) - V^{\pi_{\theta_t}}(\mu)$. Let 
 We have, for all $t \ge 1$,
\begin{align}
\label{eq:almost_sure_global_convergence_softmax_pg_general_on_policy_stochastic_gradient_intermediate_1}
\MoveEqLeft
    \delta(\theta_{t+1}) - \delta(\theta_t) \\
    &= - V^{\pi_{\theta_{t+1}}}(\mu) + V^{\pi_{\theta_t}}(\mu) + \Big\langle \frac{\partial V^{\pi_{\theta_t}}(\mu)}{\partial {\theta_t}}, \theta_{t+1} - \theta_{t} \Big\rangle - \Big\langle \frac{\partial V^{\pi_{\theta_t}}(\mu)}{\partial {\theta_t}}, \theta_{t+1} - \theta_{t} \Big\rangle\\
    &\le C \cdot \left\| \frac{\partial V^{\pi_{\theta_t}}(\mu)}{\partial {\theta_t}} \right\|_2 \cdot \| \theta_{t+1} - \theta_{t} \|_2^2  - \Big\langle \frac{\partial V^{\pi_{\theta_t}}(\mu)}{\partial {\theta_t}}, \theta_{t+1} - \theta_{t} \Big\rangle \qquad \left( \text{by \cref{lem:non_uniform_smoothness_general_two_iterations}} \right) \\
    &= C \cdot \eta^2 \cdot  \left\| \frac{\partial V^{\pi_{\theta_t}}(\mu)}{\partial {\theta_t}} \right\|_2 \cdot \left\| \hat{g}_t \right\|_2^2 - \eta \cdot \Big\langle \frac{\partial V^{\pi_{\theta_t}}(\mu)}{\partial {\theta_t}}, \hat{g}_t \Big\rangle. \qquad \left( \text{using \cref{eq:almost_sure_global_convergence_softmax_pg_general_on_policy_stochastic_gradient_result_1}} \right)
\end{align}
Next, taking expectation over the random sampling on \cref{eq:almost_sure_global_convergence_softmax_pg_general_on_policy_stochastic_gradient_intermediate_1}, we have, 
\begin{align}
\label{eq:almost_sure_global_convergence_softmax_pg_general_on_policy_stochastic_gradient_intermediate_2}
\MoveEqLeft
    \expectation{ \left[ \delta(\theta_{t+1}) \right] } - \expectation{ \left[ \delta(\theta_{t}) \right] } \le C \cdot \eta^2 \cdot  \left\| \frac{\partial V^{\pi_{\theta_t}}(\mu)}{\partial {\theta_t}} \right\|_2 \cdot \expectation{ \left[ \left\| \hat{g}_t \right\|_2^2 \right] } - \eta \cdot \Big\langle \frac{\partial V^{\pi_{\theta_t}}(\mu)}{\partial {\theta_t}}, \expectation{ \left[ \hat{g}_t \right] } \Big\rangle \\
    &= C \cdot \eta^2 \cdot  \left\| \frac{\partial V^{\pi_{\theta_t}}(\mu)}{\partial {\theta_t}} \right\|_2 \cdot \expectation{ \left[ \left\| \hat{g}_t \right\|_2^2 \right] } - \eta \cdot \bigg\| \frac{\partial V^{\pi_{\theta_t}}(\mu)}{\partial {\theta_t}} \bigg\|_2^2 \qquad \left( \text{unbiased PG, by \cref{lem:unbiased_bounded_variance_softmax_pg_general_on_policy_stochastic_gradient}} \right) \\
    &\le \frac{2 \cdot C}{ \left( 1 - \gamma \right)^4 } \cdot \eta^2 \cdot  \left\| \frac{\partial V^{\pi_{\theta_t}}(\mu)}{\partial {\theta_t}} \right\|_2 - \eta \cdot \bigg\| \frac{\partial V^{\pi_{\theta_t}}(\mu)}{\partial {\theta_t}} \bigg\|_2^2 \qquad \left( \expectation{ \left[ \left\| \hat{g}_t \right\|_2^2 \right] } \le \frac{2}{ \left( 1 - \gamma \right)^4 }, \text{ by \cref{lem:unbiased_bounded_variance_softmax_pg_general_on_policy_stochastic_gradient}} \right) \\
    &= - \frac{ \left( 1 - \gamma \right)^4 }{8 \cdot C} \cdot \bigg\| \frac{\partial V^{\pi_{\theta_t}}(\mu)}{\partial {\theta_t}} \bigg\|_2^3 \qquad \left( \text{by \cref{eq:almost_sure_global_convergence_softmax_pg_general_on_policy_stochastic_gradient_result_2}} \right) \\
    &\le - \frac{ \left( 1 - \gamma \right)^4 }{8 \cdot C} \cdot \expectation{ \left[ \min_s{ \pi_{\theta_t}(a^*(s)|s) }^3 \right] } \cdot \expectation{ \left[ \delta(\theta_{t})^3 \right] } \cdot \bigg\| \frac{d_{\rho}^{\pi^*}}{ d_{\mu}^{\pi_{\theta_t}} } \bigg\|_\infty^{-3} \cdot \frac{1}{ S \cdot \sqrt{S}} 
    \qquad \left( \text{by \cref{lem:non_uniform_lojasiewicz_softmax_general}} \right) \\
    &\le - \frac{ \left( 1 - \gamma \right)^4 }{8 \cdot C} \cdot \left( \expectation{ \left[ \delta(\theta_{t}) \right] } \right)^3 \cdot \bigg\| \frac{d_{\rho}^{\pi^*}}{ d_{\mu}^{\pi_{\theta_t}} } \bigg\|_\infty^{-3} \cdot \frac{c}{ S \cdot \sqrt{S}}, \qquad \left( \text{by Jensen's inequality}\right)
\end{align}
where
\begin{align}
\label{eq:almost_sure_global_convergence_softmax_pg_general_on_policy_stochastic_gradient_intermediate_3}
    c &\coloneqq \inf_{t\ge 1} \expectation{ \left[ \min_s{ \pi_{\theta_t}(a^*(s)|s) }^3 \right] } \\
    &\ge \inf_{t\ge 1} \left( \expectation{ \left[ \min_s{ \pi_{\theta_t}(a^*(s)|s) } \right] } \right)^3 \qquad \left( \text{by Jensen's inequality}\right) \\
    &> 0,
\end{align}
and the last inequality is from \citep[Lemma 9]{mei2020global}, since the expected iteration equals the true gradient update, which converges to global optimal policy.
According to \cref{eq:stationary_distribution_dominate_initial_state_distribution}, we have,
\begin{align}
\label{eq:almost_sure_global_convergence_softmax_pg_general_on_policy_stochastic_gradient_intermediate_4}
\MoveEqLeft
    \expectation{ \left[ \delta(\theta_{t+1}) \right] } - \expectation{ \left[ \delta(\theta_{t}) \right] } \le - \frac{ \left( 1 - \gamma \right)^7 }{8 \cdot C} \cdot \left( \expectation{ \left[ \delta(\theta_{t}) \right] } \right)^3 \cdot \bigg\| \frac{d_{\rho}^{\pi^*}}{ \mu } \bigg\|_\infty^{-3} \cdot \frac{c}{ S \cdot \sqrt{S}}.
\end{align}
Denote $\tilde{\delta}(\theta_t) \coloneqq \expectation{ \left[ \delta(\theta_{t}) \right] } $. Using similar calculations in \cref{eq:almost_sure_global_convergence_softmax_pg_special_on_policy_stochastic_gradient_intermediate_4}, we have, for all $t \ge 1$,
\begin{align}
\label{eq:almost_sure_global_convergence_softmax_pg_general_on_policy_stochastic_gradient_intermediate_5}
\MoveEqLeft
    \frac{1}{ \tilde{\delta}(\theta_t)^{2} } \ge \frac{1}{\tilde{\delta}(\theta_1)^{2}} + 2 \cdot \sum_{s=1}^{t-1}{ \frac{1}{ \tilde{\delta}(\theta_{s})^{3} } \cdot \left( \tilde{\delta}(\theta_{s}) - \tilde{\delta}(\theta_{s+1}) \right) } \\
    &\ge \frac{1}{\tilde{\delta}(\theta_1)^{2}} + 2 \cdot \sum_{s=1}^{t-1}{ \frac{1}{ \bcancel{ \tilde{\delta}(\theta_{s})^{3} } } \cdot \frac{ \left( 1 - \gamma \right)^7 }{8 \cdot C} \cdot \bigg\| \frac{d_{\rho}^{\pi^*}}{ \mu } \bigg\|_\infty^{-3} \cdot \frac{c}{ S \cdot \sqrt{S}} \cdot \bcancel{\tilde{\delta}(\theta_{s})^{3}} } \qquad \left( \text{by \cref{eq:almost_sure_global_convergence_softmax_pg_general_on_policy_stochastic_gradient_intermediate_4}} \right) \\
    &= \frac{1}{\tilde{\delta}(\theta_1)^{2}} + \frac{ \left( 1 - \gamma \right)^7 }{4 \cdot C} \cdot \bigg\| \frac{d_{\rho}^{\pi^*}}{ \mu } \bigg\|_\infty^{-3} \cdot \frac{c}{ S \cdot \sqrt{S}} \cdot \left( t - 1 \right) \\
    &\ge \frac{ \left( 1 - \gamma \right)^7 }{4 \cdot C} \cdot \bigg\| \frac{d_{\rho}^{\pi^*}}{ \mu } \bigg\|_\infty^{-3} \cdot \frac{c}{ S \cdot \sqrt{S}} \cdot t \qquad \left( \tilde{\delta}(\theta_1)^2 \le \frac{1}{ \left( 1 - \gamma \right)^2 } < \frac{4 \cdot C}{ \left( 1 - \gamma \right)^7 } \cdot \bigg\| \frac{d_{\rho}^{\pi^*}}{ \mu } \bigg\|_\infty^{3} \cdot \frac{S \cdot \sqrt{S}}{c} \right) \\
    &= \frac{ \left( 1 - \gamma \right)^7 }{4} \cdot \bigg\| \frac{d_{\rho}^{\pi^*}}{ \mu } \bigg\|_\infty^{-3} \cdot \frac{c}{ S^2 } \cdot \frac{ (1 - \gamma) \cdot \gamma }{ 3 \cdot (1 - \gamma) \cdot \gamma + 2 \cdot  \left( C_\infty - (1 - \gamma) \right) } \cdot t \qquad \left( \text{by \cref{eq:almost_sure_global_convergence_softmax_pg_general_on_policy_stochastic_gradient_result_3}} \right) \\
    &= \frac{ \left( 1 - \gamma \right)^8 }{4} \cdot \bigg\| \frac{d_{\rho}^{\pi^*}}{ \mu } \bigg\|_\infty^{-3} \cdot \frac{c}{ S^2 } \cdot \frac{\gamma }{ 3 \cdot (1 - \gamma) \cdot \gamma + 2 \cdot  \left( C_\infty - (1 - \gamma) \right) } \cdot t,
\end{align}
which implies that,
\begin{align}
    \expectation{ \left[ V^*(\mu) - V^{\pi_{\theta_t}}(\mu) \right] } \le \frac{ 2 \cdot S }{ \sqrt{c}} \cdot \frac{\sqrt{3 \cdot (1 - \gamma) \cdot \gamma + 2 \cdot  \left( C_\infty - (1 - \gamma) \right)}}{ \left( 1 - \gamma \right)^4 \cdot \sqrt{\gamma} } \cdot \bigg\| \frac{d_{\rho}^{\pi^*}}{ \mu } \bigg\|_\infty^{3/2} \cdot \frac{1}{ \sqrt{t} },
\end{align}
where $c$ is from \cref{eq:almost_sure_global_convergence_softmax_pg_general_on_policy_stochastic_gradient_intermediate_3}. This leads to the final result,
\begin{align}
\MoveEqLeft
    \expectation{ \left[ V^*(\rho) - V^{\pi_{\theta_t}}(\rho) \right] } \le \frac{1}{1 - \gamma} \cdot C_\infty^\prime \cdot \expectation{ \left[ V^*(\mu) - V^{\pi_{\theta_t}}(\mu) \right] } \\
    &\le \frac{ 2 \cdot S }{ \sqrt{c}} \cdot \frac{\sqrt{3 \cdot (1 - \gamma) \cdot \gamma + 2 \cdot  \left( C_\infty - (1 - \gamma) \right)}}{ \left( 1 - \gamma \right)^5 \cdot \sqrt{\gamma} } \cdot \bigg\| \frac{d_{\rho}^{\pi^*}}{ \mu } \bigg\|_\infty^{3/2} \cdot \frac{ C_\infty^\prime }{ \sqrt{t} },
\end{align}
which implies that $V^*(\rho) - V^{\pi_{\theta_t}}(\rho) \to 0$ as $t \to \infty$ in probability, i.e.,
\begin{align}
    \lim_{t \to \infty}{ \probability{ \left( V^*(\rho) - V^{\pi_{\theta_t}}(\rho) > \epsilon \right) } } = 0,
\end{align}
for all $\epsilon > 0$.
\end{proof}

\subsubsection{NPG}

We show that results similar to \cref{thm:failure_probability_softmax_natural_pg_special_on_policy_stochastic_gradient} hold in general MDPs, given the following positive reward assumption, which generalizes \cref{asmp:positive_reward}.
\begin{assumption}[Positive reward]
\label{asmp:positive_reward_general}
$r(s, a) \in (0,1]$, for all $(s,a) \in \gS \times \gA$.
\end{assumption}
Given \cref{asmp:positive_reward_general}, we have, for all policy $\pi_\theta$,
\begin{align}
\label{eq:positive_Q_value}
    Q^{\pi_\theta}(s, a) \in \left(0 , 1 / \left(1 - \gamma \right) \right].
\end{align}
Replacing $Q^{\pi_{\theta_t}}$ in \cref{alg:softmax_natural_pg_general_true_gradient} with $\hat{Q}^{\pi_{\theta_t}}$ in \cref{def:on_policy_parallel_importance_sampling}, we have \cref{alg:softmax_natural_pg_general_on_policy_stochastic_gradient}.
\begin{figure}[ht]
\centering
\vskip -0.2in
\begin{minipage}{.7\linewidth}
    \begin{algorithm}[H]
    \caption{Natural PG (NPG), on-policy stochastic gradient}
    \label{alg:softmax_natural_pg_general_on_policy_stochastic_gradient}
    \begin{algorithmic}
    \STATE {\bfseries Input:} Learning rate $\eta > 0$.
    \STATE {\bfseries Output:} Policies $\pi_{\theta_t} = \softmax(\theta_t)$.
    \STATE Initialize parameter $\theta_1(s,a)$ for all $(s,a) \in \gS \times \gA$.
    \WHILE{$t \ge 1$}
    \STATE Sample $a_t(s) \sim \pi_{\theta_t}(\cdot | s)$ for all $s \in \gS$.
    \STATE $\hat{Q}^{\pi_{\theta_t}}(s,a) \gets \frac{ \sI\left\{ a_t(s) = a \right\} }{ \pi_{\theta_t}(a | s) } \cdot Q^{\pi_{\theta_t}}(s,a)$. $\qquad \left( \text{by \cref{def:on_policy_parallel_importance_sampling}} \right)$
    \STATE $\theta_{t+1} \gets \theta_{t} + \eta \cdot \hat{Q}^{\pi_{\theta_{t}}}$.
    \ENDWHILE
    \end{algorithmic}
    \end{algorithm}
\end{minipage}
\end{figure}

First, the following result shows that the on-policy stochastic natural PG is unbiased but unbounded, generalizing \cref{lem:bias_variance_softmax_natural_pg_special_on_policy_stochastic_gradient}.
\begin{lemma}
\label{lem:bias_variance_softmax_natural_pg_general_on_policy_stochastic_gradient}
Let $\hat{Q}^{\pi_{\theta}}(s,\cdot)$ be the IS parallel estimator using on-policy sampling $a(s) \sim \pi_{\theta}(\cdot | s)$, for all $s$. For NPG, we have,
\begin{align}
    \expectation_{a(s) \sim \pi_\theta(\cdot | s)}{ \left[ \hat{Q}^{\pi_{\theta}} \right] } &= Q^{\pi_{\theta}}, \\
    \expectation_{a(s) \sim \pi_\theta(\cdot | s)}{ \left[ \sum_{(s,a)}{ \hat{Q}^{\pi_{\theta}}(s,a)^2 } \right] } &= \sum_{(s,a)}{ \frac{ Q^{\pi_{\theta}}(s,a)^2  }{ \pi_{\theta}(a | s) } }.
\end{align}
\end{lemma}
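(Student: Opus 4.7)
The plan is to verify both identities by direct calculation, mirroring the one-state proof of \cref{lem:bias_variance_softmax_natural_pg_special_on_policy_stochastic_gradient}. The main observation is that, under \cref{def:on_policy_parallel_importance_sampling}, the randomness in $\hat{Q}^{\pi_\theta}(s,\cdot)$ comes solely from the per-state samples $a(s) \sim \pi_\theta(\cdot \mid s)$, which are mutually independent across states and, crucially, the entry indexed by $(s,a)$ depends only on $a(s)$ and the deterministic quantity $Q^{\pi_\theta}(s,a)$. So both expectations factorize across $s$ and reduce to a scalar computation per state.

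For the first identity, I will fix $(s,a)$, apply the tower property against $a(s) \sim \pi_\theta(\cdot \mid s)$, and compute
$\E[\sI\{a(s)=a\}/\pi_\theta(a\mid s)] = \pi_\theta(a\mid s)/\pi_\theta(a\mid s) = 1$,
which immediately yields $\E[\hat{Q}^{\pi_\theta}(s,a)] = Q^{\pi_\theta}(s,a)$. This is structurally identical to the first part of \cref{lem:bias_variance_softmax_natural_pg_special_on_policy_stochastic_gradient}, just applied separately to each state.

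For the second identity, I will use $\sI\{a(s)=a\}^2 = \sI\{a(s)=a\}$ to write
$\sum_{a \in \gA} \hat{Q}^{\pi_\theta}(s,a)^2 = \sum_{a \in \gA} \frac{\sI\{a(s)=a\}}{\pi_\theta(a\mid s)^2} \, Q^{\pi_\theta}(s,a)^2 = \frac{Q^{\pi_\theta}(s,a(s))^2}{\pi_\theta(a(s)\mid s)^2}$.
Taking expectation over $a(s) \sim \pi_\theta(\cdot \mid s)$ gives $\sum_{a \in \gA} Q^{\pi_\theta}(s,a)^2/\pi_\theta(a\mid s)$. Summing the resulting per-state quantity over $s \in \gS$ (using independence of the $a(s)$'s, or just linearity of expectation) yields the claimed expression.

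There is no real obstacle here; the lemma is a mechanical extension of the bandit identity in \cref{lem:bias_variance_softmax_natural_pg_special_on_policy_stochastic_gradient} to the parallel per-state IS estimator. The only point worth being careful about is keeping track of which random variable indexes which state: in $\hat{Q}^{\pi_\theta}(s,a)$ only $a(s)$ is random, so the expectation over the joint distribution of $\{a(s')\}_{s' \in \gS}$ collapses to an expectation over $a(s)$ alone, and the double-summation over $(s,a)$ can be handled one state at a time.
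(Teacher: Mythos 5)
Your proposal is correct and follows essentially the same route as the paper's proof: a direct per-state computation of the expectation of the indicator-over-probability factor for unbiasedness, and the collapse of $\sum_a \hat{Q}^{\pi_\theta}(s,a)^2$ to $Q^{\pi_\theta}(s,a(s))^2/\pi_\theta(a(s)\mid s)^2$ followed by averaging over $a(s)$ for the second moment. The only cosmetic difference is that you invoke independence across states, whereas the paper needs only linearity of expectation; both are fine.
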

\begin{proof}
\textbf{First part.} Unbiased.

According to \cref{def:on_policy_parallel_importance_sampling}, we have, for all $(s, a) \in \gS \times \gA$,
\begin{align}
    \expectation_{a(s) \sim \pi_\theta(\cdot | s)}{ \left[ \hat{Q}^{\pi_{\theta}}(s, a) \right] } &= \sum_{a(s) \in \gA}{ \pi_\theta(a(s) | s) } \cdot \frac{ \sI\left\{ a(s) = a \right\} }{ \pi_{\theta}(a | s) } \cdot Q^{\pi_{\theta}}(s,a) \\
    &= Q^{\pi_{\theta}}(s,a).
\end{align}

\textbf{Second part.} Unbounded.

We have, for all $s \in \gS$,
\begin{align}
    \sum_{a \in \gA}{ \hat{Q}^{\pi_{\theta}}(s, a)^2 } &= \sum_{a \in \gA}{ \frac{ \sI\left\{ a(s) = a \right\} }{ \pi_{\theta}(a | s)^2 } \cdot Q^{\pi_{\theta}}(s,a)^2 }.
\end{align}
Then we have,
\begin{align}
    \expectation_{a(s) \sim \pi_\theta(\cdot | s)}{ \left[ \sum_{(s,a)}{ \hat{Q}^{\pi_{\theta}}(s,a)^2 } \right] } &= \sum_{s \in \gS}{ \sum_{a(s) \in \gA}{ \pi_{\theta}(a(s) | s) \cdot \sum_{a \in \gA}{ \frac{ \sI\left\{ a(s) = a \right\} }{ \pi_{\theta}(a | s)^2 } \cdot Q^{\pi_{\theta}}(s,a)^2 } } } \\
    &= \sum_{s \in \gS}{ \sum_{a(s) \in \gA}{ \pi_{\theta}(a(s) | s) \cdot \frac{ 1 }{ \pi_{\theta}(a(s) | s)^2 } \cdot Q^{\pi_{\theta}}(s,a(s))^2  } } \\
    &= \sum_{(s,a)}{ \frac{ Q^{\pi_{\theta}}(s,a)^2  }{ \pi_{\theta}(a | s) } }. \qedhere
\end{align}
\end{proof}

The following results generalize \cref{thm:failure_probability_softmax_natural_pg_special_on_policy_stochastic_gradient} to general MDPs.
\begin{theorem}
\label{thm:failure_probability_softmax_natural_pg_general_on_policy_stochastic_gradient}
Let $a^*(s)$ be the action that $\pi^*$ selects in state $s$. Using \cref{alg:softmax_natural_pg_general_on_policy_stochastic_gradient}, we have: \textbf{(i)} for any state $s \in \gS$, with positive probability, $\sum_{a \not= a^*(s)}{ \pi_{\theta_t}(a | s)} \to 1$ as $t \to \infty$; \textbf{(ii)} for all $ (s,a) \in \gS \times \gA$, with positive probability, $\pi_{\theta_t}(a | s) \to 1$, as $t \to \infty$.
\end{theorem}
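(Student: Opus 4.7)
The proof plan mirrors that of \cref{thm:failure_probability_softmax_natural_pg_special_on_policy_stochastic_gradient}, reducing the general-MDP argument state-by-state using the parallel on-policy IS structure of \cref{def:on_policy_parallel_importance_sampling}. The key structural observation is that at a fixed state $s \in \gS$, the NPG update of \cref{alg:softmax_natural_pg_general_on_policy_stochastic_gradient} only modifies $\theta_{t+1}(s, a_t(s))$ among $\{ \theta_{t+1}(s, a)\}_{a \in \gA}$, and the magnitude of this modification is $\eta \cdot Q^{\pi_{\theta_t}}(s, a_t(s))/\pi_{\theta_t}(a_t(s)|s)$. Although $Q^{\pi_{\theta_t}}(s, a)$ couples with the evolving policy at \emph{all} states, by \cref{asmp:positive_reward_general} and \cref{eq:positive_Q_value} we always have the state-independent lower bound $Q^{\pi_{\theta_t}}(s, a) \ge r(s, a) \ge r_{\min} > 0$ where $r_{\min} \coloneqq \min_{(s,a)} r(s, a)$. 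This lets the one-state argument be run verbatim at each individual state.

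For part (ii), fix $(s_0, a_0) \in \gS \times \gA$ and let $\gE_T = \{a_t(s_0) = a_0 \text{ for all } 1 \le t \le T\}$, $\gE = \cap_{T \ge 1} \gE_T$. By the tower rule together with the conditional sampling identity $\probability(a_t(s_0) = a_0 \mid \gH_{t-1}) = \pi_{\theta_t}(a_0 | s_0)$, we obtain $\probability(\gE_T) \ge \prod_{t=1}^T b_t$ whenever $b_t$ is a deterministic lower bound on $\pi_{\theta_t}(a_0|s_0)$ holding $\probability$-a.s.\ on $\gE_{t-1}$. On $\gE_{t-1}$, $\theta_t(s_0, a) = \theta_1(s_0, a)$ for all $a \ne a_0$, while $\theta_t(s_0, a_0) \ge \theta_1(s_0, a_0) + \eta \cdot r(s_0, a_0) \cdot (t-1)$, exactly paralleling \cref{eq:failure_probability_softmax_natural_pg_special_on_policy_stochastic_gradient_intermediate_15,eq:failure_probability_softmax_natural_pg_special_on_policy_stochastic_gradient_intermediate_16}. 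Applying \cref{lem:auxiliary_lemma_1} as in \cref{eq:failure_probability_softmax_natural_pg_special_on_policy_stochastic_gradient_intermediate_17,eq:failure_probability_softmax_natural_pg_special_on_policy_stochastic_gradient_intermediate_18} yields the analogue of \cref{eq:failure_probability_softmax_natural_pg_special_on_policy_stochastic_gradient_intermediate_14} with $r(a)$ replaced by $r(s_0, a_0)$ and initial parameters restricted to the $s_0$-row, and the integral comparison test from \cref{eq:failure_probability_softmax_natural_pg_special_on_policy_stochastic_gradient_intermediate_19} gives $\prod_{t=1}^\infty b_t \in \Omega(1)$, hence $\probability(\gE) > 0$. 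Monotone convergence then gives $\pi_{\theta_t}(a_0 | s_0) \to 1$ on an event of positive probability.

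For part (i), fix $s \in \gS$ and consider the event $\gE' = \{a_t(s) \ne a^*(s) \text{ for all } t \ge 1\}$. On the prefix event $\gE'_{t-1}$, $\theta_t(s, a^*(s)) = \theta_1(s, a^*(s))$ remains unchanged, while for any $a \ne a^*(s)$, using $Q \ge r \ge r_{\min}$ and $\pi_{\theta_\tau}(a | s) \in (0,1)$ we get $\theta_t(s, a) \ge \theta_1(s, a) + \eta \cdot r_{\min} \cdot \sum_{\tau < t} \sI\{a_\tau(s) = a\}$, in exact analogy with \cref{eq:failure_probability_softmax_natural_pg_special_on_policy_stochastic_gradient_intermediate_7}. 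The Jensen-based lower bound of \cref{eq:failure_probability_softmax_natural_pg_special_on_policy_stochastic_gradient_intermediate_8} on $\sum_{a \ne a^*(s)} \exp\{\theta_t(s,a)\}$ then yields the structural lower bound from \cref{eq:failure_probability_softmax_natural_pg_special_on_policy_stochastic_gradient_intermediate_12} on $\sum_{a \ne a^*(s)} \pi_{\theta_t}(a | s)$ and hence a valid deterministic $b_t$ of the form \cref{eq:failure_probability_softmax_natural_pg_special_on_policy_stochastic_gradient_intermediate_6}. The same tower-rule computation followed by the integral test of \cref{eq:failure_probability_softmax_natural_pg_special_on_policy_stochastic_gradient_intermediate_13} gives $\probability(\gE') \in \Omega(1)$.

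The only non-routine point is justifying that coupling across states does not degrade the deterministic per-state bounds used in the tower-rule induction. This reduces to two observations that hold regardless of what happens at other states: (a) the identity $\probability(a_t(s) = a \mid \gH_{t-1}) = \pi_{\theta_t}(a | s)$ still holds coordinate-wise under parallel on-policy sampling; and (b) the inequality $\theta_t(s, a_t(s)) - \theta_{t-1}(s, a_t(s)) \ge \eta \cdot r(s, a_t(s))$ uses only the uniform bound $Q \ge r$, so it is immune to the action choices at $s' \ne s$. With these, the argument is structurally identical to the one-state case and both claims follow.
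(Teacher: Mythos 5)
Your proposal is correct and follows essentially the same route as the paper's proof: per-state tower-rule induction on nested events, deterministic lower bounds $b_t$ on the relevant conditional probabilities, \cref{lem:auxiliary_lemma_1}, and the integral comparison to show the infinite product is bounded away from zero. The only (cosmetic) difference is that you lower-bound the parameter increments via $Q^{\pi_{\theta_t}}(s,a) \ge r(s,a) \ge r_{\min}$ while the paper uses $Q_{\min} \coloneqq \min_\pi \min_{(s,a)} Q^\pi(s,a)$; both are valid uniform positive constants under \cref{asmp:positive_reward_general}, and your explicit handling of the cross-state coupling (points (a) and (b)) is the same decoupling the paper relies on implicitly.
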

\begin{proof}
\textbf{First part.} For any state $s \in \gS$, with positive probability, $\sum_{a \not= a^*(s)}{ \pi_{\theta_t}(a | s)} \to 1$ as $t \to \infty$.

The proof is a generalization of the first part of \cref{thm:failure_probability_softmax_natural_pg_special_on_policy_stochastic_gradient}.

Let $\probability$ denote the probability measure that over the probability space $(\Omega,\gF)$ that holds all of our random variables.
Let $\gB = \{ a \in \gA: a \ne a^*(s) \}$.
By abusing notation, for any $\pi(\cdot | s) : \gA \to [0,1]$ map
we let $\pi_{\theta_t}(\gB | s)$ to stand for $\sum_{a \in \gB}\pi_{\theta_t}(a | s)$.
Define for $t\ge 1$ the event
$\gB_t = \{ a_t(s) \ne a^*(s) \} (= \{ a_t(s) \in \gB \})$ and let $\gE_t = \gB_1 \cap \dots \cap \gB_t$.
Thus, $\gE_t$ is the event that $a^*$ was not chosen in the first $t$ time steps.
Note that $\{\gE_t\}_{t\ge 1}$ is a nested sequence and thus, by the monotone convergence theorem, 
\begin{align}
\label{eq:failure_probability_softmax_natural_pg_general_on_policy_stochastic_gradient_intermediate_1}
    \lim_{t\to\infty}\probability{(\gE_t)} = \probability{(\gE)}\,,
\end{align}
where $\gE = \cap_{t\ge 1}\gB_t$.
We start with a lower bound on the probability of $\gE_t$.
The lower bound is stated in a generic form:
In particular, let  $(b_t)_{t\ge 1}$ be a deterministic sequence which satisfies that for any $t\ge 1$,
\begin{align}
\label{eq:failure_probability_softmax_natural_pg_general_on_policy_stochastic_gradient_intermediate_2}
    \mathbb{I}_{\gE_{t-1}} \cdot  \pi_{\theta_t}(\gB | s) \ge \mathbb{I}_{\gE_{t-1}} \cdot b_t \qquad \text{  holds $\probability$-almost surely},
\end{align}
where we let $\gE_0=\Omega$ and for an event $\gE$, $\mathbb{I}_{\gE}$ stands for the characteristic function of $\gE$ (i.e., $\mathbb{I}_{\gE}(\omega)=1$ if $\omega\in \gE$ and $\mathbb{I}_{\gE}(\omega)=0$, otherwise).
We make the following claim:

\noindent \underline{Claim 1}: Under the above assumption, for any $t\ge 1$
it holds that 
\begin{align}
\label{eq:failure_probability_softmax_natural_pg_general_on_policy_stochastic_gradient_intermediate_3}
    \probability(\gE_t) \ge \prod_{s=1}^t b_s\,.
\end{align}

For the proof of this claim let $\gH_t$ denote the sequence formed of the first $t$ actions:
\begin{align}
    \gH_t \coloneqq \left( a_1(s), a_2(s), \cdots, a_t(s) \right).
\end{align}
By definition, 
\begin{align}
    \theta_t(s, \cdot) = \gA\left( \theta_1(s, \cdot), a_1(s), Q^{\pi_{\theta_1}}(s, a_1(s)), \cdots, \theta_{t-1}(s, \cdot), a_{t-1}(s), Q^{\pi_{\theta_{t-1}}}(s, a_{t-1}(s)) \right).
\end{align}
By our assumption that the $t$th action $a_t(s)$ is chosen from $\pi_{\theta_t}(\cdot | s)$, it follows that 
$\probability$ satisfies that
for all $a$ and $t \ge 1$, 
\begin{align}
\label{eq:failure_probability_softmax_natural_pg_general_on_policy_stochastic_gradient_intermediate_4}
    \probability{ \left( a_t(s) = a \ | \ \gH_{t-1} \right) } = \pi_{\theta_t}(a | s) \qquad \text{ $\probability$-almost surely.} 
\end{align}

We prove the claim by induction on $t$.
For $t=1$, from \cref{eq:failure_probability_softmax_natural_pg_general_on_policy_stochastic_gradient_intermediate_2,eq:failure_probability_softmax_natural_pg_general_on_policy_stochastic_gradient_intermediate_4}, using that $\gE_0 = \Omega$ and $H_0=()$, we have that $\probability$-almost surely,
\begin{align}
    \probability{ ( \gE_1 ) } = \pi_{\theta_1}(\gB | s).
\end{align}
Suppose the claim holds up to $t-1$. We have,
\begin{align}
\label{eq:failure_probability_softmax_natural_pg_general_on_policy_stochastic_gradient_intermediate_5}
\MoveEqLeft
    \probability{ ( \gE_t ) } = \expectation{ \left[ \probability{ (  \gE_t  \ | \ \gH_{t-1} ) } \right] } \qquad \left( \text{by the tower rule} \right) \\
    &= \expectation{  \left[ \sI_{\gE_{t-1}} \cdot \probability{ ( \gB_t \ | \ \gH_{t-1} ) } \right] } \qquad \left( \gE_{t-1} \text{ is } \gH_{t-1} \text{-measurable} \right) \\
    &= \expectation{  \left[ \sI_{\gE_{t-1}} \cdot \pi_{\theta_t}(\gB | s ) \right] } \qquad \left(
    \text{by \cref{eq:failure_probability_softmax_natural_pg_general_on_policy_stochastic_gradient_intermediate_4}} \right) \\
    &\ge \expectation{  \left[ \sI_{\gE_{t-1}} \cdot b_t \right] } \qquad \left( \text{by \cref{eq:failure_probability_softmax_natural_pg_general_on_policy_stochastic_gradient_intermediate_2}} \right) \\
    &= b_t \cdot \probability{ ( \gE_{t-1} ) } \qquad \left( b_t \text{ is deterministic} \right) \\
    &= \prod_{s=1}^{t} b_s\,. \qquad \left( \text{induction hypothesis} \right)
\end{align}

Now, we claim the following:

\noindent \underline{Claim 2}: A suitable choice for $b_t$ is
\begin{align}
\label{eq:failure_probability_softmax_natural_pg_general_on_policy_stochastic_gradient_intermediate_6}
    b_t = \exp\left\{ \frac{ - \exp\left\{ \theta_1(s, a^*(s)) \right\} }{ (A-1) \cdot \exp{ \Big\{ \frac{ \sum_{a \not= a^*(s)}{ \theta_1(s, a) } + \eta \cdot Q_{\min} \cdot (t-1) }{A-1} \Big\} } } \right\}.
\end{align}

\underline{Proof of Claim 2}:
Clearly, it suffices to show that for any sequence $(a_1(s),\dots,a_{t-1}(s))$ such that $a_k(s)\ne a^*(s)$, $\theta_t(s, \cdot) \coloneqq \gA(\theta_1(s, \cdot),a_1(s), Q^{\pi_{\theta_1}}(s, a_1(s)),\dots,a_{t-1}(s),Q^{\pi_{\theta_{t-1}}}(s, a_{t-1}(s)))$ is such that
$\pi_{\theta_t}(\gB |s ) \ge b_t$ with $b_t$ as defined in \cref{eq:failure_probability_softmax_natural_pg_general_on_policy_stochastic_gradient_intermediate_6}. 

We have, for each sub-optimal action $a \not= a^*(s)$,
\begin{align}
\label{eq:failure_probability_softmax_natural_pg_general_on_policy_stochastic_gradient_intermediate_7}
\MoveEqLeft
    \theta_t(s, a) = \theta_1(s, a) + \eta \cdot \sum_{k=1}^{t-1}{ \hat{Q}^{\pi_{\theta_k}}(s, a) } \qquad \left( \text{by \cref{alg:softmax_natural_pg_general_on_policy_stochastic_gradient}} \right) \\
    &= \theta_1(s, a) + \eta \cdot \sum_{k=1}^{t-1}{ \frac{ \sI\left\{ a_k(s) = a \right\} }{ \pi_{\theta_k}(a | s) } \cdot Q^{\pi_{\theta_k}}(s, a) } \qquad \left( \text{by \cref{def:on_policy_parallel_importance_sampling}} \right)  \\
    &\ge \theta_1(s, a) + \eta \cdot \sum_{k=1}^{t-1}{ \sI\left\{ a_k(s) = a \right\} \cdot Q^{\pi_{\theta_k}}(s, a) } \qquad \left( \pi_{\theta_k}(a | s) \in (0, 1), \text{ and \cref{eq:positive_Q_value}} \right) \\
    &\ge \theta_1(s, a) + \eta \cdot Q_{\min} \cdot \sum_{k=1}^{t-1}{ \sI\left\{ a_k(s) = a \right\} }, \qquad \left( Q_{\min} \coloneqq \min_{\pi} \min_{(s,a)}{ Q^{\pi}(s, a) } \right)
\end{align}
where $Q_{\min} \in (0, 1/(1 - \gamma)]$  according to \cref{eq:positive_Q_value}. Then we have,
\begin{align}
\label{eq:failure_probability_softmax_natural_pg_general_on_policy_stochastic_gradient_intermediate_8}
\MoveEqLeft
    \sum_{a \not= a^*(s)}{ \exp\left\{ \theta_t(s, a) \right\} } \ge (A-1) \cdot \exp{ \left\{ \frac{ \sum_{a \not= a^*}{ \theta_t(s, a) } }{A-1} \right\} } \qquad \left( \text{by Jensen's inequality} \right) \\
    &\ge (A-1) \cdot \exp{ \left\{ \frac{ \sum_{a \not= a^*(s)}{ \theta_1(s, a) } + \eta \cdot Q_{\min} \cdot \sum_{a \not= a^*(s)}{ \sum_{k=1}^{t-1}{ \sI\left\{ a_k(s) = a \right\} } } }{A-1} \right\} } \qquad \left( \text{by \cref{eq:failure_probability_softmax_natural_pg_general_on_policy_stochastic_gradient_intermediate_7}} \right) \\
    &= (A-1) \cdot \exp{ \left\{ \frac{ \sum_{a \not= a^*(s)}{ \theta_1(s, a) } + \eta \cdot Q_{\min} \cdot (t-1) }{A-1} \right\} }. \qquad \left( a_k(s) \not= a^*(s), \text{ for all } k \in \{ 1, 2, \dots, t-1 \} \right)
\end{align}
On the other hand, we have,
\begin{align}
\label{eq:failure_probability_softmax_natural_pg_general_on_policy_stochastic_gradient_intermediate_9}
    \theta_t(s, a^*(s)) &= \theta_1(s, a^*(s)) + \eta \cdot \sum_{k=1}^{t-1}{ \frac{ \sI\left\{ a_k(s) = a^*(s) \right\} }{ \pi_{\theta_k}(a^*(s) | s) } \cdot Q^{\pi_{\theta_k}}(s, a^*(s)) } \qquad \left( \text{by \cref{alg:softmax_natural_pg_general_on_policy_stochastic_gradient,def:on_policy_parallel_importance_sampling}} \right) \\
    &= \theta_1(s, a^*(s)). \qquad \left( a_k(s) \not= a^*(s) \text{ for all } k \in \left\{ 1, 2, \dots, t-1 \right\} \right)
\end{align}
Next, we have,
\begin{align}
\label{eq:failure_probability_softmax_natural_pg_general_on_policy_stochastic_gradient_intermediate_10}
\MoveEqLeft
    \sum_{a \not= a^*(s)}{ \pi_{\theta_t}(a | s)} = 1 - \pi_{\theta_t}(a^*(s) | s) \\
    &= 1 - \frac{ \exp\left\{ \theta_t(s, a^*(s)) \right\} }{ \sum_{a \not= a^*(s)}{ \exp\left\{ \theta_t(s, a) \right\} } + \exp\left\{ \theta_t(s, a^*(s)) \right\} } \\
    &\ge 1 - \frac{ \exp\left\{ \theta_1(s, a^*(s)) \right\} }{ (A-1) \cdot \exp{ \left\{ \frac{ \sum_{a \not= a^*(s)}{ \theta_1(s, a) } + \eta \cdot Q_{\min} \cdot (t-1) }{A-1} \right\} } + \exp\left\{ \theta_1(s, a^*(s)) \right\} } \qquad \left( \text{by \cref{eq:failure_probability_softmax_natural_pg_general_on_policy_stochastic_gradient_intermediate_8,eq:failure_probability_softmax_natural_pg_general_on_policy_stochastic_gradient_intermediate_9}} \right) \\
    &\ge \exp\left\{ \frac{-1 }{ \frac{ (A-1) \cdot \exp{ \big\{ \frac{ \sum_{a \not= a^*(s)}{ \theta_1(s, a) } + \eta \cdot Q_{\min} \cdot (t-1) }{A-1} \big\} } + \exp\left\{ \theta_1(s, a^*(s)) \right\} }{ \exp\left\{ \theta_1(s, a^*(s)) \right\} } - 1 } \right\} \qquad \left( \text{by \cref{lem:auxiliary_lemma_1}} \right) \\
    &= \exp\left\{ \frac{ - \exp\left\{ \theta_1(s, a^*(s)) \right\} }{ (A-1) \cdot \exp{ \Big\{ \frac{ \sum_{a \not= a^*(s)}{ \theta_1(s, a) } + \eta \cdot Q_{\min} \cdot (t-1) }{A-1} \Big\} } } \right\} = b_t,
\end{align}

Combining \cref{eq:failure_probability_softmax_natural_pg_general_on_policy_stochastic_gradient_intermediate_1} with 
the conclusions of Claim 1 and 2 together, we get
\begin{align}
\label{eq:failure_probability_softmax_natural_pg_general_on_policy_stochastic_gradient_intermediate_11}
\MoveEqLeft
\probability{(\gE)}
    \ge \prod_{t=1}^{\infty}{ \exp\left\{ \frac{ - \exp\left\{ \theta_1(s, a^*(s)) \right\} }{ (A-1) \cdot \exp{ \Big\{ \frac{ \sum_{a \not= a^*(s)}{ \theta_1(s, a) } + \eta \cdot Q_{\min} \cdot (t-1) }{A-1} \Big\} } } \right\} } \qquad \left( \text{by \cref{eq:failure_probability_softmax_natural_pg_general_on_policy_stochastic_gradient_intermediate_10}} \right) \\
    &\ge \exp{ \left\{ - \frac{ \exp\left\{ \theta_1(s, a^*(s)) \right\} }{ \exp{ \Big\{ \frac{ \sum_{a \not= a^*(s)}{ \theta_1(s, a) } }{A-1} \Big\} } } \cdot \frac{ \exp{ \big\{ \frac{ \eta \cdot Q_{\min} }{A-1} \big\} } }{ A - 1} \cdot \int_{t=0}^{\infty}{ \frac{1}{ \exp{ \big\{ \frac{ \eta \cdot Q_{\min} \cdot t }{A-1} \big\} } } dt } \right\} } \\
    &= \exp{ \left\{ - \frac{ \exp\left\{ \theta_1(s, a^*(s)) \right\} }{ \exp{ \Big\{ \frac{ \sum_{a \not= a^*(s)}{ \theta_1(s, a) } }{A-1} \Big\} } } \cdot \frac{ \exp{ \big\{ \frac{ \eta \cdot Q_{\min} }{A-1} \big\} } }{ A - 1} \cdot \frac{A-1}{ \eta \cdot Q_{\min} } \right\} } \\
    &= \exp{ \left\{ - \frac{ \exp\left\{ \theta_1(s, a^*(s)) \right\} }{ \exp{ \Big\{ \frac{ \sum_{a \not= a^*(s)}{ \theta_1(s, a) } }{A-1} \Big\} } } \cdot \frac{ \exp{ \big\{ \frac{ \eta \cdot Q_{\min} }{A-1} \big\} } }{ \eta \cdot Q_{\min} } \right\} }.
\end{align}
Note that $Q_{\min} \in \Theta(1)$, $\exp\left\{ \theta_1(s, a^*(s)) \right\} \in \Theta(1)$, $\eta \in \Theta(1)$, $\exp{ \big\{ \frac{ \eta \cdot Q_{\min} }{A-1} \big\} } \in \Theta(1)$ and,
\begin{align}
    \exp{ \bigg\{ \frac{ \sum_{a \not= a^*(s)}{ \theta_1(s, a) } }{A-1} \bigg\} } \in \Theta(1).
\end{align}
Therefore, we have under state $s \in \gS$, `` the probability of sampling sub-optimal actions forever using on-policy sampling $a_t(s) \sim \pi_{\theta_t}(\cdot | s)$'' is lower bounded by a constant of $ \frac{1}{ \exp\left\{ \Theta(1) \right\} } \in \Theta(1)$, which implies that with positive probability $\Theta(1)$, we have $\sum_{a \not= a^*(s)}{ \pi_{\theta_t}(a | s)} \to 1$ as $t \to \infty$.

\textbf{Second part.} For all $ (s,a) \in \gS \times \gA$, with positive probability, $\pi_{\theta_t}(a | s) \to 1$, as $t \to \infty$.

The proof is similar to the second part of \cref{thm:failure_probability_softmax_natural_pg_special_on_policy_stochastic_gradient}. Let $\gB = \{ a \}$.
For any $\pi(\cdot | s): \gA \to [0,1]$ map
we let $\pi_{\theta_t}(\gB | s)$ to stand for $\pi_{\theta_t}(a | s)$.
Define for $t\ge 1$ the event
$\gB_t = \{ a_t(s) = a \} (= \{ a_t(s) \in \gB \})$ and let $\gE_t = \gB_1 \cap \dots \cap \gB_t$.
Thus, $\gE_t$ is the event that $a$ was chosen in the first $t$ time steps.
Note that $\{\gE_t\}_{t\ge 1}$ is a nested sequence and thus, by the monotone convergence theorem, $\lim_{t\to\infty}\probability{(\gE_t)} = \probability{(\gE)}$, where $\gE = \cap_{t\ge 1}\gB_t$. We show that by letting
\begin{align}
\label{eq:failure_probability_softmax_natural_pg_general_on_policy_stochastic_gradient_intermediate_12}
    b_t = \exp\bigg\{ \frac{ - \sum_{a^\prime \not= a}{ \exp\{ \theta_1(s, a^\prime) \} } }{ \exp\{ \theta_1(s, a) + \eta \cdot Q_{\min} \cdot \left( t - 1 \right) \} } \bigg\},
\end{align}
we have \cref{eq:failure_probability_softmax_natural_pg_general_on_policy_stochastic_gradient_intermediate_2,eq:failure_probability_softmax_natural_pg_general_on_policy_stochastic_gradient_intermediate_3} hold using the arguments in the first part.

It suffices to show that for any sequence $(a_1(s),\dots,a_{t-1}(s))$ such that $a_k(s)= a$ for all $k \in \{ 1, 2, \dots, t-1 \}$,  $\theta_t(s, \cdot) \coloneqq \gA(\theta_1(s, \cdot),a_1(s), Q^{\pi_{\theta_1}}(s, a_1(s)),\dots,a_{t-1}(s),Q^{\pi_{\theta_{t-1}}}(s, a_{t-1}(s)))$ is such that
$\pi_{\theta_t}(\gB | s ) \ge b_t$ with $b_t$ as defined in \cref{eq:failure_probability_softmax_natural_pg_general_on_policy_stochastic_gradient_intermediate_12}. Now suppose $a_1(s) = a, a_2(s) = a, \cdots, a_{t-1}(s) = a$. We have,
\begin{align}
\label{eq:failure_probability_softmax_natural_pg_general_on_policy_stochastic_gradient_intermediate_13}
    \theta_{t}(s, a) &= \theta_1(s, a) + \eta \cdot \sum_{k=1}^{t-1}{ \hat{Q}^{\pi_{\theta_k}}(s, a) } \qquad \left( \text{by \cref{alg:softmax_natural_pg_general_on_policy_stochastic_gradient}} \right) \\
    &= \theta_1(s, a) + \eta \cdot \sum_{k=1}^{t-1}{ \frac{ \sI\left\{ a_k(s) = a \right\} }{ \pi_{\theta_k}(a | s) } \cdot Q^{\pi_{\theta_k}}(s, a) } \qquad \left( \text{by \cref{def:on_policy_parallel_importance_sampling}} \right) \\
    &= \theta_1(s, a) + \eta \cdot \sum_{k=1}^{t-1}{ \frac{ Q^{\pi_{\theta_k}}(s, a) }{ \pi_{\theta_k}(a | s) }  } \qquad \left( a_k(s) = a \text{ for all } k \in \left\{ 1, 2, \dots, t-1 \right\} \right) \\
    &\ge \theta_1(s, a) + \eta \cdot \sum_{k=1}^{t-1}{ Q^{\pi_{\theta_k}}(s, a) } \qquad \left( \pi_{\theta_k}(a | s) \in (0, 1) \right) \\
    &\ge \theta_1(s, a) +  \eta \cdot Q_{\min} \cdot \left( t - 1 \right). \qquad \left( Q_{\min} \coloneqq \min_{\pi} \min_{(s,a)}{ Q^{\pi}(s, a) } \right)
\end{align}
On the other hand, we have, for any other action $a^\prime \not= a$,
\begin{align}
\label{eq:failure_probability_softmax_natural_pg_general_on_policy_stochastic_gradient_intermediate_14}
    \theta_{t}(s, a^\prime) &= \theta_1(s, a^\prime) + \eta \cdot \sum_{k=1}^{t-1}{ \frac{ \sI\left\{ a_k(s) = a^\prime \right\} }{ \pi_{\theta_k}(a^\prime | s) } \cdot Q^{\pi_{\theta_k}}(s, a^\prime) } \qquad \left( \text{by \cref{alg:softmax_natural_pg_general_on_policy_stochastic_gradient,def:on_policy_parallel_importance_sampling}} \right) \\
    &= \theta_1(s, a^\prime). \qquad \left( a_k(s) \not= a^\prime \text{ for all } k \in \left\{ 1, 2, \dots, t-1 \right\} \right)
\end{align}
Therefore, we have,
\begin{align}
\label{eq:failure_probability_softmax_natural_pg_general_on_policy_stochastic_gradient_intermediate_15}
\MoveEqLeft
    \pi_{\theta_t}(a | s) = 1 - \sum_{a^\prime \not= a}{ \pi_{\theta_t}(a^\prime |s ) } \\
    &= 1 - \frac{ \sum_{a^\prime \not= a}{ \exp\{ \theta_t(s, a^\prime) \} } }{ \exp\{ \theta_t(s, a) \} + \sum_{a^\prime \not= a}{ \exp\{ \theta_t(s, a^\prime) \} } } \\
    &\ge 1 - \frac{ \sum_{a^\prime \not= a}{ \exp\{ \theta_1(s, a^\prime) \} } }{ \exp\{ \theta_1(s, a) + \eta \cdot Q_{\min} \cdot \left( t - 1 \right) \} + \sum_{a^\prime \not= a}{ \exp\{ \theta_1(s, a^\prime) \} } }. \qquad \left( \text{by \cref{eq:failure_probability_softmax_natural_pg_general_on_policy_stochastic_gradient_intermediate_13,eq:failure_probability_softmax_natural_pg_general_on_policy_stochastic_gradient_intermediate_14}} \right) \\
    &\ge \exp\left\{ \frac{-1 }{ \frac{ \exp\{ \theta_1(s, a) + \eta \cdot Q_{\min} \cdot \left( t - 1 \right) \} + \sum_{a^\prime \not= a}{ \exp\{ \theta_1(s, a^\prime) \} } }{ \sum_{a^\prime \not= a}{ \exp\{ \theta_1(s, a^\prime) \} } } - 1 } \right\} \qquad \left( \text{by \cref{lem:auxiliary_lemma_1}} \right) \\
    &= \exp\bigg\{ \frac{ - \sum_{a^\prime \not= a}{ \exp\{ \theta_1(s, a^\prime) \} } }{ \exp\{ \theta_1(s, a) + \eta \cdot Q_{\min} \cdot \left( t - 1 \right) \} } \bigg\} \\
    &= b_t.
\end{align}
Therefore we have,
\begin{align}
\label{eq:failure_probability_softmax_natural_pg_general_on_policy_stochastic_gradient_intermediate_16}
\MoveEqLeft
    \prod_{t=1}^{\infty}{ \pi_{\theta_t}(a | s) } \ge \prod_{t=1}^{\infty}{ \exp\bigg\{ \frac{ - \sum_{a^\prime \not= a}{ \exp\{ \theta_1(s, a^\prime) \} } }{ \exp\{ \theta_1(s, a) + \eta \cdot Q_{\min} \cdot \left( t - 1 \right) \} } \bigg\} } \qquad \left( \text{by \cref{eq:failure_probability_softmax_natural_pg_general_on_policy_stochastic_gradient_intermediate_15}} \right) \\
    &= \exp\bigg\{ - \sum_{a^\prime \not= a}{ \exp\{ \theta_1(s, a^\prime) \} } \cdot \frac{ \exp\{ \eta \cdot Q_{\min} \} }{ \exp\{ \theta_1(s, a) \} } \cdot  \sum_{t=1}^{\infty}{  \frac{ 1 }{ \exp\{ \eta \cdot Q_{\min} \cdot t \} } } \bigg\} \\
    &\ge \exp\bigg\{ - \sum_{a^\prime \not= a}{ \exp\{ \theta_1(s, a^\prime) \} } \cdot \frac{ \exp\{ \eta \cdot Q_{\min} \} }{ \exp\{ \theta_1(s, a) \} } \cdot \int_{t=0}^{\infty}{  \frac{ 1 }{ \exp\{ \eta \cdot Q_{\min} \cdot t \} } dt } \bigg\} \\
    &= \exp\bigg\{ - \frac{ \exp\{ \eta \cdot Q_{\min} \} }{ \eta \cdot Q_{\min}} \cdot \frac{ \sum_{a^\prime \not= a}{ \exp\{ \theta_1(s, a^\prime) \} } }{ \exp\{ \theta_1(s, a) \} } \bigg\} \\
    &\in \Omega(1),
\end{align}
where the last line is due to $Q_{\min} \in \Theta(1)$, $\exp\left\{ \theta_1(s, a) \right\} \in \Theta(1)$ for all $(s, a) \in \gS \times \gA$, and $\eta \in \Theta(1)$. With \cref{eq:failure_probability_softmax_natural_pg_general_on_policy_stochastic_gradient_intermediate_16}, we have under state $s \in \gS$, ``the probability of sampling action $a$ forever using on-policy sampling $a_t(s) \sim \pi_{\theta_t}(\cdot | s)$'' is lower bounded by a constant of $\Omega(1)$. Therefore, for all $(s, a) \in \gS \times \gA$, with positive probability $\Omega(1)$, $\pi_{\theta_t}(a | s) \to 1$, as $t \to \infty$.
\end{proof}

\subsubsection{GNPG}

Replacing $Q^{\pi_{\theta_t}}$ in \cref{alg:softmax_gnpg_general_true_gradient} with $\hat{Q}^{\pi_{\theta_t}}$ in \cref{def:on_policy_parallel_importance_sampling}, we have \cref{alg:softmax_gnpg_general_on_policy_stochastic_gradient}.
\begin{figure}[ht]
\centering
\vskip -0.2in
\begin{minipage}{.85\linewidth}
    \begin{algorithm}[H]
    \caption{Geometry-award normalized PG (GNPG), on-policy stochastic gradient}
    \label{alg:softmax_gnpg_general_on_policy_stochastic_gradient}
    \begin{algorithmic}
    \STATE {\bfseries Input:} Learning rate $\eta > 0$.
    \STATE {\bfseries Output:} Policies $\pi_{\theta_t} = \softmax(\theta_t)$.
    \STATE Initialize parameter $\theta_1(s,a)$ for all $(s,a) \in \gS \times \gA$.
    \WHILE{$t \ge 1$}
    \STATE Sample $a_t(s) \sim \pi_{\theta_t}(\cdot | s)$ for all $s \in \gS$.
    \STATE $\hat{Q}^{\pi_{\theta_t}}(s,a) \gets \frac{ \sI\left\{ a_t(s) = a \right\} }{ \pi_{\theta_t}(a | s) } \cdot Q^{\pi_{\theta_t}}(s,a)$. $\qquad \left( \text{by \cref{def:on_policy_parallel_importance_sampling}} \right)$
    \STATE $\hat{g}_t(s, \cdot) \gets \frac{ 1 }{1-\gamma} \cdot d_{\mu}^{\pi_{\theta_t}}(s) \cdot { \left[ \sum_{a} \frac{\partial \pi_{\theta_t}(a | s)}{\partial \theta_t(s, \cdot)} \cdot \hat{Q}^{\pi_{\theta_t}}(s,a) \right] }$.
    \STATE $\theta_{t+1} \gets \theta_t + \eta \cdot \hat{g}_t / \left\| \hat{g}_t \right\|_2$.
    \ENDWHILE
    \end{algorithmic}
    \end{algorithm}
\end{minipage}
\end{figure}

The following result generalizes \cref{thm:failure_probability_softmax_gnpg_special_on_policy_stochastic_gradient}. The complication of the proofs is from the difference between NPG of \cref{alg:softmax_natural_pg_general_on_policy_stochastic_gradient} and GNPG of \cref{alg:softmax_gnpg_general_on_policy_stochastic_gradient}. In NPG, only $Q^{\pi_{\theta_t}}(s, \cdot)$ appears in the update of $\theta_t(s, \cdot)$. While in GNPG, other states also contribute to the update of $\theta_t(s, \cdot)$ through the normalization factor $\| \hat{g}_t \|_2$.
\begin{theorem}
\label{thm:failure_probability_softmax_gnpg_general_on_policy_stochastic_gradient}
Using \cref{alg:softmax_gnpg_general_on_policy_stochastic_gradient}, for all $(s, a) \in \gS \times \gA$, with positive probability, we have, $\pi_{\theta_t}(a | s) \to 1$ as $t \to \infty$.
\end{theorem}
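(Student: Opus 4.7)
The plan is to mirror \cref{thm:failure_probability_softmax_gnpg_special_on_policy_stochastic_gradient} while using the template of \cref{thm:failure_probability_softmax_natural_pg_general_on_policy_stochastic_gradient}. Fix $(s_0, a_0)\in\gS\times\gA$ and, for each $s\in\gS$, pick a reference action $\bar a(s)$ with $\bar a(s_0)=a_0$. Let $\gE_t$ be the event that $a_k(s)=\bar a(s)$ for all $k\le t$ and all $s$, and $\gE=\cap_t \gE_t$; on $\gE_{t-1}$ the parameters are deterministic, equal to the trajectory $\{\tilde\theta_t\}_{t\ge 1}$ obtained by feeding the fixed action sequence into \cref{alg:softmax_gnpg_general_on_policy_stochastic_gradient}. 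By the same tower-rule argument used in Claim~1 of \cref{thm:failure_probability_softmax_natural_pg_general_on_policy_stochastic_gradient}, $\probability(\gE)\ge\prod_{t=1}^\infty\prod_{s\in\gS}\pi_{\tilde\theta_t}(\bar a(s)\mid s)$, so it suffices to establish (i) positivity of this infinite product and (ii) $\pi_{\tilde\theta_t}(a_0\mid s_0)\to 1$ on $\gE$.

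The key new ingredient is a uniform bound on the normalizer: proceeding as in \cref{eq:non_uniform_smoothness_general_two_iterations_intermediate_8}, using $\|x\|_2\le\|x\|_1$ and $Q^{\pi_\theta}\le 1/(1-\gamma)$, yields $\|\hat g_t\|_2\le \sqrt{2}/(1-\gamma)^2$ deterministically. Meanwhile, on $\gE$ the stochastic PG components of \cref{eq:stochastic_softmax_pg_general_each_state_action} read
\begin{align*}
\hat g_t(s,\bar a(s)) &= \tfrac{d_\mu^{\pi_{\tilde\theta_t}}(s)}{1-\gamma}\,(1-\pi_{\tilde\theta_t}(\bar a(s)\mid s))\,Q^{\pi_{\tilde\theta_t}}(s,\bar a(s)) \ge \mu(s)\,u_t(s)\,Q_{\min}, \\
\hat g_t(s,a') &= -\tfrac{d_\mu^{\pi_{\tilde\theta_t}}(s)}{1-\gamma}\,\pi_{\tilde\theta_t}(a'\mid s)\,Q^{\pi_{\tilde\theta_t}}(s,\bar a(s))<0 \quad (a'\ne\bar a(s)),
\end{align*}
where $u_t(s):=1-\pi_{\tilde\theta_t}(\bar a(s)\mid s)$ and we used $d_\mu^{\pi}\ge(1-\gamma)\mu$ from \cref{eq:stationary_distribution_dominate_initial_state_distribution} together with \cref{asmp:positive_reward_general}. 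Dividing by $\|\hat g_t\|_2$ gives $\tilde\theta_{t+1}(s,\bar a(s))-\tilde\theta_t(s,\bar a(s))\ge \beta_s\,u_t(s)$ with $\beta_s:=\eta\mu(s)Q_{\min}(1-\gamma)^2/\sqrt{2}>0$, and $\tilde\theta_{t+1}(s,a')\le \tilde\theta_t(s,a')$ for $a'\ne\bar a(s)$. In particular, $\pi_{\tilde\theta_t}(\bar a(s)\mid s)$ is monotone non-decreasing in $t$ for every $s$, so it has a limit $\pi_\infty(s)\in(0,1]$.

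For (ii), a contradiction argument now suffices: if $\pi_\infty(s_0)<1$, then $u_t(s_0)\ge 1-\pi_\infty(s_0)>0$ for all $t$, so $\tilde\theta_{t+1}(s_0,a_0)-\tilde\theta_t(s_0,a_0)\ge \beta_{s_0}(1-\pi_\infty(s_0))>0$ uniformly in $t$, forcing $\tilde\theta_t(s_0,a_0)\to\infty$; combined with $\tilde\theta_t(s_0,a')$ being non-increasing for $a'\ne a_0$ this gives $\pi_{\tilde\theta_t}(a_0\mid s_0)\to 1$, a contradiction. Hence $\pi_\infty(s)=1$ for every $s$ on $\gE$. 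For (i), the softmax-ratio recursion analogous to \cref{eq:failure_probability_softmax_gnpg_special_on_policy_stochastic_gradient_intermediate_7}, applied state-by-state, converts the lower bound on $\tilde\theta_{t+1}(s,\bar a(s))-\tilde\theta_t(s,\bar a(s))$ into a recursion that makes $u_t(s)$ geometrically small and summable in $t$; since $|\gS|<\infty$, $\sum_{t,s}u_t(s)<\infty$, so the infinite product is positive by \cref{lem:positive_infinite_product_2}, giving $\probability(\gE)>0$. The main obstacle is the last summability claim: unlike the bandit case, the ratio $\hat g_t(s,\bar a(s))/\|\hat g_t\|_2$ does not collapse to the state-independent constant $1/\sqrt{2}$ because all states contribute to $\|\hat g_t\|_2$; handling this requires a potential-function argument (e.g.\ tracking $S_t=\sum_s\tilde\theta_t(s,\bar a(s))$) or a refined bound $\|\hat g_t\|_2\le \tfrac{\sqrt{2}}{(1-\gamma)^2}\max_{s'}u_t(s')$ to show the per-state geometric decay.
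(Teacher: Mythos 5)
Your setup coincides with the paper's: condition on the event $\gE$ that a fixed deterministic action $\bar a(s)$ is sampled in every state forever, reduce to the deterministic trajectory $\{\tilde\theta_t\}$, lower-bound $\probability(\gE)$ by the infinite product $\prod_t\prod_s\pi_{\tilde\theta_t}(\bar a(s)\mid s)$, and prove convergence of the conditioned trajectory by monotonicity plus a contradiction argument. Your part (ii) is sound and matches the paper. The problem is part (i), which you have correctly located but not solved, and it is not a loose end --- it is the entire content of the theorem beyond the bandit case. With the crude normalizer bound $\|\hat g_t\|_2\le\sqrt2/(1-\gamma)^2$, the per-step increment of $\tilde\theta_t(s,\bar a(s))$ is only $\beta_s\,u_t(s)$ with $u_t(s)\to0$; feeding this into the softmax recursion gives $u_{t+1}(s)\le u_t(s)\bigl(1-c\,u_t(s)\bigr)$ (and a matching lower bound by the same smoothness argument as in the second part of \cref{thm:committal_rate_softmax_pg}), i.e.\ $u_t(s)\in\Theta(1/t)$. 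That is exactly the non-summable regime: $\sum_t u_t(s)=\infty$, so by \cref{lem:positive_infinite_product_3} the infinite product equals $0$ and $\probability(\gE)>0$ does not follow. As written, the proof therefore does not close.

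The paper's resolution combines the refined normalizer bound you hypothesize with a combinatorial step you do not supply. It shows $\|\hat g_t\|_2\le\frac{\sqrt2}{1-\gamma}\,u_t(\bar s_t)\,Q^{\pi_{\tilde\theta_t}}(\bar s_t,\bar\pi(\bar s_t))$ where $\bar s_t$ maximizes $u_t(s')\,Q^{\pi_{\tilde\theta_t}}(s',\bar\pi(s'))$ over states; for the argmax state the normalized increment is then a \emph{constant}, so $u_{t+1}(\bar s_t)\le\rho\,u_t(\bar s_t)$ with a fixed $\rho<1$. Note the refined bound alone is not enough: non-argmax states still receive only a $u_t(s)/\max_{s'}u_t(s')$ fraction of the step, so you must additionally show each state is the argmax for $\Omega(t)$ of the first $t$ iterations. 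The paper does this by contradiction --- if some state $s'$ were selected only $o(t)$ times, its $u_t(s')$ would decay slower than any $e^{-ct}$ and would eventually dominate the argmax criterion, preventing the remaining states from being selected $\Omega(t)$ times. This pigeonhole argument is what upgrades the per-argmax-step contraction to $u_t(s)\in O(e^{-ct})$ for \emph{all} states simultaneously, whence $\sum_{t,s}u_t(s)<\infty$, the product is positive by \cref{lem:positive_infinite_product_2}, and $\probability(\gE)>0$. Without this (or an equivalent potential-function argument), your proof stalls exactly where you say it does.
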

\begin{proof}
Consider any deterministic policy $\bar{\pi}: s \mapsto \bar{\pi}(s)$, i.e.,
\begin{align}
\label{eq:deterministic_policy}
    \bar{\pi}(a | s) = \begin{cases}
		1, & \text{if } a = \bar{\pi}(s), \\
		0. & \text{otherwise}
	\end{cases}
\end{align}
We then make the following three claims.
\begin{description}
    \item[(i)] Using any fixed deterministic policy $\bar{\pi}$ to sample, i.e., $a_t(s) = \bar{\pi}(s)$, for all $s \in \gS$ and for all $t \ge 1$, and using GNPG update to generate a deterministic sequence $\{ \ttheta_t \}_{t \ge 1}$, we have $\pi_{\ttheta_t}(a | s) \to \bar{\pi}(a | s)$ as $t \to \infty$ for all $(s, a) \in \gS \times \gA$.
    \item[(ii)] The speed of $\pi_{\ttheta_t}$ approaches $\bar{\pi}$ is exponential, i.e., $1 - \pi_{\ttheta_t}(\bar{\pi}(s) | s) \in O(e^{-c \cdot t})$ for some $c > 0$, for all $t \ge 1$ and all $s \in \gS$.
    \item[(iii)] Using on-policy GNPG of \cref{alg:softmax_gnpg_general_on_policy_stochastic_gradient}, i.e., $a_t(s) \sim  \pi_{\theta_t}(\cdot | s)$, for all $(s,a) \in \gS \times \gA$, with positive probability, $\pi_{\theta_t}(a | s) \to 1$ as $t \to \infty$ (i.e., the main claim).
\end{description}

\textbf{First part. (i).} We show that with the fixed sampling $a_t(s) = \bar{\pi}(s)$, the following holds,
\begin{align}
\label{eq:failure_probability_softmax_gnpg_general_on_policy_stochastic_gradient_intermediate_1}
    \ttheta_{t+1}(s, \bar{\pi}(s)) &>  \ttheta_t(s, \bar{\pi}(s)), \text{ and} \\
    \ttheta_{t+1}(s, a^\prime) &<  \ttheta_t(s, a^\prime), \text{ for all } a^\prime \not= \bar{\pi}(s)
\end{align}
which according to $\pi_{\ttheta}(\cdot | s) = \softmax(\ttheta(s, \cdot))$ implies that, for all $t \ge 1$,
\begin{align}
\label{eq:failure_probability_softmax_gnpg_general_on_policy_stochastic_gradient_intermediate_1b}
    \pi_{\ttheta_{t+1}}(\bar{\pi}(s) | s) >  \pi_{\ttheta_{t}}(\bar{\pi}(s) | s).
\end{align}
Since $\pi_{\ttheta_{t}}(\bar{\pi}(s) | s) \le 1$, according to the monotone convergence, $\pi_{\ttheta_{t}}(\bar{\pi}(s) | s) \to c > 0$ as $t \to \infty$. And $c$ has to be $1$, otherwise due to \cref{eq:failure_probability_softmax_gnpg_general_on_policy_stochastic_gradient_intermediate_1} the probability of action $\bar{\pi}(s)$ can be further improved, which is a contradiction with convergence. Next, we show that \cref{eq:failure_probability_softmax_gnpg_general_on_policy_stochastic_gradient_intermediate_1} holds.

By the assumption of using fixed sampling, we have, at iteration $t \ge 1$, $a_t(s) = \bar{\pi}(s)$. Then we have,
\begin{align}
\label{eq:failure_probability_softmax_gnpg_general_on_policy_stochastic_gradient_intermediate_2}
\MoveEqLeft
    \hat{g}_t(s, \bar{\pi}(s)) = \frac{ 1 }{1-\gamma} \cdot d_{\mu}^{\pi_{\ttheta_t}}(s) \cdot \pi_{\ttheta_t}(\bar{\pi}(s) | s) \cdot \left[ \frac{ \sI\left\{ a_t(s) = \bar{\pi}(s) \right\} }{ \pi_{\ttheta_t}(a_t(s) | s) } \cdot Q^{\pi_{\ttheta_t}}(s,a_t(s)) - \sum_{a^\prime \in \gA}{ \sI\left\{ a_t(s) = a^\prime \right\} \cdot Q^{\pi_{\ttheta_t}}(s,a^\prime) } \right] \\
    &=  \frac{ 1 }{1-\gamma} \cdot d_{\mu}^{\pi_{\ttheta_t}}(s) \cdot \left( 1 - \pi_{\ttheta_t}(\bar{\pi}(s) | s) \right) \cdot Q^{\pi_{\ttheta_t}}(s, \bar{\pi}(s)) \qquad \left( a_t(s) = \bar{\pi}(s) \right) \\
    &> 0, \qquad \left( d_{\mu}^{\pi_{\ttheta_t}}(s) > 0, \ \pi_{\ttheta_t}(\bar{\pi}(s) | s) \in (0, 1), \ Q^{\pi_{\ttheta_t}}(s, \bar{\pi}(s)) \in (0, 1/(1-\gamma)] \right)
\end{align}
where the last inequality is from \cref{eq:stationary_distribution_dominate_initial_state_distribution}, \cref{assump:pos_init}, and \cref{eq:positive_Q_value}. On the other hand, for any other action $a^\prime \not= \bar{\pi}(s)$, we have,
\begin{align}
\label{eq:failure_probability_softmax_gnpg_general_on_policy_stochastic_gradient_intermediate_3}
    \hat{g}_t(s, a^\prime) &= - \frac{ 1 }{1-\gamma} \cdot d_{\mu}^{\pi_{\ttheta_t}}(s) \cdot \pi_{\ttheta_t}(a^\prime | s) \cdot Q^{\pi_{\ttheta_t}}(s, \bar{\pi}(s)) < 0.
\end{align}
Therefore, according to the GNPG update, i.e., 
\begin{align}
    \ttheta_{t+1}(s, \cdot) \gets \ttheta_t(s, \cdot) + \eta \cdot \hat{g}_t(s, \cdot) / \| \hat{g}_t \|_2,
\end{align}
we have \cref{eq:failure_probability_softmax_gnpg_general_on_policy_stochastic_gradient_intermediate_1} holds.

\textbf{Second part. (ii).} We calculate the progress $\ttheta_t(s, \bar{\pi}(s))$ can get. We have,
\begin{align}
\label{eq:failure_probability_softmax_gnpg_general_on_policy_stochastic_gradient_intermediate_4}
\MoveEqLeft
    \left\| \hat{g}_t(s, \cdot) \right\|_2^2 = \hat{g}_t(s, \bar{\pi}(s))^2 + \sum_{a^\prime \not= \bar{\pi}(s)}{ \hat{g}_t(s, a^\prime)^2 } \\
    &\le \frac{ 1 }{(1-\gamma)^2} \cdot d_{\mu}^{\pi_{\ttheta_t}}(s)^2 \cdot Q^{\pi_{\ttheta_t}}(s, \bar{\pi}(s))^2 \cdot 2 \cdot \left( 1 - \pi_{\ttheta_t}(\bar{\pi}(s) | s) \right)^2. \qquad \left( \left\| x \right\|_2 \le \left\| x \right\|_1 \right)
\end{align}
At iteration $t \ge 1$, we define $\bar{s}_t$ as follows,
\begin{align}
\label{eq:failure_probability_softmax_gnpg_general_on_policy_stochastic_gradient_intermediate_5}
    \bar{s}_t = \argmax_{s^\prime \in \gS}{  \left( 1 - \pi_{\ttheta_t}(\bar{\pi}(s^\prime) | s^\prime) \right) \cdot Q^{\pi_{\ttheta_t}}(s^\prime, \bar{\pi}(s^\prime)) }.
\end{align}
According to \cref{eq:failure_probability_softmax_gnpg_general_on_policy_stochastic_gradient_intermediate_4}, we have,
\begin{align}
\label{eq:failure_probability_softmax_gnpg_general_on_policy_stochastic_gradient_intermediate_6}
\MoveEqLeft
    \left\| \hat{g}_t \right\|_2^2 = \left\| \hat{g}_t(\bar{s}_t, \cdot) \right\|_2^2 + \sum_{s^\prime \not= \bar{s}_t}{ \left\| \hat{g}_t(s^\prime, \cdot) \right\|_2^2 } \\
    &\le \frac{ 1 }{(1-\gamma)^2} \cdot d_{\mu}^{\pi_{\ttheta_t}}(\bar{s}_t)^2 \cdot Q^{\pi_{\ttheta_t}}(\bar{s}_t, \bar{\pi}(\bar{s}_t))^2 \cdot 2 \cdot \left( 1 - \pi_{\ttheta_t}(\bar{\pi}(\bar{s}_t) | \bar{s}_t) \right)^2 \\
    &\qquad + \sum_{s^\prime \not= \bar{s}_t}{ \frac{ 1 }{(1-\gamma)^2} \cdot d_{\mu}^{\pi_{\ttheta_t}}(s^\prime)^2 \cdot Q^{\pi_{\ttheta_t}}(s^\prime, \bar{\pi}(s^\prime))^2 \cdot 2 \cdot \left( 1 - \pi_{\ttheta_t}(\bar{\pi}(s^\prime) | s^\prime) \right)^2 } \qquad \left( \left\| x \right\|_2 \le \left\| x \right\|_1 \right) \\
    &\le \frac{ 1 }{(1-\gamma)^2} \cdot Q^{\pi_{\ttheta_t}}(\bar{s}_t, \bar{\pi}(\bar{s}_t))^2 \cdot 2 \cdot \left( 1 - \pi_{\ttheta_t}(\bar{\pi}(\bar{s}_t) | \bar{s}_t) \right)^2 \cdot \sum_{s^\prime \in \gS}{ d_{\mu}^{\pi_{\ttheta_t}}(s^\prime)^2 } \qquad \left( \text{by \cref{eq:failure_probability_softmax_gnpg_general_on_policy_stochastic_gradient_intermediate_5}} \right) \\
    &\le \frac{ 1 }{(1-\gamma)^2} \cdot Q^{\pi_{\ttheta_t}}(\bar{s}_t, \bar{\pi}(\bar{s}_t))^2 \cdot 2 \cdot \left( 1 - \pi_{\ttheta_t}(\bar{\pi}(\bar{s}_t) | \bar{s}_t) \right)^2, \qquad \left( \left\| x \right\|_2 \le \left\| x \right\|_1 \right)
\end{align}
which implies that,
\begin{align}
    \left\| \hat{g}_t \right\|_2 \le \frac{\sqrt{2}}{ 1-\gamma } \cdot \left( 1 - \pi_{\ttheta_t}(\bar{\pi}(\bar{s}_t) | \bar{s}_t) \right) \cdot Q^{\pi_{\ttheta_t}}(\bar{s}_t, \bar{\pi}(\bar{s}_t)).
\end{align}
According to \cref{eq:failure_probability_softmax_gnpg_general_on_policy_stochastic_gradient_intermediate_2,eq:failure_probability_softmax_gnpg_general_on_policy_stochastic_gradient_intermediate_6,eq:stationary_distribution_dominate_initial_state_distribution}, we have,
\begin{align}
    \ttheta_{t+1}(s, \bar{\pi}(\bar{s}_t)) &\gets \ttheta_{t}(s, \bar{\pi}(\bar{s}_t)) + \eta \cdot \frac{ \hat{g}_t(\bar{s}_t, \bar{\pi}(\bar{s}_t)) }{ \left\| \hat{g}_t \right\|_2 } \\
    &\ge \ttheta_{t}(s, \bar{\pi}(\bar{s}_t)) + \eta \cdot \frac{ 1 - \gamma }{ \sqrt{2} },
\end{align}
which implies that,
\begin{align}
\MoveEqLeft
\label{eq:failure_probability_softmax_gnpg_general_on_policy_stochastic_gradient_intermediate_7}
    1 - \pi_{\ttheta_{t+1}}(\bar{\pi}(\bar{s}_t) | \bar{s}_t) = \frac{ \sum_{a^\prime \not= \bar{\pi}(\bar{s}_t) } \exp\{ \ttheta_{t+1}(\bar{s}_t, a^\prime) \} }{ \exp\{ \ttheta_{t+1}(\bar{s}_t, \bar{\pi}(\bar{s}_t)) \}  + \sum_{a^\prime \not= \bar{\pi}(\bar{s}_t) } \exp\{ \ttheta_{t+1}(\bar{s}_t, a^\prime) \} } \\
    &\le \frac{ \sum_{a^\prime \not= \bar{\pi}(\bar{s}_t) } \exp\{ \ttheta_{t}(\bar{s}_t, a^\prime) \} }{ \exp\{ \ttheta_{t+1}(\bar{s}_t, \bar{\pi}(\bar{s}_t)) \}  + \sum_{a^\prime \not= \bar{\pi}(\bar{s}_t) } \exp\{ \ttheta_{t}(\bar{s}_t, a^\prime) \} } \qquad \left( \text{by \cref{eq:failure_probability_softmax_gnpg_general_on_policy_stochastic_gradient_intermediate_3}} \right) \\
    &\le \frac{ \sum_{a^\prime \not= \bar{\pi}(\bar{s}_t) } \exp\{ \ttheta_{t}(\bar{s}_t, a^\prime) \} }{ \exp\big\{ \frac{ \eta \cdot (1 - \gamma)}{ \sqrt{2} } \big\} \cdot \exp\{ \ttheta_{t+1}(\bar{s}_t, \bar{\pi}(\bar{s}_t)) \}  + \sum_{a^\prime \not= \bar{\pi}(\bar{s}_t) } \exp\{ \ttheta_{t}(\bar{s}_t, a^\prime) \} } \\
    &= \frac{ \sum_{a^\prime \not= \bar{\pi}(\bar{s}_t) } \pi_{\ttheta_{t}}(a^\prime | \bar{s}_t) }{ \left( \exp\big\{ \frac{ \eta \cdot (1 - \gamma)}{ \sqrt{2} } \big\} - 1 \right) \cdot \pi_{\ttheta_{t}}( \bar{\pi}(\bar{s}_t) | \bar{s}_t)  + 1 } \\
    &= \frac{ 1 }{ \left( \exp\big\{ \frac{ \eta \cdot (1 - \gamma)}{ \sqrt{2} } \big\} - 1 \right) \cdot \pi_{\ttheta_{t}}( \bar{\pi}(\bar{s}_t) | \bar{s}_t)  + 1 } \cdot \left( 1 - \pi_{\ttheta_{t}}( \bar{\pi}(\bar{s}_t) | \bar{s}_t) \right) \\
    &\le \frac{ 1 }{ \left( \exp\big\{ \frac{ \eta \cdot (1 - \gamma)}{ \sqrt{2} } \big\} - 1 \right) \cdot \pi_{\ttheta_{1}}( \bar{\pi}(\bar{s}_t) | \bar{s}_t)  + 1 } \cdot \left( 1 - \pi_{\ttheta_{t}}( \bar{\pi}(\bar{s}_t) | \bar{s}_t) \right), \qquad \left( \text{by \cref{eq:failure_probability_softmax_gnpg_general_on_policy_stochastic_gradient_intermediate_1b}} \right)
\end{align}
which means that after one update, $1$ minus the probability of the sampled action $\bar{\pi}(\bar{s}_t)$ under $\bar{s}_t$ (by \cref{eq:failure_probability_softmax_gnpg_general_on_policy_stochastic_gradient_intermediate_5}, there must be at least one such state) is reduced by a constant. As a consequence, if a state $s$ is the $\argmax$ in \cref{eq:failure_probability_softmax_gnpg_general_on_policy_stochastic_gradient_intermediate_5} for $O(t)$ times, then we have, 
\begin{align}
\label{eq:failure_probability_softmax_gnpg_general_on_policy_stochastic_gradient_intermediate_8}
    1 - \pi_{\ttheta_t}(\bar{\pi}(s) | s) \in O(e^{- c \cdot t}),
\end{align}
where $c > 0$. Now we argue by contradiction that \cref{eq:failure_probability_softmax_gnpg_general_on_policy_stochastic_gradient_intermediate_8} holds for all state $s \in \gS$. Suppose there is at least one state $s^\prime$ which has been selected as the $\argmax$ in \cref{eq:failure_probability_softmax_gnpg_general_on_policy_stochastic_gradient_intermediate_5} for only $o(t)$ times  during the first $t$ iterations. Then for $s^\prime$, the term $1 - \pi_{\ttheta_t}(\bar{\pi}(s^\prime) | s^\prime) $ in the r.h.s. of \cref{eq:failure_probability_softmax_gnpg_general_on_policy_stochastic_gradient_intermediate_5} is at order of $\omega(e^{-c \cdot t})$, dominating the corresponding terms of other actions, which are at order of $O(e^{- c \cdot t})$ according to \cref{eq:failure_probability_softmax_gnpg_general_on_policy_stochastic_gradient_intermediate_8}. This makes the other actions cannot be selected as the $\argmax$ in \cref{eq:failure_probability_softmax_gnpg_general_on_policy_stochastic_gradient_intermediate_5} for $O(t)$ times.

Therefore, there does not exist such an state $s^\prime$ which has been selected as the $\argmax$ in \cref{eq:failure_probability_softmax_gnpg_general_on_policy_stochastic_gradient_intermediate_5} for only $o(t)$ times. And for all state $s \in \gS$, we have \cref{eq:failure_probability_softmax_gnpg_general_on_policy_stochastic_gradient_intermediate_8}.

\textbf{Third part. (iii).} Using similar arguments in first part of \cref{thm:failure_probability_softmax_gnpg_special_on_policy_stochastic_gradient}, the probability of $\bar{\pi}(s)$ is sampled forever if we run GNPG with on-policy sampling $a_t(s) \sim \pi_{\theta_t}(\cdot | s)$ is lower bounded by $\prod_{t=1}^{\infty}{ \pi_{\ttheta_t}(\bar{\pi}(s) | s) }$, where $\{ \ttheta_t \}_{t \ge 1}$ is the deterministic sequence generated by fixed sampling with the deterministic policy $\bar{\pi}$. Using similar arguments in the second part of \cref{thm:failure_probability_softmax_gnpg_special_on_policy_stochastic_gradient} and according to \cref{eq:failure_probability_softmax_gnpg_general_on_policy_stochastic_gradient_intermediate_8}, we have $\prod_{t=1}^{\infty}{ \pi_{\ttheta_t}(\bar{\pi}(s) | s) } > 0$, which means that with positive probability, $\bar{\pi}(s)$ will be sampled for all $t \ge 1$ using on-policy sampling. This implies that for all $(s,a) \in \gS \times \gA$, with positive probability, $\pi_{\theta_t}(a | s) \to 1$ as $t \to \infty$.
\end{proof}

\subsection{Committal Rates}

The following \cref{def:committal_rate_general} generalizes \cref{def:committal_rate}. The difference is we now fix the sampling to be using one fixed deterministic policy to sample forever, and then define the committal rate for all the deterministic state action pairs, generalizing the proof idea of \cref{thm:failure_probability_softmax_gnpg_general_on_policy_stochastic_gradient}.
\begin{definition}[Committal Rate]
\label{def:committal_rate_general}
Fix a reward function $r \in (0, 1]^{S \times A}$ 
and an initial parameter vector $\theta_1 \in \sR^{S \times A}$.
Consider a policy optimization algorithm $\gA$. 
Consider any deterministic policy $\bar{\pi}: s \mapsto \bar{\pi}(s)$. Under state $s$, let action $\bar{\pi}(s)$ be the sampled action \textbf{forever} after initialization and let $\theta_t$ be the resulting parameter vector obtained by using $\gA$ on the first $t$ observations.
The committal rate of algorithm $\gA$ on the deterministic policy $\bar{\pi}$ (given $r$ and $\theta_1$) is then defined as  
\begin{align}
    \kappa(\gA, \bar{\pi}) = \min_{s \in \gS} \sup\left\{ \alpha \ge 0: \limsup_{t \to \infty}{ t^\alpha \cdot  \left[ 1 - \pi_{\theta_t}(\bar{\pi}(s) | s) \right] < \infty} \right\}.
\end{align}
\end{definition}
The following \cref{thm:committal_rate_main_theorem_general} generalizes \cref{thm:committal_rate_main_theorem}.
\begin{theorem}
\label{thm:committal_rate_main_theorem_general}
Consider a policy optimization method $\gA$, together with $r\in (0,1]^{S \times A}$ and an initial parameter vector $\theta_1\in \sR^{S \times A}$.
Then,
\begin{align}
\max_{\bar{\pi}: \text{ sub-optimal deterministic}, \  \pi_{\theta_1}(a | s)>0} \kappa(\gA, \bar{\pi}) \le 1
\end{align}
is a necessary condition for ensuring the almost sure convergence of the policies obtained using $\gA$ and online sampling to the global optimum starting from $\theta_1$, where $\bar{\pi}$ under the $\max$ is for all sub-optimal deterministic policies, i.e., $\bar{\pi}(s) \not= a^*(s)$ for at least one $s \in \gS$.
\end{theorem}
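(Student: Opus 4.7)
The plan is to mimic the one-state argument of \cref{thm:committal_rate_main_theorem} but carry the bookkeeping across all states in parallel, exploiting the fact that in \cref{def:committal_rate_general} the committal rate is defined as the minimum across states, so $\kappa(\gA, \bar{\pi}) > 1$ gives a uniform polynomial bound over the finite state set. I will prove the contrapositive: assume there exists a sub-optimal deterministic policy $\bar{\pi}$ with $\pi_{\theta_1}(\bar{\pi}(s)\mid s)>0$ for every $s$ and with $\kappa(\gA,\bar{\pi})=\alpha>1$, and show that $\probability(\gE)>0$, where $\gE$ is the event that $a_t(s)=\bar{\pi}(s)$ for every $s\in\gS$ and every $t\ge 1$. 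On $\gE$, $\pi_{\theta_t}\to\bar{\pi}$, so the policy converges to a sub-optimal deterministic limit with positive probability, contradicting almost sure global convergence.

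First I would set up the deterministic counterpart $\{\ttheta_t\}_{t\ge 1}$ of $\{\theta_t\}_{t\ge 1}$: $\ttheta_1=\theta_1$ and for $t\ge 2$, $\ttheta_t$ is obtained by feeding $\gA$ the action-reward observations produced by always choosing $a_k(s)=\bar{\pi}(s)$ under every $s$ for $k=1,\dots,t-1$. Define the nested events $\gE_t=\{a_k(s)=\bar{\pi}(s) : 1\le k\le t, \ s\in\gS\}$, so $\gE_t\supseteq\gE_{t+1}$ and $\gE=\cap_{t\ge 1}\gE_t$; by monotone convergence, $\probability(\gE)=\lim_{t\to\infty}\probability(\gE_t)$. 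Using the on-policy parallel sampling rule of \cref{def:on_policy_parallel_importance_sampling} (the samples across different states are conditionally independent given $\gH_{t-1}$), and mirroring the induction in \cref{eq:committal_rate_main_theorem_intermediate_4}, I would establish
\begin{align}
\probability(\gE_t) \;=\; \prod_{k=1}^{t} \prod_{s\in\gS} \pi_{\ttheta_k}(\bar{\pi}(s)\mid s).
\end{align}
The key point making this a clean equality is that on the event $\gE_{k-1}$, one has $\theta_k=\ttheta_k$, so $\probability(a_k(\cdot)=\bar{\pi}(\cdot)\mid \gH_{k-1})=\prod_s \pi_{\ttheta_k}(\bar{\pi}(s)\mid s)$.

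Next I would convert $\kappa(\gA,\bar{\pi})=\alpha>1$ into a summability statement. Since $\gS$ is finite, the $\min_{s}$ in \cref{def:committal_rate_general} is attained; picking any $\alpha'\in(1,\alpha)$, there exists a universal constant $C<\infty$ with
\begin{align}
1 - \pi_{\ttheta_t}(\bar{\pi}(s)\mid s) \;\le\; \frac{C}{t^{\alpha'}} \qquad \text{for all } s\in\gS \text{ and all } t\ge 1.
\end{align}
Summing over the finite state set, $\sum_{t\ge 1}\sum_{s\in\gS}\bigl(1-\pi_{\ttheta_t}(\bar{\pi}(s)\mid s)\bigr) \le S\cdot C \sum_{t\ge 1} t^{-\alpha'} < \infty$ by \cref{lem:positive_infinite_series}. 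Invoking the positive-infinite-product lemma (\cref{lem:positive_infinite_product_2}) on the double-indexed product, I obtain $\prod_{t\ge 1}\prod_{s\in\gS}\pi_{\ttheta_t}(\bar{\pi}(s)\mid s) > 0$, hence $\probability(\gE)>0$. Since $\bar{\pi}$ is sub-optimal by assumption, this rules out almost sure convergence of $\pi_{\theta_t}$ to a globally optimal policy.

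The main obstacle I expect is verifying that the factorization of $\probability(\gE_t)$ really goes through in the parallel-sampling setting: because the algorithm can use all state-action-reward pairs jointly, one must be careful that on $\gE_{k-1}$ the parameter $\theta_k$ coincides with the deterministic $\ttheta_k$ (so that conditioning and the deterministic bound coincide), and that conditional independence of $\{a_k(s)\}_{s}$ given $\gH_{k-1}$ is used correctly to split the per-state factors. Once this is done cleanly, the rest reduces to the same infinite-product/series argument as in the one-state proof, and the finiteness of $\gS$ absorbs the extra product over states without weakening the hypothesis $\alpha>1$.
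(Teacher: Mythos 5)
Your proposal is correct and follows essentially the same route as the paper's own proof: construct the deterministic sequence $\{\ttheta_t\}$ under fixed sampling of $\bar{\pi}$, lower-bound the probability of the joint "always sample $\bar{\pi}$" event by the infinite product of $\pi_{\ttheta_t}(\bar{\pi}(s)\mid s)$ over $t$ and $s$, and use $\kappa(\gA,\bar{\pi})>1$ with the summability and infinite-product lemmas to show this product is positive. If anything, your write-up is a bit more careful than the paper's sketch — choosing $\alpha'\in(1,\alpha)$ avoids the paper's "assume the supremum is achieved" simplification, and making the conditioning event the joint event over all states (hence the double product) is the correct formalization, since $\theta_t=\ttheta_t$ only holds when \emph{every} state has sampled according to $\bar{\pi}$ so far.
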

\begin{proof}
The proof is a extension of \cref{thm:committal_rate_main_theorem,thm:failure_probability_softmax_gnpg_general_on_policy_stochastic_gradient}.

Let $\{ \ttheta_t \}_{t \ge 1}$ be generated by using a sub-optimal deterministic policy $\bar{\pi}: s \mapsto \bar{\pi}(s)$ to sample, and using the algorithm $\gA$ to update with the sampled actions. And let $\{ \theta_t \}_{t \ge 1}$ be generated using on-policy sampling and updating with the algorithm $\gA$.

Using similar arguments in the first part of \cref{thm:committal_rate_main_theorem}, the probability of $\bar{\pi}(s)$ is sampled forever under state $s$ using on-policy sampling is lower bounded by $\prod_{t=1}^{\infty}{ \pi_{\ttheta_t}}(\bar{\pi}(s) | s)$.

Suppose the committal rate of one sub-optimal deterministic policy is strictly larger than $1$, i.e., $\kappa(\gA, \bar{\pi}) > 1$, where $\bar{\pi}(s) \not= a^*(s)$ for at least one state $s \in \gS$. According to similar arguments in the first and second parts of \cref{thm:failure_probability_softmax_gnpg_general_on_policy_stochastic_gradient}, we have $\prod_{t=1}^{\infty}{ \pi_{\ttheta_t}}(\bar{\pi}(s) | s) > 0$.

Combining the two arguments, we have the the probability of $\bar{\pi}(s)$ is sampled forever under state $s$ using on-policy sampling is positive, which implies that for at least one state $s \in \gS$, $\pi_{\theta_t}(\cdot | s)$ will converge to some sub-optimal deterministic policies.
\end{proof}

\subsection{Geometry-Convergence Trade-off}

First, we generalize the definition of  \emph{optimality-smart} from the main paper. A policy optimization method is said to be optimality-smart if for any $t\ge 1$,
$\pi_{\tilde \theta_{t}}(a^*(s) | s)\ge \pi_{\theta_t}(a^*(s) | s)$ holds 
where $\tilde \theta_t$ is the parameter vector obtained when $a^*(s)$ is chosen in every time step, starting at $\theta_1$, while $\theta_t$ is \emph{any} parameter vector that can be obtained with $t$ updates (regardless of the action sequence chosen), but also starting from $\theta_1$.

With this definition, results similar to \cref{prop:softmax_pg_npg_gnpg_optimality_smart} hold by using similar arguments, i.e., if $a_t(s) = a^*(s)$, we have $\pi_{\theta_{t+1}}(a^*(s) | s) \ge \pi_{\theta_{t}}(a^*(s) | s)$, and otherwise if $a_t(s) \not= a^*(s)$, we have $\pi_{\theta_{t+1}}(a^*(s) | s) \le \pi_{\theta_{t}}(a^*(s) | s)$.

Next, the following result generalizes \cref{thm:committal_rate_optimal_action_special}.
\begin{theorem}
\label{thm:committal_rate_optimal_action_general}
Let $\gA$ be optimality-smart and pick a MDP instance.
If $\gA$ together with on-policy sampling leads to $\{\theta_t\}_{t\ge 1}$ such that $\{V^{\pi_{\theta_t}}(\rho)\}_{t\ge 1}$ (where $\min_{s \in \gS} \rho(s) > 0$) converges to a globally optimal policy at a rate $O(1/t^\alpha)$ with positive probability, for $\alpha > 0$, then we have $\kappa(\gA, \pi^*) \ge \alpha$, where $\pi^*$ is the global optimal deterministic policy.  
\end{theorem}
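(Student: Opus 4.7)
The plan is to mirror the one-state argument (\cref{thm:committal_rate_optimal_action_special}) but translate the scalar reward-gap bound into a per-state bound on $1 - \pi_{\theta_t}(a^*(s)\mid s)$. Concretely, I would start from the performance difference lemma (\cref{lem:performance_difference_general}) applied to $\pi^*$ and $\pi_\theta$, which gives
\begin{align*}
V^*(\rho) - V^{\pi_\theta}(\rho)
= \frac{1}{1-\gamma}\sum_s d_\rho^{\pi^*}(s)\sum_{a\ne a^*(s)}\pi_\theta(a\mid s)\bigl(Q^*(s,a^*(s)) - Q^*(s,a)\bigr).
\end{align*}
Using the optimal value gap $\Delta^* = \min_{s,\,a\ne a^*(s)}\{Q^*(s,a^*(s)) - Q^*(s,a)\} > 0$ and the standard inequality $d_\rho^{\pi^*}(s) \ge (1-\gamma)\rho(s)$ (cf.\ \cref{eq:stationary_distribution_dominate_initial_state_distribution}), this yields, for every state $s$,
\begin{align*}
V^*(\rho) - V^{\pi_\theta}(\rho) \ge \Delta^*\cdot \rho(s)\cdot\bigl(1 - \pi_\theta(a^*(s)\mid s)\bigr)
\ge \Delta^*\cdot\bigl(\min_{s'}\rho(s')\bigr)\cdot\bigl(1 - \pi_\theta(a^*(s)\mid s)\bigr).
\end{align*}
Because $\min_s\rho(s)>0$ by hypothesis, this converts a bound on the scalar value suboptimality into a uniform-over-states bound on the coordinates of interest.

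Next, by assumption there is an event $\gE_\alpha$ of positive probability on which $V^*(\rho)-V^{\pi_{\theta_t}}(\rho)\le C/t^\alpha$ for every $t\ge 1$. On $\gE_\alpha$, the previous display gives, uniformly in $s$,
\begin{align*}
t^\alpha\bigl(1 - \pi_{\theta_t}(a^*(s)\mid s)\bigr) \le \frac{C}{\Delta^*\,\min_{s'}\rho(s')} =: C'.
\end{align*}
Now let $\{\ttheta_t\}_{t\ge 1}$ be the (deterministic) sequence produced by $\gA$ when action $a^*(s)$ is chosen in every state at every step, starting at $\theta_1$. Since $\gA$ is optimality-smart, $\pi_{\ttheta_t}(a^*(s)\mid s) \ge \pi_{\theta_t}(a^*(s)\mid s)$ holds for all $s$ and $t$ on every sample path, in particular on $\gE_\alpha$. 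Hence $t^\alpha(1-\pi_{\ttheta_t}(a^*(s)\mid s)) \le C'$ on $\gE_\alpha$; but the left-hand side is deterministic, so the bound actually holds for all $\omega\in\Omega$ and all $t\ge 1$.

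Finally, by \cref{def:committal_rate_general}, $\kappa(\gA,\pi^*) = \min_s \sup\{\alpha'\ge 0:\ \limsup_{t\to\infty} t^{\alpha'}(1-\pi_{\ttheta_t}(a^*(s)\mid s))<\infty\}$, and we have just shown that for every state $s$ the value $\alpha'=\alpha$ lies in this set; taking the min over $s$ gives $\kappa(\gA,\pi^*)\ge\alpha$, as required. The main delicate step is the transition from a scalar value-function rate to per-state rates on the probabilities: this is exactly where both $\Delta^*>0$ and $\min_s\rho(s)>0$ are used, and it is what allows the min-over-states in \cref{def:committal_rate_general} to inherit the rate $\alpha$; everything else (performance difference, optimality-smart transfer to $\ttheta_t$, and the positive-probability-to-deterministic passage) is a direct generalization of the one-state proof.
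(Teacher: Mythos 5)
Your argument is essentially the paper's own proof of \cref{thm:committal_rate_optimal_action_general}: reduce the scalar value gap to a per-state bound on $1-\pi_{\theta_t}(a^*(s)\mid s)$ via $\Delta^*$ and $\min_s\rho(s)$, restrict to the positive-probability event $\gE_\alpha$, transfer the bound to the fixed-sampling sequence $\{\ttheta_t\}$ by optimality-smartness, and use that this sequence is deterministic. One small inaccuracy: the displayed identity pairing $d_\rho^{\pi^*}$ with the gaps $Q^*(s,a^*(s))-Q^*(s,a)$ is not the performance difference lemma (\cref{lem:performance_difference_general}, which would pair $d_\rho^{\pi^*}$ with $A^{\pi_\theta}$); the correct decomposition is the value sub-optimality lemma (\cref{lem:value_suboptimality}), which pairs $d_\rho^{\pi_\theta}$ with $Q^*$ --- and since $d_\rho^{\pi_\theta}(s)\ge(1-\gamma)\rho(s)$ as well, your subsequent lower bound and the rest of the argument go through unchanged.
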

\begin{proof}
Denote $\Delta^*(s,a) = Q^*(s,a^*(s)) - Q^*(s,a)$, $\Delta^*(s) = \min_{a \not= a^*(s)}{ \Delta^*(s,a) }$, and $\Delta^* = \min_{s \in \gS}{ \Delta^*(s) } > 0$ as the optimal value gap of the MDP. According to \cref{lem:value_suboptimality}, we have,
\begin{align}
\MoveEqLeft
\label{eq:committal_rate_optimal_action_general_intermediate_1}
    V^*(\rho) - V^{\pi_{\theta_t}}(\rho) = \frac{1}{1 - \gamma} \cdot \sum_{s}{ d_\rho^{\pi_{\theta_t}}(s) \cdot  \sum_{a}{ \left( \pi^*(a | s) - \pi_{\theta_t}(a | s) \right) \cdot Q^*(s,a) } } \\
    &= \frac{1}{1 - \gamma} \cdot \sum_{s}{ d_\rho^{\pi_{\theta_t}}(s) \cdot \left[ \sum_{a}{ \pi_{\theta_t}(a | s) \cdot Q^*(s, a^*(s)) } - \sum_{a}{ \pi_{\theta_t}(a | s) \cdot Q^*(s, a) } \right] } \\
    &= \frac{1}{1 - \gamma} \cdot \sum_{s}{ d_\rho^{\pi_{\theta_t}}(s) \cdot \left[ \sum_{a \not= a^*(s)}{ \pi_{\theta_t}(a | s) \cdot Q^*(s, a^*(s)) } - \sum_{a \not= a^*(s)}{ \pi_{\theta_t}(a | s) \cdot Q^*(s, a) } \right] } \\
    &= \frac{1}{1 - \gamma} \cdot \sum_{s}{ d_\rho^{\pi_{\theta_t}}(s) \cdot \left[ \sum_{a \not= a^*(s)}{ \pi_{\theta_t}(a | s) \cdot \Delta^*(s, a) } \right] } \\
    &\ge \frac{1}{1 - \gamma} \cdot \sum_{s}{ d_\rho^{\pi_{\theta_t}}(s) \cdot \left[ \sum_{a \not= a^*(s)}{ \pi_{\theta_t}(a | s)  } \right] \cdot \Delta^*(s) } \\
    &\ge \frac{1}{1 - \gamma} \cdot \sum_{s}{ d_\rho^{\pi_{\theta_t}}(s) \cdot \left[ \sum_{a \not= a^*(s)}{ \pi_{\theta_t}(a | s)  } \right] \cdot \Delta^* }. \qquad \left(  \Delta^* \le \Delta^*(s) \right)
\end{align}
Therefore we have,
\begin{align}
\label{eq:committal_rate_optimal_action_general_intermediate_2}
    V^*(\rho) - V^{\pi_{\theta_t}}(\rho) &\ge \sum_{s}{ \rho(s) \cdot \left[ \sum_{a \not= a^*(s)}{ \pi_{\theta_t}(a | s)  } \right] \cdot \Delta^* } \qquad \left( \text{by \cref{eq:committal_rate_optimal_action_general_intermediate_1,eq:stationary_distribution_dominate_initial_state_distribution}} \right) \\
    &= \sum_{s}{ \rho(s) \cdot \left( 1 - \pi_{\theta_t}(a^*(s) | s) \right) \cdot \Delta^* } \\
    &\ge \min_{s \in \gS }{ \rho(s) } \cdot \left( 1 - \pi_{\theta_t}(a^*(s) | s) \right) \cdot \Delta^*.
\end{align}
For $\alpha>0$ let $\gE_\alpha$ be the event when for all $t\ge 1$,
\begin{align}
\label{eq:committal_rate_optimal_action_general_intermediate_3}
    V^*(\rho) - V^{\pi_{\theta_t}}(\rho) \le \frac{C}{t^\alpha}.
\end{align}
By our assumption, there exists $\alpha>0$ such that $\probability(\gE_\alpha)>0$.
On this event, for any $t\ge 1$, we have, \begin{align}
\label{eq:committal_rate_optimal_action_general_intermediate_4}
    t^\alpha \cdot \left( 1 - \pi_{\theta_t}(a^*(s) | s) \right) &\le \frac{1}{ \min_{s \in \gS }{ \rho(s) } } \cdot \frac{ t^\alpha }{ \Delta^* } \cdot \left( V^*(\rho) - V^{\pi_{\theta_t}}(\rho) \right) \qquad \left( \text{by \cref{eq:committal_rate_optimal_action_general_intermediate_2}} \right) \\
    &\le \frac{1}{ \min_{s \in \gS }{ \rho(s) } } \cdot \frac{C}{ \Delta^* }. \qquad \left( \text{by \cref{eq:committal_rate_optimal_action_general_intermediate_3}} \right)
\end{align}
Let $\big\{ \ttheta_t \big\}_{t\ge 1}$ with $\ttheta_1 = \theta_1$ be the sequence obtained by using $\gA$ with fixed sampling on $r$, such that $a_t(s) = a^*(s)$ for all $t \ge 1$. Since, by the assumption, $\gA$ is optimality-smart, we have $\pi_{\ttheta_t}(a^*(s) | s) \ge \pi_{\theta_t}(a^*(s) | s)$. Then, on $\gE_\alpha$, for any $t\ge 1$
\begin{align}
    t^\alpha \cdot \left( 1 - \pi_{\ttheta_t}(a^*(s) | s) \right) &\le t^\alpha \cdot \left( 1 - \pi_{\theta_t}(a^*(s) | s) \right) \\
    &\le \frac{1}{ \min_{s \in \gS }{ \rho(s) } } \cdot \frac{C}{\Delta}, \qquad \left( \text{by \cref{eq:committal_rate_optimal_action_general_intermediate_4}} \right)\,.
\end{align}
Since $\mathbb{P}(\gE_\alpha)>0$ and $t^\alpha \cdot \left( 1 - \pi_{\ttheta_t}(a^*(s) | s) \right)$ is non-random, it follows that for any $t\ge 1$, 
$t^\alpha \cdot \left( 1 - \pi_{\ttheta_t}(a^*(s) | s) \right) \le C/\Delta$, which, by \cref{def:committal_rate_general}, means that $\kappa(\gA, \pi^*) \ge \alpha$, where $\pi^*$ is the global optimal deterministic policy.
\end{proof}

\section{Miscellaneous Extra Supporting Results}
\label{sec:supporting_lemmas}

\begin{lemma}
\label{lem:auxiliary_lemma_1}
We have, for all $x \in (0, 1)$,
\begin{align}
    1 - x \ge e^{ - 1 / ( 1/x - 1 ) }.
\end{align}
\end{lemma}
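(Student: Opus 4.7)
The plan is to reduce the claim to the classical inequality $\ln y \le y - 1$ valid for all $y > 0$ (with equality iff $y = 1$), which can be verified in one line by computing the derivative of $y \mapsto y - 1 - \ln y$.

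First I would simplify the exponent: since $1/x - 1 = (1-x)/x$, we have $1/(1/x - 1) = x/(1-x)$, so the claim becomes
\begin{align}
    1 - x \ge \exp\!\left(-\frac{x}{1-x}\right), \qquad x \in (0,1).
\end{align}
Because both sides are positive on $(0,1)$, it is equivalent to show $\ln(1-x) \ge -x/(1-x)$, or equivalently $-\ln(1-x) \le x/(1-x)$.

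Next I would substitute $y = 1/(1-x)$, so that $y > 1$ as $x$ ranges over $(0,1)$, and note that $x/(1-x) = y - 1$ while $-\ln(1-x) = \ln y$. The desired inequality thus becomes $\ln y \le y - 1$, which is the standard bound (and in fact holds for all $y > 0$; we only need $y > 1$ here). This is immediate from the fact that $f(y) = y - 1 - \ln y$ satisfies $f(1) = 0$ and $f'(y) = 1 - 1/y \ge 0$ for $y \ge 1$. Reversing the chain of equivalences gives the lemma.

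There is essentially no obstacle here — the only mild care needed is handling the endpoints: as $x \downarrow 0$ both sides tend to $1$, and as $x \uparrow 1$ both sides tend to $0$, so no boundary issues arise inside the open interval $(0,1)$. The entire argument is a two-line reduction to a textbook inequality, so the write-up can be kept to a few lines.
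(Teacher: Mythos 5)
Your proof is correct, and it is essentially the same argument as the paper's: the paper applies the bound $y \ge 1 - e^{-y}$ with $y = \log(1-x)$, which is just the standard inequality $e^{u} \ge 1+u$ (equivalently your $\ln y \le y-1$) under a different substitution, and both routes reduce the exponent to $-x/(1-x)$ and exponentiate. No gaps.
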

\begin{proof}
See the proof in \citep[Proposition 1]{chung2020beyond}. We include a  proof for completeness.

We have, for all $x \in (0, 1)$,
\begin{align}
    1 - x &= \exp\left\{ \log{(1 - x)} \right\} \\
    &\ge \exp\big\{ 1 - e^{ - \log{(1 - x)} } \big\} \qquad \left( y \ge 1 - e^{-y} \right) \\
    &= \exp\Big\{ \frac{-1 }{ 1/x - 1 } \Big\}. \qedhere
\end{align}
\end{proof}

\begin{lemma}
\label{lem:positive_infinite_series}
Let $\alpha >0$. We have,
\begin{description}
\item[(i)] if $\alpha \in (1, \infty)$, then for all $C > 0$,
\begin{align}
\label{eq:positive_infinite_series_result_1}
    \sum_{t=1}^{\infty}{ \frac{C}{t^\alpha} } < \infty,
\end{align}
which means the series $\sum_{t=1}^{\infty}{ \frac{C}{t^\alpha} }$ converges to a finite value.
\item[(ii)] if $\alpha \in (0, 1]$, then for all $C > 0$,
\begin{align}
\label{eq:positive_infinite_series_result_2}
    \sum_{t=1}^{\infty}{ \frac{C}{t^\alpha} } = \infty,
\end{align}
which means the series $\sum_{t=1}^{\infty}{ \frac{C}{t^\alpha} }$ diverges to positive infinity.
\item[(iii)] for all $C > 0$, $C^\prime > 0$,
\begin{align}
\label{eq:positive_infinite_series_result_3}
    \sum_{t=1}^{\infty}{ \frac{C}{\exp\{ C^\prime \cdot t\}} } < \infty,
\end{align}
which means the series $\sum_{t=1}^{\infty}{ \frac{C}{\exp\{ C^\prime \cdot t\}} }$ converges to a finite value.
\end{description}
\end{lemma}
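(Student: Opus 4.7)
The plan is to handle the three parts using the standard comparison techniques from elementary analysis; no clever ideas are required since the claims are classical.

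For part (i), I will use the integral comparison test. Since the function $t \mapsto 1/t^\alpha$ is positive and decreasing on $[1,\infty)$ for $\alpha > 0$, we have $\sum_{t=2}^{\infty} 1/t^\alpha \le \int_{1}^{\infty} t^{-\alpha}\, dt$. When $\alpha > 1$, the improper integral evaluates to $1/(\alpha-1)$, which is finite, so $\sum_{t=1}^{\infty} C/t^\alpha \le C \cdot (1 + 1/(\alpha-1)) < \infty$.

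For part (ii), I will split into two cases. When $\alpha = 1$, this is the harmonic series, which is a classical divergent series (e.g., by the Cauchy condensation test, or by grouping terms $\sum_{t=2^{k-1}+1}^{2^k} 1/t \ge 1/2$). When $\alpha \in (0,1)$, observe that for any $t \ge 1$ we have $t^\alpha \le t$, hence $1/t^\alpha \ge 1/t$, so $\sum_{t=1}^{\infty} C/t^\alpha \ge C \sum_{t=1}^{\infty} 1/t = \infty$ by the harmonic case just established.

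For part (iii), this is a geometric series. Setting $q \coloneqq e^{-C'} \in (0,1)$ since $C' > 0$, the sum becomes $\sum_{t=1}^{\infty} C \cdot q^t = C \cdot q/(1-q) < \infty$, which is finite.

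I do not anticipate any obstacles, as each step reduces to a standard calculus fact. The only aspect that requires a small amount of care is ensuring the integral-comparison inequality in (i) is stated correctly (shifting by one term to handle the $t=1$ contribution).
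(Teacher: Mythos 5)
Your proof is correct and takes essentially the same approach as the paper: all three parts are handled by standard comparison arguments. The only cosmetic differences are that the paper uses integral bounds uniformly (including lower-bounding the partial sums by $\int_1^{T+1} t^{-\alpha}\,dt$ in part (ii) and upper-bounding by $\int_0^\infty C e^{-C't}\,dt$ in part (iii)), whereas you invoke the harmonic series plus termwise comparison for (ii) and sum the geometric series exactly for (iii); both routes are equally valid.
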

\begin{proof}
It is easy to verify the results by calculating integrals. We include a proof for completeness.

\textbf{First part.} We have, for all $\alpha \in (1, \infty)$ and $C > 0$,
\begin{align}
    \sum_{t=1}^{\infty}{ \frac{C}{t^\alpha} } &\le C \cdot \left( 1 + \int_{t = 1}^{\infty}{ \frac{1}{t^\alpha} dt } \right) \\
    &= \frac{C \cdot \alpha}{ \alpha - 1}.
\end{align}
\textbf{Second part.} We have, for all $\alpha \in (0 ,1)$, $C > 0$, and $T \ge 1$,
\begin{align}
    \sum_{t=1}^{T}{ \frac{C}{t^\alpha} } &\ge \int_{t = 1}^{T+1}{ \frac{C}{t^\alpha} dt } \\
    &= \frac{C \cdot \left( (T+1)^{1 - \alpha} - 1 \right) }{ 1 - \alpha}.
\end{align}
Similarly, for $\alpha = 1$,
\begin{align}
    \sum_{t=1}^{T}{ \frac{C}{t} } &\ge \int_{t = 1}^{T+1}{ \frac{C}{t} dt } \\
    &= C \cdot \log{(T+1)}.
\end{align}
Therefore, the partial sum approaches to positive infinity as $T \to \infty$.

\textbf{Third part.} We have, for all $C > 0$ and $C^\prime > 0$,
\begin{align}
    \sum_{t=1}^{\infty}{ \frac{C}{\exp\{ C^\prime \cdot t\}} } &\le \int_{t=0}^{\infty}{ \frac{C}{\exp\{ C^\prime \cdot t\}} } \\
    &= \frac{C}{C^\prime}. \qedhere
\end{align}
\end{proof}

\begin{lemma}
\label{lem:positive_infinite_product_2}
Let $u_t \in (0, 1)$ for all $t \ge 1$. The infinite product $\prod_{t=1}^{\infty}{\left( 1 - u_t \right) }$ converges to a positive value if and only if the series $\sum_{t=1}^{\infty}{ u_t }$ converges to a finite value.
\end{lemma}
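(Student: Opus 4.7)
The plan is to reduce the infinite product to an infinite series by taking logarithms, then use elementary bounds relating $-\log(1-x)$ to $x$ on $(0,1)$ in order to move between the two.

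First, I would establish the two key inequalities that carry the proof. For any $x \in (0,1)$, one has the standard bound $-\log(1-x) \geq x$ (which follows from $e^{-x} \geq 1-x$). Moreover, for $x \in (0,1/2]$ the reverse direction can be controlled: using the Taylor expansion $-\log(1-x) = \sum_{k\geq 1} x^k/k \leq x + x^2 \sum_{k\geq 0} x^k = x + x^2/(1-x) \leq 2x$. These are the only analytic facts I need.

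Next, I would prove the forward direction ($\Rightarrow$). Let $P_N \coloneqq \prod_{t=1}^N (1-u_t)$ and suppose $P_N \to P > 0$. Since each factor lies in $(0,1)$, $\log P_N = \sum_{t=1}^N \log(1-u_t)$ is well-defined and converges to $\log P \in \mathbb{R}$. Applying $\log(1-u_t) \leq -u_t$ term by term gives $\sum_{t=1}^N u_t \leq -\log P_N$, and taking $N \to \infty$ yields $\sum_{t=1}^\infty u_t \leq -\log P < \infty$.

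For the reverse direction ($\Leftarrow$), suppose $\sum_{t=1}^\infty u_t < \infty$. Then $u_t \to 0$, so there exists $T$ such that $u_t \leq 1/2$ for all $t \geq T$. On this tail, the second inequality gives $0 \leq -\log(1-u_t) \leq 2 u_t$, so $\sum_{t \geq T} |\log(1-u_t)| \leq 2\sum_{t\geq T} u_t < \infty$. Hence $\sum_{t=1}^\infty \log(1-u_t)$ converges to some finite $L \in \mathbb{R}$, and by continuity of $\exp$, $P_N = \exp\!\big(\sum_{t=1}^N \log(1-u_t)\big) \to e^L > 0$.

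I do not anticipate a serious obstacle; the only mild care point is handling the initial segment $t < T$ separately in the reverse direction (the finite partial product $\prod_{t < T}(1-u_t)$ is a positive constant, so multiplying by it preserves positivity of the limit) and checking that the convention "converges to a positive value" matches the classical notion of convergence of an infinite product. Everything else is a direct application of the two elementary logarithmic bounds stated at the outset.
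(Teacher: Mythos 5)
Your proof is correct and follows essentially the same route as the paper's: both directions reduce to the logarithm of the partial products, using $\log(1-x)\le -x$ for the forward implication and $-2x \le \log(1-x)$ on $x\in(0,1/2]$ (after discarding a finite initial segment) for the converse. No gaps.
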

\begin{proof}
See \citet[p. 220]{knopp1947theory}. We include a proof for completeness.

Define the following partial products and partial sums,
\begin{align}
\label{eq:positive_infinite_product_2_intermediate_1a}
    p_T &\coloneqq \prod_{t=1}^{T}{\left( 1 - u_t \right) }, \\
\label{eq:positive_infinite_product_2_intermediate_1b}
    s_T &\coloneqq \sum_{t=1}^{T}{ u_t }.
\end{align}
Since $p_T$ is monotonically decreasing and non-negative, the infinite product converges to positive values, i.e.,
\begin{align}
    \prod_{t=1}^{\infty}{\left( 1 - u_t \right) } = \lim_{T \to \infty}{ \prod_{t=1}^{T}{\left( 1 - u_t \right) } } = \lim_{T \to \infty}{p_T} > 0,
\end{align}
if and only if $p_T$ is lower bounded away from zero (boundedness convergence criterion for monotone sequence) \citep[p. 80]{knopp1947theory}.

Similarly, since $s_T$ is monotonically increasing, the series converges to finite values, i.e.,
\begin{align}
    \sum_{t=1}^{\infty}{ u_t } = \lim_{T \to \infty}{ \sum_{t=1}^{T}{ u_t } } = \lim_{T \to \infty}{s_T} < \infty,
\end{align} if and only if $s_T$ is upper bounded.

\textbf{First part.} $\prod_{t=1}^{\infty}{\left( 1 - u_t \right) }$ converges to a positive value only if $\sum_{t=1}^{\infty}{ u_t }$ converges to a finite value.

Suppose $\prod_{t=1}^{\infty}{\left( 1 - u_t \right) }$ converges to a positive value. We have, for all $T \ge 1$,
\begin{align}
    q_T \ge q > 0.
\end{align}
Then we have,
\begin{align}
    q &\le q_T \\
    &= \exp\bigg\{ \log{ \bigg( \prod_{t=1}^{T}{\left( 1 - u_t \right) } \bigg) } \bigg\} \\
    &= \exp\bigg\{ \sum_{t=1}^{T}{ \log{ \left( 1 - u_t \right) } } \bigg\} \\
    &\le \exp\bigg\{ - \sum_{t=1}^{T}{ u_t } \bigg\} \qquad \left( \log{\left( 1 - x \right) } < -x \right) \\
    &= \exp\{ - s_T \},
\end{align}
which implies that,
\begin{align}
    s_T \le - \log{q} < \infty.
\end{align}
Therefore, we have $\sum_{t=1}^{\infty}{ u_t }$ converges to a finite value.

\textbf{Second part.} $\prod_{t=1}^{\infty}{\left( 1 - u_t \right) }$ converges to a positive value if $\sum_{t=1}^{\infty}{ u_t }$ converges to a finite value.

Suppose $\sum_{t=1}^{\infty}{ u_t }$ converges to a finite value. Then we have, $u_t \to 0$ as $t \to \infty$. There exists a finite number $t_0 \ge 1$, such that for all $t \ge t_0$, we have $u_t \le 1/2$. Also, we have, for all $T \ge 1$,
\begin{align}
    s_T \le s < \infty.
\end{align}
Then we have,
\begin{align}
    \prod_{t=t_0}^{T}{\left( 1 - u_t \right) } &= \exp\bigg\{ \sum_{t=t_0}^{T}{ \log{ \left( 1 - u_t \right) } } \bigg\} \\
    &\ge \exp\bigg\{ - \sum_{t=t_0}^{T}{ 2 \cdot u_t } \bigg\} \qquad \left( - 2 \cdot x \le \log{\left( 1 - x \right) } \text{ for all } x \in [0, 1/2] \right) \\
    &= \exp\{ - 2 \cdot s_T \},
\end{align}
which implies that, for all large enough $T \ge 1$,
\begin{align}
    q_T &= \left( \prod_{t=1}^{t_0-1}{\left( 1 - u_t \right) } \right) \cdot \left( \prod_{t=t_0}^{T}{\left( 1 - u_t \right) } \right) \\
    &\ge \left( \prod_{t=1}^{t_0-1}{\left( 1 - u_t \right) } \right) \cdot \exp\{ - 2 \cdot s_T \} \\
    &\ge \left( \prod_{t=1}^{t_0-1}{\left( 1 - u_t \right) } \right) \cdot \exp\{ - 2 \cdot s \} \\
    &> 0.
\end{align}
Therefore, we have $\prod_{t=1}^{\infty}{\left( 1 - u_t \right) }$ converges to a positive value.
\end{proof}

\begin{lemma}
\label{lem:positive_infinite_product_3}
Let $u_t \in (0, 1)$ for all $t \ge 1$. We have  $\prod_{t=1}^{\infty}{\left( 1 - u_t \right) } = \lim_{T \to \infty}{ \prod_{t=1}^{T}{\left( 1 - u_t \right) } } = 0 $
if and only if the series $\sum_{t=1}^{\infty}{ u_t }$ diverges to positive infinity.
\end{lemma}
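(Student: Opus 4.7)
The plan is to derive this statement as a direct consequence (essentially the contrapositive) of the preceding \cref{lem:positive_infinite_product_2}. The key observation is that the two sequences involved are monotone with tight one-sided bounds: the partial product $p_T \coloneqq \prod_{t=1}^{T}(1-u_t)$ is monotonically decreasing in $[0,1)$, while the partial sum $s_T \coloneqq \sum_{t=1}^{T} u_t$ is monotonically increasing in $[0,\infty)$. Consequently each has a well-defined limit: $p_T \to p \in [0,1)$ and $s_T \to s \in [0,\infty]$.

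First I would record this dichotomy formally: exactly one of $p > 0$ or $p = 0$ holds, and exactly one of $s < \infty$ or $s = \infty$ holds. Then I would invoke \cref{lem:positive_infinite_product_2}, which states that $p > 0 \iff s < \infty$. Taking the logical contrapositive of each direction of this biconditional yields $p = 0 \iff s = \infty$, which is exactly the claim of the present lemma.

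To make the argument self-contained, I would spell out both directions explicitly. For the ``only if'' direction, assume $\prod_{t=1}^{\infty}(1 - u_t) = 0$; if $\sum_{t=1}^{\infty} u_t$ were finite, then \cref{lem:positive_infinite_product_2} would force the infinite product to be positive, a contradiction. For the ``if'' direction, assume $\sum_{t=1}^{\infty} u_t = \infty$; if the infinite product were positive, the same lemma would force the series to be finite, a contradiction. Either contradiction closes the proof.

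There is no real obstacle here beyond making sure the monotonicity/boundedness argument used to guarantee existence of the limits $p$ and $s$ is stated cleanly, and that the application of \cref{lem:positive_infinite_product_2} is accompanied by the observation that its biconditional partitions the space of possibilities into exactly two complementary cases on each side, so that the contrapositive is a genuine equivalence rather than a one-sided implication.
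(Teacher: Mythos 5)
Your proposal is correct and follows essentially the same route as the paper: both arguments reduce the claim to the contrapositive of \cref{lem:positive_infinite_product_2}, using the monotonicity of the partial products $p_T$ (decreasing, bounded below by $0$) and partial sums $s_T$ (increasing) to guarantee that each side of the equivalence splits into exactly two complementary cases. No gaps.
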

\begin{proof}
\textbf{First part.} $\prod_{t=1}^{\infty}{\left( 1 - u_t \right) }$ diverges to $0$ only if $\sum_{t=1}^{\infty}{ u_t }$ diverges to positive infinity.

Suppose $\prod_{t=1}^{\infty}{\left( 1 - u_t \right) }$ diverges to $0$. According to \cref{lem:positive_infinite_product_2}, $\sum_{t=1}^{\infty}{ u_t }$ diverges. And since the partial sum $s_T \coloneqq \sum_{t=1}^{T}{ u_t }$ is monotonically increasing, we have $\sum_{t=1}^{\infty}{ u_t }$ diverges to positive infinity. 

\textbf{Second part.} $\prod_{t=1}^{\infty}{\left( 1 - u_t \right) }$ diverges to $0$ if $\sum_{t=1}^{\infty}{ u_t }$ diverges to a positive infinity.

Suppose $\sum_{t=1}^{\infty}{ u_t }$ diverges to positive infinity. According to \cref{lem:positive_infinite_product_2}, $\prod_{t=1}^{\infty}{\left( 1 - u_t \right) }$ diverges. And since the partial product $q_T \coloneqq \prod_{t=1}^{T}{\left( 1 - u_t \right) }$ is non-negative and monotonically decreasing, we have $\prod_{t=1}^{\infty}{\left( 1 - u_t \right) }$ diverges to $0$. 
\end{proof}

The following \cref{lem:positive_infinite_product_4} indicates that if $\pi_{\theta_t}(a)$ approaches $1$ \textbf{slowly}, i.e., no faster than $O(1/t)$, then the probability of sampling $a$ forever using on-policy sampling $a_t \sim \pi_{\theta_t}(\cdot)$ is \textbf{zero}, i.e., the other actions $a^\prime \not= a$ always have a chance to be sampled.
\begin{lemma}
\label{lem:positive_infinite_product_4}
Let $\pi_{\theta_t}(a) \in (0, 1)$ be the probability of sampling action $a$ using online sampling $a_t \sim \pi_{\theta_t}(\cdot)$, for all $t \ge 1$. If $1 - \pi_{\theta_t}(a) \in \Theta(1/t^\alpha)$ with $\alpha \in [0, 1]$, then $\prod_{t=1}^{\infty}{ \pi_{\theta_t}(a) } = 0$.
\end{lemma}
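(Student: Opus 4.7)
The plan is to reduce the statement directly to the two preceding lemmas already established in the excerpt, namely \cref{lem:positive_infinite_product_3} (which gives the equivalence between divergence to zero of the infinite product $\prod_{t=1}^\infty(1-u_t)$ and divergence to $+\infty$ of the series $\sum_{t=1}^\infty u_t$ for $u_t\in(0,1)$) and \cref{lem:positive_infinite_series} (which classifies the convergence of series of the form $\sum_{t\ge 1} C/t^\alpha$). Setting $u_t \coloneqq 1-\pi_{\theta_t}(a)$, the hypothesis that $\pi_{\theta_t}(a)\in(0,1)$ gives $u_t\in(0,1)$ for all $t\ge 1$, so \cref{lem:positive_infinite_product_3} is applicable, and it reduces the task to showing $\sum_{t=1}^\infty u_t = \infty$.

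Next, I would unpack the $\Theta$-notation: by assumption there exist constants $0 < c \le C$ and a threshold $t_0\ge 1$ such that $c/t^\alpha \le u_t \le C/t^\alpha$ for every $t\ge t_0$. In particular, $\sum_{t\ge t_0} u_t \ge c\sum_{t\ge t_0}1/t^\alpha$, and it suffices to show the right-hand side is infinite. I would split into two cases: if $\alpha=0$ then $u_t\ge c$ for $t\ge t_0$, so $\sum u_t$ trivially diverges; if $\alpha\in(0,1]$, invoke \cref{lem:positive_infinite_series}(ii), which explicitly states that $\sum_{t=1}^\infty C/t^\alpha = \infty$ for all $C>0$ whenever $\alpha\in(0,1]$. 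Either way, $\sum_{t=1}^\infty u_t = \infty$.

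Combining these, \cref{lem:positive_infinite_product_3} yields $\prod_{t=1}^\infty (1-u_t) = \prod_{t=1}^\infty \pi_{\theta_t}(a) = 0$, which is the desired conclusion.

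No real obstacle is expected: the argument is a one-line citation of the two previous lemmas after the substitution $u_t=1-\pi_{\theta_t}(a)$. The only minor care point is that the $\Theta$ hypothesis is only asymptotic, so I would be explicit that finitely many initial terms do not affect divergence of the series (equivalently, split the product as $\prod_{t<t_0}\pi_{\theta_t}(a)\cdot\prod_{t\ge t_0}\pi_{\theta_t}(a)$, observe the first factor is a finite positive number, and apply \cref{lem:positive_infinite_product_3} to the tail).
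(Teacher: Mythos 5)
Your proposal is correct and follows essentially the same route as the paper: substitute $u_t = 1-\pi_{\theta_t}(a)$, invoke \cref{lem:positive_infinite_series} to get $\sum_t u_t = \infty$, and conclude via \cref{lem:positive_infinite_product_3}. You are in fact slightly more careful than the paper's own proof, which silently restricts to $\alpha\in(0,1]$ and does not comment on the asymptotic nature of the $\Theta$ hypothesis, whereas you handle the trivial $\alpha=0$ case and the finitely-many-initial-terms point explicitly.
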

\begin{proof}
Suppose $1 - \pi_{\theta_t}(a) \in \Theta(1/t^\alpha)$ and $\alpha \in (0, 1]$. Let $u_t \coloneqq 1 - \pi_{\theta_t}(a) \in (0, 1)$ for all $t \ge 1$. According to \cref{lem:positive_infinite_series}, we have,
\begin{align}
    \sum_{t=1}^{\infty}{ u_t } = \sum_{t=1}^{\infty}{ \left( 1 - \pi_{\theta_t}(a) \right) } = \infty,
\end{align}
i.e., the series diverges to positive infinity. According to \cref{lem:positive_infinite_product_3}, we have,
\begin{align}
    \prod_{t=1}^{\infty}{ \pi_{\theta_t}(a) } =  \prod_{t=1}^{\infty}{ \left( 1 - u_t \right) } = 0,
\end{align}
which means it is impossible to sample $a$ forever using on-policy sampling $a_t \sim \pi_{\theta_t}(\cdot)$.
\end{proof}

\begin{lemma}[Performance difference lemma \citep{kakade2002approximately}]
\label{lem:performance_difference_general}
For any policies $\pi$ and $\pi^\prime$,
\begin{align}
    V^{\pi^\prime}(\rho) - V^{\pi}(\rho) &= \frac{1}{1 - \gamma} \cdot \sum_{s}{ d_\rho^{\pi^\prime}(s) \cdot \sum_{a}{ \left( \pi^\prime(a | s) - \pi(a | s) \right) \cdot Q^{\pi}(s,a) } }\\
    &= \frac{1}{1 - \gamma} \cdot \sum_{s}{ d_{\rho}^{\pi^\prime}(s) \cdot \sum_{a}{ \pi^\prime(a | s) \cdot A^{\pi}(s, a) } }.
\end{align}
\end{lemma}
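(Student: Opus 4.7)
The plan is to prove the performance difference lemma via the standard trajectory-telescoping argument. First I would fix any $s_0 \in \gS$ and write
\begin{align*}
V^{\pi^\prime}(s_0) - V^{\pi}(s_0) = \expectation_{\tau \sim \pi^\prime, \gP}\left[\sum_{t=0}^{\infty} \gamma^t r(s_t, a_t) \,\Big|\, s_0\right] - V^{\pi}(s_0),
\end{align*}
where $\tau = (s_0, a_0, s_1, a_1, \dots)$ is generated by $a_t \sim \pi^\prime(\cdot | s_t)$ and $s_{t+1} \sim \gP(\cdot | s_t, a_t)$. Since $V^{\pi}(s_0)$ does not depend on $\tau$, I would pull it inside the expectation and then add and subtract the telescoping sequence $\sum_{t \ge 1} \gamma^t V^{\pi}(s_t)$ to rewrite the right-hand side as
\begin{align*}
\expectation_{\tau \sim \pi^\prime, \gP}\left[\sum_{t=0}^{\infty} \gamma^t \bigl(r(s_t,a_t) + \gamma V^{\pi}(s_{t+1}) - V^{\pi}(s_t)\bigr) \,\Big|\, s_0\right].
\end{align*}

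Next, I would take the conditional expectation over $s_{t+1} \sim \gP(\cdot | s_t, a_t)$ inside the sum. By \cref{eq:state_action_value_function}, $r(s_t,a_t) + \gamma \expectation_{s_{t+1}}[V^{\pi}(s_{t+1})] = Q^{\pi}(s_t,a_t)$, so the bracketed quantity becomes $Q^{\pi}(s_t, a_t) - V^{\pi}(s_t) = A^{\pi}(s_t, a_t)$. Taking expectation over $s_0 \sim \rho$ and unfolding the expectation over the state-action visitation induced by $\pi^\prime$ gives
\begin{align*}
V^{\pi^\prime}(\rho) - V^{\pi}(\rho) = \sum_{t=0}^{\infty} \gamma^t \sum_{s} \probability(s_t = s \mid s_0 \sim \rho, \pi^\prime, \gP) \sum_a \pi^\prime(a | s) A^{\pi}(s,a).
\end{align*}
Collapsing the geometric sum over $t$ via the definition $d_{\rho}^{\pi^\prime}(s) = (1-\gamma)\sum_{t} \gamma^t \probability(s_t = s \mid s_0 \sim \rho, \pi^\prime, \gP)$ yields precisely the advantage form
\begin{align*}
V^{\pi^\prime}(\rho) - V^{\pi}(\rho) = \frac{1}{1-\gamma} \sum_s d_{\rho}^{\pi^\prime}(s) \sum_a \pi^\prime(a | s) A^{\pi}(s,a).
\end{align*}

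Finally, to recover the first (Q-value) form, I would use two zero identities at each state $s$: $\sum_a \pi(a|s) A^{\pi}(s,a) = 0$ (since $\sum_a \pi(a|s) Q^{\pi}(s,a) = V^{\pi}(s)$), and $\sum_a (\pi^\prime(a|s) - \pi(a|s)) V^{\pi}(s) = 0$. The first lets me insert $-\pi(a|s) A^{\pi}(s,a)$ to turn $\pi^\prime(a|s)$ into $\pi^\prime(a|s) - \pi(a|s)$; the second lets me replace $A^{\pi}(s,a)$ with $Q^{\pi}(s,a)$ inside the summand multiplying $(\pi^\prime(a|s) - \pi(a|s))$. There is no real obstacle here—the only mild subtlety is justifying the interchange of sums/expectations, which is standard since $r \in (0,1]$ and $\gamma \in [0,1)$ give an absolutely convergent series. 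The whole proof is essentially bookkeeping after the key step of inserting the telescoping $V^{\pi}$ terms.
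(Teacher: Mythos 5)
Your proof is correct, but it takes a different (equally standard) route from the paper's. The paper works purely algebraically at the level of state-value functions: it adds and subtracts $\sum_a \pi^\prime(a|s) Q^{\pi}(s,a)$, uses the Bellman equation to rewrite $Q^{\pi^\prime}(s,a) - Q^{\pi}(s,a) = \gamma \sum_{s^\prime} \gP(s^\prime|s,a)\,[V^{\pi^\prime}(s^\prime) - V^{\pi}(s^\prime)]$, obtains a recursion for $V^{\pi^\prime}(s) - V^{\pi}(s)$, and unrolls it into the discounted visitation distribution $d_{s}^{\pi^\prime}$; this yields the $Q$-value form first and then converts to the advantage form via $\sum_a \pi(a|s) Q^{\pi}(s,a) = V^{\pi}(s)$. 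You instead use the classic Kakade--Langford trajectory-telescoping argument: insert the telescoping sequence $\sum_{t\ge 1}\gamma^t V^{\pi}(s_t)$ into the trajectory expectation under $\pi^\prime$, identify each bracketed term with $A^{\pi}(s_t,a_t)$, and collapse the geometric sum into $d_{\rho}^{\pi^\prime}$; you then obtain the $Q$-value form second using the two zero identities, which you apply correctly. The two arguments are mechanically equivalent (unrolling the Bellman recursion produces exactly the time-indexed occupancy sum your trajectory expectation produces), so neither buys anything substantive over the other; the paper's version avoids any discussion of infinite trajectories and interchange of limits, while yours makes the probabilistic interpretation of $d_{\rho}^{\pi^\prime}$ explicit. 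Your remark that the telescoping and interchange steps are justified by boundedness of $V^{\pi}$ (from $r\in(0,1]$ and $\gamma<1$) correctly handles the only point where your route needs extra care.
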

\begin{proof}
According to the definition of value function,
\begin{align}
\MoveEqLeft
    V^{\pi^\prime}(s) - V^{\pi}(s) = \sum_{a}{ \pi^\prime(a | s) \cdot Q^{\pi^\prime}(s,a) } - \sum_{a}{ \pi(a | s) \cdot Q^{\pi}(s,a) } \\
    &= \sum_{a}{ \pi^\prime(a | s) \cdot \left( Q^{\pi^\prime}(s,a) - Q^{\pi}(s,a) \right) } + \sum_{a}{ \left( \pi^\prime(a | s) - \pi(a | s) \right) \cdot Q^{\pi}(s,a) } \\
    &= \sum_{a}{ \left( \pi^\prime(a | s) - \pi(a | s) \right) \cdot Q^{\pi}(s,a) } + \gamma \cdot \sum_{a}{ \pi^\prime(a | s) \cdot \sum_{s^\prime}{  \gP( s^\prime | s, a) \cdot \left[ V^{\pi^\prime}(s^\prime) -  V^{\pi}(s^\prime)  \right] } } \\
    &= \frac{1}{1 - \gamma} \cdot \sum_{s^\prime}{ d_{s}^{\pi^\prime}(s^\prime) \cdot \sum_{a^\prime}{ \left( \pi^\prime(a^\prime | s^\prime) - \pi(a^\prime | s^\prime) \right) \cdot Q^{\pi}(s^\prime, a^\prime) }  } \\
    &= \frac{1}{1 - \gamma} \cdot \sum_{s^\prime}{ d_{s}^{\pi^\prime}(s^\prime) \cdot \sum_{a^\prime}{ \pi^\prime(a^\prime | s^\prime) \cdot \left( Q^{\pi}(s^\prime, a^\prime) - V^{\pi}(s^\prime) \right) }  } \\
    &= \frac{1}{1 - \gamma} \cdot \sum_{s^\prime}{ d_{s}^{\pi^\prime}(s^\prime) \cdot \sum_{a^\prime}{ \pi^\prime(a^\prime | s^\prime) \cdot A^{\pi}(s^\prime, a^\prime) } }. \qedhere
\end{align}
\end{proof}

\begin{lemma}[Value sub-optimality lemma]
\label{lem:value_suboptimality}
For any policy $\pi$,
\begin{align}
    V^*(\rho) - V^{\pi}(\rho) = \frac{1}{1 - \gamma} \cdot \sum_{s}{ d_\rho^{\pi}(s) \cdot \sum_{a}{ \left( \pi^*(a | s) - \pi(a | s) \right) \cdot Q^*(s,a) } }.
\end{align}
\end{lemma}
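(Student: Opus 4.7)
The plan is to derive this identity as an immediate corollary of the performance difference lemma (\cref{lem:performance_difference_general}) by swapping the roles of the two policies. The statement of the performance difference lemma gives, for any $\pi^\prime$ and $\pi$,
\begin{align*}
V^{\pi^\prime}(\rho) - V^{\pi}(\rho) = \frac{1}{1-\gamma} \sum_s d_\rho^{\pi^\prime}(s) \sum_a \left( \pi^\prime(a|s) - \pi(a|s) \right) Q^{\pi}(s,a),
\end{align*}
where the discounted state visitation distribution attached to the sum is the one induced by the \emph{first} policy argument, and the $Q$-values are those of the \emph{second}. In the target identity, the visitation distribution is $d_\rho^{\pi}$ (the given, not the optimal) while the $Q$-values are $Q^*$ (from the optimal, not the given). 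This is exactly the pattern obtained by applying the performance difference lemma with the comparator roles reversed.

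Concretely, I would instantiate \cref{lem:performance_difference_general} with $\pi^\prime \leftarrow \pi$ and $\pi \leftarrow \pi^*$, yielding
\begin{align*}
V^{\pi}(\rho) - V^{\pi^*}(\rho) = \frac{1}{1-\gamma} \sum_s d_\rho^{\pi}(s) \sum_a \left( \pi(a|s) - \pi^*(a|s) \right) Q^{\pi^*}(s,a),
\end{align*}
then multiply both sides by $-1$ and use $V^{\pi^*} = V^*$ and $Q^{\pi^*} = Q^*$ to obtain the claimed equation.

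There is no real obstacle here: the step is a one-line substitution into a result that is already established earlier in the appendix, so the entire proof is essentially ``apply \cref{lem:performance_difference_general} with the two policy arguments swapped, then negate.'' The only thing worth emphasizing in the written proof is that the asymmetry of the performance difference lemma (visitation distribution from the first argument, $Q$-function from the second) is precisely what makes the swap produce the ``dual'' identity with $d_\rho^{\pi}$ and $Q^*$, which is the form useful for the sub-optimality bounds that invoke it (e.g.\ in the proofs of \cref{thm:almost_sure_global_convergence_softmax_pg_general_on_policy_stochastic_gradient} and \cref{thm:committal_rate_optimal_action_general}).
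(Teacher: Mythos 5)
Your proof is correct. Substituting $\pi^\prime \leftarrow \pi$ and $\pi \leftarrow \pi^*$ into \cref{lem:performance_difference_general} and negating does yield exactly the claimed identity, since the visitation distribution in the performance difference lemma is attached to the first argument and the $Q$-function to the second. The paper, however, does not take this shortcut: it proves \cref{lem:value_suboptimality} from scratch by the same telescoping recursion used for \cref{lem:performance_difference_general}, just with the cross-term split the other way, i.e.\ writing $V^*(s) - V^\pi(s) = \sum_a (\pi^*(a|s)-\pi(a|s)) Q^*(s,a) + \sum_a \pi(a|s)(Q^*(s,a)-Q^\pi(s,a))$ and unrolling the second sum to produce $d_s^\pi$. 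The two routes are mathematically equivalent — the paper's direct derivation makes the ``dual decomposition'' explicit and self-contained (it is quoted from Lemma 21 of \citet{mei2020global}), while yours is shorter and makes transparent that the two lemmas are a single identity viewed with the policy arguments in either order. Either proof is acceptable; yours has the mild advantage of not repeating the recursion, and the mild disadvantage of depending on \cref{lem:performance_difference_general} being stated for \emph{arbitrary} policy pairs (which it is), so there is no gap.
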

\begin{proof}
See the proof in \citep[Lemma 21]{mei2020global}. We include a proof for completeness.

We denote $V^*(s) \coloneqq V^{\pi^*}(s)$ and $Q^*(s,a) \coloneqq Q^{\pi^*}(s,a)$ for conciseness. We have, for any policy $\pi$,
\begin{align}
\MoveEqLeft
    V^*(s) - V^{\pi}(s) = \sum_{a}{ \pi^*(a | s) \cdot Q^*(s,a) } - \sum_{a}{ \pi(a | s) \cdot Q^{\pi}(s,a) } \\
    &= \sum_{a}{ \left( \pi^*(a | s) - \pi(a | s) \right) \cdot Q^*(s,a) } + \sum_{a}{ \pi(a | s) \cdot \left(  Q^*(s,a) - Q^{\pi}(s,a) \right) } \\
    &= \sum_{a}{ \left( \pi^*(a | s) - \pi(a | s) \right) \cdot Q^*(s,a) } + \gamma \cdot \sum_{a}{ \pi(a | s) \cdot \sum_{s^\prime}{  \gP( s^\prime | s, a) \cdot \left[ V^{\pi^*}(s^\prime) -  V^{\pi}(s^\prime)  \right] } } \\
    &= \frac{1}{1 - \gamma} \cdot \sum_{s^\prime}{ d_{s}^{\pi}(s^\prime) \cdot \sum_{a^\prime}{ \left( \pi^*(a^\prime | s^\prime) - \pi(a^\prime | s^\prime) \right) \cdot Q^*(s^\prime, a^\prime) }  }. \qedhere
\end{align}
\end{proof}

\section{The Intuition of Committal Rate Definition}

The following \cref{fig:committal_rate_intuition} illustrates the intuition of the committal rate definition. Using fixed sampling, we decouple the coupling between sampling and updating, and then focus on only characterizing the aggressiveness of different update rules.

\begin{figure*}[ht]
\centering
\vskip -0.05in
\includegraphics[width=0.9\linewidth]{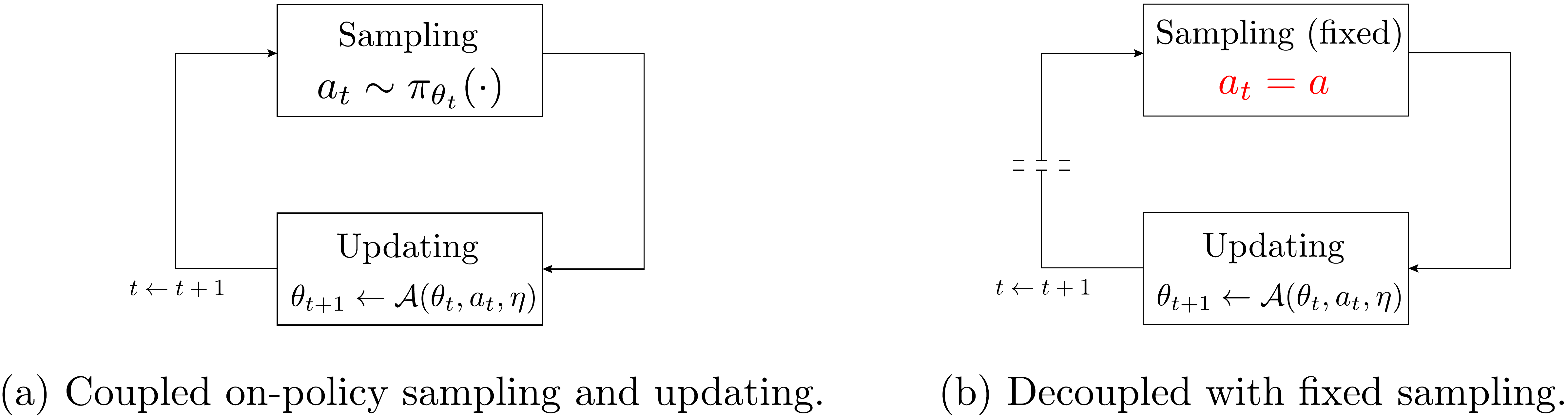}
\vskip -0.05in
\caption{
An illustration for the intuition of the committal rate definition.
}
\label{fig:committal_rate_intuition}
\vskip -0.05in
\end{figure*}

\end{document}